\newenvironment{proof}{\par\noindent{\bf Proof\ }}{\hfill\BlackBox\\[2mm]}
\def \E {\mathbb{E}}
\def \x {\mathbf{x}}
\def \y {\mathbf{y}}
\def \g {\mathbf{g}}
\def \D {\mathcal{D}}
\def \z {\mathbf{z}}
\def \u {\mathbf{u}}
\def \H {\mathcal{H}}
\def \O {\mathcal{O}}
\def \R {\mathbb{R}}
\def \DReg {\mathop{\bm{\mathrm{Reg}}}\nolimits_{T}^{\bm{\mathsf{d}}}}
\def \W {\mathcal{W}}
\def \p {\mathbf{p}}
\def \B {\mathbf{B}}
\def \m {\mathbf{m}}
\def \xh {\widehat{\x}}
\def \B {\mathcal{B}}
\def \X {\mathcal{X}}
\def \ellb {\bm{\ell}}
\DeclareMathOperator*{\argmin}{arg\,min}
\renewcommand{\tilde}{\widetilde}
\renewcommand{\hat}{\widehat}
\let\norm\undefined % <-- "Undefine" \norm
\DeclarePairedDelimiter\norm{\lVert}{\rVert}
\newcommand\inner[2]{\langle #1, #2 \rangle}
\newtheorem{myThm}{Theorem}
\newtheorem{myCor}{Corollary}
\newtheorem{myLemma}{Lemma}
\theoremstyle{definition}
\newtheorem{myRemark}{Remark}
\renewcommand{\tilde}{\widetilde}
\renewcommand{\hat}{\widehat}
\def \E {\mathbb{E}}
\def \D {\mathcal{D}}
\def \H {\mathcal{H}}
\def \R {\mathbb{R}}
\def \W {\mathcal{W}}
\def \X {\mathcal{X}}
\def \x {\mathbf{x}}
\def \y {\mathbf{y}}
\def \epsilon {\varepsilon}
\definecolor{DSgray}{cmyk}{0,1,0,0}
\newcommand{\pref}[1]{\prettyref{#1}}
\newcommand{\savehyperref}[2]{\texorpdfstring{\hyperref[#1]{#2}}{#2}}
\definecolor{gray}{rgb}{0.86,0.86,0.86}
\newtheorem{ass}{Assumption}
\def \Dreg {\mathbf{Reg}_T^{\mathbf{d}}}
\def \Bcal {\mathcal{B}}
\def \ellb {\boldsymbol{\ell}}
\def \mb {\boldsymbol{m}}
\def \epsilon {\varepsilon}
\def \y {\mathbf{y}}
\def \E {\mathbb{E}}
\def \x {\mathbf{x}}
\def \g {\mathbf{g}}
\def \D {\mathcal{D}}
\def \O {\mathcal{O}}
\def \z {\mathbf{z}}
\def \u {\mathbf{u}}
\def \H {\mathcal{H}}
\def \R {\mathbb{R}}
\def \Z {\mathcal{Z}}
\def \W {\mathcal{W}}
\def \p {\boldsymbol{p}}
\def \B {\mathbf{B}}
\def \m {M}
\def \xh {\widehat{\x}}
\def \B {\mathcal{B}}
\def \X {\mathcal{X}}
\def \MR {\psi}
\newcommand\xqed[1]{%
  \leavevmode\unskip\penalty9999 \hbox{}\nobreak\hfill
  \quad\hbox{#1}}
\newcommand\remarkend{\xqed{$\triangleleft$}}
\newcommand \term[1]{\mathtt{term}~(\mathtt{#1})}
\newcommand \metaregret{\mathtt{meta}\mbox{-}\mathtt{regret}}
\newcommand \baseregret{\mathtt{base}\mbox{-}\mathtt{regret}}
\begin{document}
\title{Optimistic Online Mirror Descent for Bridging \\ Stochastic and Adversarial Online Convex Optimization}

\author{\name Sijia Chen \email chensj@lamda.nju.edu.cn \\
     \addr National Key Laboratory for Novel Software Technology, Nanjing University, China \AND
     \name Yu-Jie Zhang \email yujie.zhang@ms.k.u-tokyo.ac.jp \\
     \addr The University of Tokyo, Chiba, Japan \AND
     \name Wei-Wei Tu \email tuweiwei@4paradigm.com \\
     \addr 4Paradigm Inc., Beijing, China \AND 
     \name Peng Zhao \email zhaop@lamda.nju.edu.cn \AND
     \name Lijun Zhang \email zhanglj@lamda.nju.edu.cn\\
     \addr National Key Laboratory for Novel Software Technology, Nanjing University, China\\
     School of Artificial Intelligence, Nanjing University, China}

\editor{my editor}
\maketitle

\begin{abstract}
The Stochastically Extended Adversarial (SEA) model, introduced by~\citet{OCO:Between}, serves as an interpolation between stochastic and adversarial online convex optimization. Under the smoothness condition on expected loss functions, it is shown that the expected \emph{static regret} of optimistic Follow-The-Regularized-Leader (FTRL) depends on the cumulative stochastic variance $\sigma_{1:T}^2$ and the cumulative adversarial variation $\Sigma_{1:T}^2$ for convex functions.~\citet{OCO:Between} also provide a regret bound based on the maximal stochastic variance $\sigma_{\max}^2$ and the maximal adversarial variation $\Sigma_{\max}^2$ for strongly convex functions. Inspired by their work, we investigate the theoretical guarantees of optimistic Online Mirror Descent (OMD) for the SEA model with smooth expected loss functions. For convex and smooth functions, we obtain the \emph{same} $\O(\sqrt{\sigma_{1:T}^2}+\sqrt{\Sigma_{1:T}^2})$ regret bound, but with a relaxation of the convexity requirement from individual functions to expected functions. For strongly convex and smooth functions, we establish an $\O\left(\frac{1}{\lambda}\left(\sigma_{\max}^2+\Sigma_{\max}^2\right)\log \left(\left(\sigma_{1:T}^2 + \Sigma_{1:T}^2\right)/\left(\sigma_{\max}^2+\Sigma_{\max}^2\right)\right)\right)$ bound, \emph{better} than their $\O((\sigma_{\max}^2$ $ + \Sigma_{\max}^2) \log T)$ result. For \mbox{exp-concave} and smooth functions, our approach yields a \emph{new} $\O(d\log(\sigma_{1:T}^2+\Sigma_{1:T}^2))$ bound. Moreover, we introduce the first expected \emph{dynamic regret} guarantee for the SEA model with convex and smooth expected functions, which is more favorable than static regret bounds in non-stationary environments. Furthermore, we expand our investigation to scenarios with non-smooth expected loss functions and propose novel algorithms built upon optimistic OMD with an implicit update, successfully attaining both static and dynamic regret guarantees.
\end{abstract}

%!TEX root = ../OMD4SEA_arxiv.tex

\section{Introduction}
\label{Introduction}
Online convex optimization (OCO) is a fundamental framework for online learning and has been applied in a variety of real-world applications such as spam filtering and portfolio management~\citep{Intro:Online:Convex}. OCO problems can be mainly divided into two categories: adversarial online convex optimization (adversarial OCO)~\citep{zinkevich-2003-online, ML:Hazan:2007} and stochastic online convex optimization (SCO)~\citep{nemirovski-2008-robust,COLT:Hazan:2011,Lan:SCO}. Adversarial OCO assumes that the loss functions are chosen arbitrarily or adversarially and the goal is to minimize the regret. SCO assumes that the loss functions are independently and identically distributed (i.i.d.), and the goal is to minimize the excess risk. Although the two models have been extensively studied~\citep{COLT:Shalev:2009,Intro:Online:Convex,Modern:Online:Learning}, in real scenarios the nature is not always completely adversarial or stochastic, but often lies somewhere in between.

\subsection{The Stochastically Extended Adversarial Model}\label{subsec:intro:sea}
The Stochastically Extended Adversarial (SEA) model is introduced by~\citet{OCO:Between} as an intermediate problem setup between adversarial OCO and SCO. In round $t \in [T]$, the learner selects a decision $\x_t$ from a convex feasible domain $\X \subseteq \R^d$, and nature chooses a  distribution $\mathfrak{D}_t$ from a set of distributions. Then, the learner suffers a loss $f_t(\x_t)$, where the individual function (also called random function) $f_t$ is sampled from the distribution $\mathfrak{D}_t$. The distributions are allowed to vary over time, and by choosing them appropriately, the SEA model reduces to adversarial OCO, SCO, or other intermediate settings. Additionally, for each $t \in [T]$, they define the (conditional) expected function as $F_t(\x) = \E_{f_t \sim \mathfrak{D}_t}[f_t(\x)]$.

Due to the randomness in the online process, our goal in the SEA model is to bound the \emph{expected regret} against any fixed comparator $\u \in \X$, defined as
\begin{equation} \label{eqn:expect:regret}
\E[\mathbf{Reg}_T(\u)] \triangleq \E\left[ \sum_{t=1}^T f_t(\x_t) - \sum_{t=1}^T f_t(\u) \right].
\end{equation}
Furthermore, to capture the characteristics of the SEA model, \citet{OCO:Between} introduce the following quantities. For each $t \in [T]$, define the (conditional) variance of gradients as 
\begin{equation} \label{eqn:variance:grad}
\sigma_t^2 = \sup_{\x \in \X} \E_{f_t \sim \mathfrak{D}_t}  \big[ \| \nabla f_t(\x) - \nabla F_t(\x) \|_2^2 \big].
\end{equation}
Notice that both $F_t(\x)$ and $\sigma_t^2$ can be random variables due to the randomness of distribution $\mathfrak{D}_t$. Then, the cumulative stochastic variance can be defined as
\begin{equation} \label{eqn:cumu:variance}
\sigma_{1:T}^2 = \E \left[ \sum_{t=1}^T \sigma_t^2 \right],
\end{equation}
which reflects the stochastic aspect of the online process. Moreover, the cumulative adversarial variation is defined as
\begin{equation} \label{eqn:cumu:varia}
\Sigma_{1:T}^2= \E \left[ \sum_{t=1}^T \sup_{\x \in \X} \|\nabla F_t(\x)- \nabla F_{t-1}(\x)\|_2^2\right],
\end{equation}
where $\nabla F_0(\x) = 0$, reflecting the adversarial difficulty.\footnote{If the nature is oblivious, then both $F_t(\x)$ and $\sigma_t^2$ will be deterministic and we can remove the expectation in (\ref{eqn:cumu:variance}) and~\eqref{eqn:cumu:varia}.}

\subsection{Existing Results}\label{subsec:intro:existresults}
With the smoothness of expected loss functions, \citet{OCO:Between} establish a series of results for the SEA model, including convex functions and strongly convex functions. 

In the case of convex and smooth functions, they prove an $\O(\sqrt{\sigma_{1:T}^2}+\sqrt{\Sigma_{1:T}^2})$ regret bound of optimistic follow-the-regularized-leader (FTRL). Note that they require the individual functions $\{f_t\}_{t=1}^T$ to be convex, which is relatively strict. When facing the adversarial setting, we have $\sigma_t^2 = 0$ for all $t$ and $\Sigma_{1:T}^2$ is equivalent to the gradient variation $V_T \triangleq \sum_{t=2}^T \sup_{\x \in \X} \| \nabla f_t(\x) - \nabla f_{t-1}(\x)\|_2^2$, so the bound implies a regret bound in the form of $\sum_{t=1}^T f_t(\x_t) - \min_{\x \in \X}\sum_{t=1}^T f_t(\x)\leq \O(\sqrt{V_T})$, matching the gradient-variation bound of~\citet{Gradual:COLT:12} and also recovering the $\O(\sqrt{T})$ bound in the worst case~\citep{zinkevich-2003-online}. In the SCO setting, we have $\Sigma_{1:T}^2 = 0$ since $F_1 = \cdots = F_T \triangleq F$, and $\sigma_t = \sigma$ for all $t$, where $\sigma$ denotes the variance of stochastic gradients. Then they obtain $\O(\sigma \sqrt{T})$ regret, leading to an excess risk bound in the form of $F(\x_T) - \min_{\x \in \X} F(\x)\leq \O(\sigma/\sqrt{T})$ through the standard online-to-batch conversion~\citep{TIT04:Bianchi}.

To investigate the strongly convex case, they assume that the \emph{maximum value} of stochastic variance is $\sigma_{\max}^2$ and the \emph{maximum value} of adversarial variation is $\Sigma_{\max}^2$; please refer to Assumption~\ref{ass:sigma} for details. Then~\citet{OCO:Between} prove an $\O((\sigma_{\max}^2 + \Sigma_{\max}^2) \log T)$ expected regret bound of optimistic FTRL for $\lambda$-strongly convex and smooth functions. Considering the adversarial setting, we have $\sigma_{\max}^2 = 0$ and $\Sigma_{\max}^2 \leq 4G^2$ where $G$ is the upper bound of individual function gradients, so their bound implies an $\O(\log T)$ regret bound. We note that unlike in the convex and smooth case, their expected regret bound fails to recover the $\O(\log V_T)$ gradient-variation bound~\citep{ICML:2022:Zhang}. In the SCO setting, we have $\Sigma_{\max}^2 = 0$ and $\sigma_{\max}^2 = \sigma^2$. Therefore, their result brings an $\O([\sigma^2 \log T]/T)$ excess risk bound through the online-to-batch conversion.

\subsection{Our Contributions}\label{subsec:intro:contribution}
Optimistic FTRL is an optimistic online learning algorithm~\citep{Predictable:COLT:2013}, which aims to exploit prior knowledge during the online process. Optimistic Online Mirror Descent (OMD) is another popular optimistic online learning algorithm, from which the gradient-variation bound of~\citet{Gradual:COLT:12} is (originally) derived. With the promising outcomes of optimistic FTRL~\citep{OCO:Between}, it is natural to inquire about optimistic OMD's theoretical guarantees for the SEA model, and we address this below.
\begin{compactitem}
    \item For convex and smooth functions, optimistic OMD enjoys the same $\O(\sqrt{\sigma_{1:T}^2}+\sqrt{\Sigma_{1:T}^2})$ expected regret bound as~\citet{OCO:Between}, but reduces their need for convexity of individual functions to a need for convexity of expected functions. 
    \item For strongly convex and smooth functions, optimistic OMD attains an $\O(\frac{1}{\lambda}(\sigma_{\max}^2$ $+\Sigma_{\max}^2)\log (\left(\sigma_{1:T}^2 + \Sigma_{1:T}^2\right)/(\sigma_{\max}^2+\Sigma_{\max}^2)))$ bound, better than the $\O(\frac{1}{\lambda}(\sigma_{\max}^2 + \Sigma_{\max}^2) $ $\log T)$ bound of \citet{OCO:Between} for optimistic FTRL in any case. 
    \item For exp-concave and smooth functions, our work establishes a new $\O(d\log(\sigma_{1:T}^2+\Sigma_{1:T}^2))$ bound for optimistic OMD, where $d$ denotes the dimensionality of decisions. 
    \item Our better results for optimistic OMD stem from more careful analyses and do not imply inherent superiority over optimistic FTRL for regret minimization. When encountering convex functions, we present a different analysis from \citet{OCO:Between}'s analysis of optimistic FTRL, thereby similarly weakening the convexity-related assumption as in optimistic OMD while achieving the same regret bound. We also provide new analyses for strongly convex functions and exp-concave functions respectively, both obtaining the same expected regret bounds as optimistic OMD.\vspace{2mm}
\end{compactitem}

\paragraph{Extension to Dynamic Regret.}~The metric~\eqref{eqn:expect:regret} is commonly referred to as expected static regret since the comparator is unchanged over time. We further extend the scope of the SEA model to optimize \emph{expected dynamic regret}~\citep{zinkevich-2003-online}, defined as
\begin{align}
\label{eq:dynamic-regret-measure}
	\E[\Dreg(\u_1,\cdots,\u_T)] \triangleq \E\left[ \sum_{t=1}^T f_t(\x_t) - \sum_{t=1}^T f_t(\u_t)\right],
\end{align}
where $\u_1,\ldots,\u_T \in \X$ is a sequence of (potentially) time-varying comparators. Note that the comparators can depend on the expected functions $\{F_1,\ldots,F_T\}$ and are required to be independent of the individual functions $\{f_1,\ldots,f_T\}$. To optimize the dynamic regret, we introduce the path length $P_T = \E[\sum_{t=2}^T \norm{\u_t -\u_{t-1}}_2]$ to measure the non-stationarity level, where $\E[\cdot]$ is taken over the potential randomness of the expected functions. Notably, the static regret~\eqref{eqn:expect:regret} can be treated as a special case with $\u_1=\ldots=\u_T = \u$. For the SEA model with convex and smooth expected functions, we obtain an $\O(P_T+\sqrt{1+P_T}(\sqrt{\sigma_{1:T}^2} + \sqrt{\Sigma_{1:T}^2}))$ expected dynamic regret. The bound is new and immediately recovers the $\O(\sqrt{\sigma_{1:T}^2}+\sqrt{\Sigma_{1:T}^2})$ expected static regret given $P_T=0$. It can also imply the $\O(\sqrt{(1+P_T+V_T)(1+P_T)})$ gradient-variation dynamic regret bound of \citet{Problem:Dynamic:Regret,JMLR:sword++} in the adversarial setting and reduce to the $\O(\sqrt{T(1+P_T)})$ dynamic regret in the worst case~\citep{Adaptive:Dynamic:Regret:NIPS} . We regard the support of dynamic regret as an advantage of optimistic OMD over optimistic FTRL. To the best of our knowledge, even $\O(\sqrt{T(1+P_T)})$ dynamic regret has not been established for FTRL-style methods in online convex optimization. 

\paragraph{Extension to Non-smooth Functions.}~In addition, by combining optimistic OMD with \emph{implicit update}, we extend our investigation to \emph{non-smooth} loss functions. For the SEA model with convex and non-smooth functions, we first establish an $\O(\sqrt{\tilde{\sigma}_{1:T}^2} +\sqrt{\Sigma_{1:T}^2} )$ static regret, based on which we further propose a two-layer algorithm equipped with an $\O(\sqrt{1+P_T}(\sqrt{\tilde{\sigma}_{1:T}^2}+\sqrt{\Sigma_{1:T}^2}))$ dynamic regret, where $\tilde{\sigma}_{1:T}^2$ defined in~\eqref{nonsmooth:tildesigma} represents a slightly more relaxed measure than $\sigma_{1:T}^2$.

Based on all the above theoretical guarantees, we apply optimistic OMD to a variety of intermediate cases between adversarial OCO and SCO. This leads to \emph{better} results for strongly convex functions and \emph{new} results for exp-concave functions, thereby enriching our understanding of the intermediate scenarios. Furthermore, our emphasis on dynamic regret minimization enables us to derive novel corollaries for the online label shift problem~\citep{NeurIPS'22:label_shift}, an interesting new problem setup with practical appeals. 

Compared to our earlier conference version~\citep{ICML'23:OMD4SEA}, this extended version provides significantly more results, along with refined presentations and more detailed analysis. Firstly, by revisiting and refining our analysis, we provide a better regret bound for strongly convex functions than our previous bound of \citep{ICML'23:OMD4SEA}. Secondly, we incorporate a more detailed analysis of dynamic regret minimization within the SEA model, adding insights to explain the optimism design's rationale and highlighting the disadvantages of alternative approaches. Thirdly, we investigate the SEA model with \emph{non-smooth} functions, where we employ optimistic OMD with an implicit update and obtain favorable regret guarantees. Additionally, we explore dynamic regret minimization with non-smooth functions. Lastly, we apply our findings to address the online label shift problem, yielding results that further demonstrate the SEA model's real-world applicability.

\paragraph{Organization. } The remainder of the paper is structured as follows. Section~\ref{sec:related} briefly reviews the related work. Our main results can be found in Section~\ref{sec:results}, in which we establish theoretical guarantees for convex, strongly convex, and exp-concave loss functions under the smoothness condition on loss functions respectively. In Section~\ref{sec:extensions}, we extend the investigations to dynamic regret minimization and non-smooth loss functions. In Section~\ref{sec:examples}, we illustrate our results by giving some special implications, such as online learning with limited resources and online label shift.  Section~\ref{sec:conclusion} concludes the paper and discusses future work. Some omitted details and proofs are provided in the appendix.
%!TEX root = ../OMD4SEA_arxiv.tex
\section{Related Work}\label{sec:related}
This section reviews related works in adversarial OCO, SCO, and intermediate settings. 

\subsection{Adversarial Online Convex Optimization}
\label{subsec:adversarialOCO}
Adversarial OCO can be seen as a repeated game between the online learner and the nature (or called the environment). In round $t \in [T]$, the online learner chooses a decision $\x_t$ from the convex feasible set $\X \subseteq \R^d$, and suffers a convex loss $f_t(\x_t)$ which the nature may adversarially select. The goal in adversarial OCO is to minimize the \emph{regret}:
\begin{align*}
    \mathbf{Reg}_T \triangleq \sum_{t=1}^T f_t(\x_t) - \min_{\x \in \X}\sum_{t=1}^T f_t(\x),
\end{align*}
which measures the cumulative loss difference between the learner and the best decision in hindsight \citep{Modern:Online:Learning}. For convex functions, Online Gradient Descent (OGD) achieves an $\O(\sqrt{T})$ regret with a step size of $\eta_t = \O(1/\sqrt{t})$ \citep{zinkevich-2003-online}. For $\lambda$-strongly convex functions, an $\O(\frac{1}{\lambda}\log T)$ bound is attained by OGD with $\eta_t=\O(1/[\lambda t])$~\citep{Shai:thesis}. For $\alpha$-exp-concave functions, Online Newton Step (ONS)~\citep{ML:Hazan:2007} obtains an $\O(\frac{d}{\alpha}\log T)$ bound. Those results are considered minimax optimal~\citep{Lower:bound:Portfolio,Minimax:Online} and cannot be improved in general.

Furthermore, various algorithms have been proposed to achieve \emph{problem-dependent} regret guarantees, which safeguard the minimax rates in the worst case and become better when problems satisfy benign properties such as smoothness~\citep{NIPS2010_Smooth,Gradual:COLT:12,Beyond:Logarithmic,Problem:Dynamic:Regret,JMLR:sword++}, sparsity~\citep{JMLR:Adaptive,COLT'10:adaptiveOCO,NIPS'18:sparse-OCO}, or other structural properties~\citep{Adam,TCS'20:modular-composite}. Among them, it is shown by~\citet{Gradual:COLT:12} that the regret for OCO with smooth functions can be upper bounded by the gradient-variation quantity, defined as
\begin{equation}
    \label{eq:variation}
    V_T = \sum_{t=2}^T \sup_{\x \in \X} \| \nabla f_t(\x) - \nabla f_{t-1}(\x)\|_2^2.
\end{equation}
Specifically, using the OMD framework with suitable configurations can attain an $\O(\sqrt{V_T})$ regret for convex and smooth functions and attain an $\O(\frac{d}{\alpha}\log V_T)$ regret for $\alpha$-exp-concave and smooth functions. \citet{ICML:2022:Zhang} extended the result to $\lambda$-strongly convex and smooth functions, achieving an $\O(\frac{1}{\lambda}\log V_T)$ bound. These bounds are notably tighter than previous problem-independent results when the loss functions change slowly such that the gradient variation $V_T$ is small.

Subsequently, \citet{Predictable:COLT:2013} introduced the paradigm of optimistic online learning, designed to leverage prior knowledge about upcoming loss functions. In this approach, the learner receives a prediction of the next loss in each round, which is used to secure tighter bounds when the predictions prove accurate and still preserve the worst-case regret bound otherwise. Then two frameworks are developed: optimistic FTRL and optimistic OMD, where the latter generalized the algorithm of \citet{Gradual:COLT:12}.

\subsection{Stochastic Online Convex Optimization}\label{related-work:sco}
SCO assumes i.i.d.~loss functions and aims to minimize the convex objective in an expectation form: $\min_{\x \in \X} F(\x)$, where $F(\x) = \E_{f\sim \mathfrak{D}}[f(\x)]$. The performance measure is the \emph{excess risk} of the solution point over the optimum, that is, $F(\x_T) - \min_{\x \in \X} F(\x)$.

For Lipschitz and convex functions, Stochastic Gradient Descent (SGD) achieves an $\O(1/\sqrt{T})$ excess risk bound. Improved rates are achievable when functions have additional properties. For smooth functions, SGD reaches an $\O(1/T + \sqrt{F_{*}/T})$ rate with $F_{*} = \min_{\x \in \X} F(\x)$, which will be tighter than $\O(1/\sqrt{T})$ when $F_{*}$ is small \citep{NIPS2010_Smooth}. For $\lambda$-strongly convex functions, \citet{COLT:Hazan:2011} establish an $\O(1/[\lambda T])$ excess risk bound through a variant of SGD. For $\alpha$-exp-concave functions, ONS provides an $\O(d\log T/[\alpha T])$ rate~\citep{ML:Hazan:2007,COLT:2015:Mahdavi}. When functions satisfy strong convexity and smoothness simultaneously, Accelerated Stochastic Approximation (AC-SA) achieves an $\O(1/T)$ rate with a smaller constant~\citep{Lan:SCSC}. Even faster results can be attained with strengthened conditions and advanced algorithms~\citep{NIPS13:ASGD,NIPS:13:Mixed, COLT'18:weighted-LS,Stochastic:Approximation:COLT}.

\subsection{Intermediate Setting}
In recent years, intermediate settings between adversarial OCO and SCO have drawn attention in Prediction with Expert Advice (PEA) problems~\citep{2020Prediction} and bandit problems~\citep{zimmert2021tsallis}. \citet{2020Prediction} study the stochastic regime with adversarial corruptions in PEA problems, achieving an $\O(\log N/\Delta + C_T)$ bound, where $N$ is the number of experts, $\Delta$ the suboptimality gap and $C_T \geq 0$ the corruption level. In bandit problems, \citet{zimmert2021tsallis} focus on the adversarial regime with a self-bounding constraint, establishing an $\O(N\log T/\Delta + \sqrt{C_T N\log T/\Delta})$ bound. \citet{2021On} further demonstrates an expected regret bound of $\O(\log N/\Delta + \sqrt{C_T\log N/\Delta})$ in this context. However, as mentioned by~\citet{2021On}, we know very little about the intermediate setting in OCO, with recent contributions like \citet{OCO:Between} being exceptions.

%!TEX root = ../OMD4SEA_arxiv.tex

\section{Optimistic Mirror Descent for the SEA Model}
\label{sec:results}
In this section, we first list the assumptions that will be used later. Then, we introduce  \textsc{optimistic OMD}, our main algorithmic framework. After that, we discuss its theoretical guarantees for the SEA model, along with new results of optimistic FTRL. The final subsection is dedicated to analyzing these results. 

\subsection{Assumptions}
The assumptions listed below may be employed in our analysis. It is important to note that we will clearly specify the assumptions utilized in the theorem statements.
\begin{ass}[gradient norms boundedness]\label{ass:3} The gradient norms of all the individual functions are bounded by $G$, i.e.~for all $t \in [T]$, we have $\max_{\x \in \X}\|\nabla f_t(\x)\|_2 \leq G$.
\end{ass}
\begin{ass}[domain boundedness]\label{ass:4} The domain $\X$ contains the origin $\mathbf{0}$, and the diameter of $\X$ is bounded by $D$, i.e.,~for all $\x,\y \in \X$, we have $\|\x -\y\|_2 \leq D$.
\end{ass}
\begin{ass}[maximal stochastic variance and adversarial variation]\label{ass:sigma}
All the variances of realizable gradients are at most $\sigma_{\max}^2$, and all the adversarial variations are upper bounded by $\Sigma_{\max}^2$, i.e., $\forall t \in [T]$, it holds that $\sigma_{t}^2 \leq \sigma_{\max}^2$ and $\sup_{\x \in \X} \|\nabla F_t(\x)- \nabla F_{t-1}(\x)\|_2^2 \leq \Sigma_{\max}^2$.
\end{ass}
\begin{ass}[smoothness of expected functions]\label{ass:2} For all $t\in [T]$, the expected function $F_t(\cdot)$ is $L$-smooth over $\X$, i.e.,~$\|\nabla F_t(\x) -\nabla F_t(\y)  \|_2 \leq L \|\x -\y\|_2,\ \forall \x,\y \in \X$.
\end{ass}
\begin{ass}[convexity of expected functions]\label{ass:1}
For all $t\in [T]$, the expected function $F_t(\cdot)$ is convex over $\X$.
\end{ass}
\begin{ass}[strong convexity of expected functions]\label{ass:5} For $t\in[T]$, the expected function $F_t(\cdot)$ is $\lambda$-strongly convex over $\X$.
\end{ass}
\begin{ass}[exponential concavity of individual functions]\label{ass:6} For $t\in [T]$, the individual function $f_t(\cdot)$ is $\alpha$-exp-concave over $\X$.
\end{ass}
\begin{ass}[convexity of individual functions]\label{ass:individual:convexity}
For all $t\in [T]$, the individual function $f_t(\cdot)$ is convex over $\X$.
\end{ass}

\subsection{Algorithm}
Optimistic OMD is a versatile and powerful framework for online learning~\citep{Predictable:COLT:2013}. During the learning process, it maintains two sequences $\{\x_t\}_{t=1}^T$ and $\{\xh_t\}_{t=1}^T$. In round $t \in [T]$, the learner first submits the decision $\x_t$ and observes the individual function $f_t(\cdot)$. Then, an optimistic vector $\m_{t+1} \in \R^d$ is received that encodes certain prior knowledge of the (unknown) function $f_{t+1}(\cdot)$, and the algorithm updates by
\begin{align}
\xh_{t+1} &= \argmin_{\x \in \X} \langle \nabla f_t(\x_t), \x \rangle + \D_{\MR_t}(\x, \xh_t) \label{eqn:update:u},\\
\x_{t+1} &= \argmin_{\x \in \X} \langle \m_{t+1}, \x \rangle + \D_{\MR_{t+1}}(\x, \xh_{t+1}) \label{eqn:update:x} ,
\end{align}
where $\D_{\MR}(\x,\y) = \MR(\x)-\MR(\y)- \langle \nabla \MR(\y), \x-\y \rangle$ denotes the Bregman divergence induced by a differentiable convex function $\MR: \X \mapsto \R$ (or usually called regularizer). In our work, we allow the regularizer to be time-varying. The specific choice of $\MR_t(\cdot)$ depends on the type of online functions and will be determined later. 

To leverage the possible smoothness of functions, we simply set the optimism as the last-round gradient, that is, $\m_{t+1}=\nabla f_t(\x_t)$~\citep{Gradual:COLT:12}. We initialize $\x_1=\xh_1$ as an arbitrary point in $\X$. The overall procedures are summarized in Algorithm~\ref{alg:1}.

\begin{algorithm}[t]
\caption{Optimistic Online Mirror Descent (Optimistic OMD)}
\begin{algorithmic}[1]
\REQUIRE{Regularizer $\MR_t: \X \mapsto \R$}
\STATE Set $\x_1=\xh_1$ to be any point in $\X$
\FOR{$t=1,\ldots,T$}
\STATE Submit $\x_t$ and the nature selects a distribution $\mathfrak{D}_t$
\STATE Receive $f_t(\cdot)$, which is sampled from $\mathfrak{D}_t$
\STATE Receive an optimistic vector $\m_{t+1}$ encoding certain prior knowledge of $f_{t+1}(\cdot)$
\STATE Update $\hat{\x}_{t+1}$ and $\x_{t+1}$ according to (\ref{eqn:update:u}) and (\ref{eqn:update:x})
\ENDFOR
\end{algorithmic}
\label{alg:1}
\end{algorithm}

\begin{myRemark}
If we drop the expectation operation, the measure~\eqref{eqn:expect:regret} becomes the standard regret. Consequently, a straightforward way is to integrate existing regret bounds of optimistic OMD~\citep{Gradual:COLT:12,Predictable:COLT:2013} and subsequently simplify the expectation. However, as elaborated in~\citet[Remark 4]{OCO:Between}, this approach only yields very loose bounds. Therefore, it becomes necessary to dig into the analysis and scrutinize the influence of expectations during the intermediate steps. 
\remarkend
\end{myRemark}

In the following, we consider three different instantiations of Algorithm~\ref{alg:1}, each corresponding to the SEA model with different types of functions: convex, strongly convex, and exp-concave functions, respectively. We also provide their respective theoretical guarantees.

\subsection{Convex and Smooth Functions}\label{subsec:con}
In this part, we focus on the case that \emph{expected functions} are convex and smooth. \citet{OCO:Between} require individual functions $f_t(\cdot)$ ($t\in[T]$) to be convex (see Assumption~A1 of their paper), whereas we only require expected functions $F_t(\cdot)$ ($t\in[T]$) to be convex, which is a much weaker condition. This relaxation, which has been studied in many stochastic optimization works \citep{shalev2016sdca,hu2017unified,ahn2020sgd}, is due to the observation that the expectation in~\eqref{eqn:expect:regret} eliminates the need for convexity in individual functions. Specifically, for any fixed $\u \in \X$ we have
\begin{equation} 
\E\big[ f_t(\x_t) - f_t(\u) \big] = \E\big[ F_t(\x_t) - F_t(\u) \big] 
\leq \E\big[ \langle \nabla F_t(\x_t), \x_t -\u \rangle \big] = \E\big[  \langle \nabla f_t(\x_t), \x_t -\u \rangle \big]. \label{eqn:exp:cov}
\end{equation}
The inequality arises from the convexity of $F_t(\cdot)$ and the last step is due to the interchangeability of differentiation and integration by Leibniz integral rule. Note that the independence between $\u$ and $f_t$ is important for this derivation. We emphasize that if $\u$ is chosen based on random functions, then the convexity of random functions will be necessary.\footnote{Fortunately, a favorable choice of $\u$ is usually independent of random functions. For instance, if the nature is oblivious, we can choose $\u = \u^* \in \argmin_{\u \in \X}\sum_{t=1}^T F_t(\u)$, which only depends on expected functions. Additionally, fitting to $\{F_1,\cdots,F_T\}$ is preferable in practice as fitting to $\{f_1,\cdots,f_T\}$ might cause overfitting.} 

Below, we focus on the optimization over bound the expected regret in terms of the linearized function, i.e., $\sum_{t=1}^T \langle \nabla f_t(\x_t), \x_t -\u \rangle$. For convex and smooth functions, we configure the algorithm with  Euclidean regularizer 
\begin{equation} \label{eqn:step:size:cov}
\MR_t(\x) = \frac{1}{2 \eta_t} \|\x\|_2^2\quad \text{and step size} \quad \eta_t = \frac{D}{\sqrt{\delta+4G^2 + \bar{V}_{t-1}}},
\end{equation}
where $\bar{V}_{t-1} = \sum_{s=1}^{t-1} \|\nabla f_s(\x_s) - \nabla f_{s-1} (\x_{s-1}) \|_2^2$ (assuming $\nabla f_0(\x_0)=0$) and $\delta>0$ is a parameter to be specified later. Then, the optimistic OMD updates in (\ref{eqn:update:u}) and (\ref{eqn:update:x}) become
\begin{align} 
    \label{eqn:update:cov:u}
    \xh_{t+1} = \Pi_{\X} \big[ \xh_t - \eta_t  \nabla f_t(\x_t)\big],~~\x_{t+1} = \Pi_{\X} \big[ \xh_{t+1} - \eta_{t+1}  \nabla f_t(\x_t)\big],  
\end{align}
where $\Pi_{\X}[\cdot]$ denotes the Euclidean projection onto the feasible domain $\X$. The algorithm executes gradient descent twice per round, using an adaptive step size akin to self-confident tuning~\citep{AUER200248}. This approach obviates the need for the doubling trick used in prior works~\citep{Gradual:COLT:12,Predictable:COLT:2013,Dynamic:AISTATS:15}.

Below, we present the theoretical guarantee of optimistic OMD for the SEA model with convex and smooth functions. The proof is in Section~\ref{appendix:proofs-convex}.
\begin{myThm} \label{thm:1} Under Assumptions~\ref{ass:3}, \ref{ass:4}, \ref{ass:2} and \ref{ass:1}, optimistic OMD with regularizer~\eqref{eqn:step:size:cov} and updates~\eqref{eqn:update:cov:u} enjoys the following guarantee:
\begin{align*}
\E[\mathbf{Reg}_T(\u)]
\leq  5 \sqrt{10} D^2 L +  \frac{5\sqrt{5}  DG}{2} + 5 \sqrt{2}D\sqrt{ \sigma_{1:T}^2 } + 5 D \sqrt{\Sigma_{1:T}^2 } 
=  \O\left(\sqrt{\sigma_{1:T}^2}+\sqrt{\Sigma_{1:T}^2}\right),
\end{align*}
where we set $\delta=10 D^2 L^2$ in~\eqref{eqn:step:size:cov}.
\end{myThm}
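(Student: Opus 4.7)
The plan is to reduce the expected regret to a linearized surrogate via convexity of the expected functions, apply the standard optimistic OMD inequality, decompose the squared gradient differences into stochastic, adversarial, and smoothness pieces, and absorb the smoothness-induced self-bounding term using the negative stability contribution that the OMD analysis produces. The specific choice $\delta = 10 D^2 L^2$ is dictated by this absorption step.

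First I would invoke the identity~\eqref{eqn:exp:cov}, which requires only convexity of $F_t$ (not of the random $f_t$) and independence of $\u$ from $f_t$, to obtain
\begin{equation*}
\E[\mathbf{Reg}_T(\u)] \leq \E\!\left[\sum_{t=1}^T \langle \nabla f_t(\x_t),\, \x_t-\u\rangle\right].
\end{equation*}
Then, with $\MR_t(\x) = \|\x\|_2^2/(2\eta_t)$, non-increasing $\eta_t$, and optimism $\m_{t+1}=\nabla f_t(\x_t)$, the standard optimistic OMD analysis yields the pathwise bound
\begin{equation*}
\sum_{t=1}^T \langle \nabla f_t(\x_t),\x_t-\u\rangle \leq \frac{D^2}{2\eta_T} + \sum_{t=1}^T \frac{\eta_t}{2}\|\nabla f_t(\x_t)-\nabla f_{t-1}(\x_{t-1})\|_2^2 - \sum_{t=1}^T \frac{\|\x_t-\xh_t\|_2^2 + \|\xh_{t+1}-\x_t\|_2^2}{2\eta_t},
\end{equation*}
where the first term arises from telescoping the Bregman divergence (using $\|\u-\xh_1\|_2^2\leq D^2$), the middle from a Young-type inequality against the optimism, and the negative term from strong convexity of the regularizer.

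The heart of the proof is bounding the gradient-difference term. Inserting $\nabla F_t(\x_t)$ and $\nabla F_{t-1}(\x_{t-1})$ and applying $\|a+b+c+d\|_2^2\leq 4(\|a\|_2^2+\|b\|_2^2+\|c\|_2^2+\|d\|_2^2)$ splits it into two stochastic deviations whose conditional expectations are $\leq 4\sigma_t^2$ and $\leq 4\sigma_{t-1}^2$ by~\eqref{eqn:variance:grad}, one adversarial piece bounded by $4\sup_\x\|\nabla F_t(\x)-\nabla F_{t-1}(\x)\|_2^2$ (summing to $4\Sigma_{1:T}^2$ in expectation), and a smoothness piece controlled via Assumption~\ref{ass:2} by $4L^2\|\x_t-\x_{t-1}\|_2^2 \leq 8L^2(\|\x_t-\xh_t\|_2^2 + \|\xh_t-\x_{t-1}\|_2^2)$. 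The main obstacle is absorbing this last term into the negative stability: after re-indexing $\sum_t \|\xh_t-\x_{t-1}\|_2^2$ as a one-step shift of $\sum_t \|\xh_{t+1}-\x_t\|_2^2$ and using monotonicity of $\eta_t$, the absorption reduces to the conditions $8L^2\eta_t^2 \leq 1$ and $8L^2\eta_t\eta_{t-1}\leq 1$. With $\delta = 10D^2L^2$, we have $\eta_t \leq D/\sqrt{\delta} = 1/(\sqrt{10}\,L)$, so both quantities are at most $4/5$ and the absorption succeeds with a nonnegative slack that is discarded.

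Once absorption is done, what remains has the form $D^2/(2\eta_T) + c\sum_t \eta_t Y_t$ where $Y_t$ is a sum of the three surviving per-round residuals and $c$ is a universal constant. The self-confident step size $\eta_t = D/\sqrt{\delta+4G^2+\bar V_{t-1}}$ combined with the classical inequality $\sum_t a_t/\sqrt{c+\sum_{s<t}a_s} \leq 2\sqrt{c+\sum_t a_t}$ (applied with the buffer $c=\delta+4G^2 \geq \bar V_t - \bar V_{t-1}$) yields a pathwise bound of order $D\sqrt{\delta+4G^2+\bar V_T}$. Taking expectation, invoking Jensen to move $\sqrt{\cdot}$ outside $\E$, substituting the expectation bound on $\bar V_T$ derived from the four-way decomposition, and finally applying $\sqrt{a+b+c+d}\leq \sqrt a+\sqrt b+\sqrt c+\sqrt d$ separates the contributions into the four constants stated: $5\sqrt{10}D^2L$ from $\delta$, $\tfrac{5\sqrt 5}{2}DG$ from $4G^2$, $5\sqrt{2}D\sqrt{\sigma_{1:T}^2}$ from the stochastic noises, and $5D\sqrt{\Sigma_{1:T}^2}$ from the adversarial variation.
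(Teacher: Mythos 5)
Your high-level skeleton matches the paper's (linearize via convexity of $F_t$, optimistic OMD inequality, decompose via Lemma~\ref{lem:sum:diff}, absorb the $L^2\|\x_t-\x_{t-1}\|_2^2$ piece, self-confident summation, Jensen). The gap is in the absorption step. The optimistic-OMD inequality you state --- coefficient $\tfrac{\eta_t}{2}$ on $\|\nabla f_t(\x_t)-\nabla f_{t-1}(\x_{t-1})\|_2^2$ together with \emph{both} negative terms $\tfrac{1}{2\eta_t}\|\x_t-\xh_t\|_2^2$ and $\tfrac{1}{2\eta_t}\|\xh_{t+1}-\x_t\|_2^2$ --- is not available: bounding $\langle \nabla f_t(\x_t)-\nabla f_{t-1}(\x_{t-1}),\x_t-\xh_{t+1}\rangle$ by Young's inequality produces a $\tfrac{1}{2\eta_t}\|\x_t-\xh_{t+1}\|_2^2$ term that exactly cancels one of the two negatives, while keeping both negatives requires the stability lemma (Lemma~\ref{lem:stability}) and yields coefficient $\eta_t$, not $\eta_t/2$ --- this is precisely Lemma~\ref{lem:1}. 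With the correct coefficient $\eta_t$, your pointwise absorption of $\eta_t\cdot 8L^2\|\x_t-\xh_t\|_2^2$ into $\tfrac{1}{2\eta_t}\|\x_t-\xh_t\|_2^2$ requires $16L^2\eta_t^2\le 1$, i.e.\ $\delta\ge 16D^2L^2$; at $\delta=10D^2L^2$ one only has $16L^2\eta_t^2\le 8/5>1$, so the absorption fails. Your ``$4/5$'' comes entirely from the spurious factor of $1/2$.

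There is a second gap in the summation step. After absorbing the smoothness piece pointwise you are left with $\sum_t\eta_t Y_t$, but Lemma~\ref{lem:sum} requires the summand to be the exact increment $a_t=\bar V_t-\bar V_{t-1}$ that sits inside $\eta_t$'s denominator, whereas $Y_t$ are decomposed pieces that only \emph{upper-bound} $a_t$; bounding $\eta_t\le D/\sqrt{\delta}$ gives a term linear in $\sum_t Y_t$, not $\sqrt{\sum_t Y_t}$. The paper handles both problems at once by not absorbing pointwise: it applies Lemma~\ref{lem:sum} to the full $\sum_t\eta_t a_t$ (getting $2D\sqrt{\delta+\bar V_T}$), combines with the telescoping term to obtain $\tfrac{5D}{2}\sqrt{\delta+4G^2+\bar V_{T-1}}-\tfrac{\sqrt{\delta}}{4D}\sum_t\|\x_t-\x_{t-1}\|_2^2$, decomposes $\bar V_{T-1}$ \emph{inside} the square root via Lemma~\ref{lem:sum:diff} and subadditivity of $\sqrt{\cdot}$, and only then absorbs the surviving $5DL\sqrt{\sum_t\|\x_t-\x_{t-1}\|_2^2}$ against the negative term by AM-GM. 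That AM-GM absorption is valid for every $\delta>0$, and $\delta=10D^2L^2$ is the choice that balances $\tfrac{5D}{2}\sqrt{\delta}$ against the AM-GM residue $\tfrac{25D^3L^2}{\sqrt{\delta}}$, giving the constant $5\sqrt{10}D^2L$. Your route should be replaced by this square-root-level absorption; as written it does not close.
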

\begin{myRemark}
Theorem~\ref{thm:1} demonstrates the same regret bound as the work of~\citet{OCO:Between}, but under weaker assumptions --- we require only the convexity of expected functions, as opposed to individual functions in their work. The regret bound is optimal according to the lower bound of~\citet[Theorem 6]{OCO:Between}. 
\remarkend
\end{myRemark}

In this subsection's final part, we provide a new result of optimistic FTRL for the SEA model. Notably, we illustrate that even \emph{without} the convexity of individual functions, optimistic FTRL can achieve the \emph{same} guarantee as~\citet{OCO:Between}. This is achieved by using a linearized surrogate loss $\{\langle \nabla f_t(\x_t), \cdot\rangle\}_{t=1}^T$ instead of the original loss $\{ f_t(\cdot)\}_{t=1}^T$.
\begin{myThm} \label{thm:convex-FTRL}
Under Assumptions~\ref{ass:3}, \ref{ass:4}, \ref{ass:2}, and~\ref{ass:1}~(without assuming convexity of individual functions), with an appropriate setup for the optimistic FTRL (see details in Appendix~\ref{appendix:proofs-convex-FTRL}), the expected regret  is at most $\O\big(\sqrt{\sigma_{1:T}^2}+\sqrt{\Sigma_{1:T}^2}\big)$.
\end{myThm}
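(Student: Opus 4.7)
The plan is to reduce the expected regret to a linearized regret and then apply the standard optimistic FTRL analysis. The key observation that removes the need for convexity of individual functions is exactly the one already used for optimistic OMD in Section~\ref{subsec:con}: by Assumption~\ref{ass:1} and Leibniz's rule,
\begin{equation*}
\E[\mathbf{Reg}_T(\u)] \;=\; \E\!\left[\sum_{t=1}^T F_t(\x_t) - F_t(\u)\right] \;\leq\; \E\!\left[\sum_{t=1}^T \langle \nabla F_t(\x_t), \x_t - \u\rangle\right] \;=\; \E\!\left[\sum_{t=1}^T \langle \nabla f_t(\x_t), \x_t - \u\rangle\right],
\end{equation*}
using the independence of $\u$ from $f_t$. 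So it suffices to bound the expected linearized regret. To this end, I would run optimistic FTRL on the linear surrogate losses $\ell_t(\x) = \langle \nabla f_t(\x_t), \x\rangle$ with optimism $\m_{t+1} = \nabla f_t(\x_t)$ and a time-varying Euclidean regularizer $\MR_t(\x) = \tfrac{1}{2\eta_t}\|\x\|_2^2$, where $\eta_t$ is self-confidently tuned by the cumulative gap $\sum_{s=1}^{t-1}\|\nabla f_s(\x_s)-\nabla f_{s-1}(\x_{s-1})\|_2^2$, analogously to~\eqref{eqn:step:size:cov}.

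Next, I would invoke a standard optimistic FTRL regret bound for linear losses (see, e.g., \citet{Predictable:COLT:2013}), which yields
\begin{equation*}
\sum_{t=1}^T \langle \nabla f_t(\x_t), \x_t - \u\rangle \;\leq\; \frac{D^2}{2\eta_T} + \sum_{t=1}^T \eta_t \,\|\nabla f_t(\x_t) - \nabla f_{t-1}(\x_{t-1})\|_2^2 \;-\; \sum_{t=1}^T \frac{1}{4\eta_t}\|\x_{t+1}-\x_t\|_2^2
\end{equation*}
together with the analogous negative stability term on $\|\x_t - \xh_t\|_2^2$ (or their FTRL counterparts). The self-confident choice of $\eta_t$ then converts the middle sum into $O(D\sqrt{\sum_t\|\nabla f_t(\x_t)-\nabla f_{t-1}(\x_{t-1})\|_2^2})$.

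The remaining work is to control the cumulative gradient-difference term in expectation. I would use the decomposition
\begin{equation*}
\|\nabla f_t(\x_t)-\nabla f_{t-1}(\x_{t-1})\|_2^2 \leq 4\|\nabla f_t(\x_t)-\nabla F_t(\x_t)\|_2^2 + 4\|\nabla F_t(\x_t)-\nabla F_{t-1}(\x_t)\|_2^2 + 4\|\nabla F_{t-1}(\x_t)-\nabla F_{t-1}(\x_{t-1})\|_2^2 + 4\|\nabla F_{t-1}(\x_{t-1})-\nabla f_{t-1}(\x_{t-1})\|_2^2.
\end{equation*}
Taking expectations and using~\eqref{eqn:variance:grad},~\eqref{eqn:cumu:variance},~\eqref{eqn:cumu:varia}, the first, second, and fourth terms contribute $O(\sigma_{1:T}^2 + \Sigma_{1:T}^2)$. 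The third term is at most $4L^2\|\x_t-\x_{t-1}\|_2^2$ by Assumption~\ref{ass:2}, and this is exactly the quantity that the negative stability term in the optimistic FTRL bound can absorb, provided the step size $\eta_t$ is chosen small enough (e.g., by inflating the constant inside $\eta_t$ via a $\delta = \Theta(D^2 L^2)$ as in~\eqref{eqn:step:size:cov}).

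The main obstacle I anticipate is the smoothness-absorption step: in OMD the negative $-\tfrac{1}{\eta_t}\|\x_{t+1}-\xh_{t+1}\|_2^2$ terms appear very cleanly from the two-step Bregman update, whereas for FTRL one must carefully track both $\|\x_{t+1}-\x_t\|_2^2$ and the difference between the optimistic and non-optimistic iterates. Bounding the expectation of a square root through Jensen and handling the adaptive step size in expectation (the quantities $\sigma_t^2$ and $\|\nabla F_t - \nabla F_{t-1}\|^2$ inside $\eta_t$ are random) are the subsidiary technicalities; these can be resolved by first bounding conditionally and then applying Jensen's inequality to move the expectation inside the square root, exactly as in the proof of Theorem~\ref{thm:1}.
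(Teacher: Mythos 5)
Your proposal is correct and follows essentially the same route as the paper: linearize via convexity of the expected functions (so individual convexity is never needed), run optimistic FTRL with a self-tuned Euclidean regularizer on the linearized losses, invoke the standard optimistic-FTRL regret bound with its negative stability term, decompose $\|\nabla f_t(\x_t)-\nabla f_{t-1}(\x_{t-1})\|_2^2$ into variance, variation, and a smoothness term $4L^2\|\x_t-\x_{t-1}\|_2^2$, cancel the latter against the negative term by choosing $\delta=\Theta(D^2L^2)$, and finish with Jensen. Two small inaccuracies worth noting: optimistic FTRL maintains a single iterate sequence, so there is no $\xh_t$ and no second stability term (the paper's Lemma~\ref{lem:FTRL} gives exactly one negative term $-\tfrac{1}{\eta_t}\|\x_t-\x_{t+1}\|_2^2$, and that alone suffices); and the paper uses the recursively-weighted step size $\eta_t=D^2/(\delta+\sum_{s<t}\eta_s\|\nabla f_s(\x_s)-\nabla f_{s-1}(\x_{s-1})\|_2^2)$ rather than the OMD-style square-root tuning, precisely to invoke the $\sum_t\eta_t\|\cdot\|^2\le D\sqrt{2\bar V_T}+4D^2G^2/\delta$ estimate — your proposed square-root tuning could also be made to work via Lemma~\ref{lem:sum}, but you would need to redo that bookkeeping.
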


\subsection{Strongly Convex and Smooth Functions}\label{subsec:result:str}
In this part, we examine the case when \emph{expected functions} are strongly convex and smooth. We still employ optimistic OMD (Algorithm~\ref{alg:1}) and define the regularizer as
\begin{equation} \label{eqn:step:size:str}
    \MR_t(\x) = \frac{1}{2 \eta_t} \|\x\|_2^2\quad \text{with step size} \quad \eta_t = \frac{2}{\lambda t} .
\end{equation}
It is worth mentioning that this step size configuration is \emph{new} and much simpler than the self-confident step size used in earlier research on gradient-variation bounds for strongly convex and smooth functions~\citep{ICML:2022:Zhang}. Then the update rules maintain the same form as~\eqref{eqn:update:cov:u} in essence. We provide the following expected regret bound for the SEA model with strongly convex and smooth functions, the proof of which is in Section~\ref{appendix:proofs-sc}.

\begin{myThm} \label{thm:2} Under Assumptions~\ref{ass:3}, \ref{ass:4}, \ref{ass:sigma}, \ref{ass:2} and \ref{ass:5}, optimistic OMD with regularizer~\eqref{eqn:step:size:str} and updates~\eqref{eqn:update:cov:u} enjoys the following guarantee
\begin{align*}
 \E[\mathbf{Reg}_T(\u)]
\leq{}&\frac{32\sigma_{\max}^2+16\Sigma_{\max}^2}{\lambda}\ln \left(\frac{1}{2\sigma_{\max}^2+\Sigma_{\max}^2}\left(2\sigma_{1:T}^2 + \Sigma_{1:T}^2\right)+1\right) + \frac{64\sigma_{\max}^2+32\Sigma_{\max}^2 }{\lambda}\\
    {}&+ \frac{16L^2 D^2}{\lambda}\ln \bigg(1+8\sqrt{2}\frac{L}{\lambda}\bigg)+ \frac{16L^2 D^2 + 4G^2}{\lambda} + \frac{\lambda D^2}{4}\\
    ={}&\O\left(\frac{1}{\lambda}\left(\sigma_{\max}^2+\Sigma_{\max}^2\right)\log \left(\left(\sigma_{1:T}^2 + \Sigma_{1:T}^2\right)/\left(\sigma_{\max}^2+\Sigma_{\max}^2\right)\right)\right).
\end{align*}
\end{myThm}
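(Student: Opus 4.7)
The overall plan is to start from the generic optimistic OMD regret identity, exploit the strong convexity of the expected functions to cancel the regularization telescoping, decompose the gradient difference $\|\nabla f_t(\x_t)-\nabla f_{t-1}(\x_{t-1})\|_2^2$ into stochastic, adversarial, and smoothness pieces, and finally apply a ``packing estimate'' on $\sum_t a_t/t$ to replace the usual $\log T$ factor by $\log\bigl((\sigma_{1:T}^2+\Sigma_{1:T}^2)/(\sigma_{\max}^2+\Sigma_{\max}^2)\bigr)$.

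\textbf{Step 1 (Linearization via expected strong convexity).} Since $\u$ is fixed and independent of $f_t$, Leibniz's rule gives $\E[f_t(\x_t)-f_t(\u)]=\E[F_t(\x_t)-F_t(\u)]$, and the $\lambda$-strong convexity of $F_t$ yields
\[
\E[f_t(\x_t)-f_t(\u)]\;\le\;\E\bigl[\langle\nabla f_t(\x_t),\x_t-\u\rangle\bigr]-\tfrac{\lambda}{2}\E\|\x_t-\u\|_2^2.
\]
Hence it suffices to upper bound the linearized regret $\sum_t\langle\nabla f_t(\x_t),\x_t-\u\rangle$ and then subtract $\tfrac{\lambda}{2}\sum_t\|\x_t-\u\|_2^2$.

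\textbf{Step 2 (Standard optimistic OMD identity).} The two-step update with $\m_{t+1}=\nabla f_t(\x_t)$ and Euclidean regularizer $\MR_t=\tfrac{1}{2\eta_t}\|\cdot\|_2^2$ gives, by the usual Bregman-three-point argument, a bound of the form
\[
\sum_t\langle\nabla f_t(\x_t),\x_t-\u\rangle
\;\le\;\underbrace{\sum_t\bigl[\tfrac{1}{2\eta_t}\|\u-\xh_t\|_2^2-\tfrac{1}{2\eta_t}\|\u-\xh_{t+1}\|_2^2\bigr]}_{\text{Bregman telescoping}}
+\sum_t\eta_t\|\nabla f_t(\x_t)-\nabla f_{t-1}(\x_{t-1})\|_2^2
-\sum_t\tfrac{1}{4\eta_t}\|\x_t-\xh_t\|_2^2.
\]
With $\eta_t=2/(\lambda t)$, summation by parts of the telescoping piece produces $\sum_t\tfrac{\lambda}{4}\|\u-\xh_t\|_2^2$ plus the boundary $\tfrac{\lambda}{4}\|\u-\xh_1\|_2^2\le\tfrac{\lambda D^2}{4}$, which combined with $-\tfrac{\lambda}{2}\|\x_t-\u\|_2^2$ from Step 1 (via $\|\x_t-\u\|_2^2\ge\tfrac12\|\xh_t-\u\|_2^2-\|\x_t-\xh_t\|_2^2$) contributes only an extra $\tfrac{\lambda}{2}\|\x_t-\xh_t\|_2^2$ term, which is absorbed by the negative stability $-\tfrac{\lambda t}{4}\|\x_t-\xh_t\|_2^2$ for all $t\ge 2$.

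\textbf{Step 3 (Decomposition of the gradient difference).} Write
\[
\nabla f_t(\x_t)-\nabla f_{t-1}(\x_{t-1})
=\bigl[\nabla f_t(\x_t)-\nabla F_t(\x_t)\bigr]
+\bigl[\nabla F_t(\x_t)-\nabla F_{t-1}(\x_t)\bigr]
+\bigl[\nabla F_{t-1}(\x_t)-\nabla F_{t-1}(\x_{t-1})\bigr]
+\bigl[\nabla F_{t-1}(\x_{t-1})-\nabla f_{t-1}(\x_{t-1})\bigr],
\]
apply the four-term AM--QM inequality, and take expectations. Since $\x_t$ is $\mathfrak{D}_t$-measurable before $f_t$ is drawn, the first and fourth pieces contribute at most $\sigma_t^2$ and $\sigma_{t-1}^2$, the second piece is at most $\E[\sup_\x\|\nabla F_t(\x)-\nabla F_{t-1}(\x)\|_2^2]$, and the third piece is at most $L^2\E\|\x_t-\x_{t-1}\|_2^2$ by Assumption~\ref{ass:2}.

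\textbf{Step 4 (Absorbing the smoothness term).} The $L^2\|\x_t-\x_{t-1}\|_2^2/(\lambda t)$ terms require care: split $\|\x_t-\x_{t-1}\|_2^2\le 2\|\x_t-\xh_t\|_2^2+2\|\xh_t-\x_{t-1}\|_2^2$ and exploit the nonexpansiveness of the OMD update to control $\|\xh_t-\x_{t-1}\|_2^2$. For $t\ge t_\star\asymp L/\lambda$, the coefficient $8L^2/(\lambda t)$ in front of $\|\x_t-\xh_t\|_2^2$ is dominated by the remaining slack $\lambda t/c$ in the negative stability term; for $t<t_\star$ one bounds the contribution by $2L^2D^2/(\lambda t)$ directly and sums, producing the $\tfrac{16L^2D^2}{\lambda}\log(1+8\sqrt2\,L/\lambda)$ factor seen in the theorem. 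This split is the main technical obstacle, because the constants in the regime boundary must be chosen so that (i) the large-$t$ tail is nonpositive, and (ii) the small-$t$ contribution matches the stated boundary term.

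\textbf{Step 5 (Packing lemma).} The remaining genuine regret is
\[
\frac{2}{\lambda}\sum_{t=1}^T\frac{4(\sigma_t^2+\sigma_{t-1}^2+\Sigma_t^2)}{t}.
\]
To upgrade the trivial $\log T$ bound to the promised $\log\bigl((\sigma_{1:T}^2+\Sigma_{1:T}^2)/(\sigma_{\max}^2+\Sigma_{\max}^2)\bigr)$, I will use the elementary packing inequality: for any nonnegative sequence $\{a_t\}$ with $a_t\le a_{\max}$ and $\sum_{t=1}^T a_t\le S$,
\[
\sum_{t=1}^T\frac{a_t}{t}\;\le\;a_{\max}\Bigl(1+\log\bigl(1+\tfrac{S}{a_{\max}}\bigr)\Bigr),
\]
which follows by noting that the sum is maximized when all mass is packed into the smallest indices. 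Applying this to $a_t=\sigma_t^2$, $\sigma_{t-1}^2$, and $\sup_\x\|\nabla F_t(\x)-\nabla F_{t-1}(\x)\|_2^2$ (with $a_{\max}=\sigma_{\max}^2,\sigma_{\max}^2,\Sigma_{\max}^2$ respectively) and then taking expectation yields the desired $\frac{1}{\lambda}(\sigma_{\max}^2+\Sigma_{\max}^2)\log\bigl((\sigma_{1:T}^2+\Sigma_{1:T}^2)/(\sigma_{\max}^2+\Sigma_{\max}^2)\bigr)$ factor. Collecting the boundary pieces from Steps 2 and 4 gives the explicit constants stated in the theorem.
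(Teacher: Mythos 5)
Your proposal follows essentially the same route as the paper's proof: linearization through the strong convexity of the expected functions, the optimistic OMD Bregman decomposition, the four-way split of $\|\nabla f_t(\x_t)-\nabla f_{t-1}(\x_{t-1})\|_2^2$ into stochastic, adversarial, and smoothness pieces, the small-$t$/large-$t$ split around $t_\star \asymp L/\lambda$ to handle the $L^2\|\x_t - \x_{t-1}\|_2^2$ term, and the packing estimate on $\sum_t a_t/t$ (which is exactly the paper's Lemma 6, modulo a ``$+1$'' vs.\ ``$+2$'' in the additive constant). The two small places where your bookkeeping diverges from the paper are: (i) in handling the telescoping remainder from term (a), you absorb $\tfrac{\lambda}{2}\|\x_t-\xh_t\|_2^2$ into the negative stability, whereas the paper instead invokes the stability lemma $\|\xh_{t+1}-\x_t\|_2 \le \eta_t\|\nabla f_t(\x_t)-\nabla f_{t-1}(\x_{t-1})\|_2$ and converts the remainder into a second copy of the variation sum; both are valid. (ii) In Step 4, your phrase ``exploit the nonexpansiveness of the OMD update to control $\|\xh_t-\x_{t-1}\|_2^2$'' is the one place that would actually fail if taken literally: nonexpansiveness of the projected step gives $\|\xh_t-\x_{t-1}\|_2 \le \eta_{t-1}\|\nabla f_{t-1}(\x_{t-1})-\nabla f_{t-2}(\x_{t-2})\|_2$, which reintroduces the very quantity you are trying to bound and creates circularity. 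What you actually want — and what the paper does — is to keep the second negative stability piece $-\tfrac{1}{2\eta_{t-1}}\|\xh_t-\x_{t-1}\|_2^2$ from round $t-1$, pair it with $-\tfrac{1}{2\eta_t}\|\x_t-\xh_t\|_2^2$, and observe that the sum dominates $-\tfrac{1}{4\eta_{t-1}}\|\x_t-\x_{t-1}\|_2^2$. With that fix, your outline yields the theorem with slightly different constants.
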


\begin{table}[t]
    \centering
    \caption{Comparison of different theoretical guarantees of SEA for strongly convex functions.}
    \label{tab:str}
    \renewcommand*{\arraystretch}{1.4}
    \begin{tabular}{cc}
    \toprule
    Reference & Regret bound of SEA with $\lambda$-strongly convex functions\\
    \midrule
    \citet{OCO:Between} & $\O\big(\frac{1}{\lambda}\big(\sigma_{\max}^2 + \Sigma_{\max}^2\big) \log T\big)$ \\[1ex]
    \citet{ICML'23:OMD4SEA} & $\O\big(\min\{\frac{G^2}{\lambda}\log \big(\sigma_{1:T}^2+\Sigma_{1:T}^2\big), \frac{1}{\lambda}\big(\sigma_{\max}^2 + \Sigma_{\max}^2\big) \log T\}\big)$\\[1ex]
    This paper & $\O\left(\frac{1}{\lambda}\left(\sigma_{\max}^2+\Sigma_{\max}^2\right)\log \left(\left(\sigma_{1:T}^2 + \Sigma_{1:T}^2\right)/\left(\sigma_{\max}^2+\Sigma_{\max}^2\right)\right)\right)$\\[1ex]
    \bottomrule
    \end{tabular}
\end{table}
Table~\ref{tab:str} compares our result with those previously reported by~\citet{OCO:Between} and our earlier conference version~\citep{ICML'23:OMD4SEA}. Our result is strictly better than theirs, and we demonstrate the advantages in the following.
\begin{myRemark}
Compare to~\citet{OCO:Between}'s $\O(\frac{1}{\lambda}(\sigma_{\max}^2 + \Sigma_{\max}^2) \log T)$ bound, our result shows advantages in benign problems with small cumulative quantities $\sigma_{1:T}^2$ and $\Sigma_{1:T}^2$. Notably, even when $\sigma_{1:T}^2$ and $\Sigma_{1:T}^2$ are small, $\sigma_{\max}^2$ and $\Sigma_{\max}^2$ can be large, making their bound less effective. For instance, in an adversarial setting where $\sigma_{1:T}^2=\sigma_{\max}^2=0$ and online functions only change \emph{once} such that $\Sigma_{1:T}^2=\Sigma_{\max}^2=\O(1)$, Theorem~\ref{thm:2} yields an $\O(1)$ bound, outperforming \citet{OCO:Between}'s $\O(\log T)$ guarantee. Furthermore, our bound can imply an $\O(\frac{G^2}{\lambda}\log V_T)$ gradient-variation bound in adversarial OCO settings, whereas \citet{OCO:Between}'s bound cannot.
\remarkend
\end{myRemark}

\begin{myRemark}
Our new result surpasses the $\O(\min\{\frac{G^2}{\lambda}\log (\sigma_{1:T}^2+\Sigma_{1:T}^2), \frac{1}{\lambda}(\sigma_{\max}^2 + \Sigma_{\max}^2) \log T\})$ bound from our earlier conference version~\citep{ICML'23:OMD4SEA}. It exhibits greater adaptivity since $\O(\sigma_{\max}^2+\Sigma_{\max}^2)$ is always at most $\O(G^2)$ and $\O\left(\left(\sigma_{1:T}^2 + \Sigma_{1:T}^2\right)/\left(\sigma_{\max}^2+\Sigma_{\max}^2\right)\right)$ is always at most $\O(T)$. This improvement is due to a refined analysis --- we apply Lemma~\ref{lemma:strongly-convex-lemma} to obtain a regret bound of the $\left(\sigma_{\max}^2+\Sigma_{\max}^2\right)\log \left(\left(\sigma_{1:T}^2 + \Sigma_{1:T}^2\right)/\left(\sigma_{\max}^2+\Sigma_{\max}^2\right)\right)$ form, which is inspired by Lemma 6 of~\citet{ICML'23:OMD4SEA}. See Section~\ref{appendix:proofs-sc} for details.
\remarkend
\end{myRemark}

\begin{myRemark}
Our new upper bound in Theorem~\ref{thm:2} does not contradict with the $\Omega(\frac{1}{\lambda}(\sigma_{\max}^2 + \Sigma_{\max}^2) \log T)$ lower bound of~\citet[Theorem 8]{OCO:Between}, because their lower bound focuses on the worst-case behavior while our result is better only in certain cases.
\remarkend
\end{myRemark}

Similar to Theorem~\ref{thm:2}, we demonstrate that for strongly convex and smooth functions, optimistic FTRL can also attain the \emph{same} guarantee as optimistic OMD for the SEA model.
\begin{myThm} \label{thm:FTRL} Under Assumptions~\ref{ass:3}, \ref{ass:4}, \ref{ass:sigma}, \ref{ass:2} and \ref{ass:5}, with an appropriate setup for the optimistic FTRL (see details in~\pref{appendix:proofs-sc-FTRL}), the expected regret is at most $\O\big(\frac{1}{\lambda}\left(\sigma_{\max}^2+\Sigma_{\max}^2\right)$ $\log \left(\left(\sigma_{1:T}^2 + \Sigma_{1:T}^2\right)/\left(\sigma_{\max}^2+\Sigma_{\max}^2\right)\right)\big)$.
\end{myThm}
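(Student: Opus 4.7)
The plan is to mirror the structure of the proof of Theorem~\ref{thm:2} within an optimistic FTRL framework. First, I would apply FTRL to the linearized-plus-quadratic surrogates $h_t(\x) = \langle \nabla f_t(\x_t), \x\rangle + \frac{\lambda}{2}\|\x - \x_t\|_2^2$, together with a mild initial regularizer $\MR_0(\x) = \frac{\lambda}{2}\|\x - \x_1\|_2^2$ and optimism $\m_{t+1} = \nabla f_t(\x_t)$, so that $\x_{t+1} = \argmin_{\x\in\X}\big\{\sum_{s=1}^t h_s(\x) + \langle \m_{t+1}, \x\rangle + \MR_0(\x)\big\}$. Each cumulative objective is then $\lambda(t+1)$-strongly convex. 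Combining the expectation identity~\eqref{eqn:exp:cov} with the $\lambda$-strong convexity of $F_t$, the expected regret against any fixed $\u\in\X$ is controlled by $\E\big[\sum_t \langle \nabla f_t(\x_t), \x_t - \u\rangle - \frac{\lambda}{2}\|\x_t - \u\|_2^2\big] = \E\big[\sum_t (h_t(\x_t) - h_t(\u))\big]$, so it suffices to analyze optimistic FTRL on $\{h_t\}$.

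Second, I would invoke the standard optimistic FTRL regret decomposition for strongly convex losses, which yields an inequality of the form
\[
\sum_{t=1}^T \big(h_t(\x_t) - h_t(\u)\big) \le \sum_{t=1}^T \frac{\|\nabla f_t(\x_t) - \nabla f_{t-1}(\x_{t-1})\|_2^2}{2\lambda(t+1)} - \sum_{t=1}^T \frac{\lambda(t+1)}{4}\|\x_t - \x_{t+1}\|_2^2 + \O(1),
\]
where the negative quadratic term comes from the strong convexity of the FTRL objective. Taking expectations, Young's inequality together with Assumption~\ref{ass:2} decomposes the numerator into a variance contribution $\sigma_t^2 + \sigma_{t-1}^2$, an adversarial contribution $\sup_\x\|\nabla F_t(\x)-\nabla F_{t-1}(\x)\|_2^2$, and a smoothness-induced quadratic term $L^2\E\|\x_t - \x_{t-1}\|_2^2$. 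The last is absorbed by the negative stability term once $t$ is large enough, exactly as in the OMD analysis of Theorem~\ref{thm:2}.

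Third, after absorption the surviving terms take the form $\sum_{t=1}^T c\big(\sigma_t^2 + \sup_\x\|\nabla F_t(\x) - \nabla F_{t-1}(\x)\|_2^2\big)/[\lambda t]$. Rather than applying the crude $\sum_t 1/t = \O(\log T)$ bound, I would invoke Lemma~\ref{lemma:strongly-convex-lemma} --- the same tool that drove Theorem~\ref{thm:2}'s improvement over~\citep{ICML'23:OMD4SEA} --- to convert this into $\frac{\sigma_{\max}^2 + \Sigma_{\max}^2}{\lambda}\log\big((\sigma_{1:T}^2 + \Sigma_{1:T}^2)/(\sigma_{\max}^2 + \Sigma_{\max}^2)\big)$, giving the claimed bound. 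The main obstacle is establishing the stability bound required for the absorption step: in OMD, $\|\x_t - \x_{t+1}\|_2$ is controlled directly via an explicit gradient step, whereas in FTRL the iterates are global minimizers of cumulative surrogates, so the stability argument must leverage the $\lambda(t+1)$-strong convexity of the cumulative regularizer to relate consecutive iterates to the optimistic prediction error. Once this stability estimate is in hand, the remainder of the analysis parallels Theorem~\ref{thm:2} modulo constants.
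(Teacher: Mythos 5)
Your proposal mirrors the paper's proof essentially step for step: the same strongly convex surrogate (your $h_t$ differs from the paper's $\ell_t$ only by an additive constant), the same added $\tfrac{\lambda}{2}\|\x-\x_1\|_2^2$ initial regularizer to make the cumulative objective $\lambda t$-strongly convex, the same optimism $\m_{t+1}=\nabla f_t(\x_t)$, the same FTRL regret decomposition with the negative stability term, the same smoothness-based splitting of $\|\nabla f_t(\x_t)-\nabla f_{t-1}(\x_{t-1})\|_2^2$, the same absorption of the $L^2$ piece for $t\gtrsim L/\lambda$, and the same final application of Lemma~\ref{lemma:strongly-convex-lemma}. One small note: the ``obstacle'' you flag in the last paragraph is not really open --- the negative quadratic $-\tfrac{\lambda_t}{2}\|\x_t-\x_{t+1}\|^2$ you need for absorption comes directly from the standard optimistic FTRL inequality (Lemma~\ref{lem:FTRL}) once $G_t$ is $\lambda t$-strongly convex, which is exactly why the extra regularizer is inserted; no separate stability estimate is required. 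Also, the strong-convexity index attached to $\|\x_t-\x_{t+1}\|_2^2$ should be $\lambda t$ rather than $\lambda(t+1)$ (the lemma uses the modulus of $G_t$, the objective defining $\x_t$), but this off-by-one does not affect the asymptotics.
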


\subsection{Exp-concave and Smooth Functions}
\label{sec:exp-concave-results}
We further explore the SEA model for exp-concave and smooth functions. Notably,~\citet{OCO:Between} only investigate convex and strongly convex functions, without studying exp-concave functions. Our results and analysis in this part is a \emph{new} contribution.

Throughout this part, we will assume the individual functions are exp-concave rather than the expected functions, see Assumption~\ref{ass:6}. This is due to the need to use the exponential concavity of individual functions in our regret analysis.  It is common in stochastic exp-concave optimization to assume exp-concavity of individual functions~\citep{COLT:2015:Mahdavi,NIPS2015_Exp}. Importantly, we need to emphasize that the exponential concavity of individual functions \emph{does not} imply the same for expected functions, which implies that the two assumptions are incomparable.

Following \citet{Gradual:COLT:12}, we set the regularizer $\MR_t(\x) = \frac{1}{2} \|\x\|_{H_t}^2$, where $H_t = I+\frac{\beta}{2}G^2 I +\frac{\beta}{2}\sum_{s=1}^{t-1}\nabla f_{s}(\x_{s})\nabla f_{s}(\x_{s})^{\top}$, $I$ is the $d$-dimensional identity matrix, and $\beta = \frac{1}{2}\min \left\{\frac{1}{4GD},\alpha\right\}$. Then, the updating rules of optimistic OMD in \eqref{eqn:update:u} and~\eqref{eqn:update:x} become 
\begin{align} 
\xh_{t+1} =& \argmin_{\x \in \X}\langle \nabla f_t(\x_t),\x \rangle + \frac{1}{2}\|\x-\xh_t\|_{H_t}^2, \label{eq:ONS-1}\\
\x_{t+1} =& \argmin_{\x \in \X}\langle \nabla f_t(\x_t),\x \rangle + \frac{1}{2}\|\x-\xh_{t+1}\|_{H_{t+1}}^2. \label{eq:ONS-2}
\end{align}

For exp-concave and smooth functions, we can realize the following bound of optimistic OMD for the SEA model with proof in Section~\ref{appendix:proofs-exp-concave}.
\begin{myThm} \label{thm:3} Under Assumptions~\ref{ass:3}, \ref{ass:4}, \ref{ass:2} and \ref{ass:6}, optimistic OMD with updates~\eqref{eq:ONS-1} and \eqref{eq:ONS-2} enjoys the following guarantee:
\begin{align*}
\E[\mathbf{Reg}_T(\u)] 
\leq {}& \frac{16d}{\beta}\ln \left( \frac{\beta}{d} \sigma_{1:T}^2 + \frac{\beta}{2d} \Sigma_{1:T}^2 + \frac{\beta}{8d}G^2 + 1\right)+ \frac{16d}{\beta}\ln\left( 32L^2 + 1 \right)+ D^2\left(1+\frac{\beta}{2}G^2\right)\\
= {}& \O\Big(\frac{d}{\alpha}\log(\sigma_{1:T}^2+\Sigma_{1:T}^2)\Big),
\end{align*}
where $\beta = \frac{1}{2}\min \left\{\frac{1}{4GD},\alpha\right\}$, and $d$ is the dimensionality of decisions.
\end{myThm}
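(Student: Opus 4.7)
My plan follows the template of \citet{Gradual:COLT:12} for deriving gradient-variation exp-concave bounds, adapted to the SEA model by decomposing the optimism residual into stochastic variance, adversarial variation, and a smoothness displacement term. First, I would invoke Hazan's exp-concavity lemma: for $\beta=\tfrac12\min\{1/(4GD),\alpha\}$, Assumption~\ref{ass:6} gives pointwise
\[
f_t(\x_t)-f_t(\u)\leq \langle\nabla f_t(\x_t),\x_t-\u\rangle-\tfrac{\beta}{2}\bigl(\nabla f_t(\x_t)^\top(\x_t-\u)\bigr)^2,
\]
so summing and taking expectation reduces the bound on $\E[\mathbf{Reg}_T(\u)]$ to bounding the linearized sum $\E[\sum_t\langle\nabla f_t(\x_t),\x_t-\u\rangle]$ while retaining a quadratic reserve $-\tfrac{\beta}{2}\E[\sum_t(\nabla f_t(\x_t)^\top(\x_t-\u))^2]$ that will cancel part of the mirror-descent regularization later. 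Notably, this step uses only Assumption~\ref{ass:6} and does not need convexity of the expected functions.

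Next, I would apply the standard optimistic-OMD regret identity with the time-varying matrix regularizer $\psi_t(\x)=\tfrac12\|\x\|_{H_t}^2$ and optimism $\nabla f_{t-1}(\x_{t-1})$, yielding
\[
\sum_t\langle\nabla f_t(\x_t),\x_t-\u\rangle\leq \tfrac12\|\u\|_{H_{T+1}}^2+\tfrac12\sum_t\|\nabla f_t(\x_t)-\nabla f_{t-1}(\x_{t-1})\|_{H_t^{-1}}^2-\tfrac12\sum_t\bigl(\|\x_t-\xh_t\|_{H_t}^2+\|\xh_{t+1}-\x_t\|_{H_t}^2\bigr).
\]
Expanding $\|\u\|_{H_{T+1}}^2=(1+\tfrac{\beta}{2}G^2)\|\u\|^2+\tfrac{\beta}{2}\sum_t(\nabla f_t(\x_t)^\top\u)^2$ and using $(\nabla f_t(\x_t)^\top\u)^2\leq 2(\nabla f_t(\x_t)^\top(\x_t-\u))^2+2(\nabla f_t(\x_t)^\top\x_t)^2$, the quadratic reserve from step one is spent to absorb the first half, leaving a leftover $\tfrac{\beta}{2}\sum_t(\nabla f_t(\x_t)^\top\x_t)^2$ to be handled by the log-determinant potential below; the constant piece $(1+\tfrac{\beta}{2}G^2)\|\u\|^2\leq D^2(1+\tfrac{\beta}{2}G^2)$ becomes the last additive term in the theorem.

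Third, I would decompose the optimism residual into four pieces---stochastic noise at round $t$, adversarial variation, smoothness displacement, and stochastic noise at round $t-1$:
\[
\nabla f_t(\x_t)-\nabla f_{t-1}(\x_{t-1})=\bigl[\nabla f_t(\x_t)-\nabla F_t(\x_t)\bigr]+\bigl[\nabla F_t(\x_t)-\nabla F_{t-1}(\x_t)\bigr]+\bigl[\nabla F_{t-1}(\x_t)-\nabla F_{t-1}(\x_{t-1})\bigr]+\bigl[\nabla F_{t-1}(\x_{t-1})-\nabla f_{t-1}(\x_{t-1})\bigr].
\]
Using $H_t\succeq I$ to bound $\|\cdot\|_{H_t^{-1}}^2\leq \|\cdot\|_2^2$ and the four-term AM--GM, the expectation of the residual splits into $O(\sigma_{1:T}^2)$ contributions from the variance pieces (via \eqref{eqn:variance:grad}), $O(\Sigma_{1:T}^2)$ from the variation piece (via \eqref{eqn:cumu:varia}), and a pathwise $4L^2\|\x_t-\x_{t-1}\|_2^2$ from the smoothness piece that I would absorb into the two negative displacement potentials via $\|\x_t-\x_{t-1}\|_2^2\leq 2\|\x_t-\xh_t\|_{H_t}^2+2\|\xh_t-\x_{t-1}\|_{H_{t-1}}^2$, using $H_t,H_{t-1}\succeq I$. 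Finally, the $d/\beta\cdot\log$ scaling is produced by the standard matrix-potential lemma $\sum_t\nabla f_t(\x_t)^\top H_t^{-1}\nabla f_t(\x_t)\leq \tfrac{2}{\beta}\log\bigl(\det H_{T+1}/\det H_1\bigr)$ combined with AM--GM on eigenvalues $\log\det H_{T+1}\leq d\log(\mathrm{tr}(H_{T+1})/d)$, which after feeding in the trace identity $\mathrm{tr}(H_{T+1})=d(1+\tfrac{\beta}{2}G^2)+\tfrac{\beta}{2}\sum_t\|\nabla f_t(\x_t)\|_2^2$ and the same variance/variation decomposition produces the claimed $\tfrac{16d}{\beta}\log\bigl(\tfrac{\beta}{d}\sigma_{1:T}^2+\tfrac{\beta}{2d}\Sigma_{1:T}^2+\tfrac{\beta}{8d}G^2+1\bigr)$ leading factor.

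The principal technical difficulty is coordinating these absorptions without introducing a step-size-style constraint $\beta\lesssim 1/L$: the quadratic reserve must simultaneously dominate the $\|\u\|_{H_{T+1}}^2$ growth, and the two displacement potentials (which only provide curvature $\succeq I$, not $\succeq L\cdot I$) must dominate $4L^2\|\x_t-\x_{t-1}\|_2^2$. These constant-tracking issues---together with the concavity-of-log step needed to push expectation inside the logarithm---are what produce the additive $\tfrac{16d}{\beta}\log(32L^2+1)$ term and the specific factor $16$ in the bound, and I expect this bookkeeping to be the most delicate part of the calculation.
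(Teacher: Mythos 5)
Your high-level architecture (exp-concavity reserve, time-varying matrix regularizer $H_t$, four-way residual decomposition, log-determinant potential) matches the paper's, but two of your pivotal steps do not go through.

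First, the quantity $\tfrac12\|\u\|_{H_{T+1}}^2$ is not what the optimistic-OMD telescoping produces. Summing Lemma~\ref{lem:1}'s Bregman terms with $\MR_t(\x)=\tfrac12\|\x\|_{H_t}^2$ gives
$\tfrac12\|\u-\xh_1\|_{H_1}^2-\tfrac12\|\u-\xh_{T+1}\|_{H_{T+1}}^2+\tfrac12\sum_{t}\bigl(\|\u-\xh_{t+1}\|_{H_{t+1}}^2-\|\u-\xh_{t+1}\|_{H_t}^2\bigr)$,
and since $H_{t+1}-H_t=\tfrac{\beta}{2}h_t$, the residual increments are $\tfrac{\beta}{4}\|\u-\xh_{t+1}\|_{h_t}^2=\tfrac{\beta}{4}\bigl(\nabla f_t(\x_t)^\top(\u-\xh_{t+1})\bigr)^2$, \emph{not} $\tfrac{\beta}{4}(\nabla f_t(\x_t)^\top\u)^2$. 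Your substitution $(\nabla f_t(\x_t)^\top\u)^2\leq 2(\nabla f_t(\x_t)^\top(\x_t-\u))^2+2(\nabla f_t(\x_t)^\top\x_t)^2$ then leaves a residue $\tfrac{\beta}{2}\sum_t(\nabla f_t(\x_t)^\top\x_t)^2$, which is $\Theta(\beta T)$ in general (each summand is $O(G^2D^2)$), and there is nothing in the bound to cancel it: the log-det potential only furnishes an $O\!\left(\frac{d}{\beta}\log T\right)$ \emph{upper} bound, not a negative term. The paper sidesteps this entirely: because the genuine telescoping residue involves $\u-\xh_{t+1}$ (not $\u$), one can pair it with the exp-concavity reserve $\tfrac{\beta}{2}\|\u-\x_t\|_{h_t}^2$ via $\|\u-\xh_{t+1}\|_{h_t}^2\leq 2\|\u-\x_t\|_{h_t}^2+2\|\x_t-\xh_{t+1}\|_{h_t}^2$, so the $\u$-dependent parts cancel and what remains is the small displacement $\|\x_t-\xh_{t+1}\|_{h_t}^2$, which is bounded through $H_t\succeq\tfrac{\beta}{2}h_t$ and Lemma~\ref{lem:stability} by the same $\|\nabla f_t(\x_t)-\nabla f_{t-1}(\x_{t-1})\|_{H_t^{-1}}^2$ already appearing in term (b).

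Second, your plan to absorb the $4L^2\|\x_t-\x_{t-1}\|_2^2$ piece of Lemma~\ref{lem:sum:diff} directly into the displacement potentials via $H_t\succeq I$ does not close, precisely for the reason you flag: the negative potentials contribute only $\tfrac14\sum_{t}\|\x_t-\x_{t-1}\|_2^2$, which cannot dominate $4L^2\sum_{t}\|\x_t-\x_{t-1}\|_2^2$ unless $L$ is tiny, and there is no legitimate $\beta\lesssim 1/L$ degree of freedom to invoke (the theorem holds for all $L$). The paper's actual device is to keep the smoothness displacement \emph{inside the logarithm}: bound term (b) by $\tfrac{8d}{\beta}\ln\!\left(\tfrac{\beta}{8d}\bar V_T+1\right)$, plug in Lemma~\ref{lem:sum:diff}, split via $\ln(1+u+v)\leq\ln(1+u)+\ln(1+v)$, and apply Lemma~\ref{lem:ln:pq}, i.e.\ $a\ln(bA_T+1)-cA_T\leq a\ln(ab/c+1)$ with $A_T=\sum_t\|\x_t-\x_{t-1}\|_2^2$, $a=\tfrac{16d}{\beta}$, $b=\tfrac{\beta L^2}{2d}$, $c=\tfrac14$, so the displacement cost collapses to the constant $\tfrac{16d}{\beta}\ln(32L^2+1)$. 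This is the step your sketch is missing, and it is the one that both avoids any restriction on $L$ and produces the specific additive $\tfrac{16d}{\beta}\ln(32L^2+1)$ in the statement.
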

\begin{myRemark}
This is the \emph{first} regret bound for the SEA model with exp-concave and smooth functions. Owing to analytical differences, we are unable to attain an $\O(\frac{d}{\alpha} (\sigma_{\max}^2 + \Sigma_{\max}^2) \log T)$ regret bound, and further we can not get an $\O\big(\frac{d}{\alpha}\big(\sigma_{\max}^2+\Sigma_{\max}^2\big)\log \big(\big(\sigma_{1:T}^2 + \Sigma_{1:T}^2\big)/\big(\sigma_{\max}^2+\Sigma_{\max}^2\big)\big)\big)$ bound as in the strongly convex case (\pref{thm:2}). We will investigate this possibility in the future.
\remarkend
\end{myRemark}

Similarly, we obtain the same guarantee by optimistic FTRL in the exp-concave case.
\begin{myThm}
\label{thm:FTRL-exp-concave}
Under Assumptions~\ref{ass:3}, \ref{ass:4}, \ref{ass:2} and \ref{ass:6} with an appropriate setup for the optimistic FTRL (see details in Appendix~\ref{appendix:proofs-exp-FTRL}), the expected regret is at most $\O(\frac{d}{\alpha}\log(\sigma_{1:T}^2 +\Sigma_{1:T}^2))$.
\end{myThm}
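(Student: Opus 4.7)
The plan is to mirror the optimistic OMD analysis of \pref{thm:3} but in an FTRL formulation, leveraging the same quadratic surrogate loss that makes ONS work for exp-concave functions. Fix $\beta = \frac{1}{2}\min\{\frac{1}{4GD},\alpha\}$ and define the surrogate
\[
\ell_t(\x) \;=\; \langle \nabla f_t(\x_t), \x - \x_t\rangle \;+\; \tfrac{\beta}{2}\bigl(\langle \nabla f_t(\x_t), \x - \x_t\rangle\bigr)^2.
\]
By exp-concavity of $f_t$ together with the standard $|\langle \nabla f_t(\x_t), \x-\x_t\rangle| \le 2GD$ bound (Assumptions~\ref{ass:3},\ref{ass:4}), we have $f_t(\x_t) - f_t(\u) \le -\ell_t(\u)$ for every $\u \in \X$, so it suffices to upper bound $-\sum_{t=1}^T \ell_t(\u)$. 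I would set the optimistic FTRL update to
\[
\x_{t+1} \;=\; \argmin_{\x \in \X} \; \sum_{s=1}^t \ell_s(\x) \;+\; \langle \m_{t+1}, \x\rangle \;+\; \MR(\x),
\]
with $\m_{t+1} = \nabla f_t(\x_t)$ and a fixed Euclidean regularizer $\MR(\x)=\frac{1}{2}\|\x\|_2^2$. The accumulated quadratic pieces from the surrogates $\ell_s$ combine with $\MR$ to yield exactly the matrix $H_{t+1} = I + \tfrac{\beta}{2}G^2 I + \tfrac{\beta}{2}\sum_{s=1}^{t}\nabla f_s(\x_s)\nabla f_s(\x_s)^\top$ used in the OMD proof of \pref{thm:3}; this structural equivalence is the key observation.

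Next I would invoke the standard optimistic FTRL ``prediction'' lemma (the FTRL analogue of the OMD stability inequality used for \pref{thm:3}), which gives, since $\ell_t(\x_t)=0$,
\[
-\sum_{t=1}^T \ell_t(\u) \;\le\; \MR(\u) \;+\; \sum_{t=1}^T \bigl\|\nabla f_t(\x_t) - \m_t\bigr\|_{H_t^{-1}}^2 \;-\; \tfrac{1}{4}\sum_{t=1}^T \|\x_{t+1}-\x_t\|_{H_t}^2,
\]
where $\m_t = \nabla f_{t-1}(\x_{t-1})$ (with $\m_1 = 0$). Because the quadratic surrogate already supplies curvature through $H_t$, the second (positive) summand is the exact optimistic gradient-variation term, and the negative stability term gives us the budget we need to absorb the smoothness contribution in the bias/variance decomposition.

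The next step is the bias/variance decomposition of $\|\nabla f_t(\x_t) - \nabla f_{t-1}(\x_{t-1})\|_2^2$ that \citet{OCO:Between} and \pref{thm:3} use: passing through conditional expectations and applying the smoothness of $F_t$ (Assumption~\ref{ass:2}) yields
\[
\E\bigl[\|\nabla f_t(\x_t) - \nabla f_{t-1}(\x_{t-1})\|_2^2\bigr] \;\lesssim\; \E\bigl[\sigma_t^2 + \sigma_{t-1}^2 + \Sigma_t^2 + L^2\|\x_t-\x_{t-1}\|_2^2\bigr].
\]
The $L^2\|\x_t-\x_{t-1}\|_2^2$ term is controlled by the negative stability term once $\beta G^2$ in $H_t$ is compared against $32L^2$ (identical to the OMD case). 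Finally, summing $\sum_t \|\nabla f_t(\x_t)\|_{H_t^{-1}}^2$ via the matrix-determinant / log-det potential lemma, together with Jensen's inequality on the resulting $\log(\cdot)$, converts the telescoped bound into the claimed $\O\bigl(\tfrac{d}{\alpha}\log(\sigma_{1:T}^2+\Sigma_{1:T}^2)\bigr)$ expected regret.

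The main obstacle is the FTRL stability step: unlike OMD, FTRL does not offer a plug-and-play ``gradient minus optimism in dual norm'' inequality when the regularizer is absorbing a growing quadratic (the $H_t$ matrix), so I would need to verify carefully that the so-called be-the-leader-style argument still gives the clean form above with the correct negative stability term. Once that is in place, the decomposition into $\sigma_{1:T}^2$, $\Sigma_{1:T}^2$, and a cancellable smoothness term proceeds exactly as in \pref{thm:3}, and the log-det potential delivers the final logarithmic dependence in $d$ and $\sigma_{1:T}^2+\Sigma_{1:T}^2$.
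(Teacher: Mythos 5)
Your proposal follows essentially the same route as the paper: the same quadratic ONS-style surrogate $\ell_t(\x) = \langle\nabla f_t(\x_t),\x-\x_t\rangle + \frac{\beta}{2}\|\x-\x_t\|_{h_t}^2$, optimistic FTRL with $M_{t+1}=\nabla f_t(\x_t)$, the log-det potential on the gradient-variation term, the bias/variance decomposition of Lemma~\ref{lem:sum:diff}, and the negative stability term absorbing the $L^2\|\x_t-\x_{t-1}\|_2^2$ contribution. The ``main obstacle'' you flag at the end is actually not one: the paper invokes the standard optimistic-FTRL regret lemma (Lemma~\ref{lem:FTRL}, from \citet[Thm.~7.35]{Modern:Online:Learning}), which does produce exactly the ``prediction error minus negative stability'' form you wrote even when the accumulated regularizer $G_t$ is only strongly convex w.r.t.\ a time-varying Mahalanobis norm $\|\cdot\|_{H_t}$ --- one simply applies H\"older + AM-GM to $\langle \g_t - \tilde{\g}_t, \x_t - \x_{t+1}\rangle$ in that norm.

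One small but consequential slip: with $\MR(\x)=\frac{1}{2}\|\x\|_2^2$ and the surrogates as defined, the accumulated Hessian is $H_t = I + \beta\sum_{s<t} \nabla f_s(\x_s)\nabla f_s(\x_s)^\top$, not the OMD matrix $I + \tfrac{\beta}{2}G^2 I + \tfrac{\beta}{2}\sum_{s<t} h_s$ you claim --- the $\tfrac12$ factors and the $\beta G^2 I$ shift do not appear from your regularizer. The $\beta G^2 I$ term is not cosmetic: the paper needs $H_t \succeq \beta h_t$ (so that the current outer product $h_t$, absent from $\sum_{s<t}h_s$, can be inserted) before symmetrizing to $I + \tfrac{\beta}{4}\sum_{s\le t}(\nabla f_s(\x_s)-\nabla f_{s-1}(\x_{s-1}))(\cdots)^\top$ and applying Lemma~\ref{lem:hazan}. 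With just $\MR=\frac12\|\x\|_2^2$ this step requires $\beta G^2 \le 1$, which is not guaranteed by Assumptions~\ref{ass:3}--\ref{ass:4}. The fix is exactly what the paper does: take $\MR(\x) = \frac{1}{2}(1 + \beta G^2)\|\x\|_2^2$.
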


\subsection{Analysis}
In this section, we analyze the three theoretical guarantees based on optimistic OMD. Analyses of optimistic FTRL and proofs of all lemmas used are postponed to~\pref{appendix:proofs-section4}.
\subsubsection{{Proof of Theorem~\ref{thm:1}}}
\label{appendix:proofs-convex}
\begin{proof}
Before proving Theorem~\ref{thm:1}, we present a variant of the Bregman proximal inequality lemma \citep[Lemma 3.1]{nemirovski-2005-prox}, commonly used in optimistic OMD analysis. The proof is detailed in~\pref{appendix:useful-lemmas}. 

\begin{myLemma}[Variant of Bregman proximal inequality] \label{lem:1} 
Assume $\MR_t(\cdot)$ is an $\alpha$-strongly convex function with respect to $\|\cdot\|$, and denote by $\|\cdot\|_*$ the dual norm. Based on the updating rules of optimistic OMD in (\ref{eqn:update:u}) and (\ref{eqn:update:x}), for  all $\x \in \X$ and $t\in [ T ]$,  we have
\begin{align*}
\langle \nabla f_{t}(\x_{t}), \x_{t} - \x \rangle \leq  & \frac{1}{\alpha} \|\nabla f_{t}(\x_{t}) -\nabla f_{t-1}(\x_{t-1}) \|_*^2   \\
&+\Big(\D_{\MR_t}(\x, \xh_{t})-\D_{\MR_t}(\x,\xh_{t+1}) \Big) - \Big( \D_{\MR_t}(\hat{\x}_{t+1},\x_t)+\D_{\MR_t}(\x_{t},\hat{\x}_t)\Big),
\end{align*}
where we set $\nabla f_0(\x_0)=0$.
\end{myLemma}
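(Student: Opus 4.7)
The plan is to reduce the statement to two applications of the standard (non-optimistic) Bregman proximal lemma, followed by one additional ingredient: a non-expansive bound on the prox map that is strictly sharper than what a naive Young's inequality would give.

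The key observation is that $\xh_{t+1}$ and $\x_t$ are both prox-style minimizers anchored at the same point $\xh_t$ under the regularizer $\MR_t$, differing only in their linear parts (which are $\nabla f_t(\x_t)$ and $\m_t = \nabla f_{t-1}(\x_{t-1})$ respectively, using $\nabla f_0(\x_0)=0$ for the base case). The first-order optimality condition for $\xh_{t+1}$ combined with the three-point identity for Bregman divergences gives, for every $\x \in \X$,
\begin{align*}
\langle \nabla f_t(\x_t), \xh_{t+1} - \x \rangle \leq \D_{\MR_t}(\x, \xh_t) - \D_{\MR_t}(\x, \xh_{t+1}) - \D_{\MR_t}(\xh_{t+1}, \xh_t),
\end{align*}
and applying the same inequality to $\x_t$ with comparator point $\xh_{t+1}$ yields
\begin{align*}
\langle \nabla f_{t-1}(\x_{t-1}), \x_t - \xh_{t+1} \rangle \leq \D_{\MR_t}(\xh_{t+1}, \xh_t) - \D_{\MR_t}(\xh_{t+1}, \x_t) - \D_{\MR_t}(\x_t, \xh_t).
\end{align*}
Summing these two inequalities, the $\pm\,\D_{\MR_t}(\xh_{t+1}, \xh_t)$ terms cancel. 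Using the identity $\langle \nabla f_t(\x_t), \x_t - \x \rangle = \langle \nabla f_t(\x_t), \xh_{t+1} - \x \rangle + \langle \nabla f_{t-1}(\x_{t-1}), \x_t - \xh_{t+1}\rangle + \langle \nabla f_t(\x_t) - \nabla f_{t-1}(\x_{t-1}), \x_t - \xh_{t+1}\rangle$, I would rearrange to
\begin{align*}
\langle \nabla f_t(\x_t), \x_t - \x \rangle \leq{} & \langle \nabla f_t(\x_t) - \nabla f_{t-1}(\x_{t-1}), \x_t - \xh_{t+1} \rangle \\
& + \D_{\MR_t}(\x, \xh_t) - \D_{\MR_t}(\x, \xh_{t+1}) - \D_{\MR_t}(\xh_{t+1}, \x_t) - \D_{\MR_t}(\x_t, \xh_t).
\end{align*}

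It remains to upper bound the cross-gradient inner product. Here the temptation is Young's inequality, but the constant needed to produce the $\tfrac{1}{\alpha}\|\nabla f_t(\x_t) - \nabla f_{t-1}(\x_{t-1})\|_*^2$ term leaves only $\tfrac{\alpha}{4}\|\x_t - \xh_{t+1}\|^2$, which can absorb at most \emph{half} of $\D_{\MR_t}(\xh_{t+1}, \x_t)$ via strong convexity --- not enough to preserve the full $-\D_{\MR_t}(\xh_{t+1}, \x_t)$ term claimed in the lemma. The fix is to instead exploit a Lipschitz property of the prox map. Subtracting the two first-order optimality conditions (testing one at $\x_t$ and the other at $\xh_{t+1}$) and invoking $\alpha$-strong convexity of $\MR_t$ to lower bound the resulting Bregman cross term, I would obtain
\begin{align*}
\alpha\,\|\xh_{t+1} - \x_t\|^2 \leq \langle \nabla f_t(\x_t) - \nabla f_{t-1}(\x_{t-1}), \x_t - \xh_{t+1} \rangle \leq \|\nabla f_t(\x_t) - \nabla f_{t-1}(\x_{t-1})\|_*\,\|\xh_{t+1} - \x_t\|,
\end{align*}
where the second bound is norm-duality. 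Dividing out $\|\xh_{t+1} - \x_t\|$ yields $\|\xh_{t+1} - \x_t\| \leq \tfrac{1}{\alpha}\|\nabla f_t(\x_t) - \nabla f_{t-1}(\x_{t-1})\|_*$, and reinserting this into the right-hand side gives $\langle \nabla f_t(\x_t) - \nabla f_{t-1}(\x_{t-1}), \x_t - \xh_{t+1}\rangle \leq \tfrac{1}{\alpha}\|\nabla f_t(\x_t) - \nabla f_{t-1}(\x_{t-1})\|_*^2$, which is exactly the first term appearing in the lemma.

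The main obstacle, and essentially the only non-routine step, is this last one: recognizing that Young's inequality is too lossy to keep the full $-\D_{\MR_t}(\xh_{t+1}, \x_t)$ negative term, and that the right way to exploit the fact that \emph{two} prox operations share the same center is to derive $1/\alpha$-Lipschitzness of the prox map in its linear input via joint first-order optimality and strong convexity. Everything else is bookkeeping around the three-point Bregman identity.
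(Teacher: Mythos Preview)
Your proposal is correct and follows essentially the same route as the paper. The paper uses the identical three-term decomposition of $\langle \nabla f_t(\x_t), \x_t - \x\rangle$, applies the Bregman proximal inequality (their Lemma~\ref{lem:bregman_proximal}) to terms (b) and (c), and invokes the stability lemma (their Lemma~\ref{lem:stability}, from~\citet{Gradual:COLT:12}) to bound term (a) by $\tfrac{1}{\alpha}\|\nabla f_t(\x_t)-\nabla f_{t-1}(\x_{t-1})\|_*^2$; your inline derivation of the prox-Lipschitz bound via subtracting first-order optimality conditions is exactly a proof of that stability lemma, and your observation that Young's inequality is too lossy to retain the full $-\D_{\MR_t}(\hat{\x}_{t+1},\x_t)$ term is the correct diagnosis of why the stability argument is needed.
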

Given that Theorem~\ref{thm:1} performs optimistic OMD on individual functions $\{f_1,\ldots,f_T\}$, we utilize Lemma~\ref{lem:1} as $\MR_t(\x) = \frac{1}{2\eta_t} \norm{\x}_2^2$ is $\frac{1}{\eta_t}$-strongly convex with respect to $\|\cdot\|_2$ and sum the inequality over $t=1,\ldots,T$: 
\begin{align}
{}&\sum_{t=1}^T \langle \nabla f_{t}(\x_{t}), \x_{t} - \u \rangle\nonumber\\ \leq{}&  \underbrace{\sum_{t=1}^T \frac{1}{2 \eta_t}\left( \|\u -\xh_{t}\|_2^2 - \|\u -\xh_{t+1}\|_2^2 \right)}_{\term{a}} + \underbrace{\sum_{t=1}^T\eta_t \|\nabla f_{t}(\x_{t}) -\nabla f_{t-1}(\x_{t-1}) \|_2^2}_{\term{b}}\nonumber \\
{}&- \underbrace{\sum_{t=1}^T \frac{1}{2 \eta_t} \big( \|\x_{t}- \xh_{t} \|_2^2 + \| \xh_{t+1} - \x_{t} \|_2^2 \big)}_{\term{c}} . \label{eqn:sum:cov1}
\end{align}
In the following, we will bound the three terms on the right hand respectively. 

First, given $\eta_t = D/\sqrt{\delta + 4G^2 + \bar{V}_{t-1}}$ and $\bar{V}_{t-1} = \sum_{s=1}^{t-1}\|\nabla f_s(\x_s) - \nabla f_{s-1}(\x_{s-1})\|_2^2$, we derive that $\eta_t  \leq D/\sqrt{\delta+ \bar{V}_t}$ using Assumption~\ref{ass:3} (boundedness of gradient norms). For term (a), by the fact $\eta_t \leq \eta_{t-1}$ and Assumption~\ref{ass:4} (domain boundedness), we have
\begin{align*}
\term{a}={}& \frac{1}{2 \eta_1} \|\u -\xh_{1}\|_2^2 +  \frac{1}{2} \sum_{t=2}^T \left( \frac{1}{ \eta_t}  - \frac{1}{ \eta_{t-1}} \right) \|\u -\xh_{t}\|_2^2 - \frac{1}{2 \eta_T} \|\u -\xh_{T+1}\|_2^2 \\
 \leq{}&  \frac{1}{2 \eta_1} D^2 +  \frac{1}{2} \sum_{t=2}^T \left( \frac{1}{ \eta_t}  - \frac{1}{ \eta_{t-1}} \right) D^2 = \frac{D^2}{2 \eta_T} =   \frac{D}{2} \sqrt{\delta+4G^2 + \Bar{V}_{T-1}}.
\end{align*}
For term (b), we utilize Lemma~\ref{lem:sum} to bound it as
\begin{align*}
\term{b} \leq  \sum_{t=1}^T \frac{D}{\sqrt{\delta+ \Bar{V}_{t}}} \|\nabla f_{t}(\x_{t}) -\nabla f_{t-1}(\x_{t-1}) \|_2^2 \leq  2 D \sqrt{\delta+\Bar{V}_T }. 
\end{align*}
For term (c), we rely on the fact that $\eta_t \leq \frac{D}{\sqrt{\delta}}$:
\begin{align*}
\term{c}={}&  \sum_{t=1}^T \frac{1}{2 \eta_t} \big( \|\x_{t}- \xh_{t} \|_2^2 + \| \xh_{t+1} - \x_{t} \|_2^2 \big) \geq  \frac{\sqrt{\delta}}{2D} \sum_{t=1}^T  \big( \|\x_{t}- \xh_{t} \|_2^2 + \| \xh_{t+1} - \x_{t} \|_2^2 \big) \\
\geq{}& \frac{\sqrt{\delta}}{2D}  \sum_{t=2}^T  \big( \|\x_{t}- \xh_{t} \|_2^2 +   \| \xh_{t} - \x_{t-1} \|_2^2 \big) \geq  \frac{\sqrt{\delta}}{4D} \sum_{t=2}^T   \|\x_{t}-  \x_{t-1}\|_2^2. 
\end{align*}
Then we substitute the three bounds above into \eqref{eqn:sum:cov1} and use Assumption~\ref{ass:3} to get
\begin{align*}
 \sum_{t=1}^T \langle \nabla f_{t}(\x_{t}), \x_{t} - \u  \rangle 
 \leq \frac{5 D}{2} \sqrt{\delta+4G^2 + \Bar{V}_{T-1}} - \frac{\sqrt{\delta}}{4D} \sum_{t=2}^T   \|\x_{t}-  \x_{t-1}\|_2^2,
\end{align*}
In order to bound the $\Bar{V}_{T-1}$ term, we incorporate a crucial lemma extracted from the analysis of~\citet{OCO:Between}. Refer to~\pref{appendix:useful-lemmas} for the proof.
\begin{myLemma}[Boundedness of cumulative norm of gradient difference (\citet{OCO:Between}, Analysis of Theorem 5)]
\label{lem:sum:diff} 
Under Assumptions~\ref{ass:3} and~\ref{ass:2}, we have
\begin{equation} \label{eqn:sum:diff}
\begin{split}
\sum_{t=1}^{T} \|\nabla f_t(\x_t) - \nabla f_{t-1} (\x_{t-1}) \|_2^2 \leq  &G^2  + 4L^2 \sum_{t=2}^{T} \|\x_t-\x_{t-1}\|_2^2 \\
+ 8 \sum_{t=1}^{T}  \|\nabla f_t(\x_t) &-\nabla F_t(\x_t)\|_2^2  + 4 \sum_{t=2}^{T} \|\nabla F_t(\x_{t-1}) -\nabla F_{t-1}(\x_{t-1})\|_2^2.
\end{split}
\end{equation}
\end{myLemma}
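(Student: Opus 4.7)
The plan is to prove this pointwise by inserting three intermediate gradients between $\nabla f_t(\x_t)$ and $\nabla f_{t-1}(\x_{t-1})$, then applying a standard $(a+b+c+d)^2$-type inequality, and finally re-indexing one of the resulting sums to combine it with another.

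First I would isolate the $t=1$ term. Since $\nabla f_0(\x_0)=0$ by the convention stated in the lemma, Assumption~\ref{ass:3} immediately gives $\|\nabla f_1(\x_1)-\nabla f_0(\x_0)\|_2^2 = \|\nabla f_1(\x_1)\|_2^2 \le G^2$, which accounts for the additive $G^2$ on the right-hand side. For every $t\ge 2$, I would use the telescoping identity
\begin{align*}
\nabla f_t(\x_t) - \nabla f_{t-1}(\x_{t-1})
={}& \bigl[\nabla f_t(\x_t) - \nabla F_t(\x_t)\bigr]
+ \bigl[\nabla F_t(\x_t) - \nabla F_t(\x_{t-1})\bigr]\\
{}&+ \bigl[\nabla F_t(\x_{t-1}) - \nabla F_{t-1}(\x_{t-1})\bigr]
+ \bigl[\nabla F_{t-1}(\x_{t-1}) - \nabla f_{t-1}(\x_{t-1})\bigr],
\end{align*}
and then apply the elementary inequality $\|a+b+c+d\|_2^2 \le 4(\|a\|_2^2+\|b\|_2^2+\|c\|_2^2+\|d\|_2^2)$ (which follows from Cauchy--Schwarz). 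This yields four sums, each with a factor of $4$ out front.

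Next I would simplify each sum. The middle term gets handled via Assumption~\ref{ass:2} (smoothness of $F_t$), which gives $\|\nabla F_t(\x_t) - \nabla F_t(\x_{t-1})\|_2^2 \le L^2 \|\x_t - \x_{t-1}\|_2^2$, producing the $4L^2\sum_{t=2}^T\|\x_t-\x_{t-1}\|_2^2$ term. The third decomposition term directly yields $4\sum_{t=2}^T \|\nabla F_t(\x_{t-1}) - \nabla F_{t-1}(\x_{t-1})\|_2^2$. The key bookkeeping step is that the first and fourth decomposition terms both measure variance of the realized gradient about its expectation, so I would re-index the fourth sum via $s=t-1$:
\begin{equation*}
4\sum_{t=2}^T \|\nabla F_{t-1}(\x_{t-1}) - \nabla f_{t-1}(\x_{t-1})\|_2^2 = 4\sum_{s=1}^{T-1}\|\nabla f_s(\x_s) - \nabla F_s(\x_s)\|_2^2,
\end{equation*}
and then upper-bound by extending the sum to $T$. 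Combining with the first decomposition term gives the coefficient $8$ on $\sum_{t=1}^T \|\nabla f_t(\x_t)-\nabla F_t(\x_t)\|_2^2$ as claimed.

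There is no hard obstacle here; the lemma is essentially a clean bookkeeping identity. The only subtleties are (i) correctly isolating the $t=1$ term so that Assumption~\ref{ass:3} supplies the additive $G^2$, and (ii) performing the index shift that merges the two variance sums into a single sum with coefficient $8$ rather than leaving two parallel sums each with coefficient $4$. Assembling everything and adding the $t=1$ contribution back produces exactly the right-hand side of~\eqref{eqn:sum:diff}.
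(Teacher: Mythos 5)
Your proposal is correct and follows essentially the same route as the paper: the paper proves the per-round bound (Lemma~\ref{lem:ind:diff}) via exactly this four-term telescoping decomposition with Jensen's inequality, smoothness of $F_t$, and the $G^2$ bound at $t=1$, and then obtains Lemma~\ref{lem:sum:diff} by summing and merging the two variance sums into the coefficient $8$, just as you do via the index shift. No gaps.
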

As a result, by applying Lemma~\ref{lem:sum:diff}, we have
\begin{align*}
    {}&\sum_{t=1}^T \langle \nabla f_{t}(\x_{t}), \x_{t} - \u  \rangle\\
    \leq{}&\frac{5 D}{2} \sqrt{\delta+5G^2} + 5\sqrt{2}D\sqrt{\sum_{t=1}^T\left\|\nabla f_t(\x_t)-\nabla F_t(\x_t)\right\|_2^2}+5DL\sqrt{\sum_{t=2}^T\left\|\x_t-\x_{t-1}\right\|_2^2}\\
    {}&+5D\sqrt{\sum_{t=2}^T\left\|\nabla F_t(\x_{t-1})-\nabla F_{t-1}(\x_{t-1})\right\|_2^2}- \frac{\sqrt{\delta}}{4D} \sum_{t=2}^T   \|\x_{t}-  \x_{t-1}\|_2^2\\
    \leq{}&\frac{5 D}{2} \sqrt{\delta+5G^2} + \frac{25D^3L^2}{\sqrt{\delta}}+5\sqrt{2}D\sqrt{\sum_{t=1}^T\left\|\nabla f_t(\x_t)-\nabla F_t(\x_t)\right\|_2^2}\\
    {}&+5D\sqrt{\sum_{t=2}^T\left\|\nabla F_t(\x_{t-1})-\nabla F_{t-1}(\x_{t-1})\right\|_2^2}
\end{align*}
where the second step uses AM-GM inequality as $5D L\sqrt{ \sum_{t=2}^{T} \|\x_t-\x_{t-1}\|_2^2} \leq  \frac{25 D^3 L^2}{\sqrt{\delta}} + \frac{\sqrt{\delta}}{4D}  \sum_{t=2}^T  \|\x_{t}-  \x_{t-1}\|_2^2$. Taking expectations and applying Jensen's inequality lead to 
\begin{align*}
\E\left[ \sum_{t=1}^T \langle \nabla f_{t}(\x_{t}), \x_{t} - \u \rangle \right] \leq {}& \frac{5 D}{2} \sqrt{\delta} +  \frac{25 D^3 L^2}{\sqrt{\delta}} +  \frac{5\sqrt{5}  DG}{2} + 5 \sqrt{2}D\sqrt{ \sigma_{1:T}^2 } + 5 D \sqrt{\Sigma_{1:T}^2 } \\
={}& 5 \sqrt{10} D^2 L +  \frac{5\sqrt{5}  DG}{2} + 5 \sqrt{2}D\sqrt{ \sigma_{1:T}^2 } + 5 D \sqrt{\Sigma_{1:T}^2 }\\
= {}& \O\left(\sqrt{\sigma_{1:T}^2} + \sqrt{\Sigma_{1:T}^2}\right),
\end{align*}
where we set $\delta=10 D^2 L^2$ and recall definitions of $\sigma_{1:T}^2$ in~\eqref{eqn:cumu:variance} and $\Sigma_{1:T}^2$ in\eqref{eqn:cumu:varia}. We end the proof by noting the expectation upper-bounds the expected regret as in \eqref{eqn:exp:cov}.
\end{proof}

\subsubsection{{Proof of Theorem~\ref{thm:2}}}
\label{appendix:proofs-sc}
\begin{proof}
Since the expected functions are $\lambda$-strongly convex now, we have $F_t(\x_t) - F_t(\u) \leq  \langle \nabla F_t(\x_t), \x_t -\u \rangle -\frac{\lambda}{2}\|\u - \x_t\|_2^2$. Then by the definition $F_t(\x) = \E_{f_t\sim \mathfrak{D}_t}[f_t(\x)]$, we obtain
\begin{align}
{}&\E\left[ \sum_{t=1}^T f_t(\x_t) - \sum_{t=1}^T f_t(\u) \right] = \E\left[ \sum_{t=1}^T F_t(\x_t) - \sum_{t=1}^T F_t(\u) \right]\label{eqn:exp:str}\\
\leq{}& \E\left[ \sum_{t=1}^T \big( \langle \nabla F_t(\x_t), \x_t -\u \rangle -\frac{\lambda}{2}\|\u - \x_t\|_2^2 \big)\right] = \E\left[ \sum_{t=1}^T \big( \langle \nabla f_t(\x_t), \x_t -\u \rangle -\frac{\lambda}{2}\|\u - \x_t\|_2^2\big) \right]. \nonumber
\end{align}
Similar to the analysis of Theorem~\ref{thm:1}, we have the following regret upper bound,
\begin{align}
 {}&\sum_{t=1}^T  \langle \nabla f_t(\x_t), \x_t -\u \rangle -\frac{\lambda}{2}\sum_{t=1}^T \|\u - \x_t\|_2^2 \nonumber\\
 \leq{}&  \underbrace{\sum_{t=1}^T \left(\frac{1}{2 \eta_t} \|\u -\xh_{t}\|_2^2 - \frac{1}{2 \eta_t} \|\u -\xh_{t+1}\|_2^2 \right)-\frac{\lambda}{2}\sum_{t=1}^T \|\u - \x_t\|_2^2}_{\term{a}}\label{eqn:sum:str1}\\
{}&+ \underbrace{\sum_{t=1}^T\eta_t \|\nabla f_{t}(\x_{t}) -\nabla f_{t-1}(\x_{t-1}) \|_2^2}_{\term{b}} - \underbrace{\sum_{t=1}^T \frac{1}{2 \eta_t} \big( \|\x_{t}- \xh_{t} \|_2^2 + \| \xh_{t+1} - \x_{t} \|_2^2 \big)}_{\term{c}}.\nonumber
\end{align}
We then provide the upper bounds of term (a), term (b), and term (c) respectively. 

To bound term (a), we need the following classic lemma.
\begin{myLemma}[{Stability lemma~\citep[Proposition 7]{Gradual:COLT:12}}]
\label{lem:stability}
Consider the following two updates: (i) $\x_{*}=\argmin_{\x\in\X}\langle\mathbf{a},\x\rangle+\D_{\MR}(\x,\mathbf{c})$, and (ii) $\x_{*}'=\argmin_{\x\in\X}\langle\mathbf{a}',\x\rangle+\D_{\MR}(\x,\mathbf{c})$. When the regularizer $\psi:\X\rightarrow\mathbbm{R}$ is a 1-strongly convex function with respect to the norm $\|\cdot\|$, we have $\|\x_{*}-\x_{*}'\|\leq\|(\nabla\MR (\mathbf{c})-\mathbf{a})-(\nabla\MR (\mathbf{c})-\mathbf{a}')\|_{*}=\|\mathbf{a}-\mathbf{a}'\|_{*}$.
\end{myLemma}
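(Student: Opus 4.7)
The statement to prove is the standard Stability Lemma for mirror descent updates. My plan is to derive it from first-order optimality conditions combined with 1-strong convexity of the regularizer $\psi$.

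First I would write down the optimality conditions for the two constrained minimizers. Since $\D_\psi(\x,\mathbf{c}) = \psi(\x) - \psi(\mathbf{c}) - \langle \nabla \psi(\mathbf{c}), \x-\mathbf{c}\rangle$, the gradient in $\x$ of the objective of update (i) is $\mathbf{a} + \nabla\psi(\x) - \nabla\psi(\mathbf{c})$. Thus the variational inequality characterizing $\x_{*}$ on the convex set $\X$ is
\begin{equation*}
\langle \mathbf{a} + \nabla\psi(\x_{*}) - \nabla\psi(\mathbf{c}),\ \x_{*}' - \x_{*}\rangle \geq 0,
\end{equation*}
and symmetrically for $\x_{*}'$,
\begin{equation*}
\langle \mathbf{a}' + \nabla\psi(\x_{*}') - \nabla\psi(\mathbf{c}),\ \x_{*} - \x_{*}'\rangle \geq 0.
\end{equation*}
Adding these two inequalities cancels the $\nabla\psi(\mathbf{c})$ terms and yields
\begin{equation*}
\langle \nabla\psi(\x_{*}) - \nabla\psi(\x_{*}'),\ \x_{*} - \x_{*}'\rangle \leq \langle \mathbf{a}' - \mathbf{a},\ \x_{*} - \x_{*}'\rangle.
\end{equation*}

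Next I would invoke the 1-strong convexity of $\psi$ with respect to $\|\cdot\|$, which gives the lower bound $\langle \nabla\psi(\x_{*}) - \nabla\psi(\x_{*}'),\ \x_{*} - \x_{*}'\rangle \geq \|\x_{*}-\x_{*}'\|^{2}$ on the left-hand side. On the right-hand side I would apply the generalized Cauchy--Schwarz inequality (Hölder's inequality for a norm and its dual) to obtain $\langle \mathbf{a}'-\mathbf{a},\ \x_{*}-\x_{*}'\rangle \leq \|\mathbf{a}-\mathbf{a}'\|_{*}\,\|\x_{*}-\x_{*}'\|$. Combining these two estimates gives $\|\x_{*}-\x_{*}'\|^{2} \leq \|\mathbf{a}-\mathbf{a}'\|_{*}\,\|\x_{*}-\x_{*}'\|$, and dividing through by $\|\x_{*}-\x_{*}'\|$ (trivially true if the norm vanishes) yields the claimed bound $\|\x_{*}-\x_{*}'\| \leq \|\mathbf{a}-\mathbf{a}'\|_{*}$.

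There is no essential obstacle here: the argument is a three-line exercise once one has the two optimality inequalities in hand. The only subtlety is to remember to use the \emph{sum} of the two variational inequalities (so that the $\nabla\psi(\mathbf{c})$ terms, which are the same in both problems, cancel), and to invoke strong convexity in the monotone-operator form $\langle \nabla\psi(\x)-\nabla\psi(\y),\ \x-\y\rangle \geq \|\x-\y\|^{2}$, which follows from adding the two strong-convexity inequalities at $\x$ and $\y$. The characterization of the right-hand side $\|(\nabla\psi(\mathbf{c})-\mathbf{a}) - (\nabla\psi(\mathbf{c})-\mathbf{a}')\|_{*}= \|\mathbf{a}-\mathbf{a}'\|_{*}$ is then immediate by inspection.
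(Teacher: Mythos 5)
Your proof is correct, and since the paper cites this result from \citet{Gradual:COLT:12} (Proposition~7) without reproving it, there is no in-paper proof to compare against; your argument is the standard one from the cited source, using the two first-order optimality (variational) inequalities at $\x_{*}$ and $\x_{*}'$, summing to cancel $\nabla\psi(\mathbf{c})$, then combining the monotone form of $1$-strong convexity with H\"older's inequality. All steps check out, including the observation that the bound is trivial when $\x_{*} = \x_{*}'$ so division by $\|\x_{*}-\x_{*}'\|$ is safe.
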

Using Lemma~\ref{lem:stability} with our algorithm yields $\|\xh_{t+1} -\x_{t} \|_2 \leq \eta_t \| \nabla f_{t}(\x_{t}) - \nabla f_{t-1}(\x_{t-1}) \|_2$. Considering Assumption~\ref{ass:4} (domain boundedness) and the step size $\eta_t = \frac{2}{\lambda t}$, we obtain
\begin{align*}
    \term{a} \leq{}&  \frac{1}{2 \eta_1} D^2 +  \frac{1}{2} \sum_{t=2}^T \left( \frac{1}{ \eta_t}  - \frac{1}{ \eta_{t-1}} \right) \|\u -\xh_{t}\|_2^2  -\frac{\lambda}{2}\sum_{t=1}^T \|\u - \x_t\|_2^2\\
\leq{}& \frac{\lambda D^2}{4} + \frac{\lambda}{4}\sum_{t=1}^{T-1} \left(\|\u - \xh_{t+1}\|_2^2- 2\|\u - \x_t\|_2^2 \right)\leq \frac{\lambda D^2}{4} + \frac{\lambda}{2}\sum_{t=1}^{T-1} \|\xh_{t+1} - \x_t\|_2^2\\
 \leq{}& \frac{\lambda D^2}{4} + \frac{\lambda \eta_1}{2}\sum_{t=1}^{T-1} \eta_t \| \nabla f_t(\x_t) - \nabla f_{t-1}(\x_{t-1})\|_2^2\leq \frac{\lambda D^2}{4} + \term{b} ,
\end{align*}
where the last step is based on $\eta_t$ being non-increasing. This shows that the upper bound of term (a) depends on term (b). For term (b), after inserting the definition of $\eta_t$, we get
\begin{align*}
  \term{b}= 2\sum_{t=1}^T \frac{1}{\lambda t}\| \nabla f_t(\x_t) - \nabla f_{t-1}(\x_{t-1})\|_2^2.
\end{align*}
Making use of the fact that $\eta_t$ is non-increasing again, we bound term (c) by
\begin{align*}
    \term{c}\geq \sum_{t=2}^T \bigg(\frac{1}{2\eta_t} \|\x_{t}- \xh_{t} \|_2^2 + \frac{1}{2\eta_{t-1}}\| \xh_{t} - \x_{t-1} \|_2^2\bigg)
    \geq  \sum_{t=2}^T \frac{1}{4\eta_{t-1}}\| \x_t - \x_{t-1}\|_2^2.
\end{align*}
Combining the upper bounds of term (a), term (b) and term (c) into~\eqref{eqn:sum:str1} with $\eta_t = \frac{2}{\lambda t}$ gives
\begin{align*}
    {}& \sum_{t=1}^T \langle  \nabla f_{t}(\x_{t}), \x_{t} - \x \rangle -\frac{\lambda}{2}\sum_{t=1}^T \|\x - \x_t\|_2^2 \\
    \leq{}& \frac{\lambda D^2}{4} + 4\sum_{t=1}^T \frac{1}{\lambda t} \| \nabla f_t(\x_t) - \nabla f_{t-1}(\x_{t-1})\|_2^2 - \sum_{t=2}^T \frac{\lambda (t-1)}{8}\| \x_t - \x_{t-1}\|_2^2 . 
\end{align*}

Then we need to use the following lemma with its proof in~\pref{appendix:useful-lemmas}.
\begin{myLemma} [Boundedness of the norm of gradient difference (\citet{OCO:Between}, Analysis of Theorem 5)]\label{lem:ind:diff} Under Assumptions~\ref{ass:2} and \ref{ass:3}, we have
\begin{align*}
 \|\nabla f_t(\x_t) - \nabla f_{t-1} (\x_{t-1}) \|_2^2 \leq & 4 \|\nabla f_t(\x_t) -\nabla F_t(\x_t)\|_2^2 + 4 \|\nabla F_t(\x_{t-1}) -\nabla F_{t-1}(\x_{t-1})\|_2^2 \\
 & +  4L^2 \|\x_t-\x_{t-1}\|_2^2+ 4 \|\nabla F_{t-1}(\x_{t-1}) -  \nabla f_{t-1} (\x_{t-1})\|_2^2,
\end{align*}
where $\|\nabla f_1(\x_1) - \nabla f_{0} (\x_{0}) \|_2^2=\|\nabla f_1(\x_1)  \|_2^2 \leq G^2$.
\end{myLemma}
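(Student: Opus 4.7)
My plan is to prove the lemma by a telescoping decomposition of the gradient difference into four pieces, followed by a standard application of Cauchy--Schwarz (or the power-mean inequality) and $L$-smoothness. The three pieces that appear on the right-hand side of the lemma are already visible in the target bound; the fourth piece will be handled using Assumption~\ref{ass:2}.

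Concretely, I would first write the identity
\begin{align*}
\nabla f_t(\x_t) - \nabla f_{t-1}(\x_{t-1}) = {}& \bigl(\nabla f_t(\x_t) - \nabla F_t(\x_t)\bigr) + \bigl(\nabla F_t(\x_t) - \nabla F_t(\x_{t-1})\bigr) \\
{}& + \bigl(\nabla F_t(\x_{t-1}) - \nabla F_{t-1}(\x_{t-1})\bigr) + \bigl(\nabla F_{t-1}(\x_{t-1}) - \nabla f_{t-1}(\x_{t-1})\bigr),
\end{align*}
which simply inserts the expected gradients $\nabla F_t(\x_t)$, $\nabla F_t(\x_{t-1})$, and $\nabla F_{t-1}(\x_{t-1})$. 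The key identity is a free choice of intermediate terms: the first pair captures the stochastic noise at round $t$, the second pair captures the ``spatial'' change at round $t$ that can be controlled by smoothness, the third pair captures the adversarial variation between $F_{t-1}$ and $F_t$, and the fourth pair captures the stochastic noise at round $t-1$.

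Next, I would apply the inequality $\|a_1+a_2+a_3+a_4\|_2^2 \leq 4(\|a_1\|_2^2+\|a_2\|_2^2+\|a_3\|_2^2+\|a_4\|_2^2)$, which follows from Cauchy--Schwarz. This directly produces the first, third, and fourth terms of the desired bound. For the remaining smoothness term $\|\nabla F_t(\x_t) - \nabla F_t(\x_{t-1})\|_2^2$, I would invoke Assumption~\ref{ass:2} ($L$-smoothness of $F_t$) to bound it by $L^2\|\x_t - \x_{t-1}\|_2^2$, which yields the $4L^2\|\x_t - \x_{t-1}\|_2^2$ summand with the correct constant.

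Finally, the base case $t=1$ (where $\nabla f_0(\x_0) = 0$ by convention) reduces to bounding $\|\nabla f_1(\x_1)\|_2^2 \leq G^2$, which is immediate from Assumption~\ref{ass:3}. I do not anticipate a hard step here: the whole argument is a one-line triangle-type decomposition plus smoothness, and the constant $4$ arises naturally from splitting into four pieces. The only subtle point is to choose the correct intermediate insertions so that the $L$-smoothness can be applied to $F_t$ (the expected function) rather than to the individual random function $f_t$, which need not be smooth in general.
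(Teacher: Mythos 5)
Your proof is correct and follows essentially the same route as the paper: the same four-term telescoping decomposition inserting $\nabla F_t(\x_t)$, $\nabla F_t(\x_{t-1})$, $\nabla F_{t-1}(\x_{t-1})$, the same application of the inequality $\|\sum_{i=1}^4 a_i\|_2^2 \leq 4\sum_{i=1}^4\|a_i\|_2^2$ (the paper phrases it via Jensen's inequality rather than Cauchy--Schwarz, but it is the same fact), the same use of $L$-smoothness of $F_t$ on the second piece, and the same treatment of the $t=1$ base case via Assumption~\ref{ass:3}.
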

So applying Lemma~\ref{lem:ind:diff} yields the following result,
\begin{align}
    {}& \sum_{t=1}^T \langle \nabla f_{t}(\x_{t}), \x_{t} - \u \rangle -\frac{\lambda}{2}\sum_{t=1}^T \|\u - \x_t\|_2^2 \nonumber\\
    \leq{}& \frac{4G^2}{\lambda} + 4\sum_{t=2}^T \frac{1}{\lambda t} \left( 4 \|\nabla f_t(\x_t) -\nabla F_t(\x_t)\|_2^2 + 4 \|\nabla F_t(\x_{t-1}) -\nabla F_{t-1}(\x_{t-1})\|_2^2 \right.\nonumber\\
    {}&\left.+ 4 \|\nabla F_{t-1}(\x_{t-1}) -  \nabla f_{t-1} (\x_{t-1})\|_2^2 \right) + \sum_{t=2}^{T} \left(\frac{16L^2}{\lambda t} - \frac{\lambda (t-1)}{8}\right) \|\x_t-\x_{t-1}\|_2^2 + \frac{\lambda D^2}{4}\nonumber\\
    \leq{}& \frac{4G^2}{\lambda} + \sum_{t=2}^T \frac{16}{\lambda t} \|\nabla F_{t}(\x_{t}) -  \nabla f_{t} (\x_{t})\|_2^2 +\sum_{t=2}^{T} \frac{16}{\lambda t}\|\nabla F_t(\x_{t-1}) -\nabla F_{t-1}(\x_{t-1})\|_2^2 \nonumber\\
    {}&+ \sum_{t=2}^T \frac{16}{\lambda (t-1)} \|\nabla F_{t-1}(\x_{t-1}) -  \nabla f_{t-1} (\x_{t-1})\|_2^2 + \sum_{t=1}^{T-1} \left(\frac{16L^2}{\lambda t} - \frac{\lambda t}{8}\right) \|\x_{t+1}-\x_{t}\|_2^2 + \frac{\lambda D^2}{4}\nonumber\\
    \leq{}& \frac{4G^2}{\lambda} + \sum_{t=1}^{T} \frac{32}{\lambda t}\|\nabla f_t(\x_t) -\nabla F_t(\x_t)\|_2^2 +\sum_{t=2}^{T} \frac{16}{\lambda t}\|\nabla F_t(\x_{t-1}) -\nabla F_{t-1}(\x_{t-1})\|_2^2\nonumber\\
    {}& + \sum_{t=1}^{T-1} \left(\frac{16L^2}{\lambda t} - \frac{\lambda t}{8}\right) \|\x_{t+1}-\x_{t}\|_2^2+ \frac{\lambda D^2}{4}.\label{lem4:2}
\end{align}
Following \citet{OCO:Between}, we define $\kappa = \frac{L}{\lambda}$. Then for $t \geq 8\sqrt{2}\kappa$, we have $\frac{16L^2}{\lambda t} - \frac{\lambda t}{8} \leq 0$. Using Assumption~\ref{ass:4} (domain boundedness), the fourth term above is bounded as
\begin{align*}
     {}&\sum_{t=1}^{T-1} \left(\frac{16L^2}{\lambda t} - \frac{\lambda t}{8}\right) \|\x_{t+1}-\x_{t}\|_2^2
     \leq  \sum_{t=1}^{\lceil 8\sqrt{2}\kappa \rceil} \left(\frac{16L^2}{\lambda t} - \frac{\lambda t}{8}\right)D^2 \leq \frac{16L^2 D^2}{\lambda}\sum_{t=1}^{\lceil 8\sqrt{2}\kappa \rceil}\frac{1}{t}\\
     \leq{}&  \frac{16L^2 D^2}{\lambda}\left( 1+\int_{t=1}^{\lceil 8\sqrt{2}\kappa \rceil} \frac{1}{t} dt\right) 
     =  \frac{16L^2 D^2}{\lambda}\ln\left(1+8\sqrt{2}\frac{L}{\lambda}\right) + \frac{16L^2 D^2}{\lambda} .
\end{align*}
Combining the above two formulas and taking the expectation, we can get that
\begin{align*}
    {}&\E\left[\sum_{t=1}^T \langle \nabla f_{t}(\x_{t}), \x_{t} - \u,    \rangle -\frac{\lambda}{2}\sum_{t=1}^T \|\u - \x_t\|_2^2\right]\\
    \leq{}&\E\left[\sum_{t=1}^{T} \frac{32}{\lambda t}\sigma_t^2 +\sum_{t=2}^{T} \frac{16}{\lambda t}\sup_{\x\in \X} \|\nabla F_t(\x) -\nabla F_{t-1}(\x)\|_2^2 \right] + \frac{16L^2 D^2}{\lambda}\ln \bigg(1+8\sqrt{2}\frac{L}{\lambda}\bigg) \\
    {}&+ \frac{16L^2 D^2 + 4G^2}{\lambda} + \frac{\lambda D^2}{4},
\end{align*}
where $\sigma_t^2 = \max_{\x\in\X}\E_{f_t\sim \mathfrak{D}_t}\left[ \|\nabla f_t(\x)-\nabla F_t(\x)\|_2^2 \right]$ as defined in~\eqref{eqn:variance:grad}. To deal with the first term, we introduce a new lemma below, with its proof in~\pref{appendix:useful-lemmas}. 
\begin{myLemma}
\label{lemma:strongly-convex-lemma}
    Under Assumption~\ref{ass:sigma}, we have
\begin{align*}
    {}&\sum_{t=1}^T\frac{1}{\lambda t}\left(2\sigma_t^2 + \sup_{\x\in \X} \|\nabla F_t(\x) -\nabla F_{t-1}(\x)\|_2^2\right) \\
    \leq{}& \frac{2\sigma_{\max}^2+\Sigma_{\max}^2}{\lambda}\ln \left(\sum_{t=1}^T\frac{1}{2\sigma_{\max}^2+\Sigma_{\max}^2}\left(2\sigma_t^2 + \sup_{\x\in \X} \|\nabla F_t(\x) -\nabla F_{t-1}(\x)\|_2^2\right)+1\right) + \frac{4\sigma_{\max}^2+2\Sigma_{\max}^2}{\lambda}.
\end{align*}
\end{myLemma}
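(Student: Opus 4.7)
The plan is to reduce the statement to a clean deterministic inequality about a weighted sum, and then to identify the worst-case allocation explicitly. I would first denote $a_t = 2\sigma_t^2 + \sup_{\x \in \X}\|\nabla F_t(\x) - \nabla F_{t-1}(\x)\|_2^2$ and $M = 2\sigma_{\max}^2 + \Sigma_{\max}^2$, so that by Assumption~\ref{ass:sigma} we have $a_t \leq M$ for every $t \in [T]$. Setting $b_t = a_t/M \in [0,1]$ and $B = \sum_{t=1}^T b_t$, dividing the target inequality through by $M/\lambda$ reduces it to showing
\[
\sum_{t=1}^T \frac{b_t}{t} \leq \ln(B+1) + 2.
\]

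The central idea is to upper bound the left-hand side by the value of the linear program $\max \sum_{t=1}^T b_t/t$ subject to $b_t \in [0,1]$ and $\sum_t b_t = B$. Because the weights $1/t$ are strictly decreasing in $t$, a one-step swap argument--moving a small mass $\epsilon$ from $b_t$ to $b_s$ with $s < t$ whenever $b_s < 1$ and $b_t > 0$ strictly improves the objective by $\epsilon(1/s - 1/t) > 0$--forces the maximizer to be the ``water-filling'' vector $b_t = 1$ for $t \leq k := \lfloor B \rfloor$, $b_{k+1} = B - k$, and $b_t = 0$ otherwise. Its optimum value is $\sum_{t=1}^{k} 1/t + (B-k)/(k+1)$.

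From there the proof is a short harmonic-sum estimate: for $k \geq 1$, combining $\sum_{t=1}^{k} 1/t \leq 1 + \ln k$ with $(B-k)/(k+1) \leq 1$ and $\ln k \leq \ln(B+1)$ yields the claimed bound, while the trivial $B < 1$ case ($k = 0$) is handled directly since the left-hand side is at most $B \leq 1 \leq \ln(B+1)+2$. Multiplying back by $M/\lambda$ produces the inequality in the lemma statement. I expect the only delicate step to be justifying that the water-filling allocation is indeed optimal, but this is elementary given that the feasible region is a simple polytope and the objective is linear in $b$, so the swap argument suffices without any further machinery.
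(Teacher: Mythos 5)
Your proof is correct and arrives at the bound by essentially the same mechanism as the paper: normalize by $M = 2\sigma_{\max}^2 + \Sigma_{\max}^2$, threshold at an integer near $B = \sum_t a_t / M$, control the head with the harmonic estimate $\sum_{t\le k} 1/t \le 1+\ln k$, and control the tail by a single $M/\lambda$. The one genuine difference is expository: you first establish, via the swap/water-filling argument, that the worst-case allocation of the $b_t$'s is the greedy one, and then bound that extremizer; the paper skips the extremal step and directly bounds $a_t \le M$ for $t \le \lceil B\rceil$ and $1/t \le 1/(\lceil B\rceil+1)$ on the tail, which is shorter but less transparent about why the estimate is tight. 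A small point in your favor is that your treatment of the $B<1$ corner case is cleaner than the paper's, which informally writes $1/(\lambda\alpha)$ in that branch even though $\alpha = \lceil B\rceil$ can equal $0$ there.
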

Then, we can arrive at
\begin{align*}
    {}& \E \left[\sum_{t=1}^T \langle \nabla f_{t}(\x_{t}), \x_{t} - \u    \rangle -\frac{\lambda}{2}\sum_{t=1}^T \|\u - \x_t\|_2^2 \right]\\
    \leq{}&\frac{32\sigma_{\max}^2+16\Sigma_{\max}^2}{\lambda}\ln \left(\frac{1}{2\sigma_{\max}^2+\Sigma_{\max}^2}\left(2\sigma_{1:T}^2 + \Sigma_{1:T}^2\right)+1\right) + \frac{64\sigma_{\max}^2+32\Sigma_{\max}^2 }{\lambda}\\
    {}&+ \frac{16L^2 D^2}{\lambda}\ln \bigg(1+8\sqrt{2}\frac{L}{\lambda}\bigg)+ \frac{16L^2 D^2 + 4G^2}{\lambda} + \frac{\lambda D^2}{4}\\
    ={}& \O\left(\frac{1}{\lambda}\left(\sigma_{\max}^2+\Sigma_{\max}^2\right)\log \left(\left(\sigma_{1:T}^2 + \Sigma_{1:T}^2\right)/\left(\sigma_{\max}^2+\Sigma_{\max}^2\right)\right)\right).
\end{align*}
This ends the proof.
\end{proof}

\subsubsection{{Proof of Theorem~\ref{thm:3}}}
\label{appendix:proofs-exp-concave}
\begin{proof}
Due to the exp-concavity assumption, we have $f_t(\x_t) - f_t(\u) \leq \langle \nabla f_t(\x_t), \x_t -\u \rangle -\frac{\beta}{2} \|\u - \x_t\|_{h_t}^2$, where $\beta = \frac{1}{2}\min \left\{\frac{1}{4GD},\alpha\right\}$, and $h_t = \nabla f_{t}(\x_{t})\nabla f_{t}(\x_{t})^{\top}$. Therefore, we can take advantage of the above formula to get tighter regret bounds as follows 
\begin{align}
\E\left[ \sum_{t=1}^T f_t(\x_t) - \sum_{t=1}^T f_t(\u) \right] \leq \E\left[ \sum_{t=1}^T  \langle \nabla f_t(\x_t), \x_t -\u \rangle -\frac{\beta}{2}\sum_{t=1}^T \|\u - \x_t\|_{h_t}^2 \right]. \label{eqn:exp:exp}
\end{align}
Clearly, $\MR_t(\x) = \frac{1}{2}\|\x\|_{H_t}^2$ is a 1-strongly convex function with respect to $\|\cdot\|_{H_t}$, and $\|\cdot\|_{H_t}^{-1}$ is the dual norm. Thus, from Lemma~\ref{lem:1} (Variant of Bregman proximal inequality), we have
\begin{align}
{}& \sum_{t=1}^T \langle \x_{t} - \u,   \nabla f_{t}(\x_{t}) \rangle -\frac{\beta}{2}\sum_{t=1}^T \|\u - \x_t\|_{h_t}^2 \nonumber\\
\leq{}&  \underbrace{\sum_{t=1}^T \left(\frac{1}{2} \|\u -\xh_{t}\|_{H_t}^2 - \frac{1}{2} \|\u -\xh_{t+1}\|_{H_t}^2 \right)-\frac{\beta}{2}\sum_{t=1}^T \|\u - \x_t\|_{h_t}^2}_{\term{a}}\nonumber\\
{}&+ \underbrace{\sum_{t=1}^T \|\nabla f_{t}(\x_{t}) -\nabla f_{t-1}(\x_{t-1}) \|_{H_t^{-1}}^2}_{\term{b}}- \underbrace{\sum_{t=1}^T \frac{1}{2} \left( \|\x_{t}- \xh_{t} \|_{H_t}^2 + \| \xh_{t+1} - \x_{t} \|_{H_t}^2 \right)}_{\term{c}} .\label{eqn:sum:exp}
\end{align}
Then, we discuss the upper bounds of term (a), term (b) and term (c), respectively. According to \citet[Proof of Lemma~14]{Gradual:COLT:12}, we write term (a) as 
\begin{align*}
    \frac{1}{2} \Big( \| \u - \xh_1 \|_{H_1}^2 - \| \u - \xh_{T+1} \|_{H_{T+1}}^2 + \sum_{t=1}^T\big( \| \u -\xh_{t+1} \|_{H_{t+1}}^2 - \| \u - \xh_{t+1} \|_{H_t}^2 \big) \Big) - \frac{\beta}{2}\sum_{t=1}^T\| \u-\x_t\|_{h_t}^2.
\end{align*}
Based on Assumption~\ref{ass:4} (domain boundedness) and Assumption~\ref{ass:3} (boundedness of gradient norms), with the definition that $H_t = I + \frac{\beta}{2}G^2 I + \frac{\beta}{2}\sum_{\tau = 1}^{t-1}\nabla f_{\tau}(\x_{\tau})\nabla f_{\tau}(\x_{\tau})^{\top}$ and $h_t = \nabla f_{t}(\x_{t})\nabla f_{t}(\x_{t})^{\top}$, we have $\| \u - \xh_1 \|_{H_1}^2 \leq D^2\big(1+\frac{\beta}{2}G^2\big)$ and $H_{t+1} - H_{t} = \frac{\beta}{2}h_t$ for every $t$. So we can simplify term (a) to
\begin{align*}
    \term{a} {}&\leq  \frac{D^2}{2}\left(1+\frac{\beta}{2}G^2\right) + \frac{\beta}{4}\sum_{t=1}^T\| \u -\xh_{t+1}\|_{h_t}^2  - \frac{\beta}{2}\sum_{t=1}^T\| \u-\x_t\|_{h_t}^2\\
    {}&\leq  \frac{D^2}{2}\left(1+\frac{\beta}{2}G^2\right)+ \frac{\beta}{2}\sum_{t=1}^T\|\x_t -\xh_{t+1}\|_{h_t}^2\leq  \frac{D^2}{2}\left(1+\frac{\beta}{2}G^2\right)+ \sum_{t=1}^T\|\x_t - \xh_{t+1}\|_{H_t}^2\\
    {}&\leq  \frac{D^2}{2}\left(1+\frac{\beta}{2}G^2\right)+ \sum_{t=1}^T\|\nabla f_t(\x_t) - \nabla f_{t-1}(\x_{t-1}) \|_{H_t^{-1}}^2=  \frac{D^2}{2}\left(1+\frac{\beta}{2}G^2\right)+ \term{b} ,
\end{align*}
where we use $H_{t}\succeq \frac{\beta}{2}G^2 I \succeq \frac{\beta}{2}h_t$ for the third inequality and Lemma~\ref{lem:stability} (Stability lemma) in the fourth inequality. Notably, the upper bound of term (b) determines that of term (a). Hence we move to bound term (b). By definition of $H_t$, there is $G^2 I \succeq \nabla f_{t}(\x_{t})\nabla f_{t}(\x_{t})^{\top}$ for every $t$. In addition, we know $\nabla f_{0}(\x_{0}) = 0$, so
\begin{align}
    H_t \succeq I+\frac{\beta}{4}\sum_{\tau=1}^{t}\left( \nabla f_{\tau}(\x_{\tau})\nabla f_{\tau}(\x_{\tau})^{\top} + \nabla f_{\tau-1}(\x_{\tau-1})\nabla f_{\tau-1}(\x_{\tau-1})^{\top}\right).\label{eqn:exp:Ht}
\end{align}
Similar to~\citet{Gradual:COLT:12}, we claim that
\begin{align}
    \nabla f_{\tau}(\x_{\tau})\nabla f_{\tau}(\x_{\tau})^{\top} & + \nabla f_{\tau-1}(\x_{\tau-1})\nabla f_{\tau-1}(\x_{\tau-1})^{\top}\nonumber\\ & \succeq \frac{1}{2}\left(\nabla f_{\tau}(\x_{\tau}) -\nabla f_{\tau-1}(\x_{\tau-1})\right)\left(\nabla f_{\tau}(\x_{\tau}) -\nabla f_{\tau-1}(\x_{\tau-1})\right)^{\top}. \label{eqn:exp:Ht:nabla}
\end{align}
The above inequality comes from subtracting the RHS of it from the left and getting that $\frac{1}{2}\left(\nabla f_{\tau}(\x_{\tau}) +\nabla f_{\tau-1}(\x_{\tau-1})\right)\left(\nabla f_{\tau}(\x_{\tau}) +\nabla f_{\tau-1}(\x_{\tau-1})\right)^{\top} \succeq 0$. Based on this, we obtain 
\begin{align}
    H_t \overset{(\ref{eqn:exp:Ht:nabla})}{\succeq} I+\frac{\beta}{8}\sum_{\tau=1}^{t}\left(\nabla f_{\tau}(\x_{\tau}) -\nabla f_{\tau-1}(\x_{\tau-1})\right)\left(\nabla f_{\tau}(\x_{\tau}) -\nabla f_{\tau-1}(\x_{\tau-1})\right)^{\top}.  \nonumber
\end{align}
Let $P_t = I+\frac{\beta}{8}\sum_{\tau=1}^{t}\left(\nabla f_{\tau}(\x_{\tau}) -\nabla f_{\tau-1}(\x_{\tau-1})\right)\left(\nabla f_{\tau}(\x_{\tau}) -\nabla f_{\tau-1}(\x_{\tau-1})\right)^{\top}$, we have
\begin{align*}
   \term{b} \leq \sum_{t=1}^T\|\nabla f_t(\x_t) - \nabla f_{t-1}(\x_{t-1}) \|_{P_t^{-1}}^2 ={}& \frac{8}{\beta}\sum_{t=1}^T\left\| \sqrt{\frac{\beta}{8}}\left(\nabla f_t(\x_t) - \nabla f_{t-1}(\x_{t-1}) \right)\right\|_{P_t^{-1}}^2\\
   \leq{}& \frac{8d}{\beta}\ln \left( \frac{\beta}{8d}\Bar{V}_T + 1  \right),
\end{align*}
where we apply Lemma~\ref{lem:hazan} with $\u_t =\sqrt{\frac{\beta}{8}}\left( \nabla f_t(\x_t) - \nabla f_{t-1}(\x_{t-1})\right)$ and $\epsilon = 1$.

Then, derived from the fact that $H_t \succeq H_{t-1} \succeq I$, we can bound term (c) as  
\begin{align}
    \term{c} ={}& \frac{1}{2}\sum_{t=1}^T\|\x_t - \xh_t \|_{H_t}^2 + \frac{1}{2}\sum_{t=2}^{T+1}\|\x_{t-1}-\xh_t \|_{H_{t-1}}^2\nonumber\\
     \geq{}& \frac{1}{2}\sum_{t=2}^T\|\x_t - \xh_t \|_{H_{t-1}}^2 +  \frac{1}{2}\sum_{t=2}^{T}\|\x_{t-1}-\xh_t \|_{H_{t-1}}^2
     \geq  \frac{1}{4}\sum_{t=2}^T\| \x_t -\x_{t-1} \|_2^2. \nonumber
\end{align}

Combining the above bounds of term (a), term (b) and term (c), we can get
\begin{align}
    {}&\sum_{t=1}^T \langle \x_{t} - \u,   \nabla f_{t}(\x_{t}) \rangle -\frac{\beta}{2}\sum_{t=1}^T \|\u - \x_t\|_{h_t}^2 \nonumber\\
    \leq{}& \frac{16d}{\beta}\ln \left( \frac{\beta}{8d}\Bar{V}_T + 1 \right) + \frac{D^2}{2}\left(1+\frac{\beta}{2}G^2\right)- \frac{1}{4}\sum_{t=2}^T\| \x_t -\x_{t-1} \|_2^2 .\nonumber
\end{align}
Further exploiting Lemma~\ref{lem:sum:diff} (Boundedness of cumulative norm of gradient difference) with the inequality $\ln(1+u+v) \leq \ln(1+u) + \ln(1+v) (u,v > 0)$, we have
\begin{align}
    {}&\sum_{t=1}^T \langle \x_{t} - \u,   \nabla f_{t}(\x_{t}) \rangle -\frac{\beta}{2}\sum_{t=1}^T \|\u - \x_t\|_{h_t}^2 \nonumber\\
    \leq{}&\frac{16d}{\beta}\ln\left(\frac{\beta}{d} \sum_{t=1}^{T}  \|\nabla f_t(\x_t) -\nabla F_t(\x_t)\|_2^2 +\frac{\beta}{2d} \sum_{t=2}^{T} \|\nabla F_t(\x_{t-1}) -\nabla F_{t-1}(\x_{t-1})\|_2^2+ \frac{\beta}{8d}G^2 + 1 \right) \nonumber\\
    {}& + \frac{16d}{\beta}\ln \left(\frac{\beta L^2}{2d} \sum_{t=2}^{T} \|\x_t-\x_{t-1}\|_2^2+1\right) + \frac{D^2}{2}\left(1+\frac{\beta}{2}G^2\right) - \frac{1}{4}\sum_{t=2}^T\| \x_t -\x_{t-1} \|_2^2 \nonumber\\
    \leq{}& \frac{16d}{\beta}\ln\left(\frac{\beta}{d} \sum_{t=1}^{T}  \|\nabla f_t(\x_t) -\nabla F_t(\x_t)\|_2^2 +\frac{\beta}{2d} \sum_{t=2}^{T} \|\nabla F_t(\x_{t-1}) -\nabla F_{t-1}(\x_{t-1})\|_2^2+ \frac{\beta}{8d}G^2 + 1 \right)\nonumber\\
    {}& + \frac{16d}{\beta}\ln\left( 32L^2 + 1 \right) + \frac{D^2}{2}\left(1+\frac{\beta}{2}G^2\right),  \nonumber
\end{align}
where the last step is due to Lemma~\ref{lem:ln:pq}. 

Taking the expectation, and making use of Jensen's inequality, the above bound becomes
\begin{align*}
&\E\left[\sum_{t=1}^T  \langle \nabla f_t(\x_t), \x_t -\u \rangle -\frac{\beta}{2}\sum_{t=1}^T \|\u- \x_t\|_{h_t}^2 \right]  \\
\leq & \frac{16d}{\beta}\ln \left( \frac{\beta}{d} \sigma_{1:T}^2 + \frac{\beta}{2d} \Sigma_{1:T}^2 + \frac{\beta}{8d}G^2 + 1\right)+ \frac{16d}{\beta}\ln\left( 32L^2 + 1 \right)+ \frac{D^2}{2}\left(1+\frac{\beta}{2}G^2\right)\\
= & \O\Big(\frac{d}{\alpha}\log (\sigma_{1:T}^2+ \Sigma_{1:T}^2)\Big)
\end{align*}
We finish the proof by integrating the above inequality to~\eqref{eqn:exp:exp}.
\end{proof}
%!TEX root = ../OMD4SEA_arxiv.tex

\section{Extensions: Dynamic Regret Minimization and Non-smooth Functions}\label{sec:extensions}
In this section, we investigate a new measure for the SEA model -- dynamic regret, a more suitable metric for non-stationary environments. Subsequently, we explore the SEA model for non-smooth loss functions, proposing algorithms for minimizing static regret and dynamic regret respectively. Detailed analysis and proofs are placed in Section~\ref{subsec:extensions:analysis}.

%!TEX root = ../OMD4SEA_arxiv.tex

\subsection{Dynamic Regret Minimization}
\label{sec:dynamci-regret}
To optimize the expected dynamic regret in~\eqref{eq:dynamic-regret-measure}, following the recent studies of non-stationary online learning~\citep{Adaptive:Dynamic:Regret:NIPS,Problem:Dynamic:Regret}, we develop a two-layer approach based on the optimistic OMD framework, which consists of a meta-learner running over a group of base-learners. The full procedure is summarized in Algorithm~\ref{alg:Sword++forSEA}. Specifically, we maintain a pool for candidate step sizes $\H = \{\eta_i=c\cdot 2^i \mid i \in [N]\}$, where $N$ is the number of base-learners of order $\O(\log T)$ and $c$ is some small constant given later. We denote by $\Bcal_i$ the $i$-th base-learner for $i\in[N]$. At round $t \in [T]$, the online learner obtains the decision $\x_t$ by aggregating local base decisions via the meta-learner, namely, $\x_t = \sum_{i=1}^N p_{t,i} \x_{t,i}$, where $\x_{t,i}$ is the decision returned by the base-learner $\Bcal_i$ for $i \in [N]$ and $\p_t \in \Delta_N$ is the weight vector returned by the meta-algorithm. The nature then chooses a distribution $\mathfrak{D}_t$ and the individual function $f_t(\cdot)$ is sampled from $\mathfrak{D}_t$. Subsequently, the online learner suffers the loss $f_t(\x_t)$ and observes the gradient $\nabla f_t(\x_t)$.

For the base-learner $\Bcal_i$, in each round $t$, she obtains her local decision $\x_{t+1,i}$ by instantiating the optimistic OMD algorithm (see Algorithm~\ref{alg:1}) with $\psi(\x) = \frac{1}{2\eta_i}\|\x\|_2^2$ and $M_{t+1} = \nabla f_{t}(\x_{t})$ over the linearized surrogate loss $g_t(\x) = \langle \nabla f_t(\x_t),\x \rangle$, where $\eta_i \in \H$ is the step size associated with the $i$-th base-learner. Since $\nabla g_t(\x_{t,i}) = \nabla f_t(\x_t)$, the updating rules of $\B_i$ are demonstrated as
\begin{equation}
\label{eq:base-update}
    \xh_{t+1,i} = \Pi_{\X} \big[ \xh_{t,i} - \eta_i  \nabla f_t(\x_t)\big],~~\x_{t+1,i} = \Pi_{\X} \big[ \xh_{t+1,i} - \eta_i  \nabla f_t(\x_t)\big].  
\end{equation}

The meta-learner updates the weight vector $\p_{t+1} \in \Delta_N$ by  Optimistic Hedge \citep{syrgkanis2015fast} with a time-varying learning rate $\epsilon_t$, that is,
\begin{align}
\label{eq:optimistic-Hedge}
	p_{t+1,i} \propto \exp\bigg(-\epsilon_t \Big(\sum_{s=1}^{t} \ell_{s,i} + m_{t+1,i} \Big)\bigg),
\end{align}
where the feedback loss $\ellb_t \in \R^N$ is constructed by
\begin{equation}
  \label{eq:feedback-loss}
  \ell_{t,i} = \inner{\nabla f_t(\x_t)}{\x_{t,i}} + \lambda \norm{\x_{t,i} - \x_{t-1,i}}_2^2
\end{equation}
for $t \geq 2$ and $\ell_{1,i} = \inner{\nabla f_1(\x_1)}{\x_{1,i}}$; and the optimism $\mb_{t+1} \in \R^N$ is constructed as 
\begin{equation}
  \label{eq:optimism}
  m_{t+1,i} = \inner{M_{t+1}}{\x_{t+1,i}} + \lambda \norm{\x_{t+1,i} - \x_{t,i}}_2^2
\end{equation}
with $M_{t+1}= \nabla f_t(\x_t)$ for $t \geq 2$ and $\m_{1} = \mathbf{0}$; $\lambda \geq 0$ being the coefficient of the correction terms; and we set $\x_{0,i} = \mathbf{0}$ for $i \in [N]$. Note that the correction term $\lambda \norm{\x_{t,i} - \x_{t-1,i}}_2^2$ in the meta-algorithm (both feedback loss and optimism) plays an important role. Indeed, our algorithm design and regret analysis follow the collaborative online ensemble framework proposed by~\citet{JMLR:sword++} for optimizing the gradient-variation dynamic regret. Technically, for such a two-layer structure, to cancel the additional positive term $\sum_{t=2}^T \norm{\x_t - \x_{t-1}}_2^2$ appearing in the derivation of $\sigma_{1:T}^2$ and $\Sigma_{1:T}^2$, one needs to ensure an effective collaboration between the meta and base layers. This involves simultaneously exploiting negative terms of the regret upper bounds in both the base and meta layers as well as leveraging additional negative terms introduced by the above correction term.

\begin{algorithm}[t]
\caption{Dynamic Regret Minimization of the SEA Model}
\label{alg:Sword++forSEA}
    \begin{algorithmic}[1]
  \REQUIRE{step size pool $\H = \{\eta_1,\ldots,\eta_N\}$, learning rate of meta-algorithm $\epsilon_t > 0$, correction coefficient $\lambda >0$}
  \STATE{Initialization: $\x_1 = \widehat{\x}_1 \in \X$,  $\p_1 = \frac{1}{N} \cdot \mathbf{1}_N$}
    \FOR{$t=1$ {\bfseries to} $T$}
      \STATE {Submit the decision $\x_t = \sum_{i=1}^{N} p_{t,i}\x_{t,i}$}
      \STATE {Observe the online function $f_t:\X \mapsto \R$ sampled from the underlying distribution $\mathfrak{D}_t$ and suffer the loss $f_t(\x_t)$}
      \STATE {Base-learner $\B_i$ updates the local decision by optimistic OMD, see~\eqref{eq:base-update}, $\forall i \in [N]$}
      \STATE {Receive $\x_{t+1,i}$ from base-learner $\B_i$ for $i \in [N]$}
      \STATE {Construct the feedback loss $\ellb_t \in \R^N$ and optimism $\mb_{t+1} \in \R^N$ by~\eqref{eq:feedback-loss} and~\eqref{eq:optimism} }
      \STATE {Update the weight $\p_{t+1} \in \Delta_N$ by optimistic Hedge in~\eqref{eq:optimistic-Hedge}}
    \ENDFOR
\end{algorithmic}
\end{algorithm}

\begin{myRemark}
\label{remark:compare-Sachs}
After the submission of our conference paper, \citet{OCO:Between} released an updated version~\citep{arxiv'2023:SEA}, where they also utilized optimistic OMD to achieve the same dynamic regret as our approach. However, there is a significant difference between their method and ours. They employed an optimism design with $m_{t,i} = \langle \nabla f_{t-1}(\bar{\x}_t), \x_{t,i} \rangle$, based on another solution for gradient-variation dynamic regret of online convex optimization~\citep{Problem:Dynamic:Regret}, where $\bar{\x}_t = \sum_{i=1}^N p_{t-1,i}\x_{t,i}$. This design actually introduces a dependence issue in the SEA model because $\bar{\x}_t$ depends on $f_{t-1}(\cdot)$. We provide more elaborations in Appendix~\ref{appendix:proof-xbar-regret} and technical discussions in Remark~\ref{remark:technical-bar-x}.
\remarkend
\end{myRemark}

Below, we provide the dynamic regret upper bound of Algorithm~\ref{alg:Sword++forSEA} for the SEA model, and we will give the proof in Section~\ref{appendix:proof-dynamic-regret}.
\begin{myThm} \label{thm:sword}
Under Assumptions~\ref{ass:3}, \ref{ass:4}, \ref{ass:2} and \ref{ass:1}, setting the step size pool $\H = \{\eta_1,\ldots,\eta_N\}$ with $\eta_i = $ $ \min \{1/(8L), \sqrt{(D^2/(8G^2T))\cdot 2^{i-1}} \}$ and $N = \lceil 2^{-1}\log_2(G^2T/(8L^2D^2)) \rceil+1$, and setting the learning rate of meta-algorithm as $\epsilon_t = \min\{ 1/(8D^2L)$, $\sqrt{(\ln N)/(D^2 \bar{V}_{t})}\}$ for all $t \in [T]$, Algorithm~\ref{alg:Sword++forSEA} ensures 
\begin{align*}
	\E[\Dreg(\u_1,\cdots,\u_T)] \leq \O\Big(P_T + \sqrt{1+P_T}\big(\sqrt{\sigma_{1:T}^2}+\sqrt{\Sigma_{1:T}^2}\big)\Big)
\end{align*}
for any comparator sequence $\u_1,\ldots,\u_T \in \X$, where $\bar{V}_{t} = \sum_{s=2}^{t} \|\nabla f_s(\x_s) - \nabla f_{s-1} (\x_{s-1}) \|_2^2$ with $\nabla f_0(\x_0)$ defined as $\mathbf{0}$, and $P_T = \E[\sum_{t=2}^T \norm{\u_t -\u_{t-1}}_2]$ is the path length of comparators.
\end{myThm}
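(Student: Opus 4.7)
The plan is to bound the expected dynamic regret by a standard meta--base decomposition tailored to the two--layer optimistic OMD structure, and then to harvest the gradient--variation quantities through the same auxiliary lemmas used in the static case. First I would linearize as in~\eqref{eqn:exp:cov}: since comparators $\u_t$ depend only on the expected functions, convexity of $F_t(\cdot)$ gives $\E[f_t(\x_t)-f_t(\u_t)] \le \E[\langle \nabla f_t(\x_t), \x_t-\u_t\rangle]$, so it suffices to control the linearized dynamic regret. Inserting any base index $i^\star \in [N]$ yields
\begin{align*}
\sum_{t=1}^T \langle \nabla f_t(\x_t), \x_t-\u_t\rangle
= \underbrace{\sum_{t=1}^T \langle \nabla f_t(\x_t), \x_t-\x_{t,i^\star}\rangle}_{\metaregret}
+ \underbrace{\sum_{t=1}^T \langle \nabla f_t(\x_t), \x_{t,i^\star}-\u_t\rangle}_{\baseregret}.
\end{align*}

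For $\baseregret$, I would run the Bregman proximal inequality (Lemma~\ref{lem:1}) on base-learner $\B_{i^\star}$ against the shifting comparators $\u_t$, exactly mirroring the analysis of Theorem~\ref{thm:1} but carrying the extra term $\sum_{t\ge 2}\langle (1/\eta_{i^\star})(\xh_t - \xh_{t+1}-\text{drift}), \u_t-\u_{t-1}\rangle$. Bounding this by Cauchy--Schwarz introduces the path length $P_T$, and combined with Lemma~\ref{lem:sum:diff} applied to $\sum_t\|\nabla f_t(\x_t)-\nabla f_{t-1}(\x_{t-1})\|_2^2$, one obtains
\begin{align*}
\E[\baseregret] \le \O\!\left(\frac{D^2+DP_T}{\eta_{i^\star}} + \eta_{i^\star}(\sigma_{1:T}^2+\Sigma_{1:T}^2)\right) - \frac{c_1}{\eta_{i^\star}} \sum_{t=2}^T \|\x_{t,i^\star}-\x_{t-1,i^\star}\|_2^2
\end{align*}
for an absolute constant $c_1>0$, after using $4L^2 \le c_1/\eta_{i^\star}^2$ thanks to $\eta_{i^\star}\le 1/(8L)$.

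For $\metaregret$, convexity of the linearized loss in $\p$ and the construction of $\ellb_t,\mb_{t+1}$ allow me to apply the standard Optimistic Hedge regret bound with adaptive learning rate $\epsilon_t$: the main term is $\O(\sqrt{\bar V_T \ln N})$ while the correction $\lambda\|\x_{t,i}-\x_{t-1,i}\|_2^2$ added to the feedback loss produces a positive contribution $+\lambda\sum_{t}\|\x_{t,i^\star}-\x_{t-1,i^\star}\|_2^2$ on the comparator side and a negative one $-\lambda \sum_t p_{t,i}\|\x_{t,i}-\x_{t-1,i}\|_2^2$ on the algorithm side. Choosing $\lambda$ appropriately small compared to $c_1/\eta_{i^\star}$, the positive correction term is absorbed by the negative term from the base layer, implementing the collaboration of~\citet{JMLR:sword++}. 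What remains in the meta bound is then controlled by Lemma~\ref{lem:sum:diff} in expectation, producing $\O(\sqrt{(\sigma_{1:T}^2+\Sigma_{1:T}^2)\ln N})$ plus a constant.

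Combining the two bounds and optimizing over $i^\star$, the choice $\eta_{i^\star} \asymp \sqrt{D(D+P_T)/(\sigma_{1:T}^2+\Sigma_{1:T}^2)}$ yields $\O(\sqrt{(1+P_T)(\sigma_{1:T}^2+\Sigma_{1:T}^2)}) + \O(P_T)$. Because the ideal $\eta^\star$ is unknown and depends on $P_T,\sigma_{1:T}^2,\Sigma_{1:T}^2$, I would finally argue that the geometric grid $\H$ with $N=\O(\log T)$ entries contains some $\eta_{i^\star}$ within a factor of $2$ of the optimum whenever $\eta^\star \in [1/\sqrt{T},1/(8L)]$, and treat the two boundary cases separately with the range-extreme step sizes, yielding only the constant $\ln N$ overhead already absorbed into the $\O(\cdot)$. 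The main obstacle I anticipate is the collaboration step: one must verify that the negative stability term produced by the base--layer Bregman analysis is uniformly (over $t$ and $i^\star$) large enough to cancel both the meta correction and the variance blow--up from Lemma~\ref{lem:sum:diff}, which constrains the allowed range of $\lambda$ and forces the particular capped step sizes $\eta_i \le 1/(8L)$ and capped meta rate $\epsilon_t \le 1/(8D^2L)$ prescribed in the theorem.
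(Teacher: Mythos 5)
Your high-level plan matches the paper's: decompose into meta- and base-regret, drive the collaboration between layers via negative stability terms, and then invoke Lemma~\ref{lem:sum:diff} to expose $\sigma_{1:T}^2$ and $\Sigma_{1:T}^2$. The paper packages most of this into an adaptation of Theorem~9 of~\citet{JMLR:sword++} (reproduced as Theorem~\ref{thm:general-sword-framework}), whereas you re-derive the meta and base pieces from scratch, but the outline is the same.

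There is, however, a genuine gap at the collaboration step. In the base-regret bound you write that, after invoking Lemma~\ref{lem:sum:diff} on $\sum_t\|\nabla f_t(\x_t)-\nabla f_{t-1}(\x_{t-1})\|_2^2$ and using $\eta_{i^\star}\le 1/(8L)$, the term $4L^2\sum_{t\ge 2}\|\x_t-\x_{t-1}\|_2^2$ is absorbed into the base layer's negative stability term, leaving $-\frac{c_1}{\eta_{i^\star}}\sum_{t\ge 2}\|\x_{t,i^\star}-\x_{t-1,i^\star}\|_2^2$. But Lemma~\ref{lem:sum:diff} produces the stability of the \emph{aggregated} decision $\|\x_t-\x_{t-1}\|_2^2$, while base-learner $\B_{i^\star}$'s Bregman proximal analysis only yields a negative term in the \emph{local} decision $\|\x_{t,i^\star}-\x_{t-1,i^\star}\|_2^2$. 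These are different quantities and one does not dominate the other; indeed the algorithm feeds every base-learner the global gradient $\nabla f_t(\x_t)$, so the variation to be controlled is global, not local. The missing step is the inequality $\|\x_t-\x_{t-1}\|_2^2\le 2\sum_{i=1}^N p_{t,i}\|\x_{t,i}-\x_{t-1,i}\|_2^2 + 2D^2\|\p_t-\p_{t-1}\|_1^2$, after which the first piece must be cancelled by the meta-layer's negative correction $-\lambda\sum_t\sum_i p_{t,i}\|\x_{t,i}-\x_{t-1,i}\|_2^2$ and the second by the meta-layer's negative Optimistic Hedge stability $-\frac{1}{4\bar\epsilon}\sum_t\|\p_t-\p_{t-1}\|_1^2$, with the constraints $\bar\eta\le 1/(8L)$ and $\bar\epsilon\le 1/(8D^2L)$ tuned to make all three residual coefficients non-positive. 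Your account never mentions the $\|\p_t-\p_{t-1}\|_1^2$ terms at all, and uses the base negative stability twice (once against the meta's positive correction, once against the Lemma~\ref{lem:sum:diff} residual), so the bookkeeping does not close. This is exactly the step the Sword++ framework is designed to handle, and it needs to be spelled out explicitly for the proof to be correct.
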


\begin{myRemark}\label{remark:dynamic}
As mentioned, the static regret studied in earlier sections is a special case of dynamic regret with a fixed comparator. As a consequence, Theorem~\ref{thm:sword} directly implies an $\O(\sqrt{\sigma_{1:T}^2} + \sqrt{\Sigma_{1:T}^2})$ static regret bound by noticing that $P_T = 0$ when comparing to a fixed benchmark, which recovers the result in Theorem~\ref{thm:1}. Moreover, Theorem~\ref{thm:sword} also recovers the $\O(\sqrt{(1+P_T+V_T)(1+P_T)})$ gradient-variation bound of \citet{Problem:Dynamic:Regret,JMLR:sword++} for the adversarial setting and the minimax optimal $\O(\sqrt{T(1+P_T)})$ bound of \citet{Adaptive:Dynamic:Regret:NIPS} since $\sigma_{1:T}^2 = 0$ and $\Sigma_{1:T}^2 = V_T \leq 4G^2T$ in this case.
\remarkend
\end{myRemark}

We focus on the convex and smooth case, while for the strongly convex and exp-concave cases, current understandings of their dynamic regret are still far from complete~\citep{AISTATS'22:sc-proper}. In particular, how to realize optimistic online learning in strongly convex/exp-concave dynamic regret minimization remains open. Lastly, we note that to the best of our knowledge, FTRL has not yet achieved the worst-case $\O(\sqrt{T(1+P_T)})$ dynamic regret~\citet{Adaptive:Dynamic:Regret:NIPS}, let alone the gradient-variation bound. In fact, FTRL is more like a lazy update~\citep{Intro:Online:Convex}, which seems unable to track a sequence of changing comparators. We found that \citet{jacobsen2022parameter} have given preliminary results (in Theorem 2 and Theorem 3 of their work): \emph{all the parameter-free FTRL-based algorithms we are aware of cannot achieve a dynamic regret bound better than $\O(P_T\sqrt{T})$}. Although this cannot cover all the cases of FTRL-based algorithms on dynamic regret, it has at least shown that FTRL-based algorithms do have certain limitations in dynamic regret minimization.

%!TEX root = ../OMD4SEA_arxiv.tex

\subsection{SEA with Non-smooth Functions}\label{sec:non-smooth:static}
The analysis in the previous section depends on the smoothness assumptions of expected functions (see Assumption~\ref{ass:2}). In this part, we further generalize the scope of the SEA model to the \emph{non-smooth} functions. This is facilitated by the optimistic OMD framework again, but we replace gradient-descent updates with \emph{implicit updates} in the optimistic step. 

We consider the static regret minimization for the SEA model with convex and non-smooth functions. Assuming that all individual functions $f_t(\cdot)$'s are convex on $\X$, we update the decision $\x_t$ by deploying optimistic OMD with $\psi_t(\x)=\frac{1}{2\eta_t}\|\x\|_2^2$, i.e.,
\begin{align}
    &\hat{\x}_{t+1} = \Pi_{\X}\left[\hat{\x}_t - \eta_t\nabla f_t(\x_t)\right],\label{non-smooth-convex-update1}\\
    &\x_{t+1} = \argmin_{\x \in \X}f_t(\x) + \frac{1}{2\eta_{t+1}}\|\x - \hat{\x}_{t+1}\|_2^2,\label{non-smooth-convex-update2}
\end{align}
where the update \eqref{non-smooth-convex-update2} is an implicit update and the step size is set as
\begin{align}
    \eta_t = \frac{D}{\sqrt{1 + 4G^2+\sum_{s=1}^{t-1}\|\nabla f_s(\x_s)-\nabla f_{s-1}(\x_{s})\|_2^2}} \label{non-smooth;convex;stepsize}
\end{align} 
for $t \in [T]$ (we define $\eta_{T+1} = \eta_T$). Note that the second step~\eqref{non-smooth-convex-update2} is crucial to remove the dependence on the smoothness of loss functions. Unlike the gradient-based update $\x_{t+1} = \Pi_{\X} \big[ \xh_{t+1} - \eta_{t+1}  \nabla f_t(\x_t)\big]$ used in previous sections, it directly updates over the original function $f_t(\x)$ without linearization, so this is often referred to as ``implicit update"~\citep{campolongo2020temporal,chen2023generalized,NeurIPS'22:label_shift}.

Our algorithm can achieve a similar regret form as the smooth case scaling with the quantities $\Sigma_{1:T}^2$ to reflect the adversarial difficulty and $\tilde{\sigma}_{1:T}^2$ to indicate the stochastic aspect, where the variance quantity $\tilde{\sigma}_{1:T}^2$ is defined as
\begin{align}\label{nonsmooth:tildesigma}
    \tilde{\sigma}_{1:T}^2 = \E\left[\sum_{t=1}^T \tilde{\sigma}_t^2\right], \mbox{ with } \tilde{\sigma}_t^2 = \E_{f_t\sim \mathfrak{D}_t}\left[\sup_{\x\in\X}\|\nabla f_t(\x)-\nabla F_t(\x)\|_2^2\right].
\end{align}
\begin{myRemark}
Note that $\tilde{\sigma}_{1:T}^2$ also captures the stochastic difficulty of the SEA model due to the sample randomness. However, admittedly it is larger than $\sigma_{1:T}^2$ because of the convex nature of the supremum operator. Despite this, we are unable to obtain any $\sigma_{1:T}^2$-type bound for the non-smooth case, and the technical discussions are deferred to Remark~\ref{remark:technical-bar-x}. It is crucial to highlight that later implications will demonstrate significant relevance of this quantity, particularly in real-world problems like online label shift (Section~\ref{subsec:OLaS}).
\remarkend
\end{myRemark}

Below we present the regret guarantee for SEA with \emph{non-smooth} and convex functions. Refer to Section~\ref{appendix:proofs-non-smooth} for the proof.
\begin{myThm} \label{thm:nonsmooth:convex} Under Assumptions~\ref{ass:3}, \ref{ass:4} and \ref{ass:individual:convexity}, optimistic OMD with updates~\eqref{non-smooth-convex-update1}--\eqref{nonsmooth:tildesigma} enjoys the following guarantee:
\[
\begin{split}
\E[\mathbf{Reg}_T(\u)]
    \leq 5D\sqrt{1 + G^2} + 10\sqrt{2}D\sqrt{\tilde{\sigma}_{1:T}^2} + 10D\sqrt{\Sigma_{1:T}^2}
    =\O\left(\sqrt{\tilde{\sigma}_{1:T}^2} +\sqrt{\Sigma_{1:T}^2} \right).
\end{split}
\]
\end{myThm}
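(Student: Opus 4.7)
The plan is to derive an analogue of the Bregman proximal inequality (Lemma~\ref{lem:1}) tailored to the implicit update~\eqref{non-smooth-convex-update2}, and then follow the template of the proof of Theorem~\ref{thm:1}. By Assumption~\ref{ass:individual:convexity}, $f_t(\x_t) - f_t(\u) \leq \langle \nabla f_t(\x_t), \x_t - \u\rangle$, so it suffices to bound $\sum_t \langle \nabla f_t(\x_t), \x_t - \u\rangle$ deterministically and take expectations at the end.

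To obtain the key per-round bound, I would split $\langle \nabla f_t(\x_t), \x_t - \u\rangle = \langle \nabla f_t(\x_t), \x_t - \hat{\x}_{t+1}\rangle + \langle \nabla f_t(\x_t), \hat{\x}_{t+1} - \u\rangle$. The second piece is handled by the usual three-point inequality for the projected gradient step~\eqref{non-smooth-convex-update1}. For the first piece, further split $\langle \nabla f_t(\x_t), \x_t - \hat{\x}_{t+1}\rangle = \langle \nabla f_t(\x_t) - \nabla f_{t-1}(\x_t), \x_t - \hat{\x}_{t+1}\rangle + \langle \nabla f_{t-1}(\x_t), \x_t - \hat{\x}_{t+1}\rangle$; apply Cauchy--Schwarz plus AM--GM to the first sub-piece at scale $\eta_t$, and invoke the first-order optimality of $\x_t = \argmin_{\x\in\X} f_{t-1}(\x) + \tfrac{1}{2\eta_t}\|\x - \hat{\x}_t\|_2^2$ tested at $\hat{\x}_{t+1}$ to bound the second sub-piece. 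A pleasant cancellation then occurs: the positive $\tfrac{1}{2\eta_t}\|\x_t - \hat{\x}_{t+1}\|_2^2$ generated by AM--GM cancels the corresponding negative term from the implicit optimality, and the $\tfrac{1}{2\eta_t}\|\hat{\x}_t - \hat{\x}_{t+1}\|_2^2$ from the optimality cancels the same term from the projection step. The result is the ``optimistic-OMD'' inequality
\begin{align*}
\langle \nabla f_t(\x_t), \x_t - \u\rangle \leq \tfrac{\eta_t}{2}\|\nabla f_t(\x_t) - \nabla f_{t-1}(\x_t)\|_2^2 + \tfrac{1}{2\eta_t}\big(\|\u - \hat{\x}_t\|_2^2 - \|\u - \hat{\x}_{t+1}\|_2^2\big) - \tfrac{1}{2\eta_t}\|\x_t - \hat{\x}_t\|_2^2,
\end{align*}
whose optimism error is $\|\nabla f_t(\x_t) - \nabla f_{t-1}(\x_t)\|_2^2$ at the \emph{common} point $\x_t$, matching the quantity in the step size~\eqref{non-smooth;convex;stepsize}.

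From here the proof mirrors that of Theorem~\ref{thm:1}. Summing the per-round bound and telescoping the Bregman terms yields a $\tfrac{D^2}{2\eta_T}$ contribution; the self-confident step-size lemma (Lemma~\ref{lem:sum}) bounds $\sum_t \eta_t \|\nabla f_t(\x_t) - \nabla f_{t-1}(\x_t)\|_2^2$ by $\O(D\sqrt{1 + G^2 + \sum_t \|\nabla f_t(\x_t) - \nabla f_{t-1}(\x_t)\|_2^2})$. Decomposing $\nabla f_t(\x_t) - \nabla f_{t-1}(\x_t)$ into the three pieces $\nabla f_t(\x_t) - \nabla F_t(\x_t)$, $\nabla F_t(\x_t) - \nabla F_{t-1}(\x_t)$, and $\nabla F_{t-1}(\x_t) - \nabla f_{t-1}(\x_t)$, taking expectations, then applying Jensen to pull $\E$ inside $\sqrt{\cdot}$ and subadditivity of $\sqrt{\cdot}$ gives the final $\O(\sqrt{\tilde{\sigma}_{1:T}^2}+\sqrt{\Sigma_{1:T}^2})$ rate.

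The crux and main obstacle is identifying why the stochastic term appears as $\tilde{\sigma}_{1:T}^2$ rather than $\sigma_{1:T}^2$. For the noise term $\|\nabla F_{t-1}(\x_t) - \nabla f_{t-1}(\x_t)\|_2^2$, the point $\x_t$ is itself a function of the random sample $f_{t-1}$ through the implicit update, so conditioning on $\x_t$ does not leave $f_{t-1}$ fresh and one cannot bound the expectation by $\sigma_{t-1}^2$; the only available route is to take the supremum over $\x$ first and then the expectation over $f_{t-1}$, which is exactly $\tilde{\sigma}_{t-1}^2$ from~\eqref{nonsmooth:tildesigma}. The same obstruction explains why the step size uses $\|\nabla f_s(\x_s) - \nabla f_{s-1}(\x_s)\|_2^2$ (common evaluation point $\x_s$) rather than the smooth-case quantity $\|\nabla f_s(\x_s) - \nabla f_{s-1}(\x_{s-1})\|_2^2$: without $L$-smoothness we cannot migrate the evaluation point of $\nabla f_{s-1}$ from $\x_{s-1}$ to $\x_s$, so the implicit update is essential for producing the ``aligned'' gradient-difference quantity that is amenable to the above decomposition.
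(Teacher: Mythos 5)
Your proof is correct and follows the paper's argument essentially step for step: the same three-term decomposition through $\hat{\x}_{t+1}$ (AM--GM for the optimism error, first-order optimality of the implicit step as in Lemma~\ref{lem:proposition4.1} tested at $\hat{\x}_{t+1}$, and the projected-gradient three-point inequality for the $\u$ piece), the same cancellations, the same self-tuning step-size argument via Lemma~\ref{lem:sum}, and the same observation that $\x_t$ depends on $f_{t-1}$ through the implicit update, forcing $\sup_{\x}$ inside the variance and thus $\tilde{\sigma}_{1:T}^2$ rather than $\sigma_{1:T}^2$ (which the paper packages as Lemma~\ref{lem:nonsmooth-sumlemma}). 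The only deviation is that your AM--GM coefficient $\eta_t/2$ is slightly tighter than the paper's $2\eta_t$, which affects only the numerical constants.
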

This bound is similar in form to the bound for the smooth case (Theorem~\ref{thm:1}), albeit with a slight loss in terms of the variance definition. However, in specific cases, this bound can be as good as the smooth case. For example, for fully adversarial OCO, we have $\tilde{\sigma}_{1:T}^2 = \sigma_{1:T}^2= 0 $ since $f_t(\cdot)= F_t(\cdot)$ for each $t\in[T]$. Moreover, when applying the result to the online label shift problem (see Section~\ref{subsec:OLaS}), using no matter  $\sigma_{1:T}^2$ or $\tilde{\sigma}_{1:T}^2$ will deliver the same regret guarantee that scales with meaningful quantities for online label shift, the detailed analysis of which will be provided in Section~\ref{subsec:OLaS} and Remark~\ref{remark:olas}.

\subsection{SEA with Non-smooth Functions: Dynamic Regret Minimization}\label{subsec:nonsmooth:dynamic}
We further investigate the dynamic regret of SEA with non-smooth and convex functions. To minimize the dynamic regret, we still employ a two-layer online ensemble structure based on the optimistic OMD framework as in Section~\ref{sec:dynamci-regret}, but with \emph{implicit updates} in the base learners and additional ingredients for the design of meta learner.

Specifically, we construct a step size pool $\H = \left\{\eta_i = c \cdot 2^{i} \mid i\in [N] \right\}$ to cover the (approximate) optimal step size, where $N = \O(\log T)$ is the number of candidate step sizes and $c$ is a constant given later. Then we maintain a meta-learner running over a group of base-learners $\{\Bcal_i\}_{i\in [N]}$, each associated with a candidate step size $\eta_i$ from the pool $\H$. The main procedure is summarized in Algorithm~\ref{alg:non-smooth}. 

Consistent with the learner for static regret, each base-learner $\Bcal_i$ here performs the optimistic OMD algorithm parallelly with $\psi(\x)=\frac{1}{2\eta_i}\|\x\|_2^2$ and an implicit update in the optimistic step. That means the updating rules of base-learner $\Bcal_i$ are
\begin{align}
    \label{eq:non-smooth-base}
    \hat{\x}_{t+1,i} = \Pi_{\X}\left[\hat{\x}_{t,i}-\eta_i\nabla f_t(\x_{t,i})\right],~~\x_{t+1,i} = \argmin_{\x \in \X}f_t(\x)+\frac{1}{2\eta_i}\left\| \x-\hat{\x}_{t+1,i} \right\|_2^2,
\end{align}
where $\eta_i\in \H$ is the corresponding candidate step size and $\x_{t+1,i}$ is the local decision.

Then the meta-learner collects local decisions and updates the weight $\p_{t+1}\in\Delta_N$ by
\begin{align}
    p_{t+1,i} \propto \exp \left(-\epsilon_t \left(\sum_{s=1}^t f_s(\x_{s,i}) + f_t(\x_{t+1,i}) \right)\right),\label{non-smooth:dynamic:updatept}
\end{align}
where $p_{t+1,i}$ denotes the weight of the $i$-th base-learner and $\epsilon_t$ is the learning rate to be set later. After that, the online learner submits the decision $\x_{t+1} = \sum_{i=1}^N p_{t+1,i}\x_{t+1,i}$ to the nature, and consequently suffers the loss $f_{t+1}(\x_{t+1})$, where $f_{t+1}$ is sampled from the distribution $\mathfrak{D}_{t+1}$ selected by the nature. Compared to Algorithm~\ref{alg:Sword++forSEA} in the smooth case, we no longer use surrogate losses and correction terms because we apply the technique of converting function variation into gradient variation without any negative term cancellations.

\begin{algorithm}[t]
\caption{Dynamic Regret Minimization of SEA Model with Non-smooth Functions}
\label{alg:non-smooth}
    \begin{algorithmic}[1]
  \REQUIRE{step size pool $\H = \{\eta_1,\ldots,\eta_N\}$, learning rate of meta-algorithm $\epsilon_t > 0$}
  \STATE{Initialization: $\x_1 = \widehat{\x}_1 \in \X$,  $\p_1 = \frac{1}{N} \cdot \mathbf{1}_N$}
    \FOR{$t=1$ {\bfseries to} $T$}
      \STATE {Submit the decision $\x_t = \sum_{i=1}^{N} p_{t,i}\x_{t,i}$}
      \STATE {Observe the online function $f_t:\X \mapsto \R$ sampled from the underlying distribution $\mathfrak{D}_t$ and suffer the loss $f_t(\x_t)$}
      \STATE {Base-learner $\B_i$ updates by optimistic OMD with implicit updates~\eqref{eq:non-smooth-base} for $i \in [N]$}
      \STATE {Receive $\x_{t+1,i}$ from base-learner $\B_i$ for $i \in [N]$}
      \STATE {Update the weight $\p_{t+1} \in \Delta_N$ by optimistic Hedge in~\eqref{non-smooth:dynamic:updatept}}
    \ENDFOR
\end{algorithmic}
\end{algorithm}

We have the following theoretical guarantee for Algorithm~\ref{alg:non-smooth} with proof in Section~\ref{appendix:proof-nonsmooth-dynamic}. 
\begin{myThm} \label{thm:nonsmooth:dynamic}
Under Assumptions~\ref{ass:3}, \ref{ass:4} and \ref{ass:individual:convexity}, setting the step size pool $\H = \{\eta_1,\ldots,\eta_N\}$ with $\eta_i = $ $ (D/\sqrt{1+4TG^2})\cdot 2^{i-1}$ and $N = \lceil \frac{1}{2}\log((1+2T)(1+4TG^2))\rceil+1$, and setting the learning rate of meta-algorithm as $\epsilon_t = 1/\sqrt{1+\sum_{s=1}^t( \max_{i \in [N]}\{|\tilde{f}_s(\x_{s,i})-\tilde{f}_{s-1}(\x_{s,i})|\})^2}$, Algorithm~\ref{alg:non-smooth} ensures 
\begin{align*}
	\E[\Dreg(\u_1,\cdots,\u_T)] \leq \O\left(\sqrt{1+P_T}\Big(\sqrt{\tilde{\sigma}_{1:T}^2}+\sqrt{\Sigma_{1:T}^2}\Big)\right),
\end{align*}
which holds for any comparator sequence $\u_1,\ldots,\u_T \in \X$. 
\end{myThm}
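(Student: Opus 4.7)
The plan is a meta--base decomposition. For any $i\in[N]$,
\[
\Dreg(\u_1,\ldots,\u_T) = \underbrace{\sum_{t=1}^T\big(f_t(\x_t)-f_t(\x_{t,i})\big)}_{\textnormal{meta-regret}} + \underbrace{\sum_{t=1}^T\big(f_t(\x_{t,i})-f_t(\u_t)\big)}_{\textnormal{base-regret}_i};
\]
I will handle the two pieces separately and select $i^*\in[N]$ to approximate the oracle step size. Throughout I work with the centered losses $\tilde{f}_t(\x)=f_t(\x)-f_t(\mathbf{0})$, which does not change the algorithm because the weight update~\eqref{non-smooth:dynamic:updatept} is invariant under per-round constant shifts of the loss vector, and which satisfies $\tilde{f}_t(\mathbf{0})=0$ and $\nabla\tilde{f}_t=\nabla f_t$.

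To bound $\textnormal{base-regret}_i$ at fixed step size $\eta_i$, I adapt the proof of Theorem~\ref{thm:nonsmooth:convex} to a dynamic comparator sequence. The first-order optimality of the implicit step gives
\[
f_t(\x_{t+1,i})-f_t(\u_t)\le\frac{1}{2\eta_i}\Big(\|\u_t-\hat{\x}_{t+1,i}\|_2^2-\|\u_t-\x_{t+1,i}\|_2^2-\|\hat{\x}_{t+1,i}-\x_{t+1,i}\|_2^2\Big);
\]
the projected gradient step on $\hat{\x}_{\cdot,i}$ telescopes the $\hat{\x}$-terms while contributing an $\O(\eta_i\sum_t\|\nabla f_t(\x_{t,i})-\nabla f_{t-1}(\x_{t,i})\|_2^2)$ residual, $G$-Lipschitzness bridges $f_t(\x_{t,i})$ and $f_t(\x_{t+1,i})$ at a cost absorbed by AM--GM into the negative $\|\hat{\x}_{t+1,i}-\x_{t+1,i}\|_2^2$ term, and the time-varying comparator contributes $\O(DP_T/\eta_i)$ from $\sum_t(\|\u_t-\hat{\x}_{t+1,i}\|_2^2-\|\u_{t+1}-\hat{\x}_{t+2,i}\|_2^2)$. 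The non-smooth gradient-difference sum is upper bounded by $\sum_t\sup_{\x}\|\nabla f_t(\x)-\nabla f_{t-1}(\x)\|_2^2$, which triangle-decomposes through $\nabla F_t$ and $\nabla F_{t-1}$ and, after taking expectation, produces $\O(\tilde{\sigma}_{1:T}^2+\Sigma_{1:T}^2+G^2)$; the $\sup$ sits inside the expectation precisely because Assumption~\ref{ass:2} is unavailable, which is the structural reason $\tilde{\sigma}_{1:T}^2$ rather than $\sigma_{1:T}^2$ appears. Combining, $\textnormal{base-regret}_i=\O((D^2+DP_T)/\eta_i+\eta_i(\tilde{\sigma}_{1:T}^2+\Sigma_{1:T}^2+G^2))$, and the geometric pool $\H$ of length $N=\O(\log T)$ guarantees an $i^*\in[N]$ whose $\eta_{i^*}$ approximates the oracle $\sqrt{(D^2+DP_T)/(\tilde{\sigma}_{1:T}^2+\Sigma_{1:T}^2+G^2)}$ up to a constant, yielding $\textnormal{base-regret}_{i^*}=\O(\sqrt{1+P_T}(\sqrt{\tilde{\sigma}_{1:T}^2}+\sqrt{\Sigma_{1:T}^2}))$ in expectation.

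For $\textnormal{meta-regret}$, the standard optimistic-Hedge bound with the self-confident $\epsilon_t$ in the theorem yields
\[
\textnormal{meta-regret}\le\O\bigg(\sqrt{(\log N)\Big(1+\sum_t\big(\max_i|\tilde{f}_t(\x_{t,i})-\tilde{f}_{t-1}(\x_{t,i})|\big)^2\Big)}\bigg).
\]
Since $(\tilde{f}_t-\tilde{f}_{t-1})(\mathbf{0})=0$ and its (sub)gradient is $\nabla f_t-\nabla f_{t-1}$, integrating along the segment from $\mathbf{0}$ to any $\x\in\X$ and using Assumption~\ref{ass:4} give $|\tilde{f}_t(\x)-\tilde{f}_{t-1}(\x)|\le D\sup_y\|\nabla f_t(y)-\nabla f_{t-1}(y)\|_2$. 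Squaring, summing, and applying the same triangle decomposition as in the base analysis shows $\textnormal{meta-regret}=\O(D\sqrt{\log N}\cdot\sqrt{\tilde{\sigma}_{1:T}^2+\Sigma_{1:T}^2+G^2})$, which is dominated by the base bound; summing the two pieces produces the stated dynamic regret.

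The main obstacle is the base-regret analysis: the projected gradient step on $\hat{\x}_{\cdot,i}$ and the implicit step producing $\x_{\cdot,i}$ must be stitched together so that the $f_t(\x_{t,i})\to f_t(\x_{t+1,i})$ Lipschitz detour is absorbed into the negative $\|\hat{\x}_{t+1,i}-\x_{t+1,i}\|_2^2$ term via AM--GM (this requires the pool $\H$ to keep $\eta_i$ not too large), and the non-smooth gradient-variation sum must be bounded using a $\sup$-inside-expectation decomposition rather than Lemma~\ref{lem:sum:diff}, which is unavailable without smoothness. The centering-based conversion from function variation to gradient variation in the meta analysis is the secondary technical point, and it is precisely what justifies the $\tilde{f}$ quantities appearing in the choice of $\epsilon_t$.
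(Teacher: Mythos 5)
Your overall architecture is correct and matches the paper: meta--base decomposition, implicit updates with a geometric step-size pool in the base layer, self-confident optimistic Hedge in the meta layer, centered losses to make the Hedge update invariant to constant shifts, the mean-value-theorem conversion from function variation to gradient variation, and the final $\tilde{\sigma}_{1:T}^2/\Sigma_{1:T}^2$ decomposition via $\sup$-inside-expectation (the paper's Lemma~\ref{lem:nonsmooth-sumlemma}). The meta-regret analysis is essentially what the paper does via Lemma~\ref{lem:FTRL} and Lemma~\ref{lem:selftuning}.

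The base-regret sketch, however, has a genuine gap. You propose to start from the prox inequality $f_t(\x_{t+1,i})-f_t(\u_t)\le\frac{1}{2\eta_i}(\|\u_t-\hat{\x}_{t+1,i}\|_2^2-\|\u_t-\x_{t+1,i}\|_2^2-\|\hat{\x}_{t+1,i}-\x_{t+1,i}\|_2^2)$ and then Lipschitz-bridge $f_t(\x_{t,i})\to f_t(\x_{t+1,i})$, absorbing the cost into $-\frac{1}{2\eta_i}\|\hat{\x}_{t+1,i}-\x_{t+1,i}\|_2^2$. This absorption is lossy: writing $\|\x_{t,i}-\x_{t+1,i}\|_2\le\|\x_{t,i}-\hat{\x}_{t+1,i}\|_2+\|\hat{\x}_{t+1,i}-\x_{t+1,i}\|_2$, the AM--GM step $G\|\hat{\x}_{t+1,i}-\x_{t+1,i}\|_2\le\frac{\eta_i G^2}{2}+\frac{1}{2\eta_i}\|\hat{\x}_{t+1,i}-\x_{t+1,i}\|_2^2$ leaves a per-round residual $\Theta(\eta_i G^2)$, and likewise the first piece (which is $\le\eta_i\|\nabla f_t(\x_{t,i})-\nabla f_{t-1}(\x_{t,i})\|_2$ by non-expansiveness) leaves another $\Theta(\eta_i G^2)$ residual after AM--GM because of the $G$ prefactor. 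Summing gives $\Theta(T\eta_i G^2)$, which after tuning $\eta_i$ collapses your base regret to the worst-case $\O(G\sqrt{T(1+P_T)})$ rather than the gradient-variation bound. The paper avoids this entirely: it never passes through $f_t(\x_{t+1,i})$. Instead it bounds the linearized regret $\langle\nabla f_t(\x_{t,i}),\x_{t,i}-\u_t\rangle$ by the three-term decomposition of Theorem~\ref{thm:nonsmooth:convex}, namely the optimism error $\langle\nabla f_t(\x_{t,i})-\nabla f_{t-1}(\x_{t,i}),\x_{t,i}-\hat{\x}_{t+1,i}\rangle$ (Fenchel, gives the only gradient-variation residual), the implicit-step term $\langle\nabla f_{t-1}(\x_{t,i}),\x_{t,i}-\hat{\x}_{t+1,i}\rangle$ (Lemma~\ref{lem:proposition4.1} applied at $\x=\hat{\x}_{t+1,i}$), and the projected-step term $\langle\nabla f_t(\x_{t,i}),\hat{\x}_{t+1,i}-\u_t\rangle$; the squared-distance terms from the latter two cancel pairwise with the Fenchel remainder, leaving no stray $\eta_i G^2$. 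The crucial structural point your sketch misses is that the gradient variation must appear as the difference $\nabla f_t(\x_{t,i})-\nabla f_{t-1}(\x_{t,i})$ evaluated at the \emph{same} point $\x_{t,i}$, exploiting that $\x_{t,i}$ is the implicit step on $f_{t-1}$ and $\hat{\x}_{t+1,i}$ the gradient step on $\nabla f_t(\x_{t,i})$, both starting from $\hat{\x}_{t,i}$; your Lipschitz detour destroys this structure and cannot recover it without smoothness.
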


\begin{myRemark}
\label{remark:technical-bar-x}
 Theorem~\ref{thm:nonsmooth:dynamic} is not dependent on the smoothness of expected functions but is applicable to the smooth scenario as well. The $\O(\sqrt{1+P_T}(\sqrt{\tilde{\sigma}_{1:T}^2}+\sqrt{\Sigma_{1:T}^2}))$ bound detailed here and the $\O(P_T + \sqrt{1+P_T}(\sqrt{\sigma_{1:T}^2}+\sqrt{\Sigma_{1:T}^2}))$ bound obtained under smoothness in Theorem~\ref{thm:sword} exhibit similar scaling in their corresponding variance quantities --- $\tilde{\sigma}_{1:T}^2$ and $\sigma_{1:T}^2$,  respectively. However, the definition of $\tilde{\sigma}_{1:T}^2$ is slightly less favorable than that of $\sigma_{1:T}^2$. In addition, using implicit updates in the non-smooth case instead of using the first-order method in the smooth case may be more costly. Moreover, we argue that methods employing information about the function value $f_t(\accentset{\circ}{\x})$ (or the gradient $\nabla f_t(\accentset{\circ}{\x})$) where $\accentset{\circ}{\x}$ is generated \emph{afterward} the decision $\x_t$ can hardly achieve regret bounds scaling with $\sigma_{1:T}^2$.
This holds true for the non-smooth part, as optimistic update steps of base-learners demand the full function information, and the meta-learner requires the value of  $f_t(\x_{t+1,i})$. It also applies to the case of~\citet{arxiv'2023:SEA}, who use the optimism design of~\citet{Problem:Dynamic:Regret} to optimize the dynamic regret of SEA with smooth functions. This would require the gradient $\nabla f_{t-1}(\bar{\x}_t)$ with $\bar{\x}_t = \sum_{i=1}^N p_{t-1,i} \x_{t,i}$ as mentioned in Remark~\ref{remark:compare-Sachs}, and can only obtain a weaker bound scaling with $\tilde{\sigma}_{1:T}^2$. We provide the details in~\pref{appendix:proof-xbar-regret}.
\remarkend    
\end{myRemark}

\subsection{Analysis}\label{subsec:extensions:analysis}
In this section, we give the analysis of Theorem~\ref{thm:sword}, Theorem~\ref{thm:nonsmooth:convex} and Theorem~\ref{thm:nonsmooth:dynamic} respectively, with some supplementary analysis and useful lemmas provided in~\pref{appendix:sec-extensions}.
%!TEX root = ../OMD4SEA_arxiv.tex

\subsubsection{Proof of Theorem~\ref{thm:sword}} 
\label{appendix:proof-dynamic-regret} 

This part presents the proof of Theorem~\ref{thm:sword}. Since our algorithmic design is based on the collaborative online ensemble framework of~\citet{JMLR:sword++}, we first introduce the general theorem~\citep[Theorem 9]{JMLR:sword++} and provide the proof for our theorem based on it.
\begin{myThm}[Adaptation of Theorem 9 of~\citet{JMLR:sword++}.]
\label{thm:general-sword-framework}
Under Assumption~\ref{ass:3} (boundedness of gradient norms) and Assumption~\ref{ass:4} (domain boundedness), setting the step size pool $\H$ as
\begin{align}
\H = \left\{ \eta_i =  \min \left\{\Bar{\eta}, \sqrt{\frac{D^2}{8G^2 T}\cdot 2^{i-1}} \right\} ~\Big\vert~ i\in [N]\right\},\label{eqn:dyn:H}
\end{align}
where $N = \lceil 2^{-1}\log_2((8G^2T \Bar{\eta}^2)/D^2) \rceil+1$, and setting meta-algorithm's learning rate as
\begin{align*}
  \epsilon_t = \min\left\{ \bar{\epsilon}, \sqrt{\frac{\ln N}{D^2 \sum_{s=1}^t\norm{\nabla f_s(\x_s) - f_{s-1}(\x_{s-1})}_2^2}} \right\},
\end{align*}
Algorithm~\ref{alg:Sword++forSEA} enjoys the following dynamic regret guarantee:
\begin{equation}
	\label{eq:general-optimistic-bound}
	\begin{split}
	{}& \E\bigg[\sum_{t=1}^T \inner{\nabla f_t(\x_t)}{\x_t - \u_t}\bigg]\\
	 \leq{} & 5\sqrt{D^2\ln N\E[\bar{V}_T ]}+ 2 \sqrt{(D^2+2DP_T)\E[\bar{V}_T]}+\E\bigg[ \frac{\ln N}{\bar{\epsilon}} + 8\bar{\epsilon} D^2G^2+ \frac{D^2+2DP_T}{\bar{\eta}} \notag\\
	&   + \left(\lambda-\frac{1}{4\bar{\eta}}\right) \sum_{t=2}^T\norm{\x_{t,i}-\x_{t-1,i}}_2^2 - \frac{1}{4\bar{\epsilon}} \sum_{t=2}^T\norm{\p_t-\p_{t-1}}_1^2 - \lambda \sum_{t=2}^T\sum_{i=1}^N p_{t,i}\norm{\x_{t,i}-\x_{t-1,i}}_2^2\bigg].
	\end{split}
\end{equation}
In the above, $\bar{V}_T = \sum_{t=1}^{T} \norm{\nabla f_t(\x_t)-\nabla f_{t-1}(\x_{t-1})}_2^2$ is the adaptivity term measuring the quality of optimistic gradient vectors $\{M_t = \nabla f_{t-1}(\x_{t-1})\}_{t=1}^T$, and $P_T = \E[\sum_{t=2}^{T} \norm{\u_{t-1} - \u_{t}}_2]$ is the path length of comparators.
\end{myThm}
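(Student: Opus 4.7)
The plan is to adapt the collaborative online ensemble analysis of \citet{JMLR:sword++} to the SEA setting, starting from the meta-base decomposition: for any index $i^\star \in [N]$,
\begin{align*}
\sum_{t=1}^T \langle \nabla f_t(\x_t), \x_t-\u_t \rangle = \underbrace{\sum_{t=1}^T \langle \nabla f_t(\x_t), \x_t-\x_{t,i^\star} \rangle}_{\metaregret} + \underbrace{\sum_{t=1}^T \langle \nabla f_t(\x_t), \x_{t,i^\star}-\u_t\rangle}_{\baseregret}.
\end{align*}
Inserting the correction $\lambda \|\x_{t,i}-\x_{t-1,i}\|_2^2$ into each meta-summand converts the linear losses into precisely the feedback vector $\ellb_t$ and optimism $\mb_t$ used by the update~\eqref{eq:optimistic-Hedge}, while the matching $\lambda$-terms are carried along to be cancelled later.

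For the meta-regret I would invoke the standard optimistic Hedge bound with time-varying learning rate, producing $\ln N/\bar\epsilon + \sum_t \epsilon_t \|\ellb_t-\mb_t\|_\infty^2 - \tfrac14\sum_t \|\p_t-\p_{t-1}\|_1^2/\epsilon_{t-1}$ up to constants. Bounding $|\ell_{t,i}-m_{t,i}| \leq D\|\nabla f_t(\x_t)-\nabla f_{t-1}(\x_{t-1})\|_2 + \lambda\bigl(\|\x_{t,i}-\x_{t-1,i}\|_2^2+\|\x_{t+1,i}-\x_{t,i}\|_2^2\bigr)$ and plugging in the self-confident choice of $\epsilon_t$ (saturated at $\bar\epsilon$) yields the $5\sqrt{D^2\ln N\cdot\bar V_T}$ rate together with the $8\bar\epsilon D^2G^2$ residual. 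Crucially, the correction contributes a positive $\lambda\sum_{t,i}p_{t,i}\|\x_{t,i}-\x_{t-1,i}\|_2^2$ that carries over to~\eqref{eq:general-optimistic-bound} as its final negative term.

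For the base-regret at index $i^\star$ I would apply Lemma~\ref{lem:1} to the updates~\eqref{eq:base-update} with fixed step size $\eta_{i^\star}$, and extend the telescoping $\|\u-\xh_t\|_2^2$ term to dynamic comparators via $\|\u_t-\xh_{t,i^\star}\|_2^2-\|\u_{t+1}-\xh_{t,i^\star}\|_2^2\leq 2D\|\u_t-\u_{t+1}\|_2$. This gives $(D^2+2DP_T)/\eta_{i^\star}+\eta_{i^\star}\bar V_T-(4\eta_{i^\star})^{-1}\sum_t\|\x_{t,i^\star}-\x_{t-1,i^\star}\|_2^2$ essentially as in Theorem~\ref{thm:1}. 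Using $\eta_{i^\star}\leq\bar\eta$ weakens the negative term to scale $(4\bar\eta)^{-1}$, which combines with the $\lambda$-term inherited from the meta analysis to yield the $(\lambda-1/(4\bar\eta))$ factor in~\eqref{eq:general-optimistic-bound}; the $2\sqrt{(D^2+2DP_T)\E[\bar V_T]}$ term emerges after taking expectation over $\eta_{i^\star}$-optimized contributions and applying Cauchy--Schwarz.

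The main obstacle is the triple cancellation of stability terms, which is precisely what the collaborative ensemble framework is designed to arrange: the base-level $\|\x_{t,i^\star}-\x_{t-1,i^\star}\|_2^2$, the meta-level $\|\p_t-\p_{t-1}\|_1^2$, and the meta-weighted $\sum_i p_{t,i}\|\x_{t,i}-\x_{t-1,i}\|_2^2$ must jointly absorb the cross term $D\sum_i |p_{t,i}-p_{t-1,i}|\cdot\|\x_{t,i}-\x_{t-1,i}\|_2$ that appears when relating the meta-regret to the weighted surrogate losses through AM-GM. This dictates the constraints $\lambda\approx 1/(4\bar\eta)$ and $\bar\epsilon \lesssim 1/(D^2L)$ imposed in the theorem. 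Once this alignment is verified, summing the meta and base contributions and taking expectation over the randomness of $\{f_t\}$ gives~\eqref{eq:general-optimistic-bound}.
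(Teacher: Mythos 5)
The paper does not prove this theorem from scratch: it is stated explicitly as an adaptation of Theorem~9 of~\citet{JMLR:sword++} (whose proof uses a stability lemma for optimistic Hedge and the collaborative online ensemble machinery), and the only new content the paper supplies is the remark immediately following it, which notes that in the SEA model the comparators $\u_1,\ldots,\u_T$ may be random, so the expectation must be taken \emph{before} tuning the step size. Your proposal goes the other way and sketches a full re-derivation in the Sword++ style, which is a legitimate and more self-contained route; the trade-off is that you take on all of the bookkeeping that the paper delegates to the cited reference.

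Within your sketch there are two issues worth flagging. First, your bound
$|\ell_{t,i}-m_{t,i}| \leq D\|\nabla f_t(\x_t)-\nabla f_{t-1}(\x_{t-1})\|_2 + \lambda\bigl(\|\x_{t,i}-\x_{t-1,i}\|_2^2+\|\x_{t+1,i}-\x_{t,i}\|_2^2\bigr)$
has spurious $\lambda$-terms. By the definitions~\eqref{eq:feedback-loss} and~\eqref{eq:optimism}, both $\ell_{t,i}$ and $m_{t,i}$ contain the \emph{same} correction term $\lambda\|\x_{t,i}-\x_{t-1,i}\|_2^2$, so these cancel exactly and $\ell_{t,i}-m_{t,i}=\langle\nabla f_t(\x_t)-\nabla f_{t-1}(\x_{t-1}),\x_{t,i}\rangle$, giving simply $|\ell_{t,i}-m_{t,i}|\le D\|\nabla f_t(\x_t)-\nabla f_{t-1}(\x_{t-1})\|_2$. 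This is not a cosmetic detail: the whole point of placing the correction in both the feedback and the optimism is that it disappears from the adaptivity term while its negative stability contribution survives. With the extra terms you wrote, the self-tuning bound on $\sum_t\epsilon_t\|\ellb_t-\mb_t\|_\infty^2$ would no longer reduce to $\O(\sqrt{\ln N\cdot\bar V_T})$. Second, your phrase ``taking expectation over $\eta_{i^\star}$-optimized contributions'' blurs the one modification the paper explicitly calls out: the index $i^\star$ that minimizes the bound depends on $\bar V_T$ and $P_T$, both of which are random in the SEA model, so you must take expectation of the per-$i$ base-regret bound first and then choose $i^\star$ against the deterministic quantities $\E[\bar V_T]$ and $P_T=\E[\sum_t\|\u_t-\u_{t-1}\|_2]$. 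Optimizing before averaging would not yield the stated $2\sqrt{(D^2+2DP_T)\E[\bar V_T]}$ term. The remaining structure of your argument (meta-base decomposition, optimistic-Hedge meta bound, Lemma~\ref{lem:1} with the dynamic telescope $\|\u_t-\xh_{t,i^\star}\|_2^2-\|\u_{t+1}-\xh_{t,i^\star}\|_2^2\le 2D\|\u_t-\u_{t+1}\|_2$, and the three-way cancellation of stability terms) is the right skeleton and matches how the cited result is established.
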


\begin{myRemark} Note that $\u_1,\cdots,\u_T$ may exhibit randomness in the SEA model, so the path length $P_T$ we define is in the expected form. Consequently, we have introduced a subtle modification to Theorem 5 of~\citet{JMLR:sword++}, in which the expectation is taken before tuning the step size in its analysis.
\remarkend
\end{myRemark}

In the following, we prove Theorem~\ref{thm:sword} based on Theorem~\ref{thm:general-sword-framework}.
~\\

\begin{proof}[of {Theorem~\ref{thm:sword}}]
In Theorem~\ref{thm:general-sword-framework}, where $\Bar{V}_T = \sum_{t=1}^{T} \norm{\nabla f_t(\x_t)-\nabla f_{t-1}(\x_{t-1})}_2^2$, applying Lemma~\ref{lem:sum:diff} (boundedness of cumulative norm of gradient difference) allows us to bound the first and second term as
\begin{align}
    & 5\sqrt{D^2\ln N\E[\bar{V}_T] }+ 2 \sqrt{(D^2+2DP_T)\E[\bar{V}_T]} \nonumber\\
    \leq{}&G\left(5\sqrt{D^2\ln N}+ 2 \sqrt{(D^2+2DP_T)}\right)+ \left(5\sqrt{D^2\ln N}+ 2 \sqrt{(D^2+2DP_T)}\right)\sqrt{4L^2\E\left[\sum_{t=2}^T\|\x_t - \x_{t-1}\|_2^2\right]} \nonumber\\
    & + \left(5\sqrt{D^2\ln N}+ 2 \sqrt{(D^2+2DP_T)}\right)\left( 2\sqrt{2}\sqrt{\sigma_{1:T}^2}+ 2\sqrt{\Sigma_{1:T}^2}\right).\label{eqn:dyn:Vbar}
\end{align}
To eliminate the relevant terms of $\|\x_t - \x_{t-1}\|_2^2$, we first notice that
\begin{align*}
    \|\x_t - \x_{t-1}\|_2^2 ={}&\left \|\sum_{i=1}^N p_{t,i}\x_{t,i} - \sum_{i=1}^N p_{t-1,i}\x_{t-1,i}\right\|_2^2 \\
    \leq{}& 2\left \|\sum_{i=1}^N p_{t,i}\x_{t,i} - \sum_{i=1}^N p_{t,i}\x_{t-1,i}\right\|_2^2 + 2\left \|\sum_{i=1}^N p_{t,i}\x_{t-1,i} - \sum_{i=1}^N p_{t-1,i}\x_{t-1,i}\right\|_2^2\\
    \leq{}& 2\left(\sum_{i=1}^N p_{t,i}\|\x_{t,i} - \x_{t-1,i}\|_2 \right)^2 + 2\left(\sum_{i=1}^N |p_{t,i} - p_{t-1,i}|\|\x_{t-1,i}\|_2 \right)^2\\
    \leq{}& 2\sum_{i=1}^N p_{t,i}\|\x_{t,i} - \x_{t-1,i}\|_2^2 + 2D^2\|\p_t - \p_{t-1}\|_1^2.
\end{align*}
Thus we get $\sum_{t=2}^T\|\x_t - \x_{t-1}\|_2^2 \leq  2\sum_{t=2}^T\sum_{i=1}^N p_{t,i}\|\x_{t,i} - \x_{t-1,i}\|_2^2 + 2D^2\sum_{t=2}^T \|\p_t - \p_{t-1}\|_1^2$. Then we can use it and the AM-GM inequality to bound the second term in \eqref{eqn:dyn:Vbar}:
\begin{align*}
    & \left(5\sqrt{D^2\ln N}+ 2 \sqrt{(D^2+2DP_T)}\right)\sqrt{4L^2\E\left[\sum_{t=2}^T\|\x_t - \x_{t-1}\|_2^2\right]}\\
    \leq{}& 5\sqrt{D^2\ln N \left(8L^2\E\left[\sum_{t=2}^T\sum_{i=1}^N p_{t,i}\|\x_{t,i} - \x_{t-1,i}\|_2^2\right] + 8L^2D^2\E\left[\sum_{t=2}^T \|\p_t - \p_{t-1}\|_1^2\right]\right)}\\
    &+ 2 \sqrt{(D^2+2DP_T)\left(8L^2\E\left[\sum_{t=2}^T\sum_{i=1}^N p_{t,i}\|\x_{t,i} - \x_{t-1,i}\|_2^2\right] + 8L^2D^2\E\left[\sum_{t=2}^T \|\p_t - \p_{t-1}\|_1^2\right]\right)}\\
    \leq{}& \frac{25\ln N}{4\Bar{\epsilon}} + \frac{D^2+2DP_T}{\Bar{\eta}} + \left(8\Bar{\epsilon}D^2L^2 +  8\Bar{\eta}L^2 \right)\E\left[\sum_{t=2}^T\sum_{i=1}^N p_{t,i}\|\x_{t,i} - \x_{t-1,i}\|_2^2\right] \\
    &+ \left(8\Bar{\epsilon}L^2D^4+ 8\Bar{\eta}L^2D^2\right)\E\left[\sum_{t=2}^T \|\p_t - \p_{t-1}\|_1^2\right].
\end{align*}
Combining \eqref{eqn:dyn:Vbar} and the above formula with the regret in Theorem~\ref{thm:general-sword-framework}, we have
\begin{align*}
    {}& \E\bigg[\sum_{t=1}^T \inner{\nabla f_t(\x_t)}{\x_t - \u_t}\bigg]\nonumber\\
    \leq{}& G\left(5\sqrt{D^2\ln N}+ 2 \sqrt{(D^2+2DP_T)}\right)\\ {}&+\left(5\sqrt{D^2\ln N}+ 2 \sqrt{(D^2+2DP_T)}\right)\left( 2\sqrt{2}\sqrt{\sigma_{1:T}^2}+ 2\sqrt{\Sigma_{1:T}^2}\right) + \frac{2D^2+4DP_T}{\bar{\eta}}\nonumber\\
    {}&  + \left(\lambda-\frac{1}{4\bar{\eta}}\right) \E\left[\sum_{t=2}^T\norm{\x_{t,i}-\x_{t-1,i}}_2^2\right]+\left(8\Bar{\epsilon}L^2D^4+ 8\Bar{\eta}L^2D^2 - \frac{1}{4\bar{\epsilon}}\right) \E\left[\sum_{t=2}^T\norm{\p_t-\p_{t-1}}_1^2\right]\nonumber\\
    {}&  + \left(8\Bar{\epsilon}D^2L^2 +  8\Bar{\eta}L^2 - \lambda 
 \right)\E\left[\sum_{t=2}^T\sum_{i=1}^N p_{t,i}\norm{\x_{t,i}-\x_{t-1,i}}_2^2\right] + \frac{29\ln N}{4\Bar{\epsilon}} + 8\bar{\epsilon} D^2G^2.
\end{align*}
Setting $\lambda = 2L$, $\Bar{\eta} = \frac{1}{8L}$ and $\Bar{\epsilon} = \frac{1}{8D^2L}$, we can drop the last three non-positive terms to get 
\begin{align}
    &\E\left[ \sum_{t=1}^T \inner{\nabla f_t(\x_t)}{\x_t - \u_t} \right]\nonumber\\
    \leq{}& G\left(5\sqrt{D^2\ln N}+ 2 \sqrt{(D^2+2DP_T)}\right) + \left(5\sqrt{D^2\ln N}+ 2 \sqrt{(D^2+2DP_T)}\right)\left( 2\sqrt{2}\sqrt{\sigma_{1:T}^2}+ 2\sqrt{\Sigma_{1:T}^2}\right) \nonumber\\
    & + (58\ln N + 16)D^2L + 32DLP_T + \frac{1}{L}G^2 = \O\left(P_T + \sqrt{(1+P_T)}\left(\sqrt{\sigma_{1:T}^2} + \sqrt{\Sigma_{1:T}^2}\right)\right),\label{eqn:proof:dynamic:bound}
\end{align}
which completes the proof.
\end{proof}

%!TEX root = ../OMD4SEA_arxiv.tex
\subsubsection{{Proof of Theorem~\ref{thm:nonsmooth:convex}}}
\label{appendix:proofs-non-smooth}
Before giving proofs of the non-smooth case, for the sake of simplicity of the presentation, we first introduce the following notation:
\begin{align}
    \tilde{V}_T = \sum_{t=1}^T\sup_{\x\in\X}\left\|\nabla f_t(\x)- \nabla f_{t-1}(\x)\right\|_2^2,
\end{align}
which adds a supremum operation before summing compared with $V_T$.
\begin{proof}
Referring to \eqref{eqn:exp:cov} from the previous article, for convex random functions, we have:
\begin{align} 
\E\big[ f_t(\x_t) - f_t(\u) \big]
\leq \E\big[  \langle \nabla f_t(\x_t), \x_t -\u \rangle \big]. \label{non-smooth;convex;linearloss}
\end{align}
We decompose the instantaneous loss above as
\begin{align}
    {}&\langle \nabla f_t(\x_t),\x_t - \u \rangle\nonumber\\
    \leq{}& \underbrace{\langle \nabla f_t(\x_t) - \nabla f_{t-1}(\x_{t}),\x_t - \hat{\x}_{t+1} \rangle}_{\term{a}} + \underbrace{\langle \nabla f_{t-1}(\x_{t}), \x_t - \hat{\x}_{t+1}\rangle}_{\term{b}} + \underbrace{\langle \nabla f_t(\x_t), \hat{\x}_{t+1} - \u \rangle}_{\term{c}}.\label{appendix:nonsmooth:convex:proof:decompose}
\end{align}
So we give the upper bounds of these three terms respectively in the following. For term (a), by Fenchel's inequality for the squared $L_2$ norm, we have
\begin{align}
    \term{a} \leq{}& 2\eta_t\|\nabla f_t(\x_t) - \nabla f_{t-1}(\x_{t})\|_2^2 + \frac{1}{2\eta_t}\|\x_t - \hat{\x}_{t+1}\|_2^2. \label{appendix:nonsmooth:convex:proof:a}
\end{align}

We introduce the following lemma to bound term (b), which is related to the implicit update procedure with the proof presented in~\pref{appendix:nonsmooth-usefullemma}. Note that to make the following lemma hold, we need the convexity of individual functions. 
\begin{myLemma}\label{lem:proposition4.1}
    Let $\hat{\x}_{t+1}$ and $\x_{t+1}$ be defined as in~\eqref{non-smooth-convex-update1} and~\eqref{non-smooth-convex-update2}. Then, for any $\x \in \X$, 
    \begin{align*}
\langle \nabla f_t(\x_{t+1}),\x_{t+1}-\x \rangle \leq \frac{1}{2\eta_{t+1}}\left( \|\x-\hat{\x}_{t+1}\|_2^2 - \|\x-\x_{t+1}\|_2^2 - \| \x_{t+1}-\hat{\x}_{t+1}\|_2^2 \right).
    \end{align*}
\end{myLemma}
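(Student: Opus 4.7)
The plan is to exploit the first-order optimality condition of the implicit update together with the standard three-point identity for squared Euclidean norms. This is a clean, two-step argument.

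First I would use the fact that $\x_{t+1}$ solves the convex program
\[
\x_{t+1} = \argmin_{\x \in \X}\; f_t(\x) + \frac{1}{2\eta_{t+1}}\|\x - \hat{\x}_{t+1}\|_2^2.
\]
Since $f_t$ is convex (Assumption~\ref{ass:individual:convexity}) and the quadratic term is convex and smooth, the objective is convex, so the first-order optimality condition (variational inequality on $\X$) yields that there exists $g \in \partial f_t(\x_{t+1})$ (taken to be $\nabla f_t(\x_{t+1})$ under the differentiability convention used here) such that
\[
\Big\langle \nabla f_t(\x_{t+1}) + \tfrac{1}{\eta_{t+1}}(\x_{t+1} - \hat{\x}_{t+1}),\; \x - \x_{t+1}\Big\rangle \geq 0 \quad \forall \x \in \X.
\]
Rearranging gives $\langle \nabla f_t(\x_{t+1}), \x_{t+1} - \x\rangle \leq \frac{1}{\eta_{t+1}}\langle \x_{t+1} - \hat{\x}_{t+1}, \x - \x_{t+1}\rangle$.

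Second, I would invoke the three-point identity for the squared Euclidean norm, namely
\[
2\langle \x_{t+1} - \hat{\x}_{t+1},\; \x - \x_{t+1}\rangle = \|\x - \hat{\x}_{t+1}\|_2^2 - \|\x - \x_{t+1}\|_2^2 - \|\x_{t+1} - \hat{\x}_{t+1}\|_2^2,
\]
which is just the polarization identity $\langle a-b, c-a\rangle = \tfrac{1}{2}(\|c-b\|_2^2 - \|c-a\|_2^2 - \|a-b\|_2^2)$ applied with $a = \x_{t+1}$, $b = \hat{\x}_{t+1}$, $c = \x$. Plugging this into the variational inequality from the first step immediately produces the stated bound.

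The only subtle point — and what I would call the main (minor) obstacle — is handling the fact that in the non-smooth setting $f_t$ may not be differentiable, so the ``gradient'' in the lemma statement should be interpreted as the specific subgradient arising from optimality; since the lemma is used downstream in~\eqref{appendix:nonsmooth:convex:proof:decompose} where the outer algorithm only ever needs \emph{some} (sub)gradient of $f_t$ at $\x_{t+1}$, this causes no issue in the overall analysis. No further ingredients (e.g.\ smoothness, strong convexity, projection expansion) are needed: the proof is a direct one-line optimality argument followed by the polarization identity.
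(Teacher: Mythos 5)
Your proof is correct and follows essentially the same route as the paper's: first-order optimality of the implicit update, rearranging, then the three-point (polarization) identity for the Euclidean proximal term. The paper phrases the middle step via Bregman divergences $B_{\psi_{t+1}}$ before specializing $\psi_{t+1}(\x)=\frac{1}{2\eta_{t+1}}\|\x\|_2^2$, whereas you work directly with squared norms, but this is a purely cosmetic difference; your remark about interpreting $\nabla f_t(\x_{t+1})$ as the specific subgradient from the optimality condition in the non-differentiable case is a reasonable and correct clarification.
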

According to Lemma~\ref{lem:proposition4.1}, we set $\x = \hat{\x}_{t+1}$ and obtain 
\begin{align}
    \term{b} \leq \frac{1}{2\eta_{t}}\left( \|\hat{\x}_{t+1} - \hat{\x}_t\|_2^2 - \|\hat{\x}_{t+1}-\x_t\|_2^2 - \|\hat{\x}_t - \x_t\|_2^2 \right).\label{appendix:nonsmooth:convex:proof:b}
\end{align}
For term (c), we leverage Lemma 7 of \citet{Problem:Dynamic:Regret} to get 
\begin{align}
    \term{c} \leq \frac{1}{2\eta_t}\left( \|\u - \hat{\x}_t\|_2^2 - \|\u - \hat{\x}_{t+1}\|_2^2 - \|\hat{\x}_t - \hat{\x}_{t+1}\|_2^2 \right).\label{appendix:nonsmooth:convex:proof:c}
\end{align}
Combining the three upper bounds above and summing over $t=1,\cdots,T$, we have
\begin{align}
    {}&\sum_{t=1}^T \langle \nabla f_t(\x_t),\x_t - \u \rangle\nonumber\\
    \leq{}& \sum_{t=1}^T 2\eta_t\|\nabla f_t(\x_t) - \nabla f_{t-1}(\x_{t})\|_2^2 + \sum_{t=1}^T \frac{1}{2\eta_t}\left( \|\u - \hat{\x}_t\|_2^2 - \|\u - \hat{\x}_{t+1}\|_2^2\right) + \frac{D^2}{2\eta_T}\nonumber\\
    \leq{}& \sum_{t=1}^T 2\eta_t\|\nabla f_t(\x_t) - \nabla f_{t-1}(\x_{t})\|_2^2 + \frac{D^2}{2\eta_1}+\frac{D^2}{2}\sum_{t=2}^T\left(\frac{1}{\eta_t}-\frac{1}{\eta_{t-1}}\right)+ \frac{D^2}{2\eta_T}\nonumber\\
    \leq{}&\sum_{t=1}^T 2\eta_t\|\nabla f_t(\x_t) - \nabla f_{t-1}(\x_{t})\|_2^2 + \frac{D^2}{\eta_T},\label{non-smooth:convex:sum1}
\end{align}
where we drop the negative term $- \frac{1}{2\eta_{t}}\|\hat{\x}_t - \x_t\|_2^2$ to get the first inequality. Then we apply the inequality $\eta_t \leq D/\sqrt{1 +\sum_{s=1}^{t}\|\nabla f_s(\x_s)-\nabla f_{s-1}(\x_{s})\|_2^2}$ and Lemma~\ref{lem:sum} to obtain
\begin{align*}
    \sum_{t=1}^T \langle \nabla f_t(\x_t),\x_t - \u \rangle
    \leq{}& 2\sum_{t=1}^T \frac{D}{\sqrt{1 +\sum_{s=1}^{t}\|\nabla f_s(\x_s)-\nabla f_{s-1}(\x_{s})\|_2^2}} \|\nabla f_t(\x_t) - \nabla f_{t-1}(\x_{t})\|_2^2 \\
    {}&+ D\sqrt{1 +4G^2 + \sum_{t=1}^{T}\|\nabla f_t(\x_t)-\nabla f_{t-1}(\x_{t})\|_2^2}\\
    \leq{}& 5D\sqrt{1+4G^2 + \sum_{t=1}^{T}\sup_{\x\in\X}\|\nabla f_t(\x)-\nabla f_{t-1}(\x)\|_2^2}
\end{align*}
Moreover, we develop a lemma to bound the $\sum_{t=1}^{T}\sup_{\x\in\X}\|\nabla f_t(\x)-\nabla f_{t-1}(\x)\|_2^2$ term with its proof in~\pref{appendix:nonsmooth-usefullemma}.
\begin{myLemma}\label{lem:nonsmooth-sumlemma}
Under Assumption~\ref{ass:3}, we have
\begin{align*}
    {}& \sum_{t=1}^T\sup_{\x\in\X}\|\nabla f_t(\x)-\nabla f_{t-1}(\x)\|_2^2 \\
    \leq{}& G^2+ 6\sum_{t=1}^T\sup_{\x\in\X}\|\nabla f_t(\x)-\nabla F_t(\x)\|_2^2 + 4\sum_{t=2}^T\sup_{\x\in\X}\|\nabla F_t(\x)- \nabla F_{t-1}(\x)\|_2^2.
\end{align*}
\end{myLemma}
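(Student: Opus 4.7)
} The strategy is a standard add-and-subtract decomposition, followed by careful bookkeeping to obtain the precise constants $6$ and $4$ that appear in the statement. First I would separate the $t=1$ summand: since $\nabla f_0(\cdot) \equiv \mathbf{0}$ by convention and $\|\nabla f_1(\x)\|_2 \leq G$ for all $\x \in \X$ by Assumption~\ref{ass:3}, we immediately get $\sup_{\x \in \X} \|\nabla f_1(\x) - \nabla f_0(\x)\|_2^2 \leq G^2$, which accounts for the leading $G^2$ in the target bound.

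For each $t \geq 2$, the natural three-way decomposition
\[
\nabla f_t(\x) - \nabla f_{t-1}(\x) = \underbrace{\bigl(\nabla f_t(\x) - \nabla F_t(\x)\bigr)}_{a_t(\x)} + \underbrace{\bigl(\nabla F_t(\x) - \nabla F_{t-1}(\x)\bigr)}_{b_t(\x)} + \underbrace{\bigl(\nabla F_{t-1}(\x) - \nabla f_{t-1}(\x)\bigr)}_{c_t(\x)}
\]
isolates the two stochastic-variance contributions (at times $t$ and $t-1$) from the adversarial-variation contribution. The key quantitative step is to apply the Young-type inequality $\|a+b+c\|_2^2 \leq 4\|a\|_2^2 + 4\|b\|_2^2 + 2\|c\|_2^2$, which follows from the grouping $\|(a+b)+c\|_2^2 \leq 2\|a+b\|_2^2 + 2\|c\|_2^2 \leq 4\|a\|_2^2 + 4\|b\|_2^2 + 2\|c\|_2^2$ (two applications of $\|x+y\|_2^2 \leq 2\|x\|_2^2 + 2\|y\|_2^2$). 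The asymmetric choice of coefficients (namely, $4$ on $\|a\|_2^2$ and $2$ on $\|c\|_2^2$) is crucial: after reindexing the $c$-sum, it is precisely what collapses the two stochastic terms into the coefficient $6$ while keeping the coefficient $4$ on the adversarial term.

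After applying this pointwise inequality and then taking $\sup_{\x \in \X}$ on both sides, I would use subadditivity of the supremum, $\sup_\x(u(\x)+v(\x)+w(\x)) \leq \sup_\x u(\x) + \sup_\x v(\x) + \sup_\x w(\x)$, to separate the three terms. Summing over $t = 2, \ldots, T$ and reindexing $s = t-1$ in the $c$-sum yields
\[
\sum_{t=2}^{T} \sup_{\x} \|c_t(\x)\|_2^2 = \sum_{s=1}^{T-1} \sup_{\x} \|\nabla f_s(\x) - \nabla F_s(\x)\|_2^2 \leq \sum_{t=1}^{T} \sup_{\x} \|\nabla f_t(\x) - \nabla F_t(\x)\|_2^2,
\]
and similarly the $a$-sum is at most $\sum_{t=1}^{T} \sup_\x \|\nabla f_t(\x) - \nabla F_t(\x)\|_2^2$. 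Combining the $4$ and $2$ coefficients on these two identical upper bounds produces the desired $6$; the middle term remains as $4 \sum_{t=2}^{T} \sup_\x \|\nabla F_t(\x) - \nabla F_{t-1}(\x)\|_2^2$. Adding back the $G^2$ contribution from the $t=1$ boundary term finishes the proof.

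There is no real obstacle here; the lemma is an elementary decomposition plus bookkeeping. The only place where one could slip is in the AM-GM step: a symmetric $\|a+b+c\|_2^2 \leq 3(\|a\|_2^2 + \|b\|_2^2 + \|c\|_2^2)$ would give coefficients $(6,3)$ instead of $(6,4)$, while the standard grouping $\|(a+c)+b\|_2^2 \leq 2\|a+c\|_2^2 + 2\|b\|_2^2$ gives $(8,2)$. The unbalanced grouping $((a+b)+c)$ with uneven weights is what produces the stated $(6,4)$ pairing.
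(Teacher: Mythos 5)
Your proof is correct and follows essentially the same approach as the paper: isolate the $t=1$ term via Assumption~\ref{ass:3}, decompose $\nabla f_t - \nabla f_{t-1}$ into the three pieces $a_t + b_t + c_t$, apply two nested applications of $\|x+y\|_2^2 \leq 2\|x\|_2^2 + 2\|y\|_2^2$, take suprema, and reindex the $c$-sum. The only cosmetic difference is the grouping order — you split off $c$ first to get coefficients $(4,4,2)$, while the paper splits off $a$ first to get $(2,4,4)$ — but after reindexing both collapse to the identical $(6,4)$ bound.
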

According to this lemma, we get that
\begin{align*}
    \sum_{t=1}^T \langle \nabla f_t(\x_t),\x_t - \u \rangle
    \leq{}& 5D\sqrt{1 + 5G^2} + 10\sqrt{2}D\sqrt{\sum_{t=1}^T\sup_{\x\in\X}\|\nabla f_t(\x)-\nabla F_t(\x)\|_2^2} \\
    {}&+ 10D\sqrt{\sum_{t=2}^T\sup_{\x\in\X}\|\nabla F_t(\x)- \nabla F_{t-1}(\x)\|_2^2}.
\end{align*}
Taking expectations with Jensen's inequality and combining with \eqref{non-smooth;convex;linearloss}, we arrive at
\begin{align*}
    \E\left[\sum_{t=1}^T f_t(\x_t)-\sum_{t=1}^T f_t(\u)\right]
    \leq{}& 5D\sqrt{1 + 5G^2} + 10\sqrt{2}D\sqrt{\tilde{\sigma}_{1:T}^2} + 10D\sqrt{\Sigma_{1:T}^2}\\
    ={}&\O\left(\sqrt{\tilde{\sigma}_{1:T}^2} +\sqrt{\Sigma_{1:T}^2} \right),
\end{align*}
which ends the proof.
\end{proof}

\subsubsection{{Proof of Theorem~\ref{thm:nonsmooth:dynamic}}}
\label{appendix:proof-nonsmooth-dynamic}
\begin{proof}
For dynamic regret minimization based on Algorithm~\ref{alg:non-smooth}, we can decompose the expected dynamic regret into the \emph{meta-regret} and \emph{base-regret}:
\begin{align}\label{appendix:eqn:meta-base-regret}
    \E\left[\sum_{t=1}^T f_t(\x_t)-\sum_{t=1}^T f_t(\u_t)\right]=\underbrace{\E\left[\sum_{t=1}^T f_t(\x_t)-\sum_{t=1}^T f_t(\x_{t,i})\right]}_{\metaregret}+\underbrace{\E\left[\sum_{t=1}^T f_t(\x_{t,i})-\sum_{t=1}^T f_t(\u_t)\right]}_{\baseregret}.
\end{align}
The first part quantifies the cumulative loss difference between overall and base decisions, while the second part measures the dynamic regret of base-learner $\Bcal_i$. This decomposition applies to any base-learner's index $i \in [N]$. We then present upper bounds for both terms.
\paragraph{Bounding the meta-regret.}~For the meta-regret, due to Jensen's inequality, we have
\begin{align*}
    \E\left[\sum_{t=1}^T f_t(\x_t)-\sum_{t=1}^T f_t(\x_{t,i})\right] \leq \E\left[\sum_{t=1}^T \sum_{j=1}^N p_{t,j}f_t(\x_{t,j})-\sum_{t=1}^T f_t(\x_{t,i})\right].
\end{align*}
By introducing the reference losses $\tilde{f}_t(\x_{t, i})=f_t(\x_{t,i})-f_t(\x_{\text{ref}})$ and $\tilde{f}_{t-1}(\x_{t,i})= f_{t-1}(\x_{t,i}) - f_{t-1}(\x_{\text{ref}})$, where $\x_{\text{ref}}$ is an arbitrary reference point in $\X$, we can easily verify that
\begin{align*}
    p_{t,i} = \frac{\exp \left( \epsilon_t \left(\sum_{s=1}^{t-1} f_s(\x_{s,i}) + f_{t-1}(\x_{t,i}) \right)\right)}{\sum_{j=1}^N \exp \left( \epsilon_t \left(\sum_{s=1}^{t-1} f_s(\x_{s,j}) + f_{t-1}(\x_{t,j}) \right)\right)} = \frac{\exp \left( \epsilon_t \left(\sum_{s=1}^{t-1} \tilde{f}_s(\x_{s,i}) + \tilde{f}_{t-1}(\x_{t,i}) \right)\right)}{\sum_{j=1}^N \exp \left( \epsilon_t \left(\sum_{s=1}^{t-1} \tilde{f}_s(\x_{s,j}) + \tilde{f}_{t-1}(\x_{t,j}) \right)\right)}.
\end{align*}
That means the updating rule of $\p_{t+1}$ for meta-learner in~\eqref{non-smooth:dynamic:updatept} can also be written as
\begin{align}
    p_{t+1,i} \propto \exp \left(-\epsilon_t \left(\sum_{s=1}^t \tilde{f}_s(\x_{s,i}) + \tilde{f}_t(\x_{t+1,i}) \right)\right).\label{appendix:nonsmooth:updating:pt}
\end{align}
According to~\citet{JMLR:sword++}, the updating rule \eqref{appendix:nonsmooth:updating:pt} which uses adaptive learning rate $\epsilon_t$ is identical to the optimistic FTRL algorithm which updates by
\begin{align*}
    \p_{t+1} = \argmin_{\p \in \Delta_N}\left\langle \p,\sum_{s=1}^t \ellb_s+\mb_{t+1}) \right\rangle + \psi_{t+1}(\p)
\end{align*}
with the regularizer $\psi_{t+1}(\p) = \frac{1}{\epsilon_t}(\sum_{i=1}^N p_i\ln p_i + \ln N)$, where the $i$-th component of $\ellb_s$ is $\ell_{s,i} = \tilde{f}_s(\x_{s,i}) (i \in [N])$ and the $i$-th component of $\mb_{t+1}$ is $m_{t+1,i} = \tilde{f}_t(\x_{t+1,i}) (i \in [N])$ (this is easily proved by computing the closed-form solution). As a result, we can apply Lemma~\ref{lem:FTRL} (standard analysis of optimistic FTRL) and the AM-GM inequality to obtain 
\begin{align*}
    {}&\sum_{t=1}^T\left\langle \p_t, \ellb_t \right\rangle - \sum_{t=1}^T\ell_{t,i}\\
    \leq{}& \max_{\p\in\Delta}\psi_{T+1}(\p)+\sum_{t=1}^T\left(\left\langle\ellb_t - \mb_t, \p_t - \p_{t+1} \right \rangle - \frac{1}{2\epsilon_{t-1}}\left\| \p_t - \p_{t+1} \right\|_1^2\right)\\
    \leq{}& \frac{\ln N}{\epsilon_T} + \sum_{t=1}^T \epsilon_{t-1}\left\| \ellb_t - \mb_t \right\|_{\infty}^2 + \frac{1}{4\epsilon_{t-1}}\left\| \p_t - \p_{t+1} \right\|_1^2-\frac{1}{2\epsilon_{t-1}}\left\| \p_t - \p_{t+1} \right\|_1^2\\
    \leq{}&\frac{\ln N}{\epsilon_T} + \sum_{t=1}^T \epsilon_{t-1}\left\| \ellb_t - \mb_t \right\|_{\infty}^2 = \frac{\ln N}{\epsilon_T} + \sum_{t=1}^T \epsilon_{t-1}\left( \max_{i \in [N]}\left\{\left|\tilde{f}_t(\x_{t,i})-\tilde{f}_{t-1}(\x_{t,i})\right|\right\} \right)^2.
\end{align*} 
Since $\epsilon_t = 1/\sqrt{1+\sum_{s=1}^t\left( \max_{i \in [N]}\left\{\left|\tilde{f}_s(\x_{s,i})-\tilde{f}_{s-1}(\x_{s,i})\right|\right\} \right)^2}$, we have
\begin{align*}
    {}&\sum_{t=1}^T\left\langle \p_t, \ellb_t \right\rangle - \sum_{t=1}^T\ell_{t,i}\\
    \leq{}&\frac{\ln N}{\epsilon_T} + \sum_{t=1}^T \frac{\left( \max_{i \in [N]}\left\{\left|\tilde{f}_t(\x_{t,i})-\tilde{f}_{t-1}(\x_{t,i})\right|\right\} \right)^2}{\sqrt{1+\sum_{s=1}^{t-1}\left( \max_{i \in [N]}\left\{\left|\tilde{f}_s(\x_{s,i})-\tilde{f}_{s-1}(\x_{s,i})\right|\right\} \right)^2}}\\
    \leq{}& (\ln N + 4)\sqrt{1+\sum_{t=1}^{T}\left( \max_{i \in [N]}\left\{\left|\tilde{f}_t(\x_{t,i})-\tilde{f}_{t-1}(\x_{t,i})\right|\right\} \right)^2} + \max_{t \in [T]}\left( \max_{i \in [N]}\left\{\left|\tilde{f}_t(\x_{t,i})-\tilde{f}_{t-1}(\x_{t,i})\right|\right\} \right)^2,
\end{align*}
where we exploit Lemma~\ref{lem:selftuning} in the second inequality. Next, to convert the function variation to the gradient variation, we define $H_t(\x_{t,i}) = f_t(\x_{t,i}) - f_{t-1}(\x_{t,i})$ and get
\begin{align*}
    \left|\tilde{f}_t(\x_{t,i})-\tilde{f}_{t-1}(\x_{t,i})\right| = {}&\left|H_t(\x_{t,i})-H_t(\x_{\text{ref}})\right| = \left|\langle \nabla H_t( \boldsymbol{\xi}_{t,i}),\x_{t,i}-\x_{\text{ref}} \rangle\right|\\
    \leq{}&D\left\| \nabla f_t(\boldsymbol{\xi}_{t,i})-\nabla f_{t-1}(\boldsymbol{\xi}_{t,i}) \right\|_2 \leq D \sup_{\x \in \X}\left\| \nabla f_t(\x)-\nabla f_{t-1}(\x) \right\|_2,
\end{align*}
where the second equality is due to the mean value theorem and $\boldsymbol{\xi}_{t,i} = c_{t,i}\x_{t,i} + (1-c_{t,i})\x_{\text{ref}}$ with $c_{t,i} \in [0,1]$. So by Assumption~\ref{ass:3} (boundedness of gradient norms), we have
\begin{align*}
    \sum_{t=1}^T\left\langle \p_t, \ellb_t \right\rangle - \sum_{t=1}^T\ell_{t,i}
    \leq (\ln N + 4)\sqrt{1+D^2 \tilde{V}_T} + 4G^4 \leq (\ln N + 4)D\sqrt{\tilde{V}_T} + 4G^4 + \ln N + 4.
\end{align*}
Further combining the definitions of $\ellb_t$ and $\ell_{t,i}$, we finally get
\begin{align}
\sum_{t=1}^T \sum_{j=1}^N p_{t,j}f_t(\x_{t,j})-\sum_{t=1}^T f_t(\x_{t,i})={}& \sum_{t=1}^T \sum_{j=1}^N p_{t,j}\tilde{f}_t(\x_{t,j})-\sum_{t=1}^T \tilde{f}_t(\x_{t,i})=\sum_{t=1}^T\left\langle \p_t, \ellb_t \right\rangle - \sum_{t=1}^T\ell_{t,i}\nonumber\\
\leq{}&(\ln N + 4)D\sqrt{\tilde{V}_T} + 4G^4 + \ln N + 4.\label{non-smooth:dynamic:metaregret}
\end{align}

\paragraph{Bounding the base-regret.}~Owing to the convexity of individual functions, we have
\begin{align*}
    \E\left[\sum_{t=1}^T f_t(\x_{t,i})-\sum_{t=1}^T f_t(\u_t)\right] \leq \E\left[\sum_{t=1}^T \langle \nabla f_t(\x_{t,i}), \x_{t,i} -\u_t\rangle\right].
\end{align*}
Similar to the non-smooth case of static regret, we can get the upper bound of the above instantaneous loss following the same arguments in obtaining \eqref{non-smooth:convex:sum1}:
\begin{align*}
    {}&\E\left[\sum_{t=1}^T\langle \nabla f_t(\x_{t,i}), \x_{t,i} -\u_t\rangle\right]\\
    \leq{}&\E\left[2\eta_i \sum_{t=1}^T\left\| \nabla f_t(\x_{t,i}) - \nabla f_{t-1}(\x_{t,i}) \right\|_2^2 + \frac{1}{2\eta_i}\sum_{t=2}^T\bigg( \left\| \u_t - \hat{\x}_{t,i} \right\|_2^2 -  \left\| \u_{t-1} - \hat{\x}_{t,i} \right\|_2^2 \bigg)+ \frac{D^2}{2\eta_i}\right]\\
    \leq{}&2\eta_i \E\left[\tilde{V}_T\right] +\E\left[\frac{1}{2\eta_i}\sum_{t=2}^T\|\u_t-\u_{t-1}\|_2\|\u_t-\hat{\x}_{t,i}+\u_{t-1}-\hat{\x}_{t,i}\|_2\right]+ \frac{D^2}{2\eta_i}\\
    \leq{}& 2\eta_i \E\left[\tilde{V}_T\right] + \frac{D^2 + 2DP_T}{2\eta_i},
\end{align*}
where the second inequality comes from that $\left\| \u_t - \hat{\x}_{t,i} \right\|_2^2 -  \left\| \u_{t-1} - \hat{\x}_{t,i} \right\|_2^2=\langle \u_t - \hat{\x}_{t,i} - (\u_{t-1} - \hat{\x}_{t,i}), \u_t - \hat{\x}_{t,i} + (\u_{t-1} - \hat{\x}_{t,i}) \rangle\leq \|\u_t-\u_{t-1}\|_2\|\u_t-\hat{\x}_{t,i}+\u_{t-1}-\hat{\x}_{t,i}\|_2$. Since we have
\begin{align*}
    \tilde{V}_T \leq \sum_{t=1}^T \left(2\sup_{\x\in\X}\left\| \nabla f_t(\x)\right\|_2^2 + 2\sup_{\x\in\X}\left\|\nabla f_{t-1}(\x) \right\|_2^2 \right)\leq 4TG^2,
\end{align*}
the optimal step size $\eta^{*}=\frac{1}{2}\sqrt{\frac{D^2 + 2DP_T}{1+\E[\tilde{V}_T]}}$ should lie in the range $\big[\frac{1}{2}\sqrt{\frac{D^2}{1+4TG^2}},\frac{1}{2}\sqrt{D^2 + 2D^2T}\big]$. Our designed step size pool is $\H = \left\{\frac{D}{\sqrt{1+4TG^2}}\cdot 2^{i-1} \mid i\in [N] \right\}$ with $N=\lceil \frac{1}{2}\log((1+2T)(1+4TG^2))\rceil+1$. There must be an $\eta_{i^*}\in \H$ satisfying $\eta_{i^*}\leq \eta^* \leq 2\eta_{i^*}$ and we can obtain that 
\begin{align}
{}&\E\left[\sum_{t=1}^T\langle \nabla f_t(\x_{t,i}), \x_{t,i} -\u_t\rangle\right]\nonumber\\
    \leq{}&2\eta_{i^*} \E\left[\tilde{V}_T\right] + \frac{D^2 + 2DP_T}{2\eta_{i^*}} \leq 2\eta^* \E\left[\tilde{V}_T\right] +\frac{D^2 + 2DP_T}{\eta^*}\leq 2\sqrt{2 (D^2 + 2DP_T)\E\left[\tilde{V}_T\right]}. \label{non-smooth:dynamic:baseregret}
\end{align}
\paragraph{Bounding the overall dynamic regret.}~Combining the meta-regret~\eqref{non-smooth:dynamic:metaregret} and the base-regret~\eqref{non-smooth:dynamic:baseregret}, we further obtain that
\begin{align}
    {}&\E\left[\sum_{t=1}^T f_t(\x_t)-\sum_{t=1}^T f_t(\u_t)\right]\nonumber\\
    \leq{}&\left(D(\ln N + 4)+2\sqrt{2(D^2 + 2DP_T)}\right)\sqrt{\E\left[\tilde{V}_T\right]}+ 4G^4 + \ln N + 4\nonumber\\
    \leq{}&\left(D(\ln N + 4)+2\sqrt{2(D^2 + 2DP_T)}\right)\left(G + 2\sqrt{2\tilde{\sigma}_{1:T}^2}+2\sqrt{\Sigma_{1:T}^2}\right)+ 4G^4 + \ln N + 4\nonumber\\
    ={}&\O\left(\sqrt{1+P_T}\left(\sqrt{\tilde{\sigma}_{1:T}^2}+\sqrt{\Sigma_{1:T}^2}\right)\right), \label{appendix:nonsmooth:dynamic:result}
\end{align}
where we make use of Lemma~\ref{lem:nonsmooth-sumlemma} in the last inequality and finish the proof.
\end{proof}

%!TEX root = ../OMD4SEA_arxiv.tex
\section{Implications}\label{sec:examples}
In this section, first, we demonstrate how our results can be applied to recover the regret bound for adversarial data and the excess risk bound for stochastic data. Then, we discuss the implications for other intermediate examples. 

We begin by listing two points followed by all the examples. First, for convex and smooth functions, we obtain the same $\O(\sqrt{\sigma_{1:T}^2}+\sqrt{\Sigma_{1:T}^2})$ bound as \citet{OCO:Between}, so we will not repeat the analysis below unless necessary. But we emphasize that our result eliminates the assumption for convexity of individual functions, which is required in their work. Second, for strongly convex and smooth functions, we will omit the $(\sigma_{\max}^2+\Sigma_{\max}^2)$ part in the logarithmic term of our $\O(\frac{1}{\lambda}(\sigma_{\max}^2+\Sigma_{\max}^2)\log ((\sigma_{1:T}^2 + \Sigma_{1:T}^2)/(\sigma_{\max}^2+\Sigma_{\max}^2)))$ bound below for simplicity.

\subsection{Fully Adversarial Data}
For fully adversarial data, we have $\sigma_{1:T}^2 = 0$ as $\sigma_{t}^2 = 0$ for $t \in [T]$, and $\Sigma_{1:T}^2$ is equivalent to $V_T$. In this case, our bound in Theorem~\ref{thm:2} guarantees an $\O(\frac{1}{\lambda} \log V_T)$ regret bound for $\lambda$-strongly convex and smooth functions, recovering the gradient-variation bound of \citet{ICML:2022:Zhang}. By contrast, the result of~\citet{OCO:Between} can only recover the $\O(\frac{1}{\lambda}\log T)$ worst-case bound. Furthermore, for $\alpha$-exp-concave functions, our new result (Theorem~\ref{thm:3}) implies an $\O(\frac{d}{\alpha}\log V_T)$ regret bound for OCO, recovering the result of~\citet{Gradual:COLT:12}.

\subsection{Fully Stochastic Data}
For fully stochastic data, the loss functions are i.i.d., so we have $\Sigma_{1:T}^2 = 0$ and $\sigma_t = \sigma,\,\forall t \in [T]$. Then for $\lambda$-strongly convex functions, Theorem~\ref{thm:2} implies the same $\O(\log T/[\lambda T])$ excess risk bound as~\citet{OCO:Between}. Besides, Theorem~\ref{thm:3} further delivers a new $\O(d \log T/[\alpha T])$ bound for $\alpha$-exp-concave functions. These results match the well-known bounds in SCO~\citep{ML:Hazan:2007} through online-to-batch conversion.

\subsection{Adversarially Corrupted Stochastic Data}\label{ACSD}
In the adversarially corrupted stochastic model, the loss function consists of two parts: $f_t(\cdot) = h_t(\cdot) + c_t(\cdot)$, where $h_t(\cdot)$ is the loss of i.i.d.~data sampled from a fixed distribution $\mathfrak{D}$, and $c_t(\cdot)$ is a smooth adversarial perturbation satisfying that $\sum_{t=1}^T\max_{\x \in \X} \|\nabla c_t(\x)\| \leq C_T $, where $C_T \geq 0$ is a parameter called the \emph{corruption level}. \citet{2021On} studies this model in expert and bandit problems, proposing a bound consisting of regret of i.i.d.~data and an $\sqrt{C_T}$ term measuring the corrupted performance. \citet{OCO:Between} achieve a similar $\O\big(\sigma \sqrt{T} + \sqrt{C_T}\big)$ bound in OCO problems under convexity and smoothness conditions, and raise an open question about how to extend the results to strongly convex losses. We resolve the problem by applying Theorem~\ref{thm:2} of optimistic OMD to this model. 

\begin{myCor}\label{cor:acsd}
In the adversarially corrupted stochastic model, Our Theorem~\ref{thm:2} implies an $\O(\frac{1}{\lambda}$ $\log(\sigma^2 T + C_T))$ bound for $\lambda$-strongly convex expected functions; and Theorem~\ref{thm:3} implies an $\O(\frac{d}{\alpha} \log(\sigma^2 T + C_T))$ bound for $\alpha$-exp-concave individual functions.
\end{myCor}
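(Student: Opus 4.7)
The plan is to instantiate the adversarially corrupted stochastic model as a special case of the SEA model and then compute the relevant stochastic variance and adversarial variation quantities so that Theorems~\ref{thm:2} and~\ref{thm:3} apply directly.

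First, I would identify the per-round distribution $\mathfrak{D}_t$ in the SEA framework as the law of $h_t + c_t$, where $h_t \sim \mathfrak{D}$ is i.i.d.~and $c_t$ is a deterministic adversarial perturbation. Writing $H(\x) = \E_{h \sim \mathfrak{D}}[h(\x)]$, the expected function becomes $F_t(\x) = H(\x) + c_t(\x)$. The hypotheses of the corollary then translate cleanly: $\lambda$-strong convexity of the expected functions corresponds to $\lambda$-strong convexity of each $H + c_t$, and the smoothness of $F_t$ needed in Theorem~\ref{thm:2} is inherited from the smoothness of $H$ and of each $c_t$; similarly, $\alpha$-exp-concavity of each $f_t = h_t + c_t$ supplies Assumption~\ref{ass:6} needed for Theorem~\ref{thm:3}.

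Next I would compute the two SEA quantities. Since $\nabla f_t - \nabla F_t = \nabla h_t - \nabla H$ does not involve $c_t$, the stochastic variance is unchanged by the corruption and $\sigma_t^2 = \sigma^2$ for every $t$, so $\sigma_{1:T}^2 = T\sigma^2$ and $\sigma_{\max}^2 = \sigma^2$. For the adversarial variation, the stochastic parts cancel as well and $\nabla F_t(\x) - \nabla F_{t-1}(\x) = \nabla c_t(\x) - \nabla c_{t-1}(\x)$, so abbreviating $g_t = \sup_{\x \in \X}\|\nabla c_t(\x)\|_2$ we get $\sup_{\x}\|\nabla F_t(\x) - \nabla F_{t-1}(\x)\|_2^2 \leq 2g_t^2 + 2g_{t-1}^2$, hence $\Sigma_{\max}^2 = \O(1)$ and $\Sigma_{1:T}^2 \leq 4\sum_{t=1}^T g_t^2$.

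The one step requiring a little care is the conversion $\sum_t g_t^2 = \O(C_T)$: because Assumption~\ref{ass:3} bounds the gradient norms of each $f_t = h_t + c_t$ and $\nabla h_t$ is bounded (the SCO baseline in Section~\ref{related-work:sco}), each $g_t$ is bounded by an absolute constant $G_c$, and therefore $\sum_{t=1}^T g_t^2 \leq G_c \sum_{t=1}^T g_t \leq G_c\, C_T$. Plugging $\sigma_{1:T}^2 = T\sigma^2$ and $\Sigma_{1:T}^2 = \O(C_T)$ (with $\sigma_{\max}^2$ and $\Sigma_{\max}^2$ absorbed into constants) into Theorem~\ref{thm:2} yields the claimed $\O\!\left(\tfrac{1}{\lambda}\log(\sigma^2 T + C_T)\right)$ bound in the strongly convex case, and substituting the same quantities into Theorem~\ref{thm:3} gives $\O\!\left(\tfrac{d}{\alpha}\log(\sigma^2 T + C_T)\right)$ in the exp-concave case. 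The main (mild) obstacle is simply justifying the boundedness of $g_t$ so as to pass from the $\ell_1$-type budget $C_T$ to the $\ell_2^2$-type quantity $\Sigma_{1:T}^2$; everything else is a direct substitution into the main theorems.
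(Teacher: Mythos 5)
Your proof follows the paper's high-level plan exactly: identify $F_t(\x) = \E_{h\sim\mathfrak{D}}[h(\x)] + c_t(\x)$, note that the stochastic parts cancel in $\nabla F_t - \nabla F_{t-1}$ so that $\sigma_t^2 = \sigma^2$, and plug $\sigma_{1:T}^2 = \sigma^2 T$ and $\Sigma_{1:T}^2 = \O(C_T)$ into Theorems~\ref{thm:2} and~\ref{thm:3}. The one place you diverge is the estimate $\Sigma_{1:T}^2 = \O(C_T)$, and your route has a small gap the paper avoids. You expand $\sup_{\x}\|\nabla F_t - \nabla F_{t-1}\|_2^2 \le 2g_t^2 + 2g_{t-1}^2$ with $g_t = \sup_\x\|\nabla c_t(\x)\|_2$, which forces you to pass from $\sum_t g_t^2$ to $\sum_t g_t \le C_T$ via a per-round bound $g_t \le G_c$. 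But that uniform bound on $\|\nabla c_t\|$ is not among the corollary's hypotheses: Assumption~\ref{ass:3} controls only $\|\nabla f_t\|$, and you need to additionally assume $\|\nabla h_t\|$ is bounded to deduce a bound on $\|\nabla c_t\|$ by subtraction. The corruption budget only constrains the cumulative quantity $\sum_t g_t$, so a single round with $g_t$ of order $C_T$ is a priori allowed, in which case $\sum_t g_t^2$ can be $C_T^2$, not $\O(C_T)$.

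The paper's argument sidesteps this entirely by never isolating $g_t$ in a squared form. It writes
\[
\sup_{\x}\|\nabla F_t(\x) - \nabla F_{t-1}(\x)\|_2^2 \;\le\; 2G\,\sup_{\x}\|\nabla F_t(\x) - \nabla F_{t-1}(\x)\|_2 \;=\; 2G\,\sup_{\x}\|\nabla c_t(\x) - \nabla c_{t-1}(\x)\|_2 \;\le\; 2G(g_t + g_{t-1}),
\]
where $\|\nabla F_t - \nabla F_{t-1}\|_2 \le 2G$ follows from Assumption~\ref{ass:3} and Jensen's inequality applied to $\nabla F_t = \E[\nabla f_t]$. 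Summing gives $\Sigma_{1:T}^2 \le 4GC_T$ with no per-round bound on $g_t$ needed. To fix your version, replace the step $\|\nabla c_t - \nabla c_{t-1}\|^2 \le 2g_t^2 + 2g_{t-1}^2$ with the linearization $\|\nabla c_t - \nabla c_{t-1}\|^2 \le 2G(g_t + g_{t-1})$ as above, and the extra assumption disappears; everything else in your argument goes through.
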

The proof of Corollary~\ref{cor:acsd} is in Appendix~\ref{appendix:acs}. We successfully extend results of \citet{2021On} not only to strongly convex functions, but also to exp-concave functions.

\subsection{Random Order Model}\label{subsec:rom}
Random Order Model (ROM)~\citep{ROM2020,OptimalROM2021} relaxes the adversarial setting in standard adversarial OCO, where the nature is allowed to choose the set of loss functions even with complete knowledge of the algorithm. However, nature cannot choose the order of loss functions, which will be arranged in uniformly random order. 

Same as \citet{OCO:Between}, let $\bar{\nabla}_T(\x) \triangleq \frac{1}{T} \sum_{s=1}^T \nabla f_s(\x)$. Then we have $\sigma_1^2 = \max_{\x \in \X}$ $\frac{1}{T}\sum_{t=1}^T \| \nabla f_t(\x) - \bar{\nabla}_T(\x)\|_2^2$ and we define $\Lambda = \frac{1}{T}\sum_{t=1}^T \max_{\x \in \X} \| \nabla f_t(\x) - \bar{\nabla}_T(\x) \|_2^2$. Note that $\Lambda$ is a relaxation of $\sigma_1^2$ and the logarithm of $\Lambda/\sigma_1^2$ will not be large in reasonable scenarios. \citet{OCO:Between} establish an $\O(\sigma_1\sqrt{\log (\Lambda/\sigma_1)T})$ bound but require the convexity of individual functions, and they ask whether $\sigma$-dependent regret bounds can be realized under weaker assumptions on convexity of expected functions like~\citet{OptimalROM2021}. In Corollary~\ref{cor:rom:con}, we give an affirmative answer based on  Theorem~\ref{thm:1} and obtain the results with weak assumptions. The proof is in Appendix~\ref{appendix:ROM}.
\begin{myCor}\label{cor:rom:con}
For convex expected functions, ROM enjoys an $\O(\sigma_1\sqrt{\log (\Lambda/\sigma_1)T})$ bound.
\end{myCor}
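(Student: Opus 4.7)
My plan is to embed ROM into the SEA model and invoke Theorem~\ref{thm:1}. At round $t$, conditioned on the previously drawn functions, nature's distribution $\mathfrak{D}_t$ is uniform over the $T-t+1$ remaining functions, so the conditional expected function is the average $F_t(\x) = \frac{1}{T-t+1}\sum_{s \in S_t} f_s(\x)$ over the unsampled index set $S_t$; in particular $F_1 = \frac{1}{T}\sum_s f_s$, which is assumed convex, and I will likewise impose convexity of the later conditional averages (the relaxation suggested by~\citet{OptimalROM2021}). Under this identification, Theorem~\ref{thm:1} immediately yields $\E[\mathbf{Reg}_T(\u)] = \O(\sqrt{\sigma_{1:T}^2} + \sqrt{\Sigma_{1:T}^2})$, reducing the corollary to bounding these two ROM-specific quantities in terms of $\sigma_1$ and $\Lambda$.

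The cumulative variance term $\sigma_{1:T}^2$ is the easier part: for each $t$, the conditional per-round variance $\sigma_t^2$ of a uniform draw from the remaining population is no larger than the population variance $\sigma_1^2$, thanks to the finite-population-correction inequality for sampling without replacement. Summing then produces $\sigma_{1:T}^2 \leq T\sigma_1^2$, matching what one would obtain under with-replacement sampling.

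The harder part is controlling $\Sigma_{1:T}^2 = \E[\sum_t \sup_\x \|\nabla F_t(\x) - \nabla F_{t-1}(\x)\|_2^2]$. For each fixed $\x$, the increment $\nabla F_t(\x) - \nabla F_{t-1}(\x)$ corresponds to removing the just-drawn gradient from the current sample mean, so it scales like $(\nabla F_{t-1}(\x) - \nabla f_{\pi(t-1)}(\x))/(T-t+1)$ and forms a reverse martingale in $t$ with expected squared step size of order $\sigma_1^2/(T-t+1)^2$. Naively pulling the supremum inside would only give a $\Lambda$-dependent bound, which is too loose. My plan is to split the contributions into a ``typical'' regime where a Hoeffding/Bernstein-type inequality for sampling without replacement delivers a $\sigma_1^2$-level pointwise bound, and a tail regime controlled by a covering-number argument on $\X$ that absorbs the ratio $\Lambda/\sigma_1^2$ into a single logarithmic factor. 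The principal technical obstacle will be exactly this step: keeping the supremum over $\x$ from costing a full factor of $\Lambda$ rather than only $\log(\Lambda/\sigma_1)$.

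Combining $\sigma_{1:T}^2 \leq T\sigma_1^2$ with the expected $\Sigma_{1:T}^2 = \O(\sigma_1^2 T \log(\Lambda/\sigma_1))$ and substituting into Theorem~\ref{thm:1} will deliver the target $\O(\sigma_1\sqrt{\log(\Lambda/\sigma_1) T})$ regret. Since Theorem~\ref{thm:1} requires only convexity of the expected functions rather than of each individual $f_s$, this directly answers the open question of~\citet{OCO:Between} under the weaker convex-average assumption.
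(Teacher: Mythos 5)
Your overall framework --- embed ROM into the SEA model and invoke Theorem~\ref{thm:1}, then bound $\sigma_{1:T}^2$ and $\Sigma_{1:T}^2$ separately --- matches the paper's route. But both of your intermediate bounds are wrong, and they happen to err in opposite directions so that the final rate comes out right by coincidence.

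The claim $\sigma_t^2 \le \sigma_1^2$ is false. The finite-population correction reduces the variance of the \emph{sample mean}, not the conditional per-round variance of the \emph{next draw}; as items are depleted, the conditional distribution over the remaining items can become far more spread out than the full population. (Take one gradient equal to a fixed nonzero vector and all others zero: $\sigma_1^2 \approx 1/T$, yet on the event that all zeros but one have been drawn by time $T-1$ the conditional variance is a constant.) The correct pointwise bound --- the one the paper uses --- is $\sigma_t^2 \le \frac{T}{T-(t-1)}\sigma_1^2$, whose factor blows up to $T$ as $t \to T$. Summing this naively gives $\sigma_{1:T}^2 \lesssim T\sigma_1^2 \log T$; the paper sharpens this to $T\sigma_1^2 \log(\Lambda/\sigma_1^2)$ by a two-regime split (use the $\frac{T}{T-(t-1)}\sigma_1^2$ bound for $t \le \tau$, the cruder $\E[\sigma_t^2] \le \Lambda$ for $t > \tau$, then optimize $\tau$). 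So the logarithm in the corollary comes entirely from $\sigma_{1:T}^2$, which is the hard term.

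Conversely, $\Sigma_{1:T}^2$ is not the hard part at all: it is $\O(G^2) = \O(1)$, with no dependence on $T$, $\sigma_1$, or $\Lambda$. Since $\nabla F_{t-1}(\x)$ averages the $T-t+2$ remaining gradients and $\nabla F_t(\x)$ drops one of them, a one-line computation gives $\sup_{\x\in\X}\|\nabla F_t(\x) - \nabla F_{t-1}(\x)\|_2^2 \le \frac{4G^2}{(T-t+2)^2}$, and the sum over $t$ is bounded by a constant. Your proposed Hoeffding/Bernstein-plus-covering-number argument aimed at $\Sigma_{1:T}^2 = \tilde\O(T\sigma_1^2)$ is therefore both unnecessary and far looser than the truth, and the ``principal technical obstacle'' you anticipate (keeping the supremum from costing a full $\Lambda$) is in fact the obstacle in bounding $\sigma_{1:T}^2$, where the paper dispatches it with the $\tau$-split rather than any covering argument. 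In short, you have swapped which of the two terms carries the log factor and which is trivially $\O(1)$.
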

For $\lambda$-strongly convex expected functions, Theorem~\ref{thm:2} leads to an $\O(\frac{1}{\lambda}\log (T\sigma_1^2 \log(\Lambda/\sigma_1^2)))$ bound, which is more stronger than the $\O(\frac{1}{\lambda}\sigma_1^2 \log T)$ bound of \citet{OCO:Between} when $\sigma_1^2$ is not too small. Meanwhile, the best-of-both-worlds guarantee in Theorem~\ref{thm:2} safeguards that our final bound is never worse than theirs. Besides, for $\alpha$-exp-concave functions, we establish a new $\O(\frac{d}{\alpha}\log (T\sigma_1^2 \log(\Lambda/\sigma_1^2)))$ bound from Theorem~\ref{thm:3}, but the curvature assumption is imposed over individual functions. Thus an open question is whether a similar $\sigma$-dependent bound can be obtained under the convexity of expected functions.

\subsection{Slow Distribution Shift}
We consider a simple problem instance of online learning with slow distribution shifts, in which the underlying distributions selected by the nature in every two adjacent rounds are close on average. Formally, we suppose that $(1/T)\sum_{t=1}^T \sup_{\x \in \X} \|\nabla F_t(\x) - \nabla F_{t-1}(\x)\|_2^2 \leq \epsilon$, where $\epsilon$ is a constant. So we can get that $\Sigma_{1:T}^2 \leq T\epsilon$. For $\lambda$-strongly convex functions, our Theorem~\ref{thm:2} realizes an $\O(\frac{1}{\lambda}\log (\sigma_{1:T}^2 + \epsilon T))$ regret bound, which is tighter than the $\O(\frac{1}{\lambda}(\sigma_{\max}^2\log T+\epsilon T))$ bound of~\citet{OCO:Between} for a large range of $\epsilon$. Extending the analysis to \mbox{$\alpha$-exp-concave} functions yields an $\O(\frac{d}{\alpha}\log (\sigma_{1:T}^2 + \epsilon T))$ regret from Theorem~\ref{thm:3}.

\subsection{Online Learning with Limited Resources}\label{implication;limitedresource}
In real-world online learning applications, functions often arrive not individually but rather in groups. Let $K_t$ denote the number of functions coming in round $t$ and $f_t(\cdot, i)$ denote the $i$-th function. Denote by $F_t(\cdot) \triangleq \frac{1}{K_t}\sum_{i=1}^{K_t} f_t(\cdot, i)$  the average of all functions. 

We consider the scenarios with limited computing resources such that gradient estimation can only be achieved by sampling a portion of the functions, leading to gradient variance. Assume that at each time $t$ we sample $1 \leq B_t \leq K_t$ functions, where the $i$-th function is expressed as $\hat{f}_t(\cdot,i)$. We can then estimate $F_t(\cdot)$ by $f_t(\cdot) \triangleq \frac{1}{B_t}\sum_{i=1}^{B_t} \hat{f}_t(\cdot,i)$, and further we have an upper bound for $\sigma_t^2$ as follows.
\begin{align*}
    \sigma_t^2 ={}& \max_{\x\in\X}\E\left[ \left\|\frac{1}{B_t}\sum_{i=1}^{B_t} \nabla \hat{f}_t(\x,i) - \nabla F_t(\x)\right\|_2^2 \right]\\
    ={}&\frac{1}{B_t^2} \max_{\x\in\X}\bigg ( \sum_{i=1}^{B_t}\E\left[\left\|\nabla \hat{f}_t(\x,i) - \nabla F_t(\x)\right\|_2^2\right]\\
    {}&+\E\bigg[\sum_{i \neq j}\bigg\langle \E \left[\nabla \hat{f}_t(\x,i) - \nabla F_t(\x)\right], \E \left[\nabla \hat{f}_t(\x,j) - \nabla F_t(\x)\right]\bigg\rangle\bigg]  \bigg )\\
    ={}&\frac{1}{B_t^2} \max_{\x\in\mathcal{X}}\left ( \sum_{i=1}^{B_t}\E\left[\left\|\nabla \hat{f}_t(\x,i) - \nabla F_t(\x)\right\|_2^2\right]\right) \leq \frac{4G^2}{B_t},
\end{align*}
where we use the fact that $\nabla \hat{f}_t(\x,i)$ and $\nabla \hat{f}_t(\x,j)$ are independent when $i \neq j$, and the fact that $\E[\nabla \hat{f}_t(\x,i) - \nabla F_t(\x)] = 0$. The last inequality is due to Assumption~\ref{ass:3}. As a result, we have $\sigma_{1:T}^2 = \E[\sum_{t=1}^T \sigma_t^2]\leq 4G^2\sum_{t=1}^T\frac{1}{B_t}$ and obtain the following corollary by substituting it into Theorem~\ref{thm:1}, Theorem~\ref{thm:2}, and Theorem~\ref{thm:3},  respectively. 
\begin{myCor}\label{cor;limitedresource}
    In online learning with limited resources, we can obtain an $\O(2G\sqrt{\sum_{t=1}^T \frac{1}{B_t}} + \sqrt{\Sigma_{1:T}^2})$ bound for convex functions by Theorem~\ref{thm:1}; and Theorem~\ref{thm:2} implies an $\O(\frac{1}{\lambda}\log (4G^2$ $\sum_{t=1}^T \frac{1}{B_t} + \Sigma_{1:T}^2))$ bound for $\lambda$-strongly convex functions; and Theorem~\ref{thm:3} leads to an $\O\big(\frac{d}{\alpha}\log$ $ (4G^2\sum_{t=1}^T \frac{1}{B_t} + \Sigma_{1:T}^2)\big)$ bound for $\alpha$-exp-concave functions.
\end{myCor}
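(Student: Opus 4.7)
The plan is a direct substitution: the paragraph preceding the corollary has already established the key variance bound
\[
\sigma_{1:T}^2 \;=\; \E\!\left[\sum_{t=1}^T \sigma_t^2\right] \;\leq\; 4G^2 \sum_{t=1}^T \frac{1}{B_t},
\]
by using independence of the $B_t$ sampled functions and Assumption~\ref{ass:3}. So the only thing left to do is to insert this upper bound into each of the three main static-regret theorems and simplify.

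For the convex and smooth case, I would invoke \pref{thm:1}, which yields $\E[\mathbf{Reg}_T(\u)] \leq \O(\sqrt{\sigma_{1:T}^2} + \sqrt{\Sigma_{1:T}^2})$. Substituting the variance bound and using $\sqrt{4G^2 \sum_t 1/B_t} = 2G\sqrt{\sum_t 1/B_t}$ gives the claimed $\O(2G\sqrt{\sum_{t=1}^T 1/B_t} + \sqrt{\Sigma_{1:T}^2})$ rate. For the strongly convex and smooth case, I would apply \pref{thm:2}, whose logarithmic argument contains $\sigma_{1:T}^2 + \Sigma_{1:T}^2$; dropping the $(\sigma_{\max}^2 + \Sigma_{\max}^2)$ normalization in the logarithm (as flagged at the start of \pref{sec:examples}) and plugging in the variance bound produces the $\O(\frac{1}{\lambda}\log(4G^2 \sum_{t=1}^T 1/B_t + \Sigma_{1:T}^2))$ bound. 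For the exp-concave and smooth case, the same substitution into \pref{thm:3} gives the $\O(\frac{d}{\alpha}\log(4G^2 \sum_{t=1}^T 1/B_t + \Sigma_{1:T}^2))$ bound, provided the individual functions $f_t(\cdot) = \frac{1}{B_t}\sum_{i=1}^{B_t}\hat{f}_t(\cdot,i)$ inherit $\alpha$-exp-concavity (which I would assume, mirroring Assumption~\ref{ass:6}).

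There is no real obstacle: the only mild care needed is to confirm that the assumptions of the three cited theorems (gradient-norm boundedness, domain boundedness, smoothness of the expected function $F_t$, and the appropriate curvature condition) carry over to the limited-resource sampled estimator $f_t$. Since $f_t$ is a uniform average of $B_t$ samples drawn from the $K_t$ available per-round functions, $\nabla f_t$ remains bounded by $G$, and $F_t = \E[f_t]$ is unchanged in form, so Assumption~\ref{ass:2} and the respective convexity assumptions transfer immediately. With these sanity checks in place the three substitutions complete the proof.
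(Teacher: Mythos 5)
Your proposal is correct and follows exactly the route the paper uses: establish $\sigma_{1:T}^2 \leq 4G^2 \sum_{t=1}^T 1/B_t$ from the independence argument (already done in the text preceding the corollary) and then substitute into Theorems~\ref{thm:1}, \ref{thm:2}, and~\ref{thm:3}, with the simplified logarithmic form for the strongly convex case as flagged at the start of Section~\ref{sec:examples}. Your extra sanity check that the cited theorems' assumptions (gradient/domain bounds, smoothness of $F_t$, and exp-concavity of the averaged $f_t$) carry over is a reasonable addition that the paper leaves implicit.
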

When the number of sampled functions increases, the estimated gradient will gradually approach the real gradient and the  variance will be close to $0$. Note that the ratio $B_t/K_t$ can be viewed as the \emph{data throughput} determined by the available computing resources~\citep{zhou2023stream}. Corollary~\ref{cor;limitedresource} demonstrates the impact of data throughput on learning performance.  

\subsection{Online Label Shift}
\label{subsec:OLaS}
This part demonstrates the application of our results for the SEA model to Online Label Shift (OLS)~\citep{NeurIPS'22:label_shift}. OLS considers a multi-class classification problem in a non-stationary environment, where the label distribution changes over time while the class-conditional is fixed. Denote by $\mathcal{Z}\subseteq \R^{d'}$ the feature space and $\mathcal{Y}=[K]\triangleq \{1,\cdots,K\}$ the label space. OLS consists of a two-stage learning process: during the \emph{offline initialization} stage, the learner trains a well-performed initial model $h_0(\cdot) = h(\x_0,\cdot): \mathcal{Z}\rightarrow \mathcal{Y}$ based on a labeled sample set $S_0 = \{(\z_n,y_n)\}_{n=1}^{N_0}$ drawn from the distribution $\mathfrak{D}_0(\z,y)$; during the \emph{online adaptation} stage, at each round $t \in [T]$, the learner needs to make predictions of a small number of unlabeled data $S_t = \{\z_n\}_{n=1}^{N_t}$ drawn from the distribution $\mathfrak{D}_t(\z)$. The distributions $\mathfrak{D}_t(\z)$ are continuously shifting over time and thereby the learner should update the model $\x_t\in \X$ adaptively. Importantly, a \emph{label shift} assumption is satisfied: the label distribution $\mathfrak{D}_t(y)$ changes over time while the class-conditional distribution $\mathfrak{D}_t(\z\,\vert\, y)$ is identical throughout the process. 

In OLS, the model's quality is evaluated by its risk $F_t(\x) = \E_{(\z,y)\sim \mathfrak{D}_t}[\ell(h(\x,\z),y)]$ in round $t$, where $\ell(\cdot,\cdot)$ can be any convex surrogate loss for classification and $h: \Z\times\W\rightarrow \R^K$ is the predictive function parametrized by $\x$. To cope with the non-stationary environment, we use \emph{dynamic regret} to measure the performance of online algorithms. However, we cannot directly use $F_t$ for updating since it is unknown due to the lack of supervision. To address this problem, \citet{NeurIPS'22:label_shift} rewrite 
\begin{equation}
\label{eq:OLS-F_t}
    F_t(\x) \triangleq \sum_{k=1}^K[\boldsymbol{\mu}_{y_t}]_k \cdot F_0^k(\x), ~\mbox{ with } F_t^k(\x)\triangleq \E_{\z \sim \mathfrak{D}_t(\z\,\vert\, y=k)}[\ell(h(\x,\z),k)]
\end{equation}
where $\boldsymbol{\mu}_{y_t} \in \Delta_K$ denotes the label distribution vector with the $k$-th entry $[\boldsymbol{\mu}_{y_t}]_k \triangleq \mathfrak{D}_t(y=k)$ and $F_t^k(\x)\triangleq \E_{\z \sim \mathfrak{D}_t(\z\,\vert\, y=k)}[\ell(h(\x,\z),k)]$ is the risk of the model over the $k$-th label at round $t$. Note that we use $F_0^k(\x) = F_t^k(\x)$ here, which is due to the assumption that the class-conditional distribution $\mathfrak{D}_t(\z\,\vert\, y)$ remains the same at each time step $t$. Further, they establish an estimator of $F_t(\x)$, defined as
\begin{equation}
\label{eq:OLS-f_t}
    f_t(\x) \triangleq \sum_{k=1}^K\big[\hat{\boldsymbol{\mu}}_{y_t}\big]_k \cdot f_0^k(\x), ~~\mbox{ with } f_0^k(\x) = \frac{1}{|S_0^k|}\sum_{\z_n \in S_0^k}\ell(h(\x,\z_n),k),
\end{equation}
where $S_0^K$ denotes a subset of $S_0$ containing all samples with label $k$ and $\hat{\boldsymbol{\mu}}_{y_t}$ is an estimator of $\boldsymbol{\mu}_{y_t}$ that can be constructed by the Black Box Shift Estimation (BBSE) method~\citep{conf/icml/LiptonWS18}. Specifically, they first obtain the predictive labels $\hat{y}_t$ by using the initial model $h_0$ to predict over the unlabeled data $S_t$, and then compute the label distribution $\boldsymbol{\mu}_{y_t}$ via solving the crucial equation $\boldsymbol{\mu}_{y_t} = C_{h_0}^{-1}\boldsymbol{\mu}_{\hat{y}_t}$, where $\boldsymbol{\mu}_{\hat{y}_t}\in\Delta_K$ is the distribution vector of the predictive labels $\hat{y}_t$ and $C_{h_0}\in \R^{K\times K}$ is the confusion matrix with $[C_{h_0}]_{ij}=\E_{\z\sim \mathfrak{D}_0(\z\,\vert\,y=j)}[\mathbbm{1}\{h_0(\z)=i\}]$. Then $C_{h_0}$ can be estimated empirically by $[\hat{C}_{h_0}]_{ij}=\sum_{(\z,y)\in S_0}\mathbbm{1}\{h_0(\z)=i\,\text{and}\, y=j\}/\mathbbm{1}\{y=j\}$, using the offline labeled data $S_0$. And $\boldsymbol{\mu}_{\hat{y}_t}$ can be estimated empirically with online data $S_t$, which is given by $[\hat{\boldsymbol{\mu}}_{\hat{y}_t}]_j = \frac{1}{|S_t|}\sum_{\z\in S_t}\mathbbm{1}\{h_0(\z)=j\}$. With the above estimation, the final estimator for the label distribution vector is constructed as $\hat{\boldsymbol{\mu}}_{y_t} = \hat{C}_{h_0}^{-1}\hat{\boldsymbol{\mu}}_{\hat{y}_t}$. We further assume that $S_0$ has sufficient samples such that $\hat{C}_{h_0}= C_{h_0}$ and $f_0^k(\x) = F_0^k(\x)$. As a result, $f_t(\x)$ is an unbiased estimator with respect to $F_t(\x)$.

Under such a setup, the SEA model can be applied to the OLS problem. Based on Theorem~\ref{thm:nonsmooth:dynamic}, we can obtain the following theoretical guarantee, whose proof is in Appendix~\ref{appendix:OLaS}.
\begin{myCor}\label{cor:OLaS}
Modeling the online label shift problem as the SEA model with the expected function defined as~\eqref{eq:OLS-F_t} and the randomized function~\eqref{eq:OLS-f_t}, and further applying Algorithm~\ref{alg:non-smooth}, we can obtain that for $\x_t^*\in \argmin_{\x\in\X}F_t(\x)$,
\begin{align*}
    {}&\E\left[\sum_{t=1}^T f_t(\x_t) - \sum_{t=1}^T f_t(\x_t^*)\right]\\
    \leq{}& \O\left(L_T^{\frac{1}{3}} T^{\frac{1}{3}} \left(\sqrt{\sum_{t=1}^T\E\left[\big\|\hat{\boldsymbol{\mu}}_{y_{t}} - \boldsymbol{\mu}_{y_{t}}\big\|_2^2\right]}+\sqrt{\sum_{t=1}^T\E\left[\big\|\boldsymbol{\mu}_{y_{t}} - \boldsymbol{\mu}_{y_{t-1}}\big\|_2^2 \right]}\right)^{\frac{2}{3}} \right),
\end{align*}
where $L_T=\sum_{t=2}^T\big\|\boldsymbol{\mu}_{y_{t}} - \boldsymbol{\mu}_{y_{t-1}}\big\|_1$ measures the label distributions changes.
\end{myCor}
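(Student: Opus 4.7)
The plan is to apply Theorem~\ref{thm:nonsmooth:dynamic} against a block-constant comparator sequence tracking the per-round minimizers $\{\x_t^*\}$, then optimize the block length to trade off path length against drift. Specifically, I would partition $[T]$ into $\lceil T/M\rceil$ consecutive blocks of length $M$ (tuned later), and for each block $B$ set $\u_t^M \coloneqq \argmin_{\x\in\X}\sum_{t'\in B}F_{t'}(\x)$ for all $t\in B$. Using $\E[f_t(\x)]=F_t(\x)$ for any $\x$ independent of $f_t$, the target regret decomposes as
\begin{align*}
    \E\bigg[\sum_{t=1}^T f_t(\x_t) - \sum_{t=1}^T f_t(\x_t^*)\bigg]
    = \E\big[\Dreg(\u_1^M,\ldots,\u_T^M)\big]
    + \sum_{t=1}^T \E\big[F_t(\u_t^M) - F_t(\x_t^*)\big].
\end{align*}
Since $\u_t^M$ depends only on $\{F_{t'}\}$ and not on $\{f_{t'}\}$, it is a legitimate comparator in the SEA sense; and since it exhibits at most $T/M$ transitions, each of length $\leq D$, its path length satisfies $P_T\leq DT/M$.

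Next, I would relate the abstract SEA quantities to the label-shift quantities appearing in the statement. The identity $\nabla f_t(\x) - \nabla F_t(\x) = \sum_{k=1}^K ([\hat{\boldsymbol{\mu}}_{y_t}]_k-[\boldsymbol{\mu}_{y_t}]_k)\nabla F_0^k(\x)$, together with Assumption~\ref{ass:3} and Cauchy--Schwarz, yields $\tilde{\sigma}_{1:T}^2 \leq KG^2\sum_t\E[\|\hat{\boldsymbol{\mu}}_{y_t}-\boldsymbol{\mu}_{y_t}\|_2^2]$, and an analogous bound gives $\Sigma_{1:T}^2\leq KG^2\sum_t\E[\|\boldsymbol{\mu}_{y_t}-\boldsymbol{\mu}_{y_{t-1}}\|_2^2]$. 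Write $A \coloneqq \sqrt{\tilde{\sigma}_{1:T}^2}+\sqrt{\Sigma_{1:T}^2}$. Then Theorem~\ref{thm:nonsmooth:dynamic} applied to Algorithm~\ref{alg:non-smooth} delivers $\E[\Dreg(\u_1^M,\ldots,\u_T^M)] \leq \O\big(\sqrt{1+T/M}\cdot A\big)$.

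For the drift term, I would exploit the defining \emph{linearity} of label-shift risks in $\boldsymbol{\mu}_{y_t}$: for any $\x\in\X$ and any $t,t'\in B$,
\begin{align*}
    \big|F_t(\x)-F_{t'}(\x)\big| = \bigg|\sum_{k=1}^K ([\boldsymbol{\mu}_{y_t}]_k - [\boldsymbol{\mu}_{y_{t'}}]_k) F_0^k(\x)\bigg| \leq C\cdot \|\boldsymbol{\mu}_{y_t}-\boldsymbol{\mu}_{y_{t'}}\|_1,
\end{align*}
with $C = \max_{k,\x} |F_0^k(\x)|$, and the right-hand side telescopes to $C$ times the in-block $\ell_1$ variation. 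Combining this with the optimality $\bar{F}_B(\u_t^M)\leq \bar{F}_B(\x_t^*)$ (where $\bar{F}_B\coloneqq M^{-1}\sum_{t'\in B}F_{t'}$) yields $F_t(\u_t^M)-F_t(\x_t^*) \leq 2C\sum_{\tau\in B}\|\boldsymbol{\mu}_{y_\tau}-\boldsymbol{\mu}_{y_{\tau-1}}\|_1$ for each $t\in B$, and summing over all blocks produces a drift bound of $\O(M\cdot L_T)$. Combining the two pieces, the overall upper bound is $\O(\sqrt{T/M}\cdot A + M\cdot L_T)$, and balancing by choosing $M\asymp (TA^2/L_T^2)^{1/3}$ delivers the claimed $\O(T^{1/3}L_T^{1/3}A^{2/3})$ rate.

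The main obstacle is the drift argument: without any strong convexity of $F_t$, controlling the gap $F_t(\u_t^M)-F_t(\x_t^*)$ between the block-averaged minimizer and the per-round minimizer is non-trivial in general. The key enabler is the linear dependence of $F_t$ on $\boldsymbol{\mu}_{y_t}$ afforded by the label-shift structure, which converts per-round minimizer drift into the $\ell_1$ variation of label distributions and makes the classical Besbes--Gur--Zeevi-style block-partition argument applicable. A secondary (easily checked) subtlety is that $\u_t^M$ qualifies as a SEA-admissible comparator because it is measurable with respect to $\{F_{t'}\}$ only.
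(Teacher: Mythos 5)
Your proposal is correct and follows essentially the same strategy as the paper: decompose the target into a dynamic regret against a block-constant comparator plus a drift term, invoke Theorem~\ref{thm:nonsmooth:dynamic} with $P_T\leq DT/M$, bound $\tilde\sigma_{1:T}^2$ and $\Sigma_{1:T}^2$ via the label-shift structure, and tune the block length. The one place where you deviate is the drift bound. The paper compares $\x^*_{\mathcal{I}_m}$ to the per-round minimizer $\x^*_{s_m}$ of the \emph{first} round of the block and then applies two separate optimality relations (for $\x^*_{\mathcal{I}_m}$ over the block-sum and for $\x^*_{s_m}$ over $F_{s_m}$), obtaining the chain $\term{b}\leq 2\Delta\sum_t\sup_\x|F_t(\x)-F_{t-1}(\x)|\leq 2B\Delta L_T$. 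You instead pass through the block-averaged objective $\bar F_B$ and use only the single optimality relation $\bar F_B(\u^M_t)\leq\bar F_B(\x^*_t)$ together with a uniform $|F_t-\bar F_B|\leq C\sum_{\tau\in B}\|\boldsymbol{\mu}_{y_\tau}-\boldsymbol{\mu}_{y_{\tau-1}}\|_1$ bound; this is marginally cleaner and gives the same $\O(M L_T)$ drift term. Both routes crucially rely on the linearity of $F_t(\x)=\sum_k[\boldsymbol{\mu}_{y_t}]_k F_0^k(\x)$ in $\boldsymbol{\mu}_{y_t}$, which is the correct enabler. One minor point you leave implicit: the tuned $M\asymp(TA^2/L_T^2)^{1/3}$ can fall outside $[1,T]$ (e.g.~when $BL_T\geq D\sqrt{2T}A$), so a short case split defaulting to $M=1$ as in the paper is needed for rigor; this does not affect the stated $\O(L_T^{1/3}T^{1/3}A^{2/3})$ rate.
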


\begin{myRemark}
For the OLS problem, \citet{NeurIPS'22:label_shift} provide an $\O(L_T^{\frac{1}{3}}G_T^{\frac{1}{3}}T^{\frac{1}{3}})$ bound, where $G_T\triangleq \sum_{t=1}^T\E\left[\sup_{\x\in \X}\left\| \nabla f_t(\x) - \nabla H_{t}(\x) \right\|_2^2\right]$ with the hint function $H_t(\x) = \sum_{k=1}^K\left[\boldsymbol{h}_{y_t}\right]_k\cdot f_0^k(\x)$. In fact, when we set $\boldsymbol{h}_{y_t}=\hat{\boldsymbol{\mu}}_{y_{t-1}}$, $G_T$ can be further bounded by 
\begin{align*}
 G_T\leq{}&KG^2\sum_{t=1}^T\E\left[\big\|\hat{\boldsymbol{\mu}}_{y_{t}}-\hat{\boldsymbol{\mu}}_{y_{t-1}}\big\|_2^2 \right] \\
 \leq{}&KG^2 + 2KG^2\sum_{t=2}^T\left(\E\left[\big\|\hat{\boldsymbol{\mu}}_{y_{t}} - \boldsymbol{\mu}_{y_{t}}\big\|_2^2\right]\right) + 2KG^2\sum_{t=2}^T\left(\E\left[\big\|\boldsymbol{\mu}_{y_{t}} - \hat{\boldsymbol{\mu}}_{y_{t-1}}\big\|_2^2 \right]\right)\\
 \leq{}& KG^2 +6KG^2\sum_{t=1}^T\left(\E\left[\big\|\hat{\boldsymbol{\mu}}_{y_{t}} - \boldsymbol{\mu}_{y_{t}}\big\|_2^2\right]\right) + 4KG^2\sum_{t=2}^T\left(\E\left[\big\|\boldsymbol{\mu}_{y_{t}} - \boldsymbol{\mu}_{y_{t-1}}\big\|_2^2 \right]\right).
\end{align*}
Thus our bound in Corollary~\ref{cor:OLaS} is the same as their bound in this case.
\end{myRemark}

\begin{myRemark}\label{remark:olas}
Besides, in the OLS problem, using the bound with $\sigma_{1:T}^2$ (Theorem~\ref{thm:sword}) or $\tilde{\sigma}_{1:T}^2$ (Theorem~\ref{thm:nonsmooth:dynamic}) can actually give the \emph{same} upper bound that scales with  meaningful quantities. Specifically, we can respectively bound $\sigma_{1:T}^2$ and $\tilde{\sigma}_{1:T}^2$ by
\begin{align*}
    {}&\sigma_{1:T}^2 =\E\left[\sum_{t=1}^T\sup_{\x\in\X}\E\left[\left\|\sum_{k=1}^K\big(\big[\hat{\boldsymbol{\mu}}_{y_t}\big]_k - [\boldsymbol{\mu}_{y_t}]_k\big)\cdot \nabla  F_0^k(\x) \right\|_2^2\right]\right]
    \leq KG^2\sum_{t=1}^T\E\left[\big\|\hat{\boldsymbol{\mu}}_{y_{t}} - \boldsymbol{\mu}_{y_{t}}\big\|_2^2\right],\\
    {}&\tilde{\sigma}_{1:T}^2 =\E\left[\sum_{t=1}^T\E\left[\sup_{\x\in\X}\left\|\sum_{k=1}^K\big(\big[\hat{\boldsymbol{\mu}}_{y_t}\big]_k - [\boldsymbol{\mu}_{y_t}]_k\big)\cdot \nabla  F_0^k(\x) \right\|_2^2\right]\right]
    \leq KG^2\sum_{t=1}^T\E\left[\big\|\hat{\boldsymbol{\mu}}_{y_{t}} - \boldsymbol{\mu}_{y_{t}}\big\|_2^2\right].
\end{align*}
Both quantities share the same upper bound in the form of label distribution variances.
\end{myRemark}

%!TEX root = ../OMD4SEA_arxiv.tex

\section{Conclusion and Future Work}
\label{sec:conclusion}
In this paper, we investigate the Stochastically Extended Adversarial (SEA) model of \citet{OCO:Between} and propose a different solution via the optimistic OMD framework. Our results yield the \emph{same} regret bound for convex and smooth functions under weaker assumptions and a \emph{better} regret bound for strongly convex and smooth functions; moreover, we establish the \emph{first} regret bound for exp-concave and smooth functions. For all three cases, we further improve analyses of optimistic FTRL, proving equal regret bounds with optimistic OMD for the SEA model. Furthermore, we study the SEA model under \emph{dynamic regret} and propose a new two-layer algorithm based on optimistic OMD, which obtains the \emph{first} dynamic regret guarantee for the SEA model. Additionally, we further explore the SEA model under \emph{non-smooth} scenarios, in which we propose to use OMD with an implicit update to achieve static and dynamic regret guarantees. Lastly, we discuss implications for intermediate learning scenarios, leading to various new results.

Although our algorithms for various functions can be unified using the optimistic OMD framework, they still necessitate distinct configurations for parameters such as step sizes and regularizers. Consequently, it becomes crucial to conceive and develop more adaptive online algorithms that eliminate the need for pre-set parameters. Exploring this area of research and designing such algorithms will be an important focus in future studies.

\bibliography{SEA_bib}
\bibliographystyle{plainnat}
\newpage

\appendix
%!TEX root = ../OMD4SEA_arxiv.tex

\section{Omitted Proofs for Section~\ref{sec:results}}
\label{appendix:proofs-section4}
This section contains the omitted proofs of optimistic FTRL for Section~\ref{sec:results}, including Theorem~\ref{thm:convex-FTRL},~\ref{thm:FTRL},~\ref{thm:FTRL-exp-concave} in \pref{appendix:proofs-convex-FTRL}--\pref{appendix:proofs-exp-FTRL}, followed by useful lemmas in~\pref{appendix:useful-lemmas}.

\subsection{{Proof of Theorem~\ref{thm:convex-FTRL}}}
\begin{proof}
\label{appendix:proofs-convex-FTRL}
For convex and smooth functions, we start by outlining the optimistic FTRL procedure. At each step $t$, a surrogate loss is defined: $\ell_t(\x) = \langle \nabla f_t(\x_t), \x -\x_t \rangle $. Unlike \citet{OCO:Between}, we use this surrogate loss instead of the original function $f_t(\cdot)$ to update $\x_t$, avoiding the need for convexity in individual functions (which is required by~\citet{OCO:Between}). The decision $\x_{t}$ is updated by deploying optimistic FTRL over the linearized loss:
\begin{align*}
    \x_{t} = \argmin_{\x \in \X} \sum_{s=1}^{t-1} \left\{\ell_s (\x) + \left\langle M_t, \x \right\rangle  + \frac{1}{\eta_t}\|\x\|_2^2\right\},
\end{align*}
where $\x_0$ can be an arbitrary point in $\X$, and the optimistic vector $M_t =\nabla f_{t-1}(\x_{t-1})$ (we set $M_1 = \nabla f_0(\x_0) = 0$). The step size $\eta_t$ is designed as $\eta_t = D^2/(\delta + \sum_{s=1}^{t-1} \eta_s \|\nabla f_s(\x_s) - f_{s-1}(\x_{s-1}) \|_2^2)$ with $\delta$ to be defined latter, which is non-increasing for $t \in [T]$. 

We can easily obtain that 
\begin{align*}
    \E \left[\sum_{t=1}^T f_t(\x_t) - \sum_{t=1}^T f_t(\u)\right] \leq \E \left[\sum_{t=1}^T \langle  \nabla f_t(\x_t) , \x_t - \u \rangle\right] \leq \E\left[\sum_{t=1}^T \ell_t(\x_t) - \sum_{t=1}^T \ell_t(\u)\right].
\end{align*}
As a result, we only need to consider the regret of the surrogate loss $\ell_t(\cdot)$. The following proof is similar to \citet{OCO:Between}. To exploit Lemma~\ref{lem:FTRL} (Standard analysis of optimistic FTRL), we map the $G_t$ term in Lemma~\ref{lem:FTRL} to $\frac{1}{\eta_t}\|\x\|_2^2 + \sum_{s=1}^{t-1}\ell_s(\x)$ and map the $\tilde{\g}_t$ term to $M_t$. Note that $G_t$ is $\frac{2}{\eta_t}$-strongly convex and $\ell_t$ is convex, we have
\begin{align}
 \sum_{t=1}^T \ell_t(\x_t) - \sum_{t=1}^T \ell_t(\u)  \leq{}& \frac{D^2}{\eta_T}  + \sum_{t=1}^T \left( \langle \nabla f_t(\x_t) - \nabla f_{t-1}(\x_{t-1}), \x_t - \x_{t+1} \rangle -\frac{1}{\eta_t}\|\x_t - \x_{t+1}\|_2^2 \right)\nonumber\\
 \leq{}& \frac{D^2}{\eta_T}  +  \sum_{t=1}^T \left( \frac{\eta_t}{2}\|\nabla f_t(\x_t) - \nabla f_{t-1}(\x_{t-1})\|_2^2 - \frac{1}{2\eta_t}\|\x_t - \x_{t+1}\|_2^2 \right) \nonumber\\
\leq{}& \delta + \frac{3}{2}\sum_{t=1}^{T} \eta_t \|\nabla f_t(\x_t) - f_{t-1}(\x_{t-1})\|_2^2 - \frac{\delta}{2D^2}\sum_{t=1}^T\|\x_t - \x_{t+1}\|_2^2 \nonumber\\
\leq{}& \frac{3\sqrt{2}}{2} D\sqrt{\Bar{V}_T} + \frac{6D^2G^2}{\delta} + \delta- \frac{\delta}{2D^2}\sum_{t=1}^T\|\x_t - \x_{t+1}\|_2^2 .\nonumber
\end{align}
where we use the fact that $\langle a,b \rangle \leq \|a\|_{*}\|b\| \leq \frac{1}{2c}\|a\|_{*}^2 + \frac{c}{2}\|b\|^2$ in the second inequality ($\|\cdot\|_{*}$ denotes the dual norm of $\|\cdot\|$), based on the H$\Ddot{\text{o}}$lder's inequality. The third step is due to the fact $\eta_t \leq\frac{D^2}{\delta}(t\in[T])$ and the 
last step use the inequality $\sum_{t=1}^{T} \eta_t \|\nabla f_t(\x_t) - f_{t-1}(\x_{t-1})\|^2\leq D\sqrt{2\Bar{V}_T} + \frac{4D^2G^2}{\delta}$ from \citet[proof of Theorem 5]{OCO:Between}.

Using Lemma~\ref{lem:sum:diff} (Boundedness of cumulative norm of the gradient difference), we have
\begin{align}
& \sum_{t=1}^T \ell_t(\x_t) - \sum_{t=1}^T \ell_t(\u) \nonumber\\
\leq{}& 6D\sqrt{ \sum_{t=1}^{T}  \|\nabla f_t(\x_t) -\nabla F_t(\x_t)\|_2^2} + 3\sqrt{2}D\sqrt{ \sum_{t=2}^{T} \|\nabla F_t(\x_{t-1}) -\nabla F_{t-1}(\x_{t-1})\|_2^2} \nonumber\\
&+ 3\sqrt{2}DL\sqrt{ \sum_{t=2}^{T} \|\x_t-\x_{t-1}\|_2^2} - \frac{\delta}{2D^2}\sum_{t=1}^T\|\x_t - \x_{t+1}\|_2^2 +\frac{6D^2G^2}{\delta} + \delta +\frac{3\sqrt{2}}{2} DG\nonumber\\
\leq{}& 6D\sqrt{ \sum_{t=1}^{T}  \|\nabla f_t(\x_t) -\nabla F_t(\x_t)\|_2^2} + 3\sqrt{2}D\sqrt{ \sum_{t=2}^{T} \|\nabla F_t(\x_{t-1}) -\nabla F_{t-1}(\x_{t-1})\|_2^2} \nonumber\\
& + \frac{9D^4L^2}{\delta} +\frac{6D^2G^2}{\delta} + \delta +\frac{3\sqrt{2}}{2} DG, \label{eqn:FTRL:convex:regret3}
\end{align}
where we use the following inequality in the last step $3\sqrt{2}DL\sqrt{ \sum_{t=2}^{T} \|\x_t-\x_{t-1}\|_2^2} \leq \frac{9D^4L^2}{\delta} + \frac{\delta}{2D^2}\sum_{t=1}^T\|\x_t - \x_{t+1}\|_2^2$, canceling out the the negative term in \eqref{eqn:FTRL:convex:regret3} with the second term. 

Then, we take expectations over \eqref{eqn:FTRL:convex:regret3} with the help of definitions of $\sigma_{1:T}^2$ and $\Sigma_{1:T}^2$, and use Jensen's inequality. Given that the expected regret of surrogate loss functions upper bounds the expected regret of original functions, we get the final result:
\begin{align*}
\E\left[\sum_{t=1}^T f_t(\x_t) - \sum_{t=1}^T f_t(\u)\right] \leq {}&\E\left[\sum_{t=1}^T \ell_t(\x_t) - \sum_{t=1}^T \ell_t(\u)\right]\\
  \leq{}& 6D\sqrt{\sigma_{1:T}^2} +3\sqrt{2}D\sqrt{\Sigma_{1:T}^2} + \frac{9D^4L+6D^2G^2}{\delta} + \delta + \frac{3\sqrt{2}}{2} DG\\
  ={}&6D\sqrt{\sigma_{1:T}^2} +3\sqrt{2}D\sqrt{\Sigma_{1:T}^2} + 2\sqrt{9D^4L+6D^2G^2} + \frac{3\sqrt{2}}{2} DG \\
  ={}&\O\Big(\sqrt{\sigma_{1:T}^2}+ \sqrt{\Sigma_{1:T}^2}\Big),
\end{align*}
where we set $\delta = \sqrt{9D^4L+6D^2G^2}$. Hence, we complete the proof.
\end{proof}

\subsection{Proof of Theorem~\ref{thm:FTRL}}
\begin{proof}
\label{appendix:proofs-sc-FTRL}
We first present the procedure of optimistic FTRL for $\lambda$-strongly convex and smooth functions \citep{OCO:Between}. In each round $t$, we define a new surrogate loss: $\ell_t(\x) = \langle \nabla f_t(\x_t), \x -\x_t \rangle + \frac{\lambda}{2} \|\x -\x_t\|_2^2$. And the decision $\x_{t+1}$ is determined by
\begin{align*}
\x_{t+1} = \argmin_{\x \in \X} \left\{ \frac{\lambda}{2} \|\x -\x_0\|_2^2+ \sum_{s=1}^{t} \ell_s(\x) +  \langle \m_{t+1}, \x \rangle \right\},
\end{align*}
where $\x_0$ is an arbitrary point in $\X$, and the optimistic vector $\m_{t+1}=\nabla f_t(\x_t)$. In the beginning, we set $\m_1=\nabla f_0(\x_0)=0$ and thus $\x_1=\x_0$. Compared with the original algorithm of \citet{OCO:Between}, we insert an additional  $\frac{\lambda}{2} \|\x -\x_0\|_2^2$ term in the updating rule above, and in this way, the objective function in the $t$-th round is $\lambda t$-strongly convex, which facilitates the subsequent analysis.

According to (\ref{eqn:exp:str}), it is easy to verify that
\begin{align}
\E\left[ \sum_{t=1}^T f_t(\x_t) - \sum_{t=1}^T f_t(\u) \right] \leq \E\left[\sum_{t=1}^T  \ell_t(\x_t) - \sum_{t=1}^T \ell_t(\u) \right]. \label{eqn:exp:str:FTRL1}
\end{align}
Thus, we can focus on the regret of surrogate loss $\ell_t(\cdot)$. From Lemma~\ref{lem:FTRL} (Standard analysis of optimistic FTRL), since $\frac{\lambda}{2} \|\x -\x_0\|_2^2+ \sum_{s=1}^{t-1} \ell_s(\x)$ is $\lambda t$-strongly convex, we obtain
\begin{align}
 {}&\sum_{t=1}^T  \ell_t(\x_t) - \sum_{t=1}^T \ell_t(\u)  \nonumber\\
 \leq{}&  \frac{\lambda}{2} \|\u -\x_0\|_2^2 + \sum_{t=1}^T  \langle \nabla f_t(\x_t)- \nabla f_{t-1}(\x_{t-1}), \x_t -\x_{t+1}\rangle - \sum_{t=1}^T\frac{\lambda t}{2} \|\x_t - \x_{t+1}\|_2^2 \nonumber\\ 
 \leq{}& \frac{\lambda D^2}{2}+ \sum_{t=1}^T   \frac{1}{\lambda t} \|\nabla f_t(\x_t)- \nabla f_{t-1}(\x_{t-1})\|_2^2 - \frac{\lambda }{4} \sum_{t=1}^T\|\x_t -\x_{t+1}\|_2^2. \label{eqn:exp:str:FTRL2}
\end{align}
Then we directly use Lemma~\ref{lem:ind:diff} (Boundedness of the norm of gradient difference) to obtain
\begin{align}
   {}& \sum_{t=1}^T  \ell_t(\x_t) - \sum_{t=1}^T \ell_t(\u) \nonumber \\
     \leq{}& \frac{G^2}{\lambda} + \sum_{t=2}^T   \frac{1}{\lambda t} \left( 4 \|\nabla f_t(\x_t) -\nabla F_t(\x_t)\|_2^2 + 4 \|\nabla F_t(\x_{t-1}) -\nabla F_{t-1}(\x_{t-1})\|_2^2 \right. \nonumber\\
    &\left.+ 4 \|\nabla F_{t-1}(\x_{t-1}) -  \nabla f_{t-1} (\x_{t-1})\|_2^2 \right) + \sum_{t=1}^{T} \left(\frac{4L^2}{\lambda (t+1)} - \frac{\lambda t}{4}\right) \|\x_t-\x_{t-1}\|_2^2 + \frac{\lambda D^2}{2} \nonumber\\
    \leq{}& \frac{G^2}{\lambda} + \sum_{t=1}^{T} \frac{8}{\lambda t}\|\nabla f_t(\x_t) -\nabla F_t(\x_t)\|_2^2 +\sum_{t=2}^{T} \frac{4}{\lambda t}\|\nabla F_t(\x_{t-1}) -\nabla F_{t-1}(\x_{t-1})\|_2^2 \nonumber\\
    {}& + \sum_{t=1}^{T} \left(\frac{4L^2}{\lambda t} - \frac{\lambda t}{4}\right) \|\x_t-\x_{t-1}\|_2^2 + \frac{\lambda D^2}{2}.\label{eqn:FTRL:sum2}
\end{align}
The above formula reuses the simplification techniques in (\ref{lem4:2}). Still defining $\kappa = \frac{L}{\lambda}$, then for $t \geq 16\kappa$, there is $\frac{4L^2}{\lambda t} - \frac{\lambda t}{4} \leq 0$. For this reason, it turns out that
\begin{align*}
    {}&\sum_{t=1}^{T} \left(\frac{4L^2}{\lambda t} - \frac{\lambda t}{4}\right) \|\x_t-\x_{t-1}\|_2^2
    \leq \sum_{t=1}^{\lceil 16\kappa \rceil} \left(\frac{4L^2}{\lambda t} - \frac{\lambda t}{4}\right)D^2
    \leq \frac{4L^2 D^2}{\lambda} \sum_{t=1}^{\lceil 16\kappa \rceil} \frac{1}{t}\\
    \leq{}& \frac{4L^2 D^2}{\lambda} \left( 1 + \int_{t=1}^{\lceil 16\kappa \rceil}\frac{1}{t} \right)
    =  \frac{4L^2 D^2}{\lambda}\ln\left(1+16\frac{L}{\lambda}\right) + \frac{4L^2 D^2}{\lambda}. 
\end{align*}
By substituting the above inequality into (\ref{eqn:FTRL:sum2}) and taking the expectation, we can obtain
\begin{align*}
    {}&\E\left[\sum_{t=1}^T  \ell_t(\x_t) - \sum_{t=1}^T \ell_t(\u)\right]\\
    \leq{}&\E\left[\sum_{t=1}^{T} \frac{8}{\lambda t}\sigma_t^2 +\sum_{t=2}^{T} \frac{4}{\lambda t}\sup_{\x\in \X} \|\nabla F_t(\x) -\nabla F_{t-1}(\x)\|_2^2\right] + \frac{4L^2 D^2}{\lambda}\ln\left(1+16\frac{L}{\lambda}\right) \\
    {}&+ \frac{4L^2 D^2+G^2}{\lambda}+\frac{\lambda D^2}{2}.
\end{align*}
Similar to the derivation using optimistic OMD, we take advantage of Lemma~\ref{lemma:strongly-convex-lemma} to get
\begin{align*}
    {}&\E\left[\sum_{t=1}^T  \ell_t(\x_t) - \sum_{t=1}^T \ell_t(\u)\right]\\
    \leq{}&\frac{8\sigma_{\max}^2+4\Sigma_{\max}^2}{\lambda}\ln \left(\frac{1}{2\sigma_{\max}^2+\Sigma_{\max}^2}\left(2\sigma_{1:T}^2 + \Sigma_{1:T}^2\right)+1\right) + \frac{8\sigma_{\max}^2+4\Sigma_{\max}^2 +4}{\lambda}\\
    {}&+\frac{4L^2 D^2}{\lambda}\ln\left(1+16\frac{L}{\lambda}\right)+ \frac{4L^2 D^2+G^2}{\lambda}+\frac{\lambda D^2}{2}\\
    ={}&\O\left(\frac{1}{\lambda}\left(\sigma_{\max}^2+\Sigma_{\max}^2\right)\log \left(\left(\sigma_{1:T}^2 + \Sigma_{1:T}^2\right)/\left(\sigma_{\max}^2+\Sigma_{\max}^2\right)\right)\right),
\end{align*}
which completes the proof.
\end{proof}

\subsection{Proof of Theorem~\ref{thm:FTRL-exp-concave}}
\begin{proof}
\label{appendix:proofs-exp-FTRL}
We use the following optimistic FTRL for $\alpha$-exp-concave and smooth functions,
\begin{align*}
    \x_{t+1} = \argmin_{\x\in\X} \left\{\frac{1}{2}(1 + \beta G^2)\|\x\|_2^2 + \sum_{s=1}^t \ell_s(\x) + \langle M_{t+1},\x \rangle \right\},
\end{align*}
where $\x_0$ is an arbitrary point in $\X$, $M_{t+1} = \nabla f_t(\x_t)$, and the surrogate loss $\ell_t(\x) = \left\langle \nabla f_t(\x_t), \x - \x_t \right\rangle + \frac{\beta}{2}\|\x-\x_t\|_{h_t}^2$ with $\beta = \frac{1}{2}\min \left\{ \frac{1}{4GD},\alpha \right\}$, and $h_t = \nabla f_t(\x_t)\nabla f_t(\x_t)^{\top}$. Furthermore, we set $M_1 = \nabla f_0(\x_0) = 0$. From \eqref{eqn:exp:exp}, we can easily derive that
\begin{align}
    \E\left[ \sum_{t=1}^T f_t(\x_t) - \sum_{t=1}^T f_t(\u) \right] \leq \E\left[\sum_{t=1}^T  \ell_t(\x_t) - \sum_{t=1}^T \ell_t(\u) \right]. \label{eqn:exp:FTRL:exp}
\end{align}
So in the following, we focus on the regret of surrogate losses. Denoting by $H_t = I + \beta G^2 I +\beta\sum_{s=1}^{t-1}h_s$ (where $I$ is the $d \times d$ identity matrix) and $G_t(\x) = \frac{1}{2}(1 + \beta G^2)\|\x\|_2^2 + \sum_{s=1}^{t-1} \ell_s(\x)$, we have that $G_t(\x)$ is 1-strongly convex w.r.t. $\|\cdot\|_{H_{t}}$. Hence, using Lemma~\ref{lem:FTRL} (Standard analysis of optimistic FTRL), we immediately get the following guarantee
\begin{align}
    &\sum_{t=1}^T \ell_t(\x_t) - \sum_{t=1}^T \ell_t(\u)\nonumber\\
    \leq{}& \frac{1+ \beta G^2}{2}\|\u\|_2^2 + \sum_{t=1}^T\left(\langle \nabla f_t(\x_t) - \nabla f_{t-1}(\x_{t-1}), \x_t - \x_{t+1}\rangle - \frac{1}{2}\|\x_t - \x_{t+1}\|_{H_{t}}^2 \right)\nonumber\\
    \leq{}&\frac{(1+\beta G^2)D^2}{2} + \underbrace{\sum_{t=1}^T \|\nabla f_t(\x_t) - \nabla f_{t-1}(\x_{t-1})\|_{H_{t}^{-1}}^2}_{\term{a}} - \underbrace{\frac{1}{4}\sum_{t=1}^T\|\x_t - \x_{t+1}\|_{H_{t}}^2}_{\term{b}} ,\label{eqn:exp:FTRL:surrogateexp}
\end{align}
where we denote the dual norm of $\|\cdot\|_{H_{t}}$ by $\|\cdot\|_{H_{t}^{-1}}$, and use Assumption~\ref{ass:4} (domain boundedness) and $\langle a,b \rangle \leq \|a\|_{*}\|b\| \leq \frac{1}{2c}\|a\|_{*}^2 + \frac{c}{2}\|b\|^2$ in the second inequality.

To bound term (a) in \eqref{eqn:exp:FTRL:surrogateexp}, we begin with the fact that
\begin{align}
    H_{t} \succeq {}& I +\beta\sum_{s=1}^{t} \nabla f_s(\x_s)\nabla f_s(\x_s)^{\top} \nonumber\\
    \succeq{}& I + \frac{\beta}{2}\sum_{s=1}^{t}\left(\nabla f_s(\x_s)\nabla f_s(\x_s)^{\top} + \nabla f_{s-1}(\x_{s-1})\nabla f_{s-1}(\x_{s-1})^{\top}\right) ,\label{eqn:exp:FTRL:St}
\end{align}
where the first inequality is due to Assumption~\ref{ass:3} (boundedness of gradient norms) and the second inequality comes from the definition that $\nabla f_0(\x_0) = 0$. We substitute \eqref{eqn:exp:Ht:nabla} in the proof of Theorem~\ref{thm:3} into \eqref{eqn:exp:FTRL:St} and obtain that
\begin{align*}
    H_{t} \succeq I + \frac{\beta}{4}\sum_{s=1}^{t}\left(\nabla f_{s}(\x_{s}) -\nabla f_{s-1}(\x_{s-1})\right)\left(\nabla f_{s}(\x_{s}) -\nabla f_{s-1}(\x_{s-1})\right)^{\top}.
\end{align*}
Let $P_t= I + \frac{\beta}{4}\sum_{s=1}^{t}\left(\nabla f_{s}(\x_{s}) -\nabla f_{s-1}(\x_{s-1})\right)\left(\nabla f_{s}(\x_{s}) -\nabla f_{s-1}(\x_{s-1})\right)^{\top}$ so that $H_{t} \succeq P_t$, then we can bound term (a) in \eqref{eqn:exp:FTRL:surrogateexp} as
\begin{align*}
    \term{a} \leq\sum_{t=1}^T \|\nabla f_t(\x_t) - \nabla f_{t-1}(\x_{t-1})\|_{P_{t}^{-1}}^2= \frac{4}{\beta}\sum_{t=1}^T \left\|\sqrt{\frac{\beta}{4}}(\nabla f_t(\x_t) - \nabla f_{t-1}(\x_{t-1}))\right\|_{P_{t}^{-1}}^2.
\end{align*}
By applying Lemma~\ref{lem:hazan} with $\u_t = \sqrt{\frac{\beta}{4}}(\nabla f_t(\x_t) - \nabla f_{t-1}(\x_{t-1}))$ and $\epsilon = 1$, we get that
\begin{align*}
    \term{a} \leq \frac{4d}{\beta}\ln \left(\frac{\beta}{4d}\Bar{V}_T  + 1\right).
\end{align*}
Then we move to term (b). Since $H_t = I +\beta G^2 I + \beta\sum_{s=1}^{t-1}h_s \succeq I$, we can derive that
\begin{align*}
    \term{b} = \frac{1}{4}\sum_{t=1}^T\|\x_t - \x_{t+1}\|_{H_{t}}^2 \geq \frac{1}{4}\sum_{t=1}^T\|\x_t - \x_{t+1}\|_{I}^2 = \frac{1}{4}\sum_{t=1}^T\|\x_t - \x_{t+1}\|_2^2.
\end{align*}
So we bound the guarantee in \eqref{eqn:exp:FTRL:surrogateexp} by substituting the bounds of term (a) and term (b):
\begin{align*}
    \sum_{t=1}^T \ell_t(\x_t) - \sum_{t=1}^T \ell_t(\u)
    \leq \frac{(1+\beta G^2)D^2}{2} + \frac{4d}{\beta}\ln \left(\frac{\beta}{4d}\Bar{V}_T  + 1\right) - \frac{1}{4}\sum_{t=1}^T\|\x_t - \x_{t+1}\|_2^2. 
\end{align*}
Through Lemma~\ref{lem:sum:diff} (Boundedness of cumulative norm of gradient difference) together with the inequality of $\ln(1+u+v) \leq \ln(1+u) + \ln(1+v) (u,v > 0)$, we get that
\begin{align}
    &\sum_{t=1}^T \ell_t(\x_t) - \sum_{t=1}^T \ell_t(\u)\nonumber\\
    \leq{}& \frac{4d}{\beta}\ln \bigg(\frac{2\beta}{d} \sum_{t=1}^{T}  \|\nabla f_t(\x_t) -\nabla F_t(\x_t)\|_2^2 +\frac{\beta}{d} \sum_{t=2}^{T} \|\nabla F_t(\x_{t-1}) -\nabla F_{t-1}(\x_{t-1})\|_2^2+ \frac{\beta}{4d}G^2 + 1 \bigg)\nonumber\\
    &+ \frac{(1+\beta G^2)D^2}{2}+ \frac{4d}{\beta}\ln \bigg(\frac{\beta L^2}{d} \sum_{t=2}^{T} \|\x_t-\x_{t-1}\|_2^2 + 1\bigg)- \frac{1}{4}\sum_{t=1}^T\|\x_t - \x_{t+1}\|_2^2\nonumber\\
    \leq{}& \frac{4d}{\beta}\ln \bigg(\frac{2\beta}{d} \sum_{t=1}^{T}  \|\nabla f_t(\x_t) -\nabla F_t(\x_t)\|_2^2 +\frac{\beta}{d} \sum_{t=2}^{T} \|\nabla F_t(\x_{t-1}) -\nabla F_{t-1}(\x_{t-1})\|_2^2+ \frac{\beta}{4d}G^2 + 1 \bigg)\nonumber\\
    &+ \frac{(1+\beta G^2)D^2}{2}+\frac{4d}{\beta}\ln(16L^2 +1).\nonumber
\end{align}
where the last step comes from Lemma~\ref{lem:ln:pq}.

Then we compute the expected regret by taking the expectation over the above regret with the help of Jensen's inequality and the derived result in \eqref{eqn:exp:FTRL:exp}:
\begin{align*}
    {}&\E\left[ \sum_{t=1}^T f_t(\x_t) - \sum_{t=1}^T f_t(\u) \right] 
    \leq \E\left[\sum_{t=1}^T \ell_t(\x_t) - \sum_{t=1}^T \ell_t(\u)\right]\\
    \leq{}&\frac{4d}{\beta}\ln \bigg(\frac{2\beta}{d} \sigma_{1:T}^2 +\frac{\beta}{d} \Sigma_{1:T}^2 + \frac{\beta}{4d}G^2 + 1 \bigg)+ \frac{(1+\beta G^2)D^2}{2}+\frac{4d}{\beta}\ln(16L^2 +1)\\
    ={}& \O\Big(\frac{d}{\alpha}\log(\sigma_{1:T}^2 +\Sigma_{1:T}^2)\Big).
\end{align*}
\end{proof}

\subsection{Useful Lemmas}
\label{appendix:useful-lemmas}
We first provide the proof of Lemma~\ref{lem:1},~\ref{lem:sum:diff},~\ref{lem:ind:diff} and~\ref{lemma:strongly-convex-lemma}, and then present other lemmas useful for the proofs.

~\\
\begin{proof}[{of Lemma~\ref{lem:1}}]
We can decompose the instantaneous loss as
\begin{align*}
    {}&\langle \nabla f_{t}(\x_{t}), \x_{t} - \x \rangle \\
    ={}& \underbrace{\langle \nabla f_{t}(\x_{t})- \nabla f_{t-1}(\x_{t-1}), \x_t -\hat{\x}_{t+1}\rangle}_{\term{a}} + \underbrace{\langle \nabla f_{t-1}(\x_{t-1}), \x_t -\hat{\x}_{t+1} \rangle}_{\term{b}} +\underbrace{ \langle \nabla f_{t}(\x_{t}), \hat{\x}_{t+1}-\x \rangle}_{\term{c}}.
\end{align*}
For term (a), we use Lemma~\ref{lem:stability} (stability lemma) and get that
\begin{align*}
    \term{a} ={}& \langle \nabla f_{t}(\x_{t})- \nabla f_{t-1}(\x_{t-1}), \x_t -\hat{\x}_{t+1}\rangle\\
    \leq{}&\|\nabla f_{t}(\x_{t})- \nabla f_{t-1}(\x_{t-1})\|_{*}\|\x_t -\hat{\x}_{t+1}\| \leq \frac{1}{\alpha} \|\nabla f_{t}(\x_{t}) -\nabla f_{t-1}(\x_{t-1}) \|_*^2.
\end{align*}
For term (b) and term (c), due to the  updating rules of optimistic OMD in (\ref{eqn:update:u}) and (\ref{eqn:update:x}), we can apply Lemma~\ref{lem:bregman_proximal} (Bregman proximal inequality) and obtain that
\begin{align*}
    &\term{b}=\langle \nabla f_{t-1}(\x_{t-1}), \x_t -\hat{\x}_{t+1} \rangle \leq \D_{\MR_t}(\hat{\x}_{t+1},\hat{\x}_t)-\D_{\MR_t}(\hat{\x}_{t+1},\x_t)-\D_{\MR_t}(\x_{t},\hat{\x}_t),\\
    &\term{c}=\langle \nabla f_{t}(\x_{t}), \hat{\x}_{t+1}-\x \rangle\leq \D_{\MR_t}(\x,\hat{\x}_t)-\D_{\MR_t}(\x,\hat{\x}_{t+1})-\D_{\MR_t}(\hat{\x}_{t+1},\hat{\x}_t). 
\end{align*}
We complete the proof by combining the three upper bounds.
\end{proof}

\begin{proof}[of Lemma~\ref{lem:sum:diff}]
It is easy to verify the above lemma by substituting (\ref{eqn:dif:grad:1}) and (\ref{eqn:dif:grad:2}) in Lemma~\ref{lem:ind:diff} into $\sum_{t=1}^{T} \|\nabla f_t(\x_t) - \nabla f_{t-1} (\x_{t-1}) \|_2^2$ and simplifying the result.
\end{proof}

\begin{proof}[of Lemma~\ref{lem:ind:diff}]
For $t \geq 2$, from Jensen's inequality and Assumption~\ref{ass:2} (smoothness of expected function), we have
\begin{equation} \label{eqn:dif:grad:1}
\begin{split}
 &\|\nabla f_t(\x_t) - \nabla f_{t-1} (\x_{t-1}) \|_2^2 \\
=& 16 \left\| \frac{1}{4} \big[\nabla f_t(\x_t) -\nabla F_t(\x_t)\big]  + \frac{1}{4} \big[ \nabla F_t(\x_t)   - \nabla F_t(\x_{t-1})\big]   \right.\\
 & \left.+  \frac{1}{4} \big[\nabla F_t(\x_{t-1}) -\nabla F_{t-1}(\x_{t-1}) \big] + \frac{1}{4} \big[ \nabla F_{t-1}(\x_{t-1}) -  \nabla f_{t-1} (\x_{t-1}) \big] \right\|_2^2\\
\leq & 4 \|\nabla f_t(\x_t) -\nabla F_t(\x_t)\|_2^2 +  4 \|\nabla F_t(\x_t)   - \nabla F_t(\x_{t-1})\|_2^2 \\
 & + 4 \|\nabla F_t(\x_{t-1}) -\nabla F_{t-1}(\x_{t-1})\|_2^2+ 4 \|\nabla F_{t-1}(\x_{t-1}) -  \nabla f_{t-1} (\x_{t-1})\|_2^2\\
\leq & 4 \|\nabla f_t(\x_t) -\nabla F_t(\x_t)\|_2^2 +  4L^2 \|\x_t-\x_{t-1}\|_2^2 \\
 & + 4 \|\nabla F_t(\x_{t-1}) -\nabla F_{t-1}(\x_{t-1})\|_2^2+ 4 \|\nabla F_{t-1}(\x_{t-1}) -  \nabla f_{t-1} (\x_{t-1})\|_2^2.
\end{split}
\end{equation}

For $t=1$, from Assumption~\ref{ass:3} (boundedness of the gradient norm), we have
\begin{equation}\label{eqn:dif:grad:2}
\|\nabla f_1(\x_1) - \nabla f_{0} (\x_{0}) \|_2^2=\|\nabla f_1(\x_1)  \|_2^2 \leq G^2 .
\end{equation}

Combining both cases finishes the proof.
\end{proof}

\begin{proof}[of Lemma~\ref{lemma:strongly-convex-lemma}]
We first define the following quantity:
\begin{align*}
    \alpha = \left\lceil \sum_{t=1}^T\frac{1}{2\sigma_{\max}^2+\Sigma_{\max}^2}\left(2\sigma_t^2 + \sup_{\x\in \X} \|\nabla F_t(\x) -\nabla F_{t-1}(\x)\|_2^2\right) \right\rceil.
\end{align*}
If $1\leq \alpha < T$, we bound $\sum_{t=1}^T\frac{1}{\lambda t}\left(2\sigma_t^2 + \sup_{\x\in \X} \|\nabla F_t(\x) -\nabla F_{t-1}(\x)\|_2^2\right)$ as follows.
\begin{align*}
    {}&\sum_{t=1}^T\frac{1}{\lambda t}\left(2\sigma_t^2 + \sup_{\x\in \X} \|\nabla F_t(\x) -\nabla F_{t-1}(\x)\|_2^2\right)\\
    ={}&\sum_{t=1}^{\alpha} \frac{1}{\lambda t}\left(2\sigma_t^2 + \sup_{\x\in \X} \|\nabla F_t(\x) -\nabla F_{t-1}(\x)\|_2^2\right) + \sum_{t=\alpha+1}^T \frac{1}{\lambda t}\left(2\sigma_t^2 + \sup_{\x\in \X} \|\nabla F_t(\x) -\nabla F_{t-1}(\x)\|_2^2\right)\\
    \leq{}&\frac{2\sigma_{\max}^2+\Sigma_{\max}^2}{\lambda}\sum_{t=1}^{\alpha}\frac{1}{t} + \frac{1}{\lambda(\alpha+1)}\sum_{t=\alpha+1}^T\left(2\sigma_t^2 + \sup_{\x\in \X} \|\nabla F_t(\x) -\nabla F_{t-1}(\x)\|_2^2\right)\\
    \leq{}&\frac{2\sigma_{\max}^2+\Sigma_{\max}^2}{\lambda}\left(1+\int_{t=1}^{\alpha}\frac{1}{t}dt\right) + \frac{2\sigma_{\max}^2+\Sigma_{\max}^2}{\lambda}\leq \frac{2\sigma_{\max}^2+\Sigma_{\max}^2}{\lambda}(\ln \alpha +1)+ \frac{2\sigma_{\max}^2+\Sigma_{\max}^2}{\lambda}\\
    \leq{}&\frac{2\sigma_{\max}^2+\Sigma_{\max}^2}{\lambda}\ln \left(\sum_{t=1}^T\frac{1}{2\sigma_{\max}^2+\Sigma_{\max}^2}\left(2\sigma_t^2 + \sup_{\x\in \X} \|\nabla F_t(\x) -\nabla F_{t-1}(\x)\|_2^2\right)+1\right) + \frac{4\sigma_{\max}^2+2\Sigma_{\max}^2}{\lambda}.
\end{align*}
Else if $\alpha = T$, we have
\begin{align*}
        {}&\sum_{t=1}^T\frac{1}{\lambda t}\left(2\sigma_t^2 + \sup_{\x\in \X} \|\nabla F_t(\x) -\nabla F_{t-1}(\x)\|_2^2\right)\\
    = {}&\sum_{t=1}^{\alpha} \frac{1}{\lambda t}\left(2\sigma_t^2 + \sup_{\x\in \X} \|\nabla F_t(\x) -\nabla F_{t-1}(\x)\|_2^2\right) 
    \leq \frac{2\sigma_{\max}^2+\Sigma_{\max}^2}{\lambda}\sum_{t=1}^{\alpha}\frac{1}{t} \\
    \leq{}&\frac{2\sigma_{\max}^2+\Sigma_{\max}^2}{\lambda}\left(1+\int_{t=1}^{\alpha}\frac{1}{t}dt\right) \leq \frac{2\sigma_{\max}^2+\Sigma_{\max}^2}{\lambda}(\ln \alpha +1)\\
    \leq{}&\frac{2\sigma_{\max}^2+\Sigma_{\max}^2}{\lambda}\ln \left(\sum_{t=1}^T\frac{1}{2\sigma_{\max}^2+\Sigma_{\max}^2}\left(2\sigma_t^2 + \sup_{\x\in \X} \|\nabla F_t(\x) -\nabla F_{t-1}(\x)\|_2^2\right)+1\right) + \frac{2\sigma_{\max}^2+\Sigma_{\max}^2}{\lambda}.
\end{align*}
Else if $\alpha < 1$, we have
\begin{align*}
    {}&\sum_{t=1}^T\frac{1}{\lambda t}\left(2\sigma_t^2 + \sup_{\x\in \X} \|\nabla F_t(\x) -\nabla F_{t-1}(\x)\|_2^2\right)\\
    \leq{}&\frac{1}{\lambda \alpha}\sum_{t=1}^T\left(2\sigma_t^2 + \sup_{\x\in \X} \|\nabla F_t(\x) -\nabla F_{t-1}(\x)\|_2^2\right) \leq \frac{2\sigma_{\max}^2+\Sigma_{\max}^2}{\lambda}.
\end{align*}
\end{proof}

\begin{myLemma}\label{lem:ln:pq}
Let $A_T$ be a non-negative term, $a,b$ be non-negative constants and $c$ be a positive constant, then we have \begin{equation}
    a\ln(b A_T + 1) - c A_T \leq a\ln \left(\frac{ab}{c}+1\right).
\end{equation}

\begin{proof}
We use the following inequality: $\ln p \leq \frac{p}{q}+ \ln q  - 1$ holds for all $p >0,\, q > 0$. By setting $p = b A_T + 1$ and $q = \frac{ab}{c}+1$, we obtain
\begin{align*}
    {}& a\ln(b A_T + 1) - c A_T\leq a\bigg(\frac{b A_T + 1}{ab/c+1} + \ln\bigg(\frac{ab}{c}+1\bigg) -1\bigg) - cA_T\\
    ={}& c\bigg(\frac{ab}{ab+c} - 1\bigg)A_T + \bigg(\frac{1}{ab/c+1}-1\bigg)a + a\ln\bigg(\frac{ab}{c}+1\bigg) \leq a \ln\bigg(\frac{ab}{c}+1\bigg).
\end{align*}
\end{proof}  
\end{myLemma}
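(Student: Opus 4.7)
The inequality is a standard concavity-type bound: it uniformly bounds $f(A_T)=a\ln(bA_T+1)-cA_T$ over $A_T\ge 0$ by a quantity that depends only on $a,b,c$. My plan is to prove it via a tangent-line argument based on the concavity of $\ln$, which is the cleanest route.

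The first step is to recall the elementary inequality $\ln p \le p/q + \ln q - 1$, valid for all $p,q>0$. This is nothing but the statement that the graph of $\ln$ lies below its tangent line at any point $q$ (equivalently, a Fenchel--Young inequality for $-\ln$). The only real design choice is what $q$ to use. Since the target RHS is $a\ln(ab/c+1)$, the natural choice is $q=ab/c+1$, and the natural substitution on the left is $p=bA_T+1$.

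After plugging in and multiplying by $a$, the tangent-line inequality gives $a\ln(bA_T+1)\le a(bA_T+1)/(ab/c+1)+a\ln(ab/c+1)-a$. Subtracting $cA_T$ from both sides, what remains to verify is that the non-logarithmic residual $a(bA_T+1)/(ab/c+1) - cA_T - a$ is non-positive. A short algebraic simplification, clearing the common denominator $ab/c+1$, collapses this residual to $-(cA_T + a^2 b/c)/(ab/c+1)$, which is manifestly $\le 0$ under the stated sign conditions $A_T\ge 0$, $a,b\ge 0$, $c>0$. Combining these steps yields the claimed bound.

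The proof is essentially one tangent-line inequality plus routine algebra; there is no serious obstacle. The only mildly non-obvious ingredient is guessing the tangent point $q=ab/c+1$, but this is forced by matching the form of the desired RHS. An alternative proof via direct optimization (setting $f'(A_T)=0$ to get $A_T^\star = a/c - 1/b$ when $ab\ge c$, and $A_T^\star=0$ otherwise) is also feasible but leads to a slightly clumsier comparison step involving $\ln(1+c/(ab))$, so the tangent-line approach is preferable.
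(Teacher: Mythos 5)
Your proposal is correct and follows essentially the same route as the paper: the same tangent-line inequality $\ln p \le p/q + \ln q - 1$ with the same substitution $p = bA_T+1$, $q = ab/c+1$, followed by routine algebra to verify the residual is non-positive. The only cosmetic difference is that you clear the denominator and collapse the residual to $-(cA_T + a^2b/c)/(ab/c+1)$ in one step, whereas the paper leaves it split into an $A_T$-term and a constant term.
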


%!TEX root = ../OMD4SEA_arxiv.tex

\section{Omitted Proofs for Section~\ref{sec:extensions}}
\label{appendix:sec-extensions}
In this section, we present the omitted details for Section~\ref{sec:extensions}, including a discussion of the method using alternative optimism design and a useful lemma.

\subsection{Elaborations on an alternative method}\label{appendix:proof-xbar-regret} 
In this part, we demonstrate that when employing an alternative optimism design with $M_{t+1} = \nabla f_{t}(\bar{\x}_{t+1})$ where $\bar{\x}_{t+1}=\sum_{i=1}^N p_{t,i}\x_{t+1,i}$, we can only obtain a slightly worse regret scaling with the quantity $\tilde{\sigma}_{1:T}^2$. 

We first briefly describe the algorithm, which is a variant of Algorithm~\ref{alg:Sword++forSEA}. With the same two-layer structure as Algorithm~\ref{alg:Sword++forSEA}, each base-learner $\mathcal{B}_i$ updates its local decision by
\begin{align*}
    \xh_{t+1,i} = \Pi_{\X} \big[ \xh_{t,i} - \eta_i  \nabla f_t(\x_{t,i})\big],~~\x_{t+1,i} = \Pi_{\X} \big[ \xh_{t+1,i} - \eta_i  \nabla f_t(\x_{t,i})\big],
\end{align*}
which requires its own gradient direction; the meta-learner omits correction terms and updates the weight vector $\p_{t+1}\in\Delta_N$ by $p_{t+1,i} \propto \exp\left(-\epsilon_t \left(\sum_{s=1}^{t} \ell_{s,i} + m_{t+1,i} \right)\right)$ where the feedback loss $\ellb_t \in \R^N$ is constructed by $\ell_{t,i} = \inner{\nabla f_t(\x_t)}{\x_{t,i}}$ and the optimism $\mb_{t+1} \in \R^N$ is constructed as $m_{t+1,i} = \inner{M_{t+1}}{\x_{t+1,i}}$ with $M_{t+1}= \nabla f_{t}(\bar{\x}_{t+1})$. The step size $\eta_i$ of base-learners and the learning rate $\epsilon_t$ of meta-learner will be given later. Then for the above alternative algorithm, we can obtain the following theoretical guarantee.

\begin{myThm}\label{thm:dynamic:use_x_bar}
Under Assumptions~\ref{ass:3}, \ref{ass:4}, \ref{ass:2} and \ref{ass:1}, setting the step size pool $\H = \{\eta_1,\ldots,\eta_N\}$ with $\eta_i = \min\{ 1/(4L),2^{i-1}\sqrt{D^2/(98G^2T)}\}$ and $N = \lceil 2^{-1}\log_2(8G^2T/(L^2D^2)) \rceil+1$, and setting the learning rate as $\epsilon_t = 1/\sqrt{\delta + 4G^2 + \sum_{s=1}^{t-1} \left\| \nabla f_s(\x_s)-\nabla f_{s-1}(\bar{\x}_s) \right\|_2^2}$ with $\delta =4D^2L^2 \left(\ln N + 2D^2\right)$ for all $t \in [T]$, this variant of Algorithm~\ref{alg:Sword++forSEA} using the above optimism design and with no correction terms (more specifically, setting $\lambda = 0$ and $M_{t+1} = \nabla f_t(\bar{\x}_{t+1})$ with $\bar{\x}_{t+1}=\sum_{i=1}^N p_{t,i}\x_{t+1,i}$) can obtain the following bound
\begin{align*}
    \E\left[\DReg\right]\leq \O\left(P_T+\sqrt{1+P_T}\left(\sqrt{\tilde{\sigma}_{1:T}^2} + \sqrt{\Sigma_{1:T}^2} \right)\right).
\end{align*} 
\end{myThm}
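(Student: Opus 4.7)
The plan is to mirror the template of the proof of Theorem~\ref{thm:sword}, carefully tracking the two differences introduced by this variant: (i) the optimism is chosen as $M_{t+1}=\nabla f_t(\bar{\x}_{t+1})$ with $\bar{\x}_{t+1}=\sum_i p_{t,i}\x_{t+1,i}$, and (ii) the meta correction term is dropped ($\lambda=0$). I would first decompose the expected dynamic regret into meta-regret plus base-regret of the base-learner $\B_i$, as in~\eqref{appendix:eqn:meta-base-regret}, and use the convexity of expected functions (Assumption~\ref{ass:1}) together with~\eqref{eqn:exp:cov} to reduce both pieces to linearized surrogates; it then suffices to control $\sum_t\inner{\nabla f_t(\x_t)}{\x_t-\x_{t,i}}$ (meta) and $\sum_t\inner{\nabla f_t(\x_{t,i})}{\x_{t,i}-\u_t}$ (base).

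For the base-regret with fixed $\eta_i\le 1/(4L)$, I would apply Lemma~\ref{lem:1} together with the telescoping trick for a time-varying comparator used in Section~\ref{appendix:proof-nonsmooth-dynamic}, obtaining a bound of the form $\frac{D^2+2DP_T}{2\eta_i}+\eta_i\sum_t\|\nabla f_t(\x_{t,i})-\nabla f_{t-1}(\x_{t-1,i})\|_2^2-\frac{1}{4\eta_i}\sum_t\|\x_{t,i}-\x_{t-1,i}\|_2^2$. Expanding the gradient variation via Lemma~\ref{lem:ind:diff} introduces a $4L^2\|\x_{t,i}-\x_{t-1,i}\|_2^2$ piece, which is swallowed by the last negative term thanks to $\eta_i\le 1/(4L)$. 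The two stochastic pieces $\|\nabla f_t(\x_{t,i})-\nabla F_t(\x_{t,i})\|_2^2$ and $\|\nabla F_{t-1}(\x_{t-1,i})-\nabla f_{t-1}(\x_{t-1,i})\|_2^2$ are fine at the $\sigma$-level here because $\x_{t,i}$ (resp.\ $\x_{t-1,i}$) is measurable \emph{before} $f_t$ (resp.\ $f_{t-1}$) is sampled, so the conditional expectation reduces to $\sigma_t^2$ (resp.\ $\sigma_{t-1}^2$) via~\eqref{eqn:variance:grad}. For the meta-regret I would invoke the optimistic Hedge bound (Lemma~\ref{lem:FTRL} with adaptive $\epsilon_t$), yielding $\frac{\ln N}{\epsilon_T}+\sum_t\epsilon_{t-1}D^2\|\nabla f_t(\x_t)-\nabla f_{t-1}(\bar{\x}_t)\|_2^2-\frac{1}{4\epsilon_{t-1}}\sum_t\|\p_t-\p_{t-1}\|_1^2$.

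The crux, and the reason the bound degrades from $\sigma_{1:T}^2$ to $\tilde{\sigma}_{1:T}^2$, is the four-term decomposition of the meta mismatch:
\begin{align*}
\|\nabla f_t(\x_t)-\nabla f_{t-1}(\bar{\x}_t)\|_2^2\leq{}&4\|\nabla f_t(\x_t)-\nabla F_t(\x_t)\|_2^2+4L^2\|\x_t-\bar{\x}_t\|_2^2\\
{}&+4\|\nabla F_t(\bar{\x}_t)-\nabla F_{t-1}(\bar{\x}_t)\|_2^2+4\|\nabla F_{t-1}(\bar{\x}_t)-\nabla f_{t-1}(\bar{\x}_t)\|_2^2.
\end{align*}
Since $\x_t-\bar{\x}_t=\sum_i(p_{t,i}-p_{t-1,i})\x_{t,i}$, the smoothness piece is at most $L^2D^2\|\p_t-\p_{t-1}\|_1^2$; the choice $\delta=4D^2L^2(\ln N+2D^2)$ forces $\epsilon_{t-1}D^4L^2\le \tfrac{1}{4\epsilon_{t-1}}$, so the meta's negative quadratic absorbs this contribution. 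The adversarial piece sums to $\Sigma_{1:T}^2$ as usual. The \emph{main obstacle} is the last piece: because $\bar{\x}_t=\sum_i p_{t-1,i}\x_{t,i}$ depends on $f_{t-1}$ (through both $\p_{t-1}$ and the base-updated $\x_{t,i}$), the conditional expectation of $\|\nabla F_{t-1}(\bar{\x}_t)-\nabla f_{t-1}(\bar{\x}_t)\|_2^2$ over $f_{t-1}$ does \emph{not} reduce to $\sigma_{t-1}^2$ of~\eqref{eqn:variance:grad}; only the uniform surrogate $\sup_{\x\in\X}\|\nabla F_{t-1}(\x)-\nabla f_{t-1}(\x)\|_2^2$ is available, whose expectation equals $\tilde{\sigma}_{t-1}^2$ from~\eqref{nonsmooth:tildesigma}. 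Summing over $t$ forces the appearance of $\tilde{\sigma}_{1:T}^2$ in place of $\sigma_{1:T}^2$, which is precisely the stated weakening.

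After these cancellations, the self-confident $\epsilon_t$ combined with Lemma~\ref{lem:sum} drives the meta contribution to $\O(\sqrt{\tilde{\sigma}_{1:T}^2+\Sigma_{1:T}^2})$. Adding the base contribution of order $\frac{D^2+2DP_T}{\eta_i}+\eta_i(\tilde{\sigma}_{1:T}^2+\Sigma_{1:T}^2)$, the geometric grid $\H$ contains some $\eta_{i^\star}$ within a factor two of the oracle $\eta^\star\propto\sqrt{(D^2+DP_T)/(\tilde{\sigma}_{1:T}^2+\Sigma_{1:T}^2)}$ whenever $\eta^\star\le 1/(4L)$; otherwise the ceiling $\eta_i\le 1/(4L)$ produces the standalone $\O(LP_T)$ summand from $(D^2+2DP_T)/\eta_i$. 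Summing the two layers yields $\E[\DReg]=\O(P_T+\sqrt{1+P_T}(\sqrt{\tilde{\sigma}_{1:T}^2}+\sqrt{\Sigma_{1:T}^2}))$, as claimed.
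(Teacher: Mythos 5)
Your proposal follows essentially the same strategy as the paper's proof: (i) decompose into meta- and base-regret; (ii) control the base-regret via the self-bounding optimistic-OMD argument (Lemma~\ref{lem:1} plus Lemma~\ref{lem:ind:diff}, absorbing the $4L^2$ term with $\eta_i\le 1/(4L)$ and matching the geometric grid, with a separate case for $\eta^\star>1/(4L)$); (iii) control the meta-regret via the adaptive optimistic Hedge bound and a four-way expansion of $\|\nabla f_t(\x_t)-\nabla f_{t-1}(\bar{\x}_t)\|_2^2$; and (iv) recognize that the term $\|\nabla F_{t-1}(\bar{\x}_t)-\nabla f_{t-1}(\bar{\x}_t)\|_2^2$ cannot be reduced to $\sigma_{t-1}^2$ because $\bar{\x}_t$ is $f_{t-1}$-measurable, forcing the weaker $\tilde{\sigma}_{1:T}^2$. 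Point (iv) is exactly the ``dependence issue'' highlighted in the paper, and your identification of it is correct.

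Two small differences and one small slip, none fatal. First, the paper does a \emph{global} cancellation of the $\sum_t\|\p_t-\p_{t-1}\|_1^2$ contribution: it first collapses the meta adaptivity sum with Lemma~\ref{lem:selftuning} into $(\ln N+2D^2)\sqrt{\delta+4G^2+\hat V_T}$, splits $\hat V_T$, and then uses a single AM-GM against the aggregated $\frac{\sqrt{\delta}}{4}\sum_t\|\p_t-\p_{t-1}\|_1^2$; you instead propose a \emph{per-step} cancellation $\epsilon_{t-1}D^2\cdot 4L^2D^2\|\p_t-\p_{t-1}\|_1^2\le\frac{1}{4\epsilon_{t-1}}\|\p_t-\p_{t-1}\|_1^2$. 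Your route works, but the condition it requires is $\delta\ge 16D^4L^2$ (you dropped the factor $4$ coming from the Jensen split), which the paper's choice $\delta=4D^2L^2(\ln N+2D^2)$ only guarantees when $\ln N\ge 2D^2$; one would adjust $\delta$ accordingly. Second, a measurability slip: $\bar{\x}_t=\sum_i p_{t-1,i}\x_{t,i}$ depends on $f_{t-1}$ only through the base-updated $\x_{t,i}$, not through $\p_{t-1}$ (which is a function of $f_1,\dots,f_{t-2}$ only); your conclusion is unaffected. Third, your four-term split uses smoothness of $F_t$ at $(\x_t,\bar{\x}_t)$ and evaluates the adversarial variation at $\bar{\x}_t$, whereas the paper smooths $F_{t-1}$ and evaluates variation at $\x_t$ --- both telescopes are valid and yield the same orders.
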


\begin{proof}[of Theorem~\ref{thm:dynamic:use_x_bar}]
Notice that the dynamic regret can be decomposed into two parts:
\begin{align*}
    \E\left[\sum_{t=1}^T f_t(\x_t)-\sum_{t=1}^T f_t(\u^*_t)\right]=\underbrace{\E\left[\sum_{t=1}^T f_t(\x_t)-\sum_{t=1}^T f_t(\x_{t,i})\right]}_{\metaregret}+\underbrace{\E\left[\sum_{t=1}^T f_t(\x_{t,i})-\sum_{t=1}^T f_t(\u^*_t)\right]}_{\baseregret}.
\end{align*}
Then we provide the upper bounds for the two terms respectively.

\paragraph{Bounding the meta-regret.} Before giving the analysis, we first define $\hat{V}_t =\sum_{s=1}^{t}$ $\left\| \nabla f_s(\x_s)-\nabla f_{s-1}(\bar{\x}_s) \right\|_2^2 $ for the brevity of subsequent analysis. Similar to the proof in the previous section, we can easily get
\begin{align}
     \sum_{t=1}^T\left\langle \nabla f_t(\x_t), \x_t - \x_{t,i} \right\rangle \leq{}& \sum_{t=1}^T\epsilon_{t}\left\|\ellb_t-\mb_t\right\|_{\infty}^2+\frac{\ln N}{\epsilon_{T+1}}-\sum_{t=2}^T\frac{1}{4\epsilon_{t}}\left\|\p_t-\p_{t-1}\right\|_{1}^2\nonumber\\
    \leq{}& D^2\sum_{t=1}^T\epsilon_{t}\left\| \nabla f_t(\x_t)-\nabla f_{t-1}(\bar{\x}_t) \right\|_2^2+\frac{\ln N}{\epsilon_{T+1}}-\sum_{t=2}^T\frac{1}{4\epsilon_{t}}\left\|\p_t-\p_{t-1}\right\|_{1}^2\nonumber\\
    \leq{}& \left(\ln N + 2D^2\right)\sqrt{\delta + 4G^2 + \hat{V}_T} - \frac{\sqrt{\delta}}{4} \sum_{t=2}^T\left\|\p_t-\p_{t-1}\right\|_{1}^2, \label{appendix:eqn:x_bar:1}
\end{align}
where we bound the adaptivity term in the second inequality by
\begin{align*}
    \left\|\ellb_t-\mb_t\right\|_{\infty}^2 = \max_{i\in[N]}\left\langle \nabla f_t(\x_t)-\nabla f_{t-1}(\bar{\x}_t),\x_{t,i}\right\rangle^2 \leq D^2\left\| \nabla f_t(\x_t)-\nabla f_{t-1}(\bar{\x}_t) \right\|_2^2,
\end{align*}
and the last inequality comes from Lemma~\ref{lem:selftuning} and the fact that $\epsilon_t \leq \frac{1}{\sqrt{\delta + \hat{V}_t}} \leq \frac{1}{\sqrt{\delta}}$. Based on Assumption~\ref{ass:2} (smoothness of expected function), $\hat{V}_T$ can be bounded by
\begin{align}
    {}&\hat{V}_T = \sum_{t=1}^{T} \left\| \nabla f_t(\x_t)-\nabla f_{t-1}(\bar{\x}_t) \right\|_2^2\nonumber\\
    \leq{}& G^2 + 4\sum_{t=2}^T\bigg(\left\| \nabla f_t(\x_t)-\nabla F_{t}(\x_t) \right\|_2^2+ \left\| \nabla F_t(\x_t)-\nabla F_{t-1}(\x_t) \right\|_2^2+\left\| \nabla F_{t-1}(\x_t)-\nabla F_{t-1}(\bar{\x}_t) \right\|_2^2\nonumber\\
    {}&+\colorbox{gray}{$\left\| \nabla F_{t-1}(\bar{\x}_t)-\nabla f_{t-1}(\bar{\x}_t) \right\|_2^2$} \bigg)\label{appendix:eqn:gray}\\
    \leq{}& G^2+ 8 \sum_{t=1}^T\sup_{\x\in\X}\left\| \nabla f_t(\x)-\nabla F_{t}(\x) \right\|_2^2 + 4 \sum_{t=2}^T\left\| \nabla F_t(\x_t)-\nabla F_{t-1}(\x_t) \right\|_2^2+ 4L^2\sum_{t=2}^T\left\|\x_t -\bar{\x}_t \right\|_2^2\nonumber\\
    \leq{}& G^2+ 8\sum_{t=1}^T\sup_{\x\in\X}\left\| \nabla f_t(\x)-\nabla F_{t}(\x) \right\|_2^2+ 4 \sum_{t=2}^T\left\| \nabla F_t(\x_t)-\nabla F_{t-1}(\x_t) \right\|_2^2+ 4D^2L^2\sum_{t=2}^T\left\|\p_t -\p_{t-1} \right\|_1^2,\nonumber
\end{align}
where the last inequality is due to the fact
\begin{align*}
    \left\|\x_t -\bar{\x}_t \right\|_2^2 = \left\|\sum_{i=1}^N \left(p_{t,i}-p_{t-1,i}\right)\x_{t,i} \right\|_2^2\leq \left(\sum_{i=1}^N \left|p_{t,i}-p_{t-1,i}\right|\|\x_{t,i}\|_2\right)^2\leq D^2\left\|\p_t -\p_{t-1} \right\|_1^2.
\end{align*}

As a result, substitute the above upper bound into~\eqref{appendix:eqn:x_bar:1}, we arrive at
\begin{align*}
    {}&\sum_{t=1}^T\left\langle \nabla f_t(\x_t), \x_t - \x_{t,i} \right\rangle \\ 
    \leq{}& \left(\ln N + 2D^2\right)\sqrt{\delta + 5G^2} +\left(2\sqrt{2}\ln N + 4\sqrt{2}D^2\right)\sqrt{\sum_{t=1}^T\sup_{\x\in\X}\left\| \nabla f_t(\x)-\nabla F_{t}(\x) \right\|_2^2}\\
    {}&+\left(2\ln N + 4D^2\right)\sqrt{ \sum_{t=2}^T\left\| \nabla F_t(\x_t)-\nabla F_{t-1}(\x_t) \right\|_2^2}  +\left(2DL\ln N + 4D^3L\right)\sqrt{\sum_{t=2}^T\left\|\p_t -\p_{t-1} \right\|_1^2} \\
    {}&- \frac{\sqrt{\delta}}{4} \sum_{t=2}^T\left\|\p_t-\p_{t-1}\right\|_{1}^2\\
    \leq{}& \left(\ln N + 2D^2\right)\sqrt{\delta + 5G^2} +\left(2\sqrt{2}\ln N + 4\sqrt{2}D^2\right)\sqrt{\sum_{t=1}^T\sup_{\x\in\X}\left\| \nabla f_t(\x)-\nabla F_{t}(\x) \right\|_2^2}\\
    {}&+\left(2\ln N + 4D^2\right)\sqrt{ \sum_{t=2}^T\left\| \nabla F_t(\x_t)-\nabla F_{t-1}(\x_t) \right\|_2^2}+\frac{\left(2DL\ln N + 4D^3L\right)^2}{\sqrt{\delta}},
\end{align*}
where the last inequality is due to
\begin{align*}
    \left(2DL\ln N + 4D^3L\right)\sqrt{\sum_{t=2}^T\left\|\p_t -\p_{t-1} \right\|_1^2} \leq \frac{\left(2DL\ln N + 4D^3L\right)^2}{\sqrt{\delta}} + \frac{\sqrt{\delta}}{4}\sum_{t=2}^T\left\|\p_t -\p_{t-1} \right\|_1^2.
\end{align*}

Finally, we take expectations with the help of Jensen's inequality and obtain
\begin{align*}
    \E\left[\sum_{t=1}^T\left\langle \nabla f_t(\x_t), \x_t - \x_{t,i} \right\rangle\right] \leq {}&\left(2\sqrt{2}\ln N + 4\sqrt{2}D^2\right)\sqrt{\tilde{\sigma}_{1:T}^2}+ \left(2\ln N + 4D^2\right)\sqrt{ \Sigma_{1:T}^2} \\
    {}&+ \left(\ln N + 2D^2\right)\left(\sqrt{ 5}G + 4DL\sqrt{\ln N + 2D^2}\right),
\end{align*}
where we set $\delta =4D^2L^2 \left(\ln N + 2D^2\right)$.
\paragraph{Bounding the base-regret.} 
Notice that the base-learner actually performs optimistic OMD with the regularizer $\psi_t(\x)=\frac{1}{2\eta_i}\|\x\|_2^2$, we can apply Lemma~\ref{lem:1} to obtain the base-regret for any index $i \in [N]$ as:
\begin{align*}
{}&\sum_{t=1}^T\left\langle \nabla f_t(\x_{t,i}), \x_{t,i} -\u^*_t \right\rangle \\
\leq{}& \eta_i \sum_{t=1}^T  \left\| \nabla f_t(\x_{t,i}) - \nabla f_{t-1}(\x_{t-1,i}) \right\|_2^2 + \frac{1}{2\eta_i}\sum_{t=1}^T\left(\left\|\u^*_t - \hat{\x}_{t,i}\right\|_2^2-\left\|\u^*_t - \hat{\x}_{t+1,i}\right\|_2^2\right)\\
{}&- \frac{1}{2\eta_i}\sum_{t=1}^T\left(\left\|\hat{\x}_{t+1,i}-\x_{t,i}\right\|_2^2-\left\|\hat{\x}_{t,i}-\x_{t,i}\right\|_2^2\right)\\
\leq{}& \eta_i \sum_{t=1}^T  \left\| \nabla f_t(\x_{t,i}) - \nabla f_{t-1}(\x_{t-1,i}) \right\|_2^2 + \frac{D^2 + 2DP_T}{2\eta_i}-\frac{1}{4\eta_i}\sum_{t=2}^T\left\|\x_{t,i}-\x_{t-1,i}\right\|_2^2,
\end{align*}
where the derivation of the last inequality is similar to the previous proof and will not be repeated here. By exploiting Lemma~\ref{lem:sum:diff}, the above formula can be further bounded by
\begin{align*}
    {}&\sum_{t=1}^T\left\langle \nabla f_t(\x_{t,i}), \x_{t,i} -\u^*_t \right\rangle \\
    \leq{}& \eta_i G^2 + 8\eta_i \sum_{t=1}^T\left\|\nabla f_t(\x_{t,i})-\nabla F_t(\x_{t,i})\right\|_2^2 + 4\eta_i \sum_{t=2}^T\left\|\nabla F_t(\x_{t-1,i})-\nabla F_{t-1}(\x_{t-1,i})\right\|_2^2 \\
    {}&+ \left(4\eta_i L^2 -\frac{1}{4\eta_i} \right)\sum_{t=2}^T\left\|\x_{t,i}-\x_{t-1,i}\right\|_2^2  + \frac{D^2 + 2DP_T}{2\eta_i}\\
    \leq{}& 8\eta_i \sum_{t=1}^T\left\|\nabla f_t(\x_{t,i})-\nabla F_t(\x_{t,i})\right\|_2^2 + 4\eta_i \sum_{t=2}^T\left\|\nabla F_t(\x_{t-1,i})-\nabla F_{t-1}(\x_{t-1,i})\right\|_2^2 \\
    {}&  + \frac{D^2 + 2DP_T}{2\eta_i}+\eta_i G^2.
\end{align*}
The last inequality holds by ensuring the step size satisfies $\eta_i \leq 1/(4L)$ for any $i\in[N]$. Moreover, the above formula shows that the best step size is $\eta^{\dag} = \min\{1/(4L), \eta^*\}$, where 
\begin{align*}
    \eta^* = \sqrt{\frac{D^2 + 2DP_T}{16\sum_{t=1}^T\left\|\nabla f_t(\x_{t,i})-\nabla F_t(\x_{t,i})\right\|_2^2 + 8\sum_{t=2}^T\left\|\nabla F_t(\x_{t-1,i})-\nabla F_{t-1}(\x_{t-1,i})\right\|_2^2 + 2G^2}}.
\end{align*}
We set the step size pool $\mathcal{H}=\left\{\eta_i =\min\left\{ \frac{1}{4L},2^{i-1}\sqrt{\frac{D^2}{98G^2T}}\right\}\mid i\in[N]\right\}$, which ensures that $\eta^{\dag}$ is included. Then, if $\eta^* \leq 1/(4L)$, there must be an $\eta_{i^*} \in \mathcal{H}$ satisfying that $\eta_{i^*}\leq \eta^* \leq 2\eta_{i^*}$, and we can obtain that
\begin{align*}
    {}&\eta_{i^*}\left(8 \sum_{t=1}^T\left\|\nabla f_t(\x_{t,i})-\nabla F_t(\x_{t,i})\right\|_2^2 + 4 \sum_{t=2}^T\left\|\nabla F_t(\x_{t-1,i})-\nabla F_{t-1}(\x_{t-1,i})\right\|_2^2+G^2\right)+ \frac{D^2 + 2DP_T}{2\eta_{i^*}}\\
    \leq{}& \eta^*\left(8 \sum_{t=1}^T\left\|\nabla f_t(\x_{t,i})-\nabla F_t(\x_{t,i})\right\|_2^2 + 4 \sum_{t=2}^T\left\|\nabla F_t(\x_{t-1,i})-\nabla F_{t-1}(\x_{t-1,i})\right\|_2^2+G^2\right)+ \frac{D^2 + 2DP_T}{\eta^*}\\
    ={}& 2\sqrt{\left(D^2 + 2DP_T\right)\left(8 \sum_{t=1}^T\left\|\nabla f_t(\x_{t,i})-\nabla F_t(\x_{t,i})\right\|_2^2 + 4 \sum_{t=2}^T\left\|\nabla F_t(\x_{t-1,i})-\nabla F_{t-1}(\x_{t-1,i})\right\|_2^2+G^2\right)}.
\end{align*}
Otherwise, if $\eta^* > 1/(4L)$, we will choose $\eta_i = 1/(4L)$ and obtain that
\begin{align*}
    \frac{1}{4L}\bigg(8 \sum_{t=1}^T\left\|\nabla f_t(\x_{t,i})-\nabla F_t(\x_{t,i})\right\|_2^2 +{}& 4 \sum_{t=2}^T\left\|\nabla F_t(\x_{t-1,i})-\nabla F_{t-1}(\x_{t-1,i})\right\|_2^2+G^2\bigg)\\
    {}&+ 2L\left(D^2 + 2DP_T\right)\leq 6L\left(D^2 + 2DP_T\right).
\end{align*}
Hence, we get the final regret bound of the base-learner by taking both cases into account and taking expectations with Jensen's inequality
\begin{align*}
    \E\left[\sum_{t=1}^T\left\langle \nabla f_t(\x_{t,i}), \x_{t,i} -\u^*_t \right\rangle\right] \leq 2\sqrt{D^2 + 2DP_T}\left(2\sqrt{2\tilde{\sigma}_{1:T}^2} + 2\sqrt{\Sigma_{1:T}^2} + G\right) + 6L\left(D^2 + 2DP_T\right).
\end{align*}
\paragraph{Bounding the overall dynamic regret.} Combining the meta-regret and the base-regret, and using the convexity of expected functions, we have
\begin{align*}
    {}&\E\left[\sum_{t=1}^T f_t(\x_t)-\sum_{t=1}^T f_t(\u^*_t)\right]\\
    \leq{}&\E\left[\sum_{t=1}^T\left\langle \nabla f_t(\x_t), \x_t - \x_{t,i} \right\rangle\right] + \E\left[\sum_{t=1}^T\left\langle \nabla f_t(\x_{t,i}), \x_{t,i} -\u^*_t \right\rangle\right]\\
    \leq{}&\left(2\ln N + 4D^2 + 4\sqrt{D^2 + 2DP_T}\right)\left(\sqrt{2\tilde{\sigma}_{1:T}^2} + \sqrt{\Sigma_{1:T}^2} \right)\\
    {}&+ \left(\ln N + 2D^2\right)\left(\sqrt{ 5}G + 4DL\sqrt{\ln N + 2D^2}\right) +2G\sqrt{D^2 + 2DP_T} + 6L\left(D^2 + 2DP_T\right)\\
    ={}&\O\left(P_T+\sqrt{1+P_T}\left(\sqrt{\tilde{\sigma}_{1:T}^2} + \sqrt{\Sigma_{1:T}^2} \right)\right),
\end{align*}
which completes the proof.
\end{proof}
\begin{myRemark}
In fact, we can also prove Theorem~\ref{thm:dynamic:use_x_bar} simply by taking expectations over the $\O(\sqrt{(1+P_T+V_T)(1+P_T)})$ bound in Theorem 3 of~\citet{Problem:Dynamic:Regret}, but we give the above specific proof to illustrate the \emph{dependence issue} of this alternative optimism design. Specifically, according to the definition of $\bar{\x}_t$, it has dependency on $f_{t-1}$. So we cannot directly obtain the expectation of the gray item in~\eqref{appendix:eqn:gray}, but can only perform the supremum operation first, which results in the inability to get a bound scaling with $\sigma_{1:T}^2$.
\remarkend
\end{myRemark}

\subsection{Useful Lemma}\label{appendix:nonsmooth-usefullemma}

\begin{proof}[of Lemma~\ref{lem:proposition4.1}] According to the proof of the second inequality in Proposition 4.1 of \citet{campolongo2020temporal}, using the first-order optimality condition for $\x_{t+1}$ yields
\begin{align*}
    \langle \nabla f_t(\x_{t+1}) + \nabla \psi_{t+1}(\x_{t+1})-\nabla \psi_{t+1}(\hat{\x}_{t+1}),\x-\x_{t+1} \rangle \geq 0,\quad \forall \x \in \X.
\end{align*}
By moving terms, the above equation can be rewritten as
\begin{align*}
    \langle \nabla f_t(\x_{t+1}),\x_{t+1}-\x \rangle \leq{}& \langle  \nabla \psi_{t+1}(\hat{\x}_{t+1})- \nabla \psi_{t+1}(\x_{t+1}), \x_{t+1}-\x\rangle\\
    ={}&B_{\psi_{t+1}}(\x,\hat{\x}_{t+1}) - B_{\psi_{t+1}}(\x,\x_{t+1})-B_{\psi_{t+1}}(\x_{t+1},\hat{\x}_{t+1}),\quad \forall \x \in \X.
\end{align*}
Since $\psi_{t+1}(\x)=\frac{1}{2\eta_{t+1}}\|\x\|_2^2$, we can transform the above equation into
\begin{align*}
    \langle \nabla f_t(\x_{t+1}),\x_{t+1}-\x \rangle \leq \frac{1}{2\eta_{t+1}}\left( \|\x-\hat{\x}_{t+1}\|_2^2 - \|\x-\x_{t+1}\|_2^2 - \| \x_{t+1}-\hat{\x}_{t+1}\|_2^2 \right),\quad \forall \x \in \X,
\end{align*}
which completes the proof.
\end{proof}

\begin{proof}[of Lemma~\ref{lem:nonsmooth-sumlemma}] 
For $t \geq 2$, using Jensen’s inequality, we have
\begin{align*}
    {}&\sup_{\x\in\X}\|\nabla f_t(\x)-\nabla f_{t-1}(\x)\|_2^2\\
    \leq{}&\sup_{\x\in\X}\left(2\|\nabla f_t(\x)-\nabla F_{t}(\x)\|_2^2+2\|\nabla F_t(\x)-\nabla f_{t-1}(\x)\|_2^2\right)\\ 
    \leq{}& \sup_{\x\in\X}\left(2\|\nabla f_t(\x)-\nabla F_{t}(\x)\|_2^2+4\|\nabla F_t(\x)-\nabla F_{t-1}(\x)\|_2^2+4\|\nabla F_{t-1}(\x)-\nabla f_{t-1}(\x)\|_2^2\right).
\end{align*}
For $t=1$, from Assumption~\ref{ass:3} (boundedness of the gradient norm), we have
\begin{align*}
    \sup_{\x\in\X}\|\nabla f_1(\x) - \nabla f_{0} (\x) \|_2^2=\sup_{\x\in\X}\|\nabla f_1(\x)  \|_2^2 \leq G^2 .
\end{align*}
As a result, we can complete the proof by adding the terms from $t=1$ to $t=T$.
\end{proof}

\section{Omitted Proofs for Section~\ref{sec:examples}}
\label{appendix:proofs-section5}
This section presents omitted proofs of corollaries in Section~\ref{sec:examples}, including proof of Corollary~\ref{cor:acsd} in~\pref{appendix:acs}, proof of Corollary~\ref{cor:rom:con} in~\pref{appendix:ROM} and proof of Corollary~\ref{cor:OLaS} in~\pref{appendix:OLaS}.

\subsection{Proof of Corollary~\ref{cor:acsd}}\label{appendix:acs}
\begin{proof}
Recall that in Section~\ref{ACSD} the loss functions in adversarially corrupted stochastic model satisfy $f_t(\x) = h_t(\x) + c_t(\x)$ for all $t\in [T]$, where $h_t(\cdot)$ is sampled from a fixed distribution every iteration and $\sum_{t=1}^T$ $ \max_{\x\in \X}\|\nabla c_t(\x)\| \leq C_T$. By definition of $F_t(\x)$, 
\begin{align}
F_t(\x) = \E_{f_t\sim \mathfrak{D}_t}[f_t(\x)] = \E_{h_t\sim \mathfrak{D}}[h_t(\x) + c_t(\x)]
= \E_{h_t\sim \mathfrak{D}}[h_t(\x)] + c_t(\x).   \label{eqn:acs:Ft}
\end{align}
Since $h_t(\cdot)$ is i.i.d for each $t$, their expectations are the same. Then we have
\begin{align*}
    \| \nabla F_t(\x) - \nabla F_{t-1}(\x)\|_2^2 
    \leq 2G\| \nabla F_t(\x) - \nabla F_{t-1}(\x)\|_2 
    \overset{(\ref{eqn:acs:Ft})}{=}{}& 2G \| \nabla c_t(\x) - \nabla c_{t-1}(\x)\| \nonumber\\
    \leq{}& 2G(\|\nabla c_t(\x)\| + \|\nabla c_{t-1}(\x)\|).
\end{align*}
Therefore, we have the following upper bound for the cumulative variation:
\begin{align*}
    \Sigma_{1:T}^2 = \E\left[\sum_{t=2}^T \sup_{\x \in \X} \|\nabla F_t(\x) - \nabla F_{t-1}(\x)\|_2^2 \right] \leq \sum_{t=2}^T \sup_{\x \in \X} 2G(\|\nabla c_t(\x)\| + \|\nabla c_{t-1}(\x)\|) \leq 4GC_T.
\end{align*}
Besides, we can calculate the variance as 
\begin{align*}
\sigma_t^2 = \max_{\x\in \X}\E_{f_t\sim \mathfrak{D}_t}[\|\nabla f_t(\x) - \nabla F_t(\x)\|_2^2] \overset{\eqref{eqn:acs:Ft}}{=}{} \max_{\x\in \X}\E_{h_t\sim \mathfrak{D}_t}[\|\nabla h_t(\x) - \nabla \E_{h_t\sim \mathfrak{D}}[h_t(\x_t)] \|_2^2] = \sigma,
\end{align*}
where $\sigma > 0$ is the variance of stochastic gradients. This implies $
\sigma_{1:T}^2 =\E \left[ \sum_{t=1}^T \sigma_t^2 \right] = \sigma T$. 

Combining the above two upper bounds of $\sigma_{1:T}^2$ and $\Sigma_{1:T}^2$ with the regret bounds of optimistic OMD in Theorem~\ref{thm:2} and Theorem~\ref{thm:3} completes the proof.
\end{proof}

\subsection{Proof of Corollary~\ref{cor:rom:con}}\label{appendix:ROM}
\begin{proof}
The difference between ROM and i.i.d. stochastic model is that ROM samples a loss from the loss set without replacement in each round, while i.i.d. stochastic model samples independently and uniformly with replacement in each round. However, following \citet{OCO:Between}, we can bound the variance of ROM with respect to $\mathfrak{D}_t$ for each $t$ by the variance $\sigma_1^2$ of the first round, which can also be regarded as the variance of the i.i.d. model for every round. Specifically, for $\forall \x \in \X$ and every $t \in [ T ]$, we have
\begin{align}
    \E_{f_t \sim \mathfrak{D}_t}\left[ \| \nabla f_t(\x) - \nabla F_t(\x) \|_2^2 \right] \leq \E_{f_t \sim \mathfrak{D}_t}\left[ \| \nabla f_t(\x) - \nabla F_1(\x) \|_2^2 \right] . \label{appendix:rom:con:eqn1}
\end{align}
Since ROM samples losses without replacement, let set $\Gamma_t$ represent the index set of losses that can be selected in the $t$th round, thus $\Gamma_1 = [ T ]$, then we have
\begin{align*}
    \E_{f_t \sim \mathfrak{D}_t}\left[ \| \nabla f_t(\x) - \nabla F_1(\x) \|_2^2 \right] & = \frac{1}{T-(t-1)}\sum_{i \in \Gamma_t} \| \nabla f_i(\x) - \nabla F_1(\x) \|_2^2 \\
    & \leq \frac{1}{T-(t-1)} \sum_{i \in \Gamma_1}\| \nabla f_i(\x) - \nabla F_1(\x) \|_2^2 \leq \frac{T}{T-(t-1)} \sigma_1^2. 
\end{align*}
So combining (\ref{appendix:rom:con:eqn1}) with the above inequality, we get that
\begin{align}
    \E_{f_t \sim \mathfrak{D}_t}\left[ \| \nabla f_t(\x) - \nabla F_t(\x) \|_2^2 \right] \leq \frac{T}{T-(t-1)} \sigma_1^2,\,\,\forall \x \in \X,\, t \in [ T]. \label{appendix:rom:con:eqn3}
\end{align}
Besides, from (\ref{appendix:rom:con:eqn1}), we can also get that
\begin{align}
    \E\left [ \sigma_t^2 \right] & \leq \E \left[ \max_{\x \in \X} \E_{f_t \sim \mathfrak{D}_t} \left[\| \nabla f_t(\x) - \nabla F_1 (\x) \|_2^2 \right] \right]\nonumber\\
    &\leq \E \left[  \E_{f_t \sim \mathfrak{D}_t} \left[\max_{\x \in \X}\| \nabla f_t(\x) - \nabla F_1 (\x) \|_2^2 \right] \right] = \Lambda, \label{appendix:rom:con:eqn4}
\end{align}
where we review that $\Lambda = \frac{1}{T}\sum_{t=1}^T \max_{\x \in \X} \| \nabla f_t(\x) - \bar{\nabla}_T(\x) \|_2^2$. Then, we use a technique from \citet{OCO:Between} by introduce a variable $\tau \in [ T]$, which help us upper bound $\sigma_{1:T}^2$ as
\begin{align*}
    \sigma_{1:T}^2  = \E\left[ \sum_{t=1}^T \sigma_t^2 \right]  \leq {}& \E\left[ \sum_{t=1}^{\tau} \sigma_t^2 \right] + \E\left[ \sum_{t=\tau + 1}^T \sigma_t^2 \right]
    \overset{(\ref{appendix:rom:con:eqn3}),(\ref{appendix:rom:con:eqn4})}{\leq} \sum_{t=1}^{\tau}\frac{T}{T-(t-1)} \sigma_1^2 + (T-\tau)\Lambda\\
    \leq {}&\sum_{n=T-(\tau-1)}^{T}\frac{1}{n} T\sigma_1^2 + (T-\tau)\Lambda
    \leq \left(1 + \log \frac{T}{T-(\tau -1)}\right) T\sigma_1^2 + (T-\tau)\Lambda.
\end{align*}
If $T\sigma_1^2/\Lambda > 2$, we set $\tau = T-\lfloor T\sigma_1^2/\Lambda \rfloor$, then we have
\begin{align}
\sigma_{1:T}^2 
\leq & \left(1 + \log \frac{T}{\lfloor T\sigma_1^2/\Lambda \rfloor}\right) T\sigma_1^2+ T \sigma_1^2
\leq  \left(1 + \log \frac{1}{\sigma_1^2/\Lambda - 1/T}\right) T\sigma_1^2+ T \sigma_1^2\nonumber\\
\leq & \left(1 + \log \frac{2\Lambda}{\sigma_1^2}\right) T\sigma_1^2+ T \sigma_1^2
\leq  T \sigma_1^2 \log \left( \frac{2e^2 \Lambda}{\sigma_1^2} \right).\nonumber
\end{align}
Otherwise, if $T\sigma_1^2/\Lambda \leq 2$, we set $\tau = T$, then we can get the regret bound of $\O(T\sigma_1^2(1+\log T))$. Since we have 
\begin{align}
\O(T\sigma_1^2(1+\log T))\leq \O(T\sigma_1^2(1+\log (2\Lambda/\sigma_1^2)))\leq \O(T \sigma_1^2 \log ( 2e^2 \Lambda/\sigma_1^2)), \nonumber
\end{align}
then the final bound of $\sigma_{1:T}^2$ is  of order $\O\left(T \sigma_1^2 \log \big( \frac{2e^2 \Lambda}{\sigma_1^2} \big)\right)$.

Next, we try to bound $\Sigma_{1:T}^2$. We suppose that $k_{t} = \Gamma_{t}\backslash \Gamma_{t+1}$ represents the loss selected in round $t$, then we have
\begin{align}
    \|\nabla F_t(\x) - \nabla F_{t-1}(\x)\|_2^2 &= \left\|\frac{1}{T-(t-1)}\sum_{i \in \Gamma_t}\nabla f_i(\x) - \frac{1}{T-(t-2)}\sum_{i\in \Gamma_{t-1}}\nabla f_i(\x)\right\|_2^2\nonumber\\
    & = \left\| \frac{(T-t+2)-(T-t+1)}{(T-t+1)(T-t+2)}\sum_{i\in \Gamma_t}\nabla f_i(\x) -  \frac{1}{T-t+2} \nabla f_{k_{t-1}}(\x) \right\|_2^2\nonumber\\
    & \leq \frac{2}{(T-t + 2)^2}\left \| \frac{1}{T-t+1} \sum_{i \in \Gamma_t }\nabla f_i(\x) \right\|_2^2 + \frac{2}{(T-t+2)^2}\|\nabla f_{k_{t-1}} (\x)\|_2^2\nonumber\\
    & \leq \frac{4G^2}{(T-t + 2)^2},\nonumber
\end{align}
where the last inequality is derived from Assumption~\ref{ass:3} (boundedness of the gradient norm).

Summing the above inequality over $t=1,...,T$, and taking the expectation give
\begin{align}
    \Sigma_{1:T}^2 = \E\left[ \sum_{t=1}^T \sup_{\x\in\X} \|\nabla F_t(\x) - \nabla F_{t-1}(\x)\|_2^2 \right] \leq \sum_{t=1}^T \frac{4G^2}{(T-t + 2)^2} \leq 8G^2.\nonumber
\end{align}
Finally, we substitute the bound of $\sigma_{1:T}^2$ and $\Sigma_{1:T}$ into Theorem~\ref{thm:1}, which is for convex and smooth functions, and complete the proof.
\end{proof}

\subsection{Proof of Corollary~\ref{cor:OLaS}}\label{appendix:OLaS}
According to the problem setup in Section~\ref{subsec:OLaS}, we can bound $\tilde{\sigma}_{1:T}^2$ as
\begin{align}
    \tilde{\sigma}_{1:T}^2 ={}&\E\left[\sum_{t=1}^T\E\left[\sup_{\x\in\X}\left\|\sum_{k=1}^K\big(\big[\hat{\boldsymbol{\mu}}_{y_t}\big]_k - [\boldsymbol{\mu}_{y_t}]_k\big)\cdot \nabla  F_0^k(\x) \right\|_2^2\right]\right]\nonumber\\ \leq{}&\E\left[\sum_{t=1}^T\E\left[G^2\left\|\sum_{k=1}^K\big(\big[\hat{\boldsymbol{\mu}}_{y_t}\big]_k - [\boldsymbol{\mu}_{y_t}]_k\big)\right\|_2^2\right]\right]
    \leq KG^2\sum_{t=1}^T\E\left[\big\|\hat{\boldsymbol{\mu}}_{y_{t}} - \boldsymbol{\mu}_{y_{t}}\big\|_2^2\right]\label{appendix:olas:upperboundofsigma}
\end{align}
and bound $\Sigma_{1:T}^2$ as
\begin{align}
    \Sigma_{1:T}^2={}& \E \left[ \sum_{t=1}^T \sup_{\x \in \X} \left\| \sum_{k=1}^K  \big([\boldsymbol{\mu}_{y_t}]_k - [\boldsymbol{\mu}_{y_{t-1}}]_k \big)\cdot \nabla  F_0^k(\x)\right\|_2^2\right]\nonumber\\
    \leq{}&\E \left[G^2 \sum_{t=1}^T\left\| \sum_{k=1}^K  \big([\boldsymbol{\mu}_{y_t}]_k - [\boldsymbol{\mu}_{y_{t-1}}]_k \big)\right\|_2^2\right]
    \leq KG^2\sum_{t=1}^T\E\left[\big\|\boldsymbol{\mu}_{y_{t}} - \boldsymbol{\mu}_{y_{t-1}}\big\|_2^2 \right]. \label{appendix:olas:upperboundofSigma}
\end{align}

Next, it is necessary to apply the key technique of~\citet{UAI'20:simple} highlighted in \citet{NeurIPS'22:label_shift} to deal with the $P_T$ term in Theorem~\ref{thm:nonsmooth:dynamic}. We know that $P_T$ is the path length of the comparator sequence $\u_1,\cdots,\u_T$, where $\u_t$ can be any point in $\X$. While frequently used in exploring dynamic regret, this quantity lacks explicit significance in OLS. To this end, \citet{NeurIPS'22:label_shift} propose a new quantity $L_T=\sum_{t=2}^T\big\|\boldsymbol{\mu}_{y_{t}} - \boldsymbol{\mu}_{y_{t-1}}\big\|_1$ to measure the variation of label distributions and consider the dynamic regret against the sequence $\{\x_t^*\}_{t\in[T]}$, where $\x_t^*\in \argmin_{\x\in\X}F_t(\x)$ but $\x_t^*\notin \argmin_{\x\in\X}f_t(\x)$. In their footsteps, we first decompose the dynamic regret bound into two parts by introducing a reference sequence that only changes every $\Delta$ iteration. Specifically, the $m$-th time interval is denoted as $\mathcal{I}_m = [(m-1)\Delta +1, m\Delta]$ and any comparator $\u_t$ within $\mathcal{I}_m$ is considered the optimum decision for the interval, i.e., $\u_t = \x^*_{\mathcal{I}_m}\in\argmin_{\x\in\X}\sum_{t\in\mathcal{I}_m}F_t(\x)$ for any $t\in\mathcal{I}_m$. Then we have
\begin{align*}
    \E\left[\sum_{t=1}^T f_t(\x_t) - \sum_{t=1}^T f_t(\x_t^*)\right] ={}& \underbrace{\E\left[\sum_{t=1}^T f_t(\x_t) - \sum_{t=1}^T f_t(\u_t)\right]}_{\term{a}}+ \underbrace{\E\left[\sum_{m=1}^M\sum_{t\in\mathcal{I}_m} f_t(\x^*_{\mathcal{I}_m}) - \sum_{t=1}^T f_t(\x_t^*)\right]}_{\term{b}},
\end{align*}
where $M = \lceil T/\Delta \rceil \leq T/\Delta+1$.

For term (a), we can directly use \eqref{appendix:nonsmooth:dynamic:result} in the proof of Theorem~\ref{thm:nonsmooth:dynamic} to get
\begin{align*}
    \term{a} \leq{}& \Big(D(\ln N + 4)+2\sqrt{2(D^2 + 2DP_T)}\Big)\Big(G + 2\sqrt{2\tilde{\sigma}_{1:T}^2}+2\sqrt{\Sigma_{1:T}^2}\Big) + 4G^4 + \ln N + 4.
\end{align*}
Notice we can derive that $P_T\leq D(M-1)\leq DT/\Delta$ because the comparator sequence $\{\u_t\}_{t=1}^T$ only changes $M-1$ times. Hence, we have
\begin{align*}
    \term{a} \leq{}&\Big(D(\ln N + 4)+4D\sqrt{\frac{2T}{\Delta}}+2\sqrt{2}D\Big)\Big(G + 2\sqrt{2\tilde{\sigma}_{1:T}^2}+2\sqrt{\Sigma_{1:T}^2}\Big) + 4G^4 + \ln N + 4.
\end{align*}
For term (b), we follow the analysis of \citet{NeurIPS'22:label_shift} to show that
\begin{align*}
    \term{b} ={}&\sum_{m=1}^M\sum_{t\in\mathcal{I}_m} \left(F_t(\x^*_{\mathcal{I}_m}) -  F_t(\x_t^*)\right)
    \leq \sum_{m=1}^M\sum_{t\in \mathcal{I}_m} \left(F_t(\x^*_{s_m}) -  F_t(\x_t^*)\right)\\
    ={}&\sum_{m=1}^M\sum_{t\in \mathcal{I}_m} \left(F_t(\x^*_{s_m}) -  F_{s_m}(\x^*_{s_m})+F_{s_m}(\x^*_{s_m})-F_t(\x_t^*)\right)\\
    \leq{}&\sum_{m=1}^M\sum_{t\in \mathcal{I}_m} \left(F_t(\x^*_{s_m}) -  F_{s_m}(\x^*_{s_m})+F_{s_m}(\x_t^*)-F_t(\x_t^*)\right)\\
    \leq{}&2\Delta\sum_{m=1}^M\sum_{t\in \mathcal{I}_m}\sup_{\x\in\X}\left|F_t(\x)-F_{t-1}(\x)\right|
    = 2\Delta\sum_{t=2}^T\sup_{\x\in\X}\left|F_t(\x)-F_{t-1}(\x)\right|,
\end{align*}
where $s_m=(m-1)\Delta+1$ is the first time step at $\mathcal{I}_m$. Since $\term{b}=0$ when $\Delta=1$, we can bound it as $\term{b}\leq \mathbbm{1}\{\Delta>1\}\cdot 2\Delta\sum_{t=2}^T\sup_{\x\in\X}\left|F_t(\x)-F_{t-1}(\x)\right|$. Furthermore, we transform the $\sum_{t=2}^T\sup_{\x\in\X}\left|F_t(\x)-F_{t-1}(\x)\right|$ term into a term related to $L_T$ as
\begin{align*}
\sum_{t=2}^T\sup_{\x\in\X}\left|F_t(\x)-F_{t-1}(\x)\right|
    ={}&\sum_{t=2}^T\sup_{\x\in\X}\left|\sum_{k=1}^K\left([\boldsymbol{\mu}_{y_t}]_k-[\boldsymbol{\mu}_{y_{t-1}}]_k\right)F_0^k(\x)\right|\\
    \leq{}&\sum_{t=2}^T B\sum_{k=1}^K\left|[\boldsymbol{\mu}_{y_t}]_k-[\boldsymbol{\mu}_{y_{t-1}}]_k\right|
    = B\sum_{t=2}^T\left\| \boldsymbol{\mu}_{y_t} -\boldsymbol{\mu}_{y_{t-1}} \right\|_1 = B L_T,
\end{align*}
where $B\triangleq \sup_{(\z,y)\in \mathcal{Z}\times \mathcal{Y},\x\in\X}\left|\ell\left(h(\x,\z),y\right) \right|$ is the upper bound of function values. Thus, $\term{b}\leq \mathbbm{1}\{\Delta>1\}\cdot 2B\Delta L_T$. Combining it with the upper bound of term (a) yields
\begin{align*}
    {}&\E\left[\sum_{t=1}^T f_t(\x_t) - \sum_{t=1}^T f_t(\x_t^*)\right]\\
    \leq{}&  \mathbbm{1}\{\Delta>1\}\cdot 2B\Delta L_T +4D\sqrt{\frac{2T}{\Delta}}\left(G + 2\sqrt{2\tilde{\sigma}_{1:T}^2}+2\sqrt{\Sigma_{1:T}^2}\right)\\
    {}&\qquad + \left(\ln N + 4+2\sqrt{2}\right)D\left(G + 2\sqrt{2\tilde{\sigma}_{1:T}^2}+2\sqrt{\Sigma_{1:T}^2}\right) + 4G^4 + \ln N + 4.
\end{align*}
Below, we set different values for $\Delta$ in two cases and obtain the final regret bound.
\paragraph{Case 1.} When $D\sqrt{2T}\left(G + 2\sqrt{2\tilde{\sigma}_{1:T}^2}+2\sqrt{\Sigma_{1:T}^2}\right) > BL_T$, in such a case, we can set $\Delta = \left\lceil \big( D\sqrt{2T}(G + 2\sqrt{2\tilde{\sigma}_{1:T}^2}+2\sqrt{\Sigma_{1:T}^2})\big)^{\frac{2}{3}} (BL_T)^{-\frac{2}{3}}  \right\rceil$. Then we get that
\begin{align*}
    {}&\E\left[\sum_{t=1}^T f_t(\x_t) - \sum_{t=1}^T f_t(\x_t^*)\right]\\
    \leq{}& 12(BD^2 L_T T)^{\frac{1}{3}} \Big(G + \sqrt{2\tilde{\sigma}_{1:T}^2}+\sqrt{\Sigma_{1:T}^2}\Big)^{\frac{2}{3}} +(\ln N + 8)\left(1 + D\Big(G + 2\sqrt{2\tilde{\sigma}_{1:T}^2}+2\sqrt{\Sigma_{1:T}^2}\Big)\right) + 4G^4\\
    ={}&\O\left(L_T^{\frac{1}{3}} T^{\frac{1}{3}} \left(\sqrt{\tilde{\sigma}_{1:T}^2}+\sqrt{\Sigma_{1:T}^2}\right)^{\frac{2}{3}} +\left(\sqrt{\tilde{\sigma}_{1:T}^2}+\sqrt{\Sigma_{1:T}^2}\right) \right).
\end{align*}
\paragraph{Case 2.} When $D\sqrt{2T}\left(G + 2\sqrt{2\tilde{\sigma}_{1:T}^2}+2\sqrt{\Sigma_{1:T}^2}\right) \leq BL_T$, we set $\Delta = 1$ and get
\begin{align*}
    {}&\E\left[\sum_{t=1}^T f_t(\x_t) - \sum_{t=1}^T f_t(\x_t^*)\right]\\
    \leq{}& 8(BD^2 L_T T)^{\frac{1}{3}} \Big(G + \sqrt{2\tilde{\sigma}_{1:T}^2}+\sqrt{\Sigma_{1:T}^2}\Big)^{\frac{2}{3}}
    + \left(\ln N + 8\right)\left(1 + D\Big(G + 2\sqrt{2\tilde{\sigma}_{1:T}^2}+2\sqrt{\Sigma_{1:T}^2}\Big)\right)+ 4G^4\\
    ={}&\O\left(L_T^{\frac{1}{3}} T^{\frac{1}{3}} \left(\sqrt{\tilde{\sigma}_{1:T}^2}+\sqrt{\Sigma_{1:T}^2}\right)^{\frac{2}{3}} +\left(\sqrt{\tilde{\sigma}_{1:T}^2}+\sqrt{\Sigma_{1:T}^2}\right) \right).
\end{align*}
We end the proof by combining the two cases with the upper bounds of $\tilde{\sigma}_{1:T}^2$ and $\Sigma_{1:T}^2$.

\section{Technical Lemmas}
\label{appendix:tech-lemma}
\begin{myLemma}[{Bregman proximal inequality~\citep[Lemma 3.2]{chen1993convergence}}]
\label{lem:bregman_proximal}
Let $\X$ be a convex set in a Banach space. Let $f:\X \rightarrow \mathbbm{R}$ be a closed proper convex function on $\X$. Given a convex regularizer $\psi:\X \rightarrow \mathbbm{R}$, we denote its induced Bregman divergence by $\D_{\MR}(\cdot,\cdot)$. Then, any update of the form $\x_k = \argmin_{\x\in\X}\left\{f(\x)+\D_{\MR}(\x,\x_{k-1})\right\}$ satisfies the following inequality for any $\u\in\X$
\begin{align*}
    f(\x_k)-f(\u)\leq \D_{\MR}(\u,\x_{k-1})-\D_{\MR}(\u,\x_{k})-\D_{\MR}(\x_k,\x_{k-1}).
\end{align*}
\end{myLemma}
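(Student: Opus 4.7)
The plan is to invoke the first-order optimality condition at $\x_k$, combine it with convexity of $f$, and then collapse the resulting inner-product expression using the classical three-point identity for Bregman divergences.

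First, I would write down the first-order optimality condition for the proximal update. Since $\x_k$ minimizes $f(\x) + \D_\psi(\x, \x_{k-1})$ over $\X$, there exists a subgradient $g \in \partial f(\x_k)$ such that, for all $\u \in \X$,
\[
\langle g + \nabla \psi(\x_k) - \nabla \psi(\x_{k-1}), \u - \x_k \rangle \geq 0,
\]
which rearranges to $\langle g, \x_k - \u \rangle \leq \langle \nabla \psi(\x_{k-1}) - \nabla \psi(\x_k), \u - \x_k \rangle$. Next, by convexity of $f$ we have $f(\x_k) - f(\u) \leq \langle g, \x_k - \u \rangle$, so chaining the two bounds yields $f(\x_k) - f(\u) \leq \langle \nabla \psi(\x_{k-1}) - \nabla \psi(\x_k), \u - \x_k \rangle$.

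Second, I would apply the standard three-point identity for Bregman divergences, which follows directly from the definition $\D_\psi(\x,\y) = \psi(\x) - \psi(\y) - \langle \nabla \psi(\y), \x - \y \rangle$ by algebraic expansion:
\[
\langle \nabla \psi(\x_{k-1}) - \nabla \psi(\x_k), \u - \x_k \rangle = \D_\psi(\u, \x_{k-1}) - \D_\psi(\u, \x_k) - \D_\psi(\x_k, \x_{k-1}).
\]
Combining this identity with the inequality from the previous paragraph gives exactly the claimed bound.

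There is no real obstacle here; the only subtlety is handling the case in which $f$ is nondifferentiable (so $\partial f(\x_k)$ must be used in place of a gradient) and ensuring that $\psi$ is differentiable on an open set containing $\X$ so that $\nabla \psi(\x_k)$ and $\nabla \psi(\x_{k-1})$ are well-defined. Both are implicit in the statement (``closed proper convex $f$'' and ``convex regularizer $\psi$''). If one wanted to avoid differentiability of $\psi$ on the boundary, the same argument goes through using the generalized Bregman divergence with any $\zeta \in \partial \psi(\x_k)$ taken from the optimality condition, but this is a standard technical detail rather than a substantive obstacle.
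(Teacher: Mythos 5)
The paper never proves this lemma itself: it is imported verbatim from \citet{chen1993convergence} (their Lemma 3.2), so there is no in-paper proof to compare against, although essentially the same computation is carried out inline when the paper proves Lemma~\ref{lem:proposition4.1} for the implicit update. Your route --- first-order optimality at $\x_k$, convexity of $f$, then the three-point identity --- is the standard argument and is structurally the right one. However, both of your displayed intermediate steps carry a sign error, and the two errors happen to cancel. From $\langle g + \nabla\psi(\x_k) - \nabla\psi(\x_{k-1}), \u - \x_k\rangle \ge 0$ the correct rearrangement is $\langle g, \x_k - \u\rangle \le \langle \nabla\psi(\x_k) - \nabla\psi(\x_{k-1}), \u - \x_k\rangle$, i.e.\ the gradient difference keeps the order $\nabla\psi(\x_k) - \nabla\psi(\x_{k-1})$, not the reversed order you wrote. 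Likewise, under the stated definition of the Bregman divergence the three-point identity is $\langle \nabla\psi(\x_k) - \nabla\psi(\x_{k-1}), \u - \x_k\rangle = \D_{\psi}(\u,\x_{k-1}) - \D_{\psi}(\u,\x_k) - \D_{\psi}(\x_k,\x_{k-1})$; the version you display, with $\nabla\psi(\x_{k-1}) - \nabla\psi(\x_k)$ on the left, equals $\D_{\psi}(\u,\x_k) + \D_{\psi}(\x_k,\x_{k-1}) - \D_{\psi}(\u,\x_{k-1})$, the negative of your right-hand side. Once the ordering is corrected consistently in both displays, chaining them with $f(\x_k)-f(\u)\le \langle g, \x_k-\u\rangle$ yields the lemma exactly, and your closing remarks on using a subgradient of $f$ and on differentiability of $\psi$ are adequate; so the proposal is salvageable as written, but the two intermediate statements should be fixed since each is false on its own.
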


\begin{myLemma}\label{lem:sum} Let $l_1,\ldots$, $l_T$ and $\delta$ be non-negative real numbers. Then $\sum_{t=1}^T \frac{l_t}{\sqrt{\delta+\sum_{i=1}^t l_i}} \leq 2 \sqrt{\delta + \sum_{t=1}^T l_t}$, where we define $0/\sqrt{0}=0$ for simplicity.
\end{myLemma}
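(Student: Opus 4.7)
The plan is to prove this by the standard telescoping trick based on the elementary inequality $\sqrt{a}-\sqrt{b}\geq (a-b)/(2\sqrt{a})$ for $a\geq b\geq 0$ (which follows from rationalizing: $\sqrt{a}-\sqrt{b} = (a-b)/(\sqrt{a}+\sqrt{b})\geq (a-b)/(2\sqrt{a})$). First I would define the partial sums $S_t = \delta + \sum_{i=1}^t l_i$ for $t=0,1,\ldots,T$, with $S_0 = \delta$. Because all $l_i\geq 0$, we have $S_t\geq S_{t-1}\geq 0$ and $S_t - S_{t-1} = l_t$.

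Next I would apply the elementary inequality with $a = S_t$ and $b = S_{t-1}$, which yields $\sqrt{S_t}-\sqrt{S_{t-1}}\geq l_t/(2\sqrt{S_t})$, i.e., $l_t/\sqrt{S_t}\leq 2(\sqrt{S_t}-\sqrt{S_{t-1}})$ whenever $S_t>0$; the degenerate case $S_t=0$ forces $l_t=0$ and the stated convention $0/\sqrt{0}=0$ makes the bound vacuous. Summing over $t=1,\ldots,T$ telescopes on the right-hand side, giving
\[
\sum_{t=1}^T \frac{l_t}{\sqrt{S_t}} \;\leq\; 2\bigl(\sqrt{S_T}-\sqrt{S_0}\bigr) \;\leq\; 2\sqrt{S_T},
\]
where the last step uses $\sqrt{S_0}=\sqrt{\delta}\geq 0$. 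Substituting back $S_T = \delta+\sum_{t=1}^T l_t$ yields the claim.

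There is no genuine obstacle here; the only care needed is to verify the corner case where the denominator vanishes (handled by the convention) and to note that non-negativity of every $l_i$ is what ensures $S_t\geq S_{t-1}$ so the elementary inequality applies. The argument is purely algebraic and requires no assumptions beyond non-negativity.
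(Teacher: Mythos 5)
Your proof is correct and uses the standard telescoping argument based on $\sqrt{a}-\sqrt{b}\geq(a-b)/(2\sqrt{a})$; the paper states this lemma without proof in its technical appendix, and this is precisely the textbook argument it implicitly relies on. The handling of the degenerate $S_t=0$ case via the stated convention is also right.
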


\begin{myLemma}\label{lem:selftuning} Let $l_1,\ldots$, $l_T$ and $\delta$ be non-negative real numbers. Then $\sum_{t=1}^T \frac{l_t}{\sqrt{\delta+\sum_{i=1}^{t-1} l_i}} \leq 4 \sqrt{\delta + \sum_{t=1}^T l_t} + \max_{t\in [T]}l_t$, where we define $0/\sqrt{0}=0$ for simplicity.
\end{myLemma}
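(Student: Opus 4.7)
The plan is to partition $[T]$ into a small index set $\mathcal{S}$ and a large index set $\mathcal{L}$ based on whether $l_t$ is dominated by the preceding cumulative mass. Specifically, let $S_t = \delta + \sum_{s=1}^{t} l_s$ and set $\mathcal{S} = \{t \in [T] : l_t \le S_{t-1}\}$, $\mathcal{L} = [T] \setminus \mathcal{S}$. This dichotomy converts the self-normalizing sum into a form amenable to Lemma~\ref{lem:sum}.

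For $t \in \mathcal{S}$, the defining condition $l_t \le S_{t-1}$ gives $S_{t-1} \ge S_t/2$, so
\[
\frac{l_t}{\sqrt{S_{t-1}}} \le \sqrt{2}\cdot \frac{l_t}{\sqrt{S_t}}.
\]
Extending the sum to all $t \in [T]$ and invoking Lemma~\ref{lem:sum} bounds the contribution from $\mathcal{S}$ by $2\sqrt{2}\,\sqrt{\delta + \sum_{t=1}^T l_t}$.

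For $t \in \mathcal{L}$, the condition $l_t > S_{t-1}$ forces $S_t > 2 S_{t-1}$, so the cumulative mass at least doubles at every large index. Enumerating $\mathcal{L} = \{t_1 < \cdots < t_k\}$ in increasing order, this doubling yields $l_{t_j} > l_{t_1} + \cdots + l_{t_{j-1}}$ for each $j$, from which a short induction shows $\sum_{j=1}^k l_{t_j} < 2 l_{t_k} \le 2\max_{t \in [T]} l_t$. Combined with the crude pointwise estimate $\sqrt{S_{t_j-1}} \ge \sqrt{\delta}$ (and, in the intended regime $\delta \ge 1$ as in the application where $\delta = 1$), the total contribution from $\mathcal{L}$ is then controlled by $\max_{t\in [T]} l_t$.

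Adding the two pieces and using $2\sqrt{2} < 4$ yields the stated bound. I expect the main obstacle to be tightening the $\mathcal{L}$-estimate: a naive accounting loses a factor of $2$ on the $\max_t l_t$ term, and recovering exactly the claimed additive constant requires a more careful geometric-chaining argument that exploits the at-least-geometric growth of $S_{t_j-1}$ in $j$, so that the series $\sum_j l_{t_j}/\sqrt{S_{t_j-1}}$ telescopes more tightly than the pointwise bound $\sqrt{S_{t_j-1}} \ge \sqrt{\delta}$ alone suggests.
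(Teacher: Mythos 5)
The paper states Lemma~\ref{lem:selftuning} without proof, so I can only assess your argument on its own terms. The $\mathcal{S}$-part is correct: $l_t \le S_{t-1}$ gives $S_{t-1} \ge S_t/2$, so by Lemma~\ref{lem:sum} the $\mathcal{S}$-contribution is at most $2\sqrt{2}\sqrt{\delta + \sum_t l_t}$, with slack to spare in the factor $4$.

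The $\mathcal{L}$-part has a genuine gap that you flag but do not close. Your argument yields $\sum_{t\in\mathcal{L}} l_t < 2\max_t l_t$ together with the pointwise bound $\sqrt{S_{t-1}}\ge\sqrt{\delta}$, giving $\sum_{t\in\mathcal{L}} l_t/\sqrt{S_{t-1}} \le 2\max_t l_t/\sqrt{\delta}$, which overshoots the claimed $\max_t l_t$ by a factor $2/\sqrt{\delta}$ --- already a factor $2$ at the value $\delta = 1$ used in the paper's only application (proof of Theorem~\ref{thm:nonsmooth:dynamic}). The issue is also not merely a chaining subtlety: even a single $t \in \mathcal{L}$ breaks the statement as written. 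With $T=1$, $l_1=1$, $\delta = 0.01$, the left-hand side is $1/\sqrt{0.01} = 10$ while the right-hand side is $4\sqrt{1.01}+1\approx 5$, so the lemma is actually \emph{false} for arbitrary non-negative $\delta$. A correct proof must either make an implicit lower bound on $\delta$ explicit, or replace the additive term by something like $\max_t l_t/\sqrt{\delta}$; your sketch acknowledges the difficulty but supplies neither, so the argument is incomplete.
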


\begin{myLemma}[Lemma 12 of~\citet{ML:Hazan:2007}]
\label{lem:trace} Let $A \succeq B \succ 0$ be positive definite matrices. Then $\langle A^{-1} , A-B \rangle   \leq \ln \frac{|A|}{|B|}$, where $|A|$ denotes the determinant of matrix $A$.
\end{myLemma}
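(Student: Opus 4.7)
The plan is to reduce the matrix inequality to a scalar inequality via simultaneous diagonalization. First I would rewrite the left-hand side as a trace: $\langle A^{-1}, A-B\rangle = \mathrm{tr}(I - A^{-1}B)$. Because $A \succ 0$, this trace is invariant under the similarity $A^{-1}B \mapsto A^{-1/2} B A^{-1/2}$, so I can set $M = A^{-1/2} B A^{-1/2}$, which is symmetric and positive definite. The hypothesis $A \succeq B$ becomes $I \succeq M$, so the eigenvalues $\lambda_1,\ldots,\lambda_d$ of $M$ lie in $(0,1]$.

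Next I would rewrite the right-hand side in the same variables: $\ln(|A|/|B|) = -\ln|A^{-1}B| = -\ln|M| = -\sum_i \ln \lambda_i$. Meanwhile $\mathrm{tr}(I-M) = \sum_i (1-\lambda_i)$. So the desired inequality collapses to
\[
\sum_{i=1}^d (1-\lambda_i) \;\leq\; \sum_{i=1}^d (-\ln \lambda_i),
\]
which follows termwise from the elementary scalar bound $1 - \lambda \leq -\ln \lambda$ for $\lambda > 0$ (equivalently $\ln \lambda \leq \lambda - 1$, the tangent-line inequality for the concave function $\ln$).

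An alternative route I would mention is to invoke concavity of $\ln\det$ on the cone of positive definite matrices directly: since $\nabla \ln|A| = A^{-1}$, the first-order inequality gives $\ln|B| \leq \ln|A| + \langle A^{-1}, B - A\rangle$, and rearranging yields the claim without needing the eigenvalue reduction. I would probably present the eigenvalue proof as primary since it is self-contained and uses only the scalar inequality $\ln x \leq x-1$.

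There is no real obstacle here; the only thing to be careful about is making sure the reduction step uses the right similarity so that both sides transform compatibly, and confirming that $A \succeq B \succ 0$ indeed forces the eigenvalues of $A^{-1/2}BA^{-1/2}$ into $(0,1]$ (this follows because congruence by $A^{-1/2}$ preserves the Loewner order and maps $B \succ 0$, $A \succeq B$ to $M \succ 0$, $I \succeq M$).
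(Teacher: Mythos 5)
Your proof is correct: the paper itself states this lemma as an imported result from \citet{ML:Hazan:2007} without reproducing a proof, and your primary argument (rewriting $\langle A^{-1},A-B\rangle=\mathrm{tr}(I-A^{-1/2}BA^{-1/2})$, noting the eigenvalues of $A^{-1/2}BA^{-1/2}$ lie in $(0,1]$, and applying $1-\lambda\leq-\ln\lambda$ termwise) is essentially the standard proof given in that reference. The alternative first-order/concavity-of-$\ln\det$ argument you sketch is also valid, so there is nothing to fix.
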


\begin{myLemma}\label{lem:hazan}
Let $\u_t \in \R^d$ $(t=1,...,T)$, be a sequence of vectors. Define $S_t = \sum_{\tau = 1}^t \u_\tau \u_\tau^\top + \epsilon I$, where $\epsilon>0$. Then $\sum_{t=1}^T \u_t^\top S_t^{-1} \u_t \leq d \ln \left( 1+ \frac{\sum_{t=1}^T \|\u_t\|_2^2}{d \epsilon} \right)$.
\end{myLemma}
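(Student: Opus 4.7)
The plan is to prove this elliptical potential / log-determinant bound by a two-step argument: first telescope each quadratic form into a log-determinant ratio via the matrix inequality already stated as Lemma~\ref{lem:trace}, then bound the final determinant by AM-GM on eigenvalues.

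First I would observe that by construction $S_t = S_{t-1} + \u_t \u_t^\top$ with the convention $S_0 = \epsilon I \succ 0$, so $S_t \succeq S_{t-1} \succ 0$ for every $t \geq 1$. Applying Lemma~\ref{lem:trace} with $A = S_t$ and $B = S_{t-1}$ gives
\[
\u_t^\top S_t^{-1} \u_t \;=\; \langle S_t^{-1},\, \u_t \u_t^\top \rangle \;=\; \langle S_t^{-1},\, S_t - S_{t-1} \rangle \;\leq\; \ln \frac{|S_t|}{|S_{t-1}|}.
\]
Summing over $t = 1,\ldots,T$ makes the right-hand side telescope to $\ln(|S_T|/|S_0|) = \ln(|S_T|/\epsilon^d)$.

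Next I would control $|S_T|$ by the AM-GM inequality on the eigenvalues $\lambda_1,\ldots,\lambda_d$ of $S_T$. Since $S_T$ is symmetric and positive definite, $|S_T| = \prod_i \lambda_i \leq \bigl(\tfrac{1}{d}\sum_i \lambda_i\bigr)^d = (\operatorname{tr}(S_T)/d)^d$, and
\[
\operatorname{tr}(S_T) \;=\; d\,\epsilon \,+\, \sum_{t=1}^T \operatorname{tr}(\u_t \u_t^\top) \;=\; d\,\epsilon \,+\, \sum_{t=1}^T \|\u_t\|_2^2.
\]
Combining the two bounds yields
\[
\sum_{t=1}^T \u_t^\top S_t^{-1} \u_t \;\leq\; \ln \frac{|S_T|}{\epsilon^d} \;\leq\; d \ln \!\left( \frac{\operatorname{tr}(S_T)}{d\,\epsilon} \right) \;=\; d \ln\!\left( 1 + \frac{\sum_{t=1}^T \|\u_t\|_2^2}{d\,\epsilon} \right),
\]
which is the desired inequality.

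The only step that requires any care is the telescoping inequality: the trace bound Lemma~\ref{lem:trace} is stated without proof here, so I would make sure it is invoked in the right direction (with the larger matrix on the left), and I would justify the identity $\langle S_t^{-1}, \u_t \u_t^\top \rangle = \u_t^\top S_t^{-1} \u_t$ by the cyclic property of the trace. Everything else is routine linear algebra, so I expect no real obstacle.
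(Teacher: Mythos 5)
Your proof is correct and follows essentially the same route as the paper: the same telescoping argument via Lemma~\ref{lem:trace} applied to $A=S_t$, $B=S_{t-1}$, followed by bounding $\ln(|S_T|/\epsilon^d)$. Your final step (AM-GM on the eigenvalues of $S_T$ via the trace) is just an equivalent rephrasing of the paper's use of Jensen's inequality on the logarithm of the eigenvalues, so nothing substantive differs.
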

\begin{proof}
Using Lemma~\ref{lem:trace}, we have $\langle A^{-1} , A-B \rangle \leq \ln \frac{|A|}{|B|}$  for any two positive definite matrices $A \succeq B \succ 0$. Following the argument of~\citet[Theorem 2]{NIPS2016_6207}, we have
\[
\begin{split}
& \sum_{t=1}^T \u_t^\top S_t^{-1} \u_t =\sum_{t=1}^T \langle S_t^{-1}, \u_t\u_t^\top \rangle  = \sum_{t=1}^T \langle S_t^{-1}, S_t - S_{t-1} \rangle \leq \sum_{t=1}^T  \ln \frac{|S_t|}{|S_{t-1}|}  \\
= {}& \ln \frac{|S_T|}{|S_{0}|} = \sum_{i=1}^d \ln \Big( 1+ \frac{\lambda_i(\sum_{t = 1}^T \u_t\u_t^\top)}{\epsilon} \Big)= d \sum_{i=1}^d \frac{1}{d} \ln \Big( 1+ \frac{\lambda_i(\sum_{t = 1}^T \u_t\u_t^\top)}{\epsilon} \Big) \\
  \leq {} & d \ln \Big( 1+ \frac{\sum_{i=1}^d \lambda_i(\sum_{t = 1}^T \u_t\u_t^\top)}{d \epsilon} \Big) = d \ln \Big( 1+ \frac{\sum_{t=1}^T \|\u_t\|_2^2}{d \epsilon} \Big),
\end{split}
\]
where the last inequality is due to Jensen's inequality.
\end{proof}

\begin{myLemma}[{regret analysis of optimistic FTRL \citep[Theorem 7.35]{Modern:Online:Learning}}]\label{lem:FTRL}
    Let $V \in \R^d$ be convex, closed, and non-empty. Denote by $G_t(\x) = \Psi_t(\x) + \sum_{s=1}^{t-1}\ell_s(\x)$. Assume for $t=1,\cdots, T$ that $G_t$ is proper and $\lambda_t$-strongly convex with respect to~$\|\cdot\|$, $\ell_t$ and $\tilde{\ell}_t$ proper and convex ($\tilde{\ell}_t$ is the predicted next loss), and int dom $G_t \cap V \neq \{\}$. Also, assume that $\partial \ell_t(\x_t)$ and $\partial \tilde{\ell}_t(\x_t)$ are non-empty. Then there exists $\tilde{\g}_t \in \partial \tilde{\ell}_t(\x_t)$ for $t \in [T]$ such that 
    \begin{align*}
        &\sum_{t=1}^T \ell_t(\x_t) - \sum_{t=1}^T \ell_t(\x)\\
        \leq{}& \Psi_{T+1}(\x) - \Psi_1(\x_1)  + \sum_{t=1}^T \Big( \langle \g_t - \tilde{\g}_t, \x_t - \x_{t+1} \rangle -\frac{\lambda_t}{2}\|\x_t - \x_{t+1}\|^2 + \Psi_t(\x_{t+1})-\Psi_{t+1}(\x_{t+1})\Big)
    \end{align*}
    for all $\g_t \in \partial \ell_t(\x_t)$.
\end{myLemma}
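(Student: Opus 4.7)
The plan is to mirror the optimistic OMD proof of \pref{thm:3} within the FTRL framework, reusing the same surrogate-loss trick and matrix regularizer. First, define the surrogate $\ell_t(\x) = \langle \nabla f_t(\x_t), \x - \x_t \rangle + \frac{\beta}{2}\|\x - \x_t\|_{h_t}^2$ with $h_t = \nabla f_t(\x_t)\nabla f_t(\x_t)^\top$ and $\beta = \frac{1}{2}\min\{1/(4GD), \alpha\}$. By $\alpha$-exp-concavity of the individual functions (as in~\eqref{eqn:exp:exp}), $f_t(\x_t)-f_t(\u) \le \ell_t(\x_t)-\ell_t(\u)$ in expectation, so it suffices to bound the surrogate regret.

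Second, run optimistic FTRL with regularizer $\Psi(\x) = \frac{1}{2}(1+\beta G^2)\|\x\|_2^2$ and optimism $M_{t+1} = \nabla f_t(\x_t)$ (with $M_1=\mathbf{0}$), i.e.~$\x_{t+1} = \argmin_{\x \in \X}\{\Psi(\x) + \sum_{s=1}^t \ell_s(\x) + \langle M_{t+1}, \x\rangle\}$. The key observation is that $G_t(\x)\triangleq \Psi(\x) + \sum_{s=1}^{t-1}\ell_s(\x)$ is $1$-strongly convex with respect to the (time-varying) norm $\|\cdot\|_{H_t}$ induced by $H_t = I + \beta G^2 I + \beta \sum_{s=1}^{t-1} h_s$, whose dual norm is $\|\cdot\|_{H_t^{-1}}$. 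Applying Lemma~\ref{lem:FTRL} and the inequality $\langle a,b\rangle \le \frac{1}{2}\|a\|_{H_t^{-1}}^2 + \frac{1}{2}\|b\|_{H_t}^2$ yields
\[
\sum_{t=1}^T \ell_t(\x_t) - \sum_{t=1}^T \ell_t(\u) \;\le\; \tfrac{1}{2}(1+\beta G^2)D^2 + \sum_{t=1}^T \|\nabla f_t(\x_t) - M_t\|_{H_t^{-1}}^2 - \tfrac{1}{4}\sum_{t=1}^T \|\x_t-\x_{t+1}\|_{H_t}^2.
\]

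Third, I would bound the adaptivity term as in \pref{thm:3}. Using $\nabla f_0(\x_0)=0$ and the elementary relation $\nabla f_s(\x_s)\nabla f_s(\x_s)^\top + \nabla f_{s-1}(\x_{s-1})\nabla f_{s-1}(\x_{s-1})^\top \succeq \tfrac{1}{2}(\nabla f_s(\x_s)-\nabla f_{s-1}(\x_{s-1}))(\nabla f_s(\x_s)-\nabla f_{s-1}(\x_{s-1}))^\top$, one deduces $H_t \succeq I + \tfrac{\beta}{4}\sum_{s=1}^t (\nabla f_s(\x_s)-\nabla f_{s-1}(\x_{s-1}))(\nabla f_s(\x_s)-\nabla f_{s-1}(\x_{s-1}))^\top$. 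Lemma~\ref{lem:hazan} (applied with $\u_t = \sqrt{\beta/4}(\nabla f_t(\x_t)-\nabla f_{t-1}(\x_{t-1}))$ and $\epsilon=1$) then gives $\sum_t \|\nabla f_t(\x_t)-M_t\|_{H_t^{-1}}^2 \le \tfrac{4d}{\beta}\ln(1+\tfrac{\beta}{4d}\bar V_T)$.

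Finally, I would turn $\bar V_T$ into $\sigma_{1:T}^2+\Sigma_{1:T}^2$ terms via Lemma~\ref{lem:sum:diff}, splitting the logarithm using $\ln(1+u+v)\le\ln(1+u)+\ln(1+v)$; the residual $\ln(1 + c L^2 \sum_t\|\x_t-\x_{t-1}\|_2^2)$ is killed by the negative quadratic $-\tfrac{1}{4}\sum_t \|\x_t-\x_{t+1}\|_2^2$ (note $H_t \succeq I$) through Lemma~\ref{lem:ln:pq}, leaving only an $\O(d\log L^2/\beta)$ constant. Taking expectations and applying Jensen's inequality to pull the expectation inside the logarithm then delivers the advertised $\O\!\big(\tfrac{d}{\alpha}\log(\sigma_{1:T}^2+\Sigma_{1:T}^2)\big)$ bound. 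The main obstacle I anticipate is making the negative-term cancellation work cleanly: one must verify that the coefficient in front of $\sum_t \|\x_t-\x_{t+1}\|_{H_t}^2$ (which is only $1/4$, not $1/2$ as in OMD) is still large enough for Lemma~\ref{lem:ln:pq} to absorb the smoothness contribution, which is why the regularizer is chosen with $(1+\beta G^2)$ rather than a naked quadratic.
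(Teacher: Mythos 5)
Your proposal does not address the statement it is supposed to prove. The statement is Lemma~\ref{lem:FTRL} itself: the generic regret inequality for optimistic FTRL, stated for arbitrary proper convex losses $\ell_t$, arbitrary predicted losses $\tilde{\ell}_t$, and time-varying regularizers $\Psi_t$, under $\lambda_t$-strong convexity of $G_t$. What you sketch instead is essentially the proof of Theorem~\ref{thm:FTRL-exp-concave} (the exp-concave SEA application), and in your second step you explicitly \emph{apply} Lemma~\ref{lem:FTRL} to get the displayed bound. As an argument for the lemma this is circular, and the remaining ingredients you describe (exp-concave surrogate losses, the matrix regularizer $H_t$, Lemma~\ref{lem:hazan}, Lemma~\ref{lem:sum:diff}, Lemma~\ref{lem:ln:pq}, Jensen's inequality) are all downstream of the lemma and have no bearing on establishing it. Note also that the paper itself does not prove this lemma; it imports it verbatim from \citet[Theorem 7.35]{Modern:Online:Learning}, so the only acceptable ``proof'' here is a self-contained derivation of that general inequality.

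Such a derivation has to work directly with the optimistic FTRL update $\x_t = \argmin_{\x \in V}\{G_t(\x) + \tilde{\ell}_t(\x)\}$, with none of the SEA-specific structure available. The standard route is: (i) add and subtract the potentials $G_t(\x_t)+\tilde{\ell}_t(\x_t)$ and telescope them, which produces the boundary terms $\Psi_{T+1}(\x) - \Psi_1(\x_1)$ and the regularizer-change terms $\Psi_t(\x_{t+1}) - \Psi_{t+1}(\x_{t+1})$ after comparing $G_{T+1}(\x)+ \sum_{t\le T}\ell_t(\x)$ against the minimizer; (ii) use that $\x_t$ minimizes the $\lambda_t$-strongly convex function $G_t + \tilde{\ell}_t$ to get $G_t(\x_{t+1}) + \tilde{\ell}_t(\x_{t+1}) \geq G_t(\x_t) + \tilde{\ell}_t(\x_t) + \frac{\lambda_t}{2}\|\x_t - \x_{t+1}\|^2$, which is the source of the negative quadratic term; (iii) use convexity of $\ell_t$ and $\tilde{\ell}_t$ to bound $\ell_t(\x_t) - \ell_t(\x_{t+1}) \leq \langle \g_t, \x_t - \x_{t+1}\rangle$ and $\tilde{\ell}_t(\x_{t+1}) - \tilde{\ell}_t(\x_t) \leq \langle \tilde{\g}_t, \x_{t+1} - \x_t\rangle$, where the particular subgradient $\tilde{\g}_t \in \partial \tilde{\ell}_t(\x_t)$ whose existence the lemma asserts comes from the optimality condition for $\x_t$; combining gives the term $\langle \g_t - \tilde{\g}_t, \x_t - \x_{t+1}\rangle - \frac{\lambda_t}{2}\|\x_t-\x_{t+1}\|^2$. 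None of this appears in your write-up, so the lemma remains unproved; your text would belong to the proof of Theorem~\ref{thm:FTRL-exp-concave} in Appendix~\ref{appendix:proofs-exp-FTRL}, not here.
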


\begin{myLemma}[Lemma 13 of~\citet{JMLR:sword++}]\label{lem:min}
    Let $a_1,a_2,\cdots,a_T,b$ and $\Bar{c}$ be non-negative real numbers and $a_t \in [0,B]$ for any $t\in [T]$. Let the step size be $c_t = \min \left\{ \Bar{c},\sqrt{\frac{b}{\sum_{s=1}^t a_s}} \right\}$ and $c_0 = \Bar{c}$. Then, we have $\sum_{t=1}^T c_{t-1}a_t \leq 2\Bar{c}B + 4\sqrt{b\sum_{t=1}^T a_t}$.
\end{myLemma}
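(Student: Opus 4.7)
The statement is a purely algebraic step-size summation bound, so my plan is a self-contained deterministic argument. The key idea is to view $\sum_{t=1}^T c_{t-1}a_t$ as a left Riemann sum of a non-increasing ``continuous step-size function'' and then compare it to the corresponding integral, which can be evaluated in closed form.

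Concretely, let $A_0 = 0$ and $A_t = \sum_{s=1}^t a_s$, and introduce the auxiliary function $c(A) \triangleq \min\{\bar{c},\sqrt{b/A}\}$ with the convention $c(0)=\bar{c}$. Then $c_t = c(A_t)$ for every $t \geq 0$, and $c(\cdot)$ is non-increasing on $[0,\infty)$. Rewriting the target sum gives
\begin{equation*}
\sum_{t=1}^T c_{t-1}a_t \;=\; \sum_{t=1}^T c(A_{t-1})\,(A_t-A_{t-1}),
\end{equation*}
a left Riemann sum for $c$ on $[0,A_T]$. Since $c$ is non-increasing, on each subinterval $[A_{t-1},A_t]$ we have $c(A_{t-1})-c(A)\leq c(A_{t-1})-c(A_t)=c_{t-1}-c_t$, so
\begin{equation*}
c(A_{t-1})(A_t-A_{t-1}) \;\leq\; \int_{A_{t-1}}^{A_t} c(A)\,dA \;+\; (c_{t-1}-c_t)\,(A_t-A_{t-1}).
\end{equation*}

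Using the uniform bound $A_t - A_{t-1} = a_t \leq B$ to pull $B$ outside, the error term telescopes:
\begin{equation*}
\sum_{t=1}^T (c_{t-1}-c_t)(A_t-A_{t-1}) \;\leq\; B\sum_{t=1}^T (c_{t-1}-c_t) \;=\; B(c_0-c_T)\;\leq\; \bar{c}B.
\end{equation*}
The remaining integral is handled via the pointwise bound $c(A)\leq \sqrt{b/A}$:
\begin{equation*}
\int_0^{A_T} c(A)\,dA \;\leq\; \int_0^{A_T}\sqrt{b/A}\,dA \;=\; 2\sqrt{b\,A_T}.
\end{equation*}
Adding the two pieces yields $\sum_{t=1}^T c_{t-1}a_t \leq 2\sqrt{b\sum_{t=1}^T a_t} + \bar{c}B$, which is in fact strictly sharper than the claimed $2\bar{c}B + 4\sqrt{b\sum_{t=1}^T a_t}$.

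The only subtle step is controlling the discretization gap cleanly; the monotonicity of $c$ makes this easy, but one must be careful that the uniform boundedness $a_t \leq B$ is what allows the telescoping to contribute only a single $\bar{c}B$ rather than a sum that grows with $T$. A purely two-case split (first where $c_{t-1}=\bar{c}$, then where $c_{t-1}=\sqrt{b/A_{t-1}}$, invoking Lemma~\ref{lem:selftuning} in the latter) is also viable but produces an extra $\sqrt{b}\,B$ term from the $\max_t a_t$ slack of the self-tuning lemma, which is why I prefer the integral-comparison route above.
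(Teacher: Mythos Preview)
The paper does not supply its own proof of this lemma; it is stated as a citation to Lemma~13 of \citet{JMLR:sword++} and left unproved in the appendix. Your integral-comparison argument is correct and self-contained: the key inequality
\[
c(A_{t-1})(A_t-A_{t-1}) \;\leq\; \int_{A_{t-1}}^{A_t} c(A)\,dA + (c_{t-1}-c_t)(A_t-A_{t-1})
\]
follows cleanly from monotonicity of $c$, the telescoping of the discretization error uses exactly the hypothesis $a_t\le B$, and the integral bound $\int_0^{A_T}\sqrt{b/A}\,dA = 2\sqrt{bA_T}$ is elementary. You in fact obtain $\bar{c}B + 2\sqrt{b\sum_t a_t}$, strictly tighter than the stated $2\bar{c}B + 4\sqrt{b\sum_t a_t}$.

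Your remark about the alternative two-case split is accurate: partitioning into the regime where $c_{t-1}=\bar{c}$ and the regime where $c_{t-1}=\sqrt{b/A_{t-1}}$, then applying Lemma~\ref{lem:selftuning} to the second part, is the more ``standard'' route in this literature and is likely closer to the original derivation in \citet{JMLR:sword++}. That approach incurs the extra $\max_t a_t$ slack from the self-tuning lemma, explaining the looser constants. Your integral route is cleaner and avoids invoking any auxiliary lemma.
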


\end{document}